\newtheorem{theorem}{Theorem}[section]
\newtheorem{lemma}[theorem]{Lemma}
\newcommand{\ba}{\bm{a}}
\newcommand{\bu}{\bm{u}}
\newcommand{\bw}{\bm{w}}
\newcommand{\bx}{\bm{x}}
\newcommand{\bX}{\bm{X}}
\newcommand{\bZ}{\bm{Z}}
\newcommand{\by}{\bm{y}}
\newcommand{\be}{\bm{e}}
\newcommand{\bgamma}{\bm{\gamma}}
\newcommand{\bGamma}{\bm{\Gamma}}
\DeclareMathOperator*{\argmin}{\arg\!\min}
\def\ceil#1{\lceil #1 \rceil}
\def\floor#1{\lfloor #1 \rfloor}
\def\1{\bm{1}}
\DeclareMathAlphabet{\mathsfit}{\encodingdefault}{\sfdefault}{m}{sl}
\SetMathAlphabet{\mathsfit}{bold}{\encodingdefault}{\sfdefault}{bx}{n}
\def\vert#1{\lvert #1 \rvert}
\def\Vert#1{\lVert #1 \rVert}
\newcommand{\textssm}[1]{#1}
\newcommand{\modelfont}{\renewcommand*\familydefault{\sfdefault}\normalfont}
\newcommand{\prow}[0]{\quad\mathrel{\raisebox{-0.75ex}{\dots}}}
\newcommand{\risklabel}[0]{{\color{black}\textbf{RISK}}}
\newcommand{\scorelabel}[0]{{\color{black}\textbf{SCORE}}}
\definecolor{predcolor}{gray}{0.95}
\definecolor{scorecolor}{gray}{0.95}
\definecolor{riskcolor}{gray}{0.95}
\definecolor{transparentcolor}{gray}{0.95}
\newcommand{\instruction}[2]{{\color{white}\phantom{\textbf{add points from rows {#1} to {#2}}}}}
\newcommand{\risktable}[0]{\par\vspace{0.5em}\scriptsize\centering\renewcommand{\arraystretch}{1.25}\modelfont}
\def\ourmethod{FasterRisk}
\def\ourmethodLR{SparseBeamLR}
\def\ourmethodExpand{ExpandSuppBy1}
\def\ourmethodPool{CollectSparseDiversePool}
\def\ourmethodRound{AuxiliaryLossRounding}
\def\scaleAndRound{StarRaySearch}
\def\Wtmp{\mathcal{W}_{\textrm{tmp}}}
\newcommand*\samethanks[1][\value{footnote}]{\footnotemark[#1]}
\title{FasterRisk: Fast and Accurate Interpretable Risk Scores}
\author{%
  Jiachang Liu\textsuperscript{1}\thanks{These authors contributed equally.} \quad Chudi Zhong\textsuperscript{1}\samethanks \quad Boxuan Li\textsuperscript{1} \quad
  Margo Seltzer\textsuperscript{2} \quad
  Cynthia Rudin\textsuperscript{1}\\
  \textsuperscript{1} Duke Univeristy \textsuperscript{2} University of British Columbia\\
  \texttt{\{jiachang.liu, chudi.zhong, boxuan.li\}@duke.edu}\\
  \texttt{mseltzer@cs.ubc.ca, cynthia@cs.duke.edu}
}
\begin{document}

\maketitle


\doparttoc 
\faketableofcontents 


\begin{abstract}
Over the last century, \textit{risk scores} have been the most popular form of predictive model used in healthcare and criminal justice. Risk scores are sparse linear models with integer coefficients; often these models can be memorized or placed on an index card. Typically, risk scores have been created either without data or by rounding logistic regression coefficients, but these methods do not reliably produce high-quality risk scores. Recent work used mathematical programming, which is computationally slow. We introduce an approach for efficiently producing a collection of high-quality risk scores learned from data. Specifically, our approach  produces a pool of almost-optimal sparse continuous solutions, each with a different support set, using a beam-search algorithm. Each of these continuous solutions is transformed into a separate risk score through a ``star ray'' search, where a range of multipliers are considered before rounding the coefficients sequentially to maintain low logistic loss. Our algorithm returns all of these high-quality risk scores for the user to consider. This method completes within minutes and can be valuable in a broad variety of applications. 
\end{abstract}
\section{Introduction}
\normalsize
\textit{Risk scores} are sparse linear models with integer coefficients that predict risks. They are arguably the most popular form of predictive model for high stakes decisions through the last century and are the standard form of model used in criminal justice \citep{austin2010kentucky,latessa2009creation} and medicine \citep{,Kessler2005,moreno2005,Than2014,Six2008,Ustun2017Medicine}.
\begin{wrapfigure}{r}{.6\textwidth}
\vspace{-4.3mm}
\centering
{
\resizebox{0.6\columnwidth}{!}{\begin{tabular}{|l l  r | l |}
   \hline
1.   & \textssm{Oval Shape}  & -2 points & $\phantom{+}\prow{}$ \\ 
  2. & \textssm{Irregular Shape}  & 4 points & $+\prow{}$ \\ 
  3. & \textssm{Circumscribed Margin}  & -5 points & $+\prow{}$ \\ 
  4. & \textssm{Spiculated Margin}  & 2 points & $+\prow{}$ \\ 
  5. & \textssm{Age $\geq$ 60}   & 3 points & $+\phantom{\prow{}}$ \\[0.1em]
   \hline
 & \instruction{1}{5} & \scorelabel{} & $=\phantom{\prow{}}$ \\ 
   \hline
\end{tabular}}
}
{%
\risktable{}
\begin{tabular}{|r|c|c|c|c|c|c|}
\hline \rowcolor{scorecolor}\scorelabel{} & -7     & -5     & -4     & -3     & -2     & -1  \\
\hline 
\rowcolor{riskcolor}\risklabel{} & 6.0\%  & 10.6\% & 13.8\% & 17.9\% & 22.8\% & 28.6\%  \\ 
\hline
\hline \rowcolor{scorecolor}\scorelabel{} & 0 & 1      & 2      & 3      & 4      & $\geq$ 5  \\
\hline 
\rowcolor{riskcolor}\risklabel{} & 35.2\% & 42.4\% & 50.0\% & 57.6\% & 64.8\% & 71.4\% \\ 
\hline
\end{tabular}
\normalsize
}
\caption{Risk score on the mammo dataset \citep{elter2007prediction}, whose population is biopsy patients. It predicts the risk of malignancy of a breast lesion. Risk score is from \ourmethod{} on a fold of a 5-CV split. The AUCs  on the training and test sets are 0.854 and 0.853, respectively.\vspace*{-10pt}
}
\label{fig:ExampleRiskScore}
\end{wrapfigure}
Their history dates back to at least the  criminal justice work of Burgess \citep{burgess1928factors}, where, based on their criminal history and demographics,  individuals were assigned integer point scores between 0 and 21 that determined the probability of their ``making good or of failing upon parole.''
Other famous risk scores are arguably the most widely-used predictive models in healthcare. These include the APGAR score \citep{APGAR1953}, developed in 1952 and given to newborns, and the CHADS$_2$ score \citep{gage2001validation}, which estimates stroke risk for atrial fibrillation patients. 
Figures \ref{fig:ExampleRiskScore} and \ref{fig:RiskScore_mammo_modelSize_5_pool_9_figure2} show example risk scores, which estimate risk of a breast lesion being malignant.

\begin{wrapfigure}{r}{.6\textwidth}
\vspace{-13.5mm}
\centering
{
\resizebox{0.6\columnwidth}{!}{\begin{tabular}{|llr|l|}
\hline
1. & Irregular Shape & 4 points & $\phantom{+}\prow{}$ \\
2. & Circumscribed Margin & -5 points  & $+\prow{}$ \\
3. & SpiculatedMargin & 2 points & $+\prow{}$\\ 
4. & Age $\geq$ 45 & 1 point\text{ } & $+\prow{}$\\
5. & Age $\geq$ 60 & 3 points & $+\phantom{\prow{}}$ \\[0.1em]
\hline
    & \instruction{1}{5} & \scorelabel{} & $=\phantom{\prow{}}$ \\ 
\hline
\end{tabular}}
}
{
\risktable{}
\begin{tabular}{|r|c|c|c|c|c|c|c|}
\hline \rowcolor{scorecolor}\scorelabel{} & -5 & -4 & -3 & -2 & -1 & 0  \\
\hline \rowcolor{riskcolor}\risklabel{}  & 7.3\% & 9.7\% & 12.9\% & 16.9\% & 21.9\% & 27.8\% \\ 
\hline
\hline \rowcolor{scorecolor}\scorelabel{} & 1 & 2 & 3 & 4 & 5 & 6 \\
\hline \rowcolor{riskcolor}\risklabel{}  & 34.6\% & 42.1\% & 50.0\% & 57.9\% & 65.4\% & 72.2\% \\ 
\hline


\end{tabular}
}
\medskip\caption{A second risk score on the mammo dataset on the same fold as in Figure~\ref{fig:ExampleRiskScore}. The AUCs on the training and test sets are 0.855 and 0.859, respectively. \ourmethod{} can produce a diverse pool of high-quality models. Users can choose a model that best fits with their domain knowledge.\vspace*{-3pt}
}

\label{fig:RiskScore_mammo_modelSize_5_pool_9_figure2}
\end{wrapfigure}

Risk scores have the benefit of being easily memorized; usually their names reveal the full model -- for instance, the factors in CHADS$_2$ are past \textbf{C}hronic heart failure, \textbf{H}ypertension, \textbf{A}ge$\geq$75 years, \textbf{D}iabetes, and past \textbf{S}troke (where past stroke receives \textbf{2} points and the others each receive 1 point). For risk scores, counterfactuals are often trivial to compute, even without a calculator. Also, checking that the data and calculations are correct is easier with risk scores than with other approaches. In short, risk scores have been created by humans for a century to support a huge spectrum of applications~\cite{allam2020scoring, lee2022risk, shang2020scoring, wasilewski2020covid, xie2020autoscore, zhang2020novel}, because humans find them easy to understand.

Traditionally, risk scores have been created in two main ways: (1) without data, with expert knowledge only (and validated only afterwards on data), and (2) using a semi-manual process involving manual feature selection and rounding of logistic regression coefficients. That is, these approaches rely heavily on domain expertise and rely little on data. Unfortunately, the alternative of building a model \textit{directly} from data leads to computationally hard problems: optimizing risk scores over a global objective on data is NP-hard, because in order to produce integer-valued scores, the feasible region must be the integer lattice. There have been only a few approaches to design risk scores automatically \citep{billiet2017interval,billiet2016interval,carrizosaDILSVM13,chevaleyre2013rounding, ErtekinRu15,sokolovska2017fused, sokolovska2018provable,ustun2017optimized,ustun2019learning,ustun2013supersparse}, but each of these has a flaw that limits its use in practice: the optimization-based approaches use mathematical programming solvers (which require a license) that are slow and scale poorly, and the other methods are randomized greedy algorithms, producing fast but much lower-quality solutions. We need an approach that exhibits the best of both worlds: speed fast enough to operate in a few minutes on a laptop and optimization/search capability as powerful as that of the mathematical programming tools. Our method, \ourmethod{}, lies at this intersection.
It is fast enough to enable interactive model design and can rapidly produce a large pool of models from which users can choose rather than producing only a single model. 

One may wonder why simple rounding of $\ell_1$-regularized logistic regression coefficients does not yield sufficiently good risk scores. Past works \citep{ustun2015slim,ustun2019learning} explain this as follows: the sheer amount of $\ell_1$ regularization needed to get a very sparse solution leads to large biases and worse loss values, and rounding goes against the performance gradient. For example, consider the following coefficients from $\ell_1$ regularization: [1.45, .87, .83, .47, .23, .15, ... ]. This model is worse than its unregularized counterpart due to the bias induced by the large $\ell_1$ term. Its rounded solution is [1,1,1,0,0,0,..], which leads to even worse loss. Instead, one could multiply all the coefficients by a constant and then round, but which constant is best? There are an infinite number of choices. Even if some value of the multiplier leads to minimal loss due to rounding, the bias from the $\ell_1$ term still limits the quality of the solution.

The algorithm presented here does not have these disadvantages. The steps are: (1) Fast subset search with $\ell_0$ optimization (avoiding the bias from $\ell_1$). This requires the solution of an NP-hard problem, but our fast subset selection algorithm is able to solve this quickly. We proceed from this accurate sparse continuous solution, preserving both sparseness and accuracy in the next steps. (2) Find a pool of diverse continuous sparse solutions that are almost as good as the solution found in (1) but with different support sets. (3) A ``star ray'' search, where we search for feasible integer-valued solutions along multipliers of each item in the pool from (2). By using multipliers, the search space resembles the rays of a star, because it extends each coefficient in the pool outward from the origin to search for solutions. To find integer solutions, 
we perform a local search (a form of sequential rounding). This method yields high performance solutions: we provide a theoretical upper bound on the loss difference between the continuous sparse solution and the rounded integer sparse solution.

Through extensive experiments, we show that our proposed method is computationally fast and produces high-quality integer solutions. 
This work thus provides valuable and novel tools to create risk scores for professionals in many different fields, such as healthcare, finance, and criminal justice.

\textbf{Contributions}: Our contributions include the three-step framework for producing risk scores, a beam-search-based algorithm for logistic regression with bounded coefficients (for Step 1), the search algorithm to find pools of diverse high-quality continuous solutions (for Step 2), the star ray search technique using multipliers (Step 3), and a theorem guaranteeing the quality of the star ray search.

\section{Related Work}
\label{sec:related_work}
\textit{Optimization-based approaches:}
Risk scores, which model $P(y=1|\bx)$, are different from threshold classifiers, which  predict either $y=1$ or $y=-1$ given $\bx$. Most work in the area of optimization of integer-valued sparse linear models focuses on classifiers, not risk scores \citep{billiet2017interval,billiet2016interval,carrizosaDILSVM13, sokolovska2017fused,sokolovska2018provable,ustun2015slim,ustun2013supersparse,zeng2017interpretable}. This difference is important, because a classifier generally cannot be calibrated well for use in risk scoring: only its single decision point is optimized. Despite this, several works use the hinge loss to calibrate predictions  \citep{billiet2016interval,carrizosaDILSVM13,sokolovska2017fused}. All of these optimization-based algorithms use mathematical programming solvers (i.e., integer programming solvers), which tend to be slow and cannot be used on larger problems. However, they can handle both feature selection and integer constraints.

To directly optimize risk scores, typically the logistic loss is used. The RiskSLIM algorithm \citep{ustun2019learning} optimizes the logistic loss regularized with $\ell_0$ regularization, subject to integer constraints on the coefficients. RiskSLIM uses callbacks to a MIP solver, alternating between solving linear programs and using branch-and-cut to divide and reduce the search space. The branch-and-cut procedure needs to keep track of unsolved nodes, whose number increases exponentially with the size of the feature space. Thus, RiskSLIM's major challenge is scalability.

\textit{Local search-based approaches:} As discussed earlier, a natural way to produce a scoring system or risk score is by selecting features manually and rounding logistic regression coefficients or hinge-loss solutions to integers \citep{chevaleyre2013rounding,cole1993algorithm, ustun2019learning}. 
While rounding is fast, rounding errors can cause the solution quality to be much worse than that of the optimization-based approaches.  
Several works have proposed improvements over traditional rounding.
In Randomized Rounding \citep{chevaleyre2013rounding},  each coefficient is rounded up or down randomly, based on its continuous coefficient value.
However, randomized rounding does not seem to perform well in practice.
Chevaleyre~\citep{chevaleyre2013rounding} also proposed Greedy Rounding, where coefficients are rounded sequentially.
While this technique aimed to provide theoretical guarantees for the hinge loss, we identified a serious flaw in the argument, rendering the bounds incorrect (see Appendix \ref{app:ChevaleyreWrongProof}).
The RiskSLIM paper \citep{ustun2019learning} proposed SequentialRounding, which, at each iteration, chooses a coefficient to round up or down, making the best choice according to the regularized logistic loss. 
This gives better solutions than other types of rounding, because the coefficients are considered together through their performance on the loss function, not independently. 

A drawback of SequentialRounding is that it considers rounding up or down only to the nearest integer from the continuous solution.
By considering \textit{multipliers}, we consider a much larger space of possible solutions. The idea of multipliers (i.e., ``scale and round'') is used for medical scoring systems  \citep{cole1993algorithm}, though, as far as we know, it has been used only with traditional rounding rather than SequentialRounding, which could easily lead to poor performance, and we have seen no previous work that studies how to perform scale-and-round in a systematic, computationally efficient way. While the general idea of scale-and-round seems simple, it is not: there are an infinite number of possible multipliers, and, for each one, a number of possible nearby integer coefficient vectors that is the size of a hypercube, expanding exponentially in the search space. 

\textit{Sampling Methods:} The Bayesian method of Ertekin et al.~\citep{ErtekinRu15} samples scoring systems, favoring those that are simpler and more accurate, according to a prior. 
``Pooling'' \cite{ustun2019learning} creates multiple models through sampling along the regularization path of ElasticNet. As discussed, when regularization is tuned high enough to induce sparse solutions, it results in substantial bias and low-quality solutions  (see~\citep{ustun2015slim,ustun2019learning} for numerous experiments on this point). Note that there is a literature on finding diverse solutions to mixed-integer optimization problems~\citep[e.g.,][]{Ahanor2022}, but it focuses only on linear objective functions.

\section{Methodology}
\label{sec:methodology}

Define dataset $\mathcal{D}=\{1, \bx_i,y_i\}_{i=1}^n$ (1 is a static feature corresponding to the intercept) and scaled dataset as $\frac{1}{m}\times \mathcal{D}=\{\frac{1}{m}, \frac{1}{m}\bx_i,y_i\}_{i=1}^n$, for a real-valued $m$.
Our goal is to produce high-quality risk scores within a few minutes on a small personal computer. We start with an optimization problem similar to RiskSLIM's \citep{ustun2019learning}, which minimizes the logistic loss subject to sparsity constraints and integer coefficients:
\begin{eqnarray} \label{obj:sparse_integer}
    \min_{\bw,w_0} L(\bw,w_0,\mathcal{D}),& \textrm{ where }L(\bw,w_0,\mathcal{D}) = \sum_{i=1}^n \log(1+\exp(-y_i (\bx_i^T \bw + w_0)))\\\nonumber
    \textrm{ such that } &\|\bw\|_0 \leq k \textrm{ and } \bw\in\mathbb{Z}^p,\;\; \forall j\in[1,..,p]\; w_j\in [-5,5],\;\; w_0 \in \mathbb{Z}. 
\end{eqnarray}
In practice, the range of these box constraints $[-5, 5]$ is user-defined and can be different for each coefficient. (We use 5 for ease of exposition.)
The sparsity constraint $\|\bw\|_0\leq k$ or integer constraints $\bw \in \mathbb{Z}^p$ make the problem NP-hard, and this is a difficult mixed-integer nonlinear program. 
Transforming the original features to all possible dummy variables, which is a standard type of preprocessing \cite[e.g.,][]{LiuEtAl2022}, changes the model into a (flexible) generalized additive model; such models can be
as accurate as the best machine learning models
\citep{ustun2019learning,WangHanEtAl2022}. Thus, we generally process variables in $\bx$ to be binary.

To make the solution space substantially larger than $[-5,-4, ..., 4,5]^p$, we use \textit{multipliers}.
The problem becomes:
\begin{eqnarray} \label{obj:sparse_integer_mult}
    \lefteqn{\hspace*{-30pt}\min_{\bw,w_0,m} L\left(\bw, w_0,\frac{1}{m}\mathcal{D}\right), \textrm{ where }L\left(\bw,  w_0,\frac{1}{m}\mathcal{D}\right) = \sum_{i=1}^n \log\left(1+\exp\left(-y_i \frac{\bx_i^T \bw + w_0}{m}\right)\right)}\\
    \nonumber
   & \textrm{ such that } \|\bw\|_0 \leq k, \bw\in\mathbb{Z}^p,\;\; \forall j\in[1,..,p],\; w_j\in [-5,5],\;\; w_0 \in \mathbb{Z}, \;\; m>0. 
\end{eqnarray}
Note that the use of multipliers does not weaken the interpretability of the risk score: the user still sees integer risk scores composed of values $w_j\in \{-5,-4,..,4,5\}$, $w_0 \in \mathbb{Z}$. Only the risk conversion table is calculated differently, as $P(Y=1|\bx)=1/(1+ e^{-f(\bx)})$ where $f(\bx)=\frac{1}{m}(\bw^T\bx + w_0)$.


\begin{algorithm}[t]
\caption{FasterRisk($\mathcal{D}$,$k$,$C$,$B$,$\epsilon$,$T$,$N_m$) $\rightarrow \{(\bw^{+t},w_0^{+t},m_{t})\}_t$ }
\label{alg:overall_algorithm}
\textbf{Input:} dataset $\mathcal{D}$ (consisting of feature matrix $\bX \in \mathbb{R}^{n \times p}$ and labels $\by \in \mathbb{R}^{n}$), sparsity constraint $k$,  coefficient constraint $C=5$, beam search size $B=10$, tolerance level $\epsilon=0.3$, number of attempts $T=50$, number of multipliers to try $N_{m}=20$.\\
\textbf{Output:} a pool $P$ of scoring systems $\{(\bw^t, w_0^t), m^t\}$ where $t$ is the index enumerating all found scoring systems with $\lVert \bw^t\rVert_0 \leq k$ and $\lVert \bw^t \rVert_{\infty} \leq C$ and $m^t$ is the corresponding multiplier.
\begin{algorithmic}[1]
    \State Call Algorithm \ref{alg:sparse_beam_lr} \ourmethodLR{}$(\mathcal{D},k,C,B)$ to find a high-quality solution $(\bw^*,w_0^*)$ to the sparse logistic regression problem with continuous coefficients satisfying a box constraint, i.e., solve Problem~\eqref{eqn:alg1obj}. (Algorithm \ourmethodLR{} will call Algorithm \ourmethodExpand{} as a subroutine, which grows the solution by beam search.)
    \State Call Algorithm \ref{alg:collectSparseLevelSet} \ourmethodPool{$((\bw^*, w_0^*),\epsilon,T)$}, which solves Problem \eqref{eqn:alg1objpool}. Place its output $\{(\bw^t,w_0^t)\}_t$ in pool $P =\{\bw^*, w_0^*\}$. $P\leftarrow P\cup  \{(\bw^t,w_0^t)\}_t$.
    \State Send each member $t$ in the pool $P$, which is $(\bw^t,w_0^t)$, to Algorithm \ref{alg:scale_and_round} \scaleAndRound{} $(\mathcal{D}, (\bw^t, w_0^t), C, N_m)$ to perform a line search among possible multiplier values and obtain an integer solution $(\bw^{+t}, w_0^{+t})$ with multiplier $m_t$. Algorithm \ref{alg:scale_and_round} calls Algorithm \ref{alg:rounding_method} \ourmethodRound{} which conducts the rounding step.
    \State Return the collection of risk scores  $\{(\bw^{+t},w_0^{+t},m_{t})\}_t$. If desired, return only the best model according to the logistic loss.
\end{algorithmic}
\end{algorithm}

Our method proceeds in three steps, as outlined in Algorithm \ref{alg:overall_algorithm}. In the first step, it approximately solves the following \textbf{sparse logistic regression} problem with a box constraint (but not integer constraints), detailed in Section \ref{sec:sparse_continuous_solution} and Algorithm \ref{alg:sparse_beam_lr}.
\begin{equation}\label{eqn:alg1obj}
    (\bw^*,  w^*_0)\in \argmin_{\bw, w_0} L(\bw, w_0, \mathcal{D}),
    \;\lVert \bw\rVert_0 \leq k, \bw\in\mathbb{R}^p,  \forall j \in [1, ..., p],\; \bw_j\in [-5,5], w_0 \in \mathbb{R}. 
\end{equation}
The algorithm gives an accurate and sparse real-valued solution $(\bw^{*},w_0^*)$.

The second step produces \textbf{many near-optimal sparse logistic regression solutions}, again without integer constraints, detailed in Section \ref{sec:pool_continuous_solutions} and Algorithm \ref{alg:collectSparseLevelSet}. Algorithm \ref{alg:collectSparseLevelSet}
uses $(\bw^*, w_0^*)$ from the first step to find a set $\{(\bw^t, w_0^t)\}_t$ such that for all $t$ and a given threshold $\epsilon_{\bw}$:
\begin{eqnarray}\label{eqn:alg1objpool}
    (\bw^t,  w^t_0) \;\textrm{ obeys }\; L(\bw^t, w^t_0, \mathcal{D}) \leq L(\bw^*, w_0^*, \mathcal{D}) \times (1+\epsilon_{\bw^*}) \\ \;\;\;\;\; \lVert \bw^t \rVert_0 \leq k, \; \bw^t \in \mathbb{R}^p,\;\;  \forall j \in [1, ..., p],\; w^t_j\in [-5,5], w^t_0 \in \mathbb{R}. \nonumber
\end{eqnarray}
After these steps, we have a pool of almost-optimal sparse logistic regression models. In the third step, for each coefficient vector in the pool, we \textbf{compute a risk score}. It is a feasible integer solution 
$(\bw^{+t}, w^{+t}_0)$ to the following, which includes a positive multiplier $m^t > 0$:
\begin{eqnarray}\label{eqn:alg2obj}
\lefteqn{L\left(\bw^{+t}, w^{+t}_0, \frac{1}{m^t} \mathcal{D}\right)\leq L(\bw^{t}, w^t_0, \mathcal{D})+\epsilon_t,} 
 \\\nonumber
&&\bw^{+t} \in\mathbb{Z}^p,\;\; \forall j \in [1,...,p], w_j^{+t}\in [-5,5], w^{+t}_0 \in \mathbb{Z},
\end{eqnarray}
where we derive a tight theoretical upper bound on $\epsilon_t$.
 A detailed solution to \eqref{eqn:alg2obj} is shown in Algorithm \ref{alg:rounding_method} in Appendix \ref{app:additional_algorithmic_charts}. We solve the optimization problem for a large range of multipliers in Algorithm \ref{alg:scale_and_round} for each coefficient vector in the pool, choosing the best multiplier for each coefficient vector.
 This third step yields a large collection of risk scores, all of which are approximately as accurate as the best sparse logistic regression model that can be obtained. All steps in this process are fast and scalable.

\begin{algorithm}[th]
\caption{\ourmethodLR{}($\mathcal{D}$,$k$,$C$,$B$) $\rightarrow~(\bw, w_0)$}
\label{alg:sparse_beam_lr}
\textbf{Input:} dataset $\mathcal{D}$, sparsity constraint $k$, coefficient constraint $C$, and beam search size $B$.\\
\textbf{Output:} a sparse continuous coefficient vector $(\bw, w_0)$ with $\lVert \bw \rVert_0 \leq k, \Vert{\bw}_{\infty} \leq C$.
\begin{algorithmic}[1]
    \State Define $N_+$ and $N_-$ as numbers of positive and negative labels, respectively.
    \State $w_0 \leftarrow \log(-N_+/N_-), \bw \leftarrow \mathbf{0}$ \Comment{Initialize the intercept and coefficients.}
    \State $\mathcal{F} \leftarrow \emptyset$ \Comment{Initialize the collection of found supports as an empty set}
    \State $(\mathcal{W}, \mathcal{F}) \leftarrow \text{\ourmethodExpand}(\mathcal{D},(\bw, w_0), \mathcal{F},B)$. \Comment{Returns $\leq B$ models of support 1}
    \For{$t = 2, ..., k$} \Comment{Beam search to expand the support}
        \State $\Wtmp \leftarrow \emptyset$
        \For{ $(\bw', w'_0) \in \mathcal{W}$} \Comment{Each of these has support $t-1$}
            \State $(\mathcal{W}', \mathcal{F}) \leftarrow \text{\ourmethodExpand} (\mathcal{D},(\bw', w'_0),\mathcal{F},B)$.
            \Comment{Returns $\leq B$ models with supp. $t$.}
            \State $\Wtmp \leftarrow \Wtmp \cup \mathcal{W}'$
        \EndFor
        \State Reset $\mathcal{W}$ to be the $B$ solutions in $\Wtmp$ with the smallest logistic loss values.
    \EndFor
    \State Pick $(\bw, w_0)$ from $ \mathcal{W}$ with the smallest logistic loss.
    \State Return $(\bw, w_0)$.
\end{algorithmic}
\end{algorithm}


\subsection{High-quality Sparse Continuous Solution} \label{sec:sparse_continuous_solution}

There are many different approaches for sparse logistic regression, including $\ell_1$ regularization \citep{tibshirani1996regression}, ElasticNet \citep{zou2005regularization}, $\ell_0$ regularization \citep{dedieu2020learning, LiuEtAl2022}, and orthogonal matching pursuit (OMP) \citep{elenberg2018restricted,lozano2011group}, but none of these approaches seem to be able to handle both the box constraints and the sparsity constraint in Problem \ref{eqn:alg1obj}, so we developed a new approach. This approach, in Algorithm \ref{alg:sparse_beam_lr}, \ourmethodLR{}, uses beam search for best subset selection: each iteration contains several coordinate descent steps to determine whether a new variable should be added to the support, and it clips coefficients to the box $[-5,5]$ as it proceeds. Hence the algorithm is able to determine, before committing to the new variable, whether it is likely to decrease the loss while obeying the box constraints. This beam search algorithm for solving \eqref{eqn:alg1obj} implicitly uses the assumption that one of the best models of size $k$ implicitly contains variables of one of the best models of size $k-1$.
This type of assumption has been studied in the sparse learning literature \cite{elenberg2018restricted} (Theorem 5). However, we are not aware of any other work that applies box constraints or beam search for sparse logistic regression.
In Appendix \ref{app:additional_experimental_results}, we show that our method produces better solutions than the OMP method presented in \cite{elenberg2018restricted}.


Algorithm \ref{alg:sparse_beam_lr} calls the \ourmethodExpand{} Algorithm, which has two major steps. The detailed algorithm can be found in Appendix \ref{app:additional_algorithmic_charts}. For the first step, given a solution $\bw$, we perform optimization on each single coordinate $j$ outside of the current support $supp(\bw)$:
\begin{align}
\label{eqn:expandSuppBy1_obj}
    d^*_j \in \argmin_{d \in [-5, 5]} L(\bw + d \be_j, w_0, \mathcal{D}) \text{ for } \forall j \text{ where } w_j = 0.
\end{align}
Vector $\be_j$ is 1 for the $j$th coordinate and 0 otherwise.
We find $d^*_j$ for each $j$ through an iterative thresholding operation, which is done on all coordinates in parallel, iterating several ($\sim 10$) times: 
\begin{align}
\label{eqn:expandSuppBy1_analytical_form}
    \text{ for iteration $i$: } d_j \leftarrow \textrm{Threshold}(j, d_j, \bw, w_0, \mathcal{D}):=\min(\max(c_{d_j}, -5), 5),
\end{align}
where $c_{d_j} = d_j - \frac{1}{l_j}\nabla_j L(\bw + d_j \be_j, w_0, \mathcal{D})$, and $l_j$ is a Lipschitz constant on coordinate $j$~\cite{LiuEtAl2022}. Importantly, we can perform Equation \ref{eqn:expandSuppBy1_analytical_form} on all $j$ where $w_j=0$ in parallel using matrix form.

For the second step, after the parallel single coordinate optimization is done, we pick the top $B$ indices ($j$'s) with the smallest logistic losses $L(\bw + d^*_j \be_j)$ and fine tune on the new support:
\begin{align}
\label{eqn:expandSuppBy1_finetune_obj}
    \bw_{\text{new}}^j, {w_0}_{\text{new}}^j \in \argmin_{\ba \in [-5, 5]^p, b} L(\ba, b, \mathcal{D}) \text{ with } supp(\ba) = supp(\bw) \cup \{j\}.
\end{align}
This can be done again using a variant of Equation \ref{eqn:expandSuppBy1_analytical_form} iteratively on all the coordinates in the new support.
We get $B$ pairs of $(\bw_{\text{new}}^j, {w_0}_{\text{new}}^j)$ through this \ourmethodExpand{} procedure, and the collection of these pairs form the set $\mathcal{W}'$ in Line 8 of Algorithm \ref{alg:sparse_beam_lr}.

At the end, Algorithm~\ref{alg:sparse_beam_lr} (\ourmethodLR{}) returns the best model with the smallest logistic loss found by the beam search procedure. This model satisfies both the sparsity and box constraints.


\subsection{Collect Sparse Diverse Pool (Rashomon Set)}
\label{sec:pool_continuous_solutions}

We now collect the sparse diverse pool. In Section \ref{sec:sparse_continuous_solution}, our goal was to find a sparse model $(\bw^*, w_0^*)$ with the smallest logistic loss. For high dimensional features or in the presence of highly correlated features, there could exist many sparse models with almost equally good performance \citep{breiman-cultures}. This set of models is also known as the Rashomon set. Let us find those and turn them into risk scores.
We first predefine a tolerance gap level $\epsilon$ (hyperparameter, usually set to $0.3$). Then, we delete a feature with index $j_{-}$ in the support $\textrm{supp}(\bw^*)$ and add a new feature with index $j_{+}$. We select each new index to be $j_+$ whose logistic loss is within the tolerance gap:
\begin{align}
\label{eqn:collect_sparse_diverset_pool_obj}
\textrm{Find all $j_+$ s.t. }
    \min_{a \in [-5, 5]} L(\bw^* - w^*_{j-} \be_{j-}+ a \be_{j+}, w_0, \mathcal{D}) \leq L(\bw^*, w_0^*, \mathcal{D}) (1+\epsilon).
\end{align}
We fine-tune the coefficients on each of the new supports and then save the new solution in our pool. Details can be found in Algorithm \ref{alg:collectSparseLevelSet}. Swapping one feature at a time is computationally efficient, and our experiments show it produces sufficiently diverse pools over many datasets.
We call this method the \ourmethodPool{} Algorithm.

\subsection{``Star Ray'' Search for Integer Solutions}
\label{sec:round}

The last challenge is how to get an integer solution from a continuous solution. To achieve this, we use a ``star ray'' search that searches along each ``ray'' of the star, extending each continuous solution outward from the origin using many values of a multiplier, as shown in Algorithm \ref{alg:scale_and_round}. 
The star ray search provides much more flexibility in finding a good integer solution than simple rounding.
The largest multiplier $m_{\max}$ is set to $5 / \max_j(\vert{w_j^*})$ which will take one of the coefficients to the boundary of the box constraint at 5. 
We set the smallest multiplier to be 1.0 and pick $N_m$ (usually 20) equally spaced points from $[m_{\min}, m_{\max}]$. If $m_{\max}=1$, we set $m_{\min}=0.5$ to allow shrinkage of the coefficients. We scale the coefficients and datasets with each multiplier and round the coefficients to integers using the sequential rounding technique in Algorithm \ref{alg:rounding_method}. For each continuous solution (each ``ray'' of the ``star''), we report the integer solution and multiplier with the smallest logistic loss. This process yields our collection of risk scores. Note here that a standard line search along the multiplier does not work, because the rounding error is highly non-convex.

\begin{algorithm}[th]
\caption{\scaleAndRound{}($\mathcal{D},(\bw, w_0),C,N_m)~\rightarrow~(\bw^+, w^+_0), m$}
\label{alg:scale_and_round}
\textbf{Input:} dataset $\mathcal{D}$, a sparse continuous solution $(\bw, w_0)$, coefficient constraint $C$, and number of multipliers to try $N_m$.\\
\textbf{Output:} a sparse integer solution $(\bw^+, w^+_0)$ with $\Vert{\bw^+}_{\infty} \leq C$ and  multiplier $m$.
\begin{algorithmic}[1]
    \State Define $m_{\max} \leftarrow C / \max \vert{\bw}$ as discussed in Section \ref{sec:round}. If $m_{\max} =1$, set $m_{\min} \leftarrow 0.5$; if $m_{\max} > 1$, set $m_{\min} \leftarrow 1$. 
    \State Pick $N_m$ equally spaced multiplier values $m_l \in [m_{\min}, m_{\max}]$ for $l\in [1, ..., N_m]$ and call this set $\mathcal{M} = \{m_l\}_l$.
    \State Use each multiplier to scale the good continuous solution $(\bw, w_0)$, to obtain $(m_l\bw$, $m_l w_0)$, which is a good continuous solution to the rescaled dataset 
    $\frac{1}{m_l} \mathcal{D}$. 
    \State Send each rescaled solution $(m_l\bw$, $m_lw_0)$ and its rescaled dataset $\frac{1}{m_l} \mathcal{D}$ to Algorithm \ref{alg:rounding_method} \ourmethodRound{$(\frac{1}{m_l}\mathcal{D},m_l\bw,m_lw_0)$}
    for rounding. It returns  $(\bw^{+l},w_0^{+l},m_l)$, where $(\bw^{+l},w_0^{+l})$ is close to $(m_l\bw$, $m_lw_0)$, and where $(\bw^{+l},w_0^{+l})$ on $\frac{1}{m_l}\mathcal{D}$ has a small logistic loss. 
    \State Evaluate the logistic loss to pick the best multiplier $l^* \in \argmin_{l} L(\bw^{+l},w_0^{+l},\frac{1}{m^{l}}\mathcal{D})$
    \State Return $(\bw^{+l^*},w_0^{+l^*})$ and $m_{l^*}$.
\end{algorithmic}
\end{algorithm}

We briefly discuss how the sequential rounding technique works. Details of this method can be found in Appendix \ref{app:additional_algorithmic_charts}. We initialize $\bw^+ = \bw$. Then we round the fractional part of $\bw^+$ one coordinate at a time. At each step, some of the $w_j^+$'s are integer-valued (so $w_j^+-w_j$ is nonzero) and we pick the coordinate and rounding operation (either floor or ceil) based on which can minimize the following objective function, where we will round to an integer at coordinate $r^*$:
\begin{align}
\label{eqn:auxilliaryloss_rounding_obj}
    r^*, v^* \in \argmin_{r, v}\sum_{i=1}^n l_i^2 \left(x_{ir}(v-w_r) + \sum_{j \neq r} x_{ij} (w^+_j - w_j) \right)^2, \\
    \text{subject to } r \in \{j \mid w^+_j \notin \mathbb{Z}\} \text{ and } v \in \{\floor{w^+_r}, \ceil{w^+_r}\}, \nonumber
\end{align}
where $l_i$ is the Lipschitz constant restricted to the rounding interval and can be computed as $l_i = 1/(1+\exp(y_i \bx_i^T \bgamma_i))$ with $\gamma_{ij} = \floor{w_j}$ if $y_i x_{ij} > 0$ and $\gamma_{ij} = \ceil{w_{j}}$ otherwise. (The Lipschitz constant here is much smaller than the one in Section \ref{sec:sparse_continuous_solution} due to the interval restriction.)
After we select $r^*$ and find value $v^*$, we update $\bw^+$ by setting $w^+_{r^*} = v^*$. We repeat this process until $\bw^+$ is on the integer lattice: $\bw^+ \in \mathbb{Z}^p$.
The objective function in Equation \ref{eqn:auxilliaryloss_rounding_obj} can be understood as an auxiliary upper bound of the logistic loss. 
Our algorithm provides an upper bound on the difference between the logistic losses of the continuous solution and the final rounded solution before we start the rounding algorithm (Theorem \ref{theorem:upperBound_auxiliaryLossRounding} below).
Additionally, during the sequential rounding procedure, we do not need to perform expensive operations such as logarithms or exponentials as required by the logistic loss function; the bound and auxiliary function require only sums of squares, not logarithms or exponentials.
Its derivation and proof are in Appendix \ref{app:theoretical_upper_bound_derivation}.

\begin{theorem} \label{theorem:upperBound_auxiliaryLossRounding}
Let $\bw$ be the real-valued coefficients for the logistic regression model with objective function $L(\bw) = \sum_{i=1}^n \log(1+\exp(-y_i\bx_i^T \bw))$ (the intercept is incorporated). Let $\bw^+$ be the integer-valued coefficients returned by the \ourmethodRound{} method. Furthermore, let $u_j=w_j - \lfloor w_j \rfloor$. Let $l_i = 1/(1+\exp(y_i \bx_i^T \bgamma_i))$ with $\gamma_{ij} = \floor{w_j}$ if $y_i x_{ij} > 0$ and $\gamma_{ij} = \ceil{w_{j}}$ otherwise. Then, we have an upper bound on the difference between the loss $L(\bw)$ and the loss $L(\bw^+)$:
\begin{equation}
    L(\bw^+) - L(\bw) \leq \sqrt{n \sum_{i=1}^n \sum_{j=1}^p (l_i x_{ij})^2 u_j (1-u_j)}.
\end{equation}
\end{theorem}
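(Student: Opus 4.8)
The plan is to bound $L(\bw^+)-L(\bw)$ in two stages. First I would replace the logistic-loss difference by a purely quadratic quantity that dominates it --- the ``auxiliary loss'' that \ourmethodRound{} minimizes --- and then I would bound the value of that quadratic at the greedy rounding output by a short telescoping / derandomization argument, finally chaining the two.

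\textbf{Stage 1 (logistic loss $\to$ auxiliary quadratic).} Write $f(z)=\log(1+\exp(-z))$, so $f'(z)=-1/(1+e^{z})$ and $|f'(z)|=1/(1+e^{z})$ is decreasing in $z$. For each $i$ the mean value theorem gives $f(y_i\bx_i^T\bw^+)-f(y_i\bx_i^T\bw)=f'(\xi_i)\,y_i\bx_i^T(\bw^+-\bw)$ for some $\xi_i$ between $y_i\bx_i^T\bw$ and $y_i\bx_i^T\bw^+$, so this difference is at most $|f'(\xi_i)|\,|\bx_i^T(\bw^+-\bw)|$. The crucial observation is that $\bgamma_i$ is exactly the minimizer of $\bv\mapsto y_i\bx_i^T\bv$ over the box $\prod_j[\floor{w_j},\ceil{w_j}]$ (it takes $\floor{w_j}$ when $y_ix_{ij}>0$ and $\ceil{w_j}$ otherwise), and both $\bw$ and $\bw^+$ lie in that box --- $\bw$ trivially, and $\bw^+$ because \ourmethodRound{} only ever replaces a still-fractional $w_j$ by $\floor{w_j}$ or $\ceil{w_j}$ and hence never leaves the box. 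Thus $\xi_i\ge y_i\bx_i^T\bgamma_i$, so $|f'(\xi_i)|\le 1/(1+\exp(y_i\bx_i^T\bgamma_i))=l_i$. Summing over $i$ yields $L(\bw^+)-L(\bw)\le\sum_{i=1}^n l_i|\bx_i^T(\bw^+-\bw)|$, and Cauchy-Schwarz against the all-ones vector turns this into $L(\bw^+)-L(\bw)\le\sqrt{n\sum_{i=1}^n l_i^2(\bx_i^T(\bw^+-\bw))^2}$.

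\textbf{Stage 2 (bounding the quadratic at the greedy solution).} Set $A:=\sum_{i=1}^n l_i^2(\bx_i^T(\bw^+-\bw))^2$; it remains to show $A\le\sum_{i=1}^n\sum_{j=1}^p(l_ix_{ij})^2u_j(1-u_j)$. I would track the running value of this quadratic through the rounding steps. At the step that fixes coordinate $r$, the already-rounded coordinates contribute a fixed residual $c_i$ and the not-yet-rounded ones still sit at their fractional values (contributing nothing in the objective of Equation~\eqref{eqn:auxilliaryloss_rounding_obj}), so \ourmethodRound{} picks $v\in\{\floor{w_r},\ceil{w_r}\}$ minimizing $\sum_i l_i^2(c_i+x_{ir}(v-w_r))^2$. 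Since the minimum over the two choices is at most the average under the Bernoulli rounding that sends $w_r\mapsto\ceil{w_r}$ with probability $u_r$ and $w_r\mapsto\floor{w_r}$ with probability $1-u_r$, and that increment $v-w_r$ has mean $0$ and second moment $u_r(1-u_r)$ (so all cross terms with $c_i$ vanish), the running value grows by at most $\sum_i(l_ix_{ir})^2u_r(1-u_r)$ at this step. Summing over all steps --- each fractional coordinate is fixed exactly once, and coordinates with $w_j\in\mathbb{Z}$ have $u_j(1-u_j)=0$ --- telescopes from $0$ (at $\bw^+=\bw$) to $A\le\sum_{i,j}(l_ix_{ij})^2u_j(1-u_j)$. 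Chaining with Stage~1 proves the theorem.

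\textbf{Where the difficulty lies.} The mean value theorem and Cauchy-Schwarz are routine; the part needing care is Stage~2, because \ourmethodRound{} greedily minimizes the \emph{partial} auxiliary objective (with the still-fractional coordinates simply absent), not a conditional expectation over the remaining choices, so the textbook method of conditional expectations does not apply verbatim. The way around this is to compare, at each step, only the single coordinate being fixed against a one-coordinate random rounding (``min of two choices $\le$ their probability-$u_r$ mean''); since the residual enters quadratically with a mean-zero increment, the per-step overshoot is precisely the $u_r(1-u_r)$ contribution, and since every coordinate is touched once these add up to the stated right-hand side. A smaller point to pin down is the containment $\bw^+\in\prod_j[\floor{w_j},\ceil{w_j}]$ used in Stage~1 so that $l_i$ genuinely dominates $|f'|$ along the entire segment from $y_i\bx_i^T\bw$ to $y_i\bx_i^T\bw^+$.
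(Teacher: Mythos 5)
Your proposal is correct and follows essentially the same route as the paper's proof: the per-step ``min of two roundings $\le$ Bernoulli-$u_r$ average'' derandomization with vanishing cross terms is exactly the paper's key lemma, the telescoping over coordinates is its induction, and your mean-value-theorem-plus-box argument and Cauchy--Schwarz step are the paper's local Lipschitz bound and Jensen's inequality in equivalent form. No gaps.
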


\textbf{Note.} \textit{Our method has a higher prediction capacity than RiskSLIM: its search space is much larger. } \\
Compared to RiskSLIM, our use of the multiplier permits a number of solutions that grows exponentially in $k$ as we increase the multiplier. To see this, consider that for each support of $k$ features, since logistic loss is convex, it contains a hypersphere in coefficient space. The volume of that hypersphere is (as usual) $V=\frac{\pi^{k/2}}{\Gamma(\frac{k}{2}+1)}r^k$ where $r$ is the radius of the hypersphere. If we increase the multiplier to 2, the grid becomes finer by a factor of 2, which is equivalent to increasing the radius by a factor of 2. Thus, the volume increases by a factor of $2^k$. In general, for maximum multiplier $m$, the search space is increased by a factor of $m^k$ over RiskSLIM.
\section{Experiments}
\label{sec:experiments}

\begin{figure}[t]
    \centering
    \includegraphics[width=\textwidth]{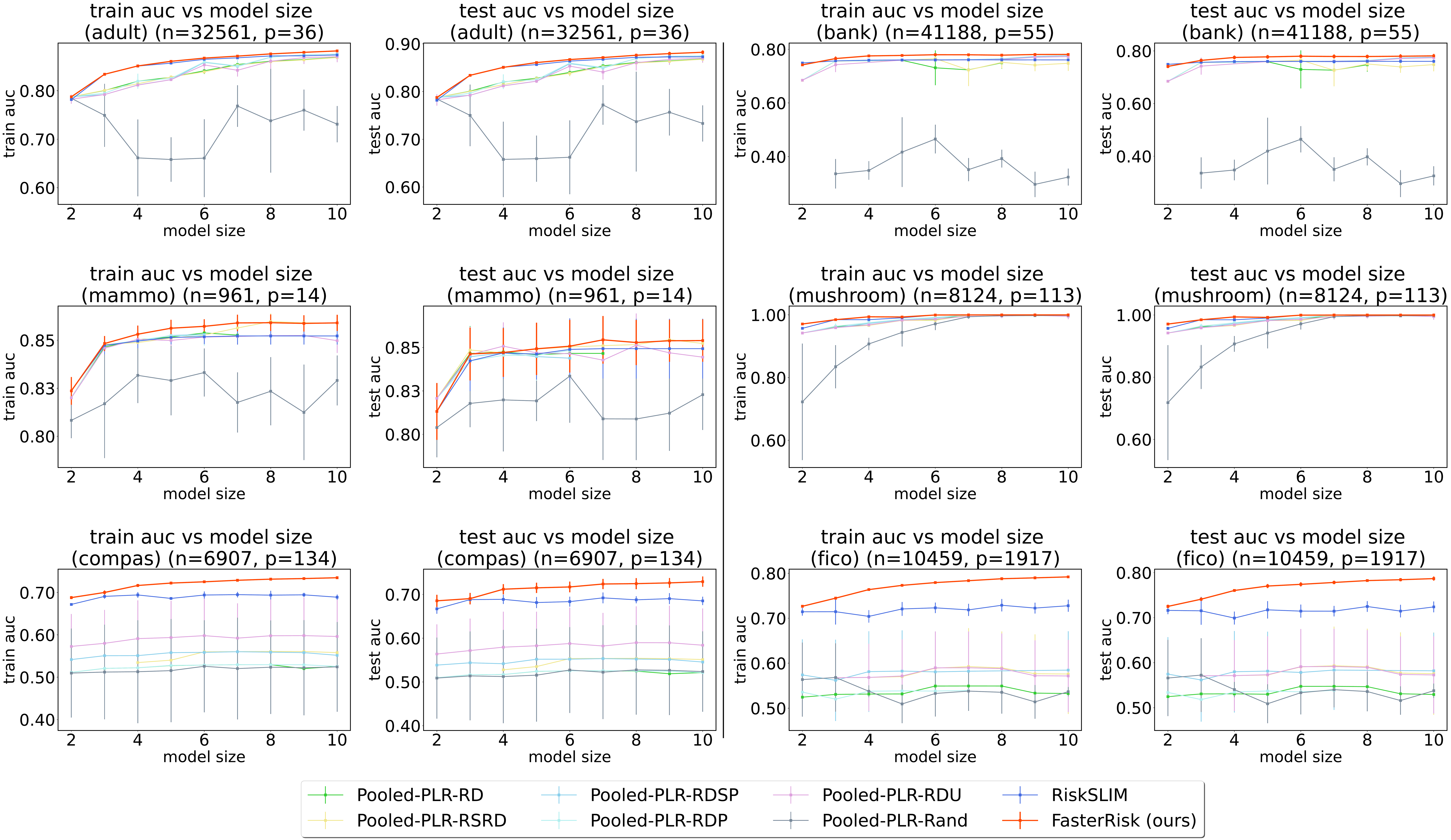}
    \caption{Performance comparison. \ourmethod{} outperforms all baselines due to its larger hypothesis space.
    On the datasets with highly-correlated variables such as COMPAS and FICO (both in the bottom row), \ourmethod{} outperforms other methods by a large margin.
    }
    \label{fig:solution_quality}
\end{figure}

\begin{figure}[htbp]
    \centering
    \includegraphics[width=\textwidth]{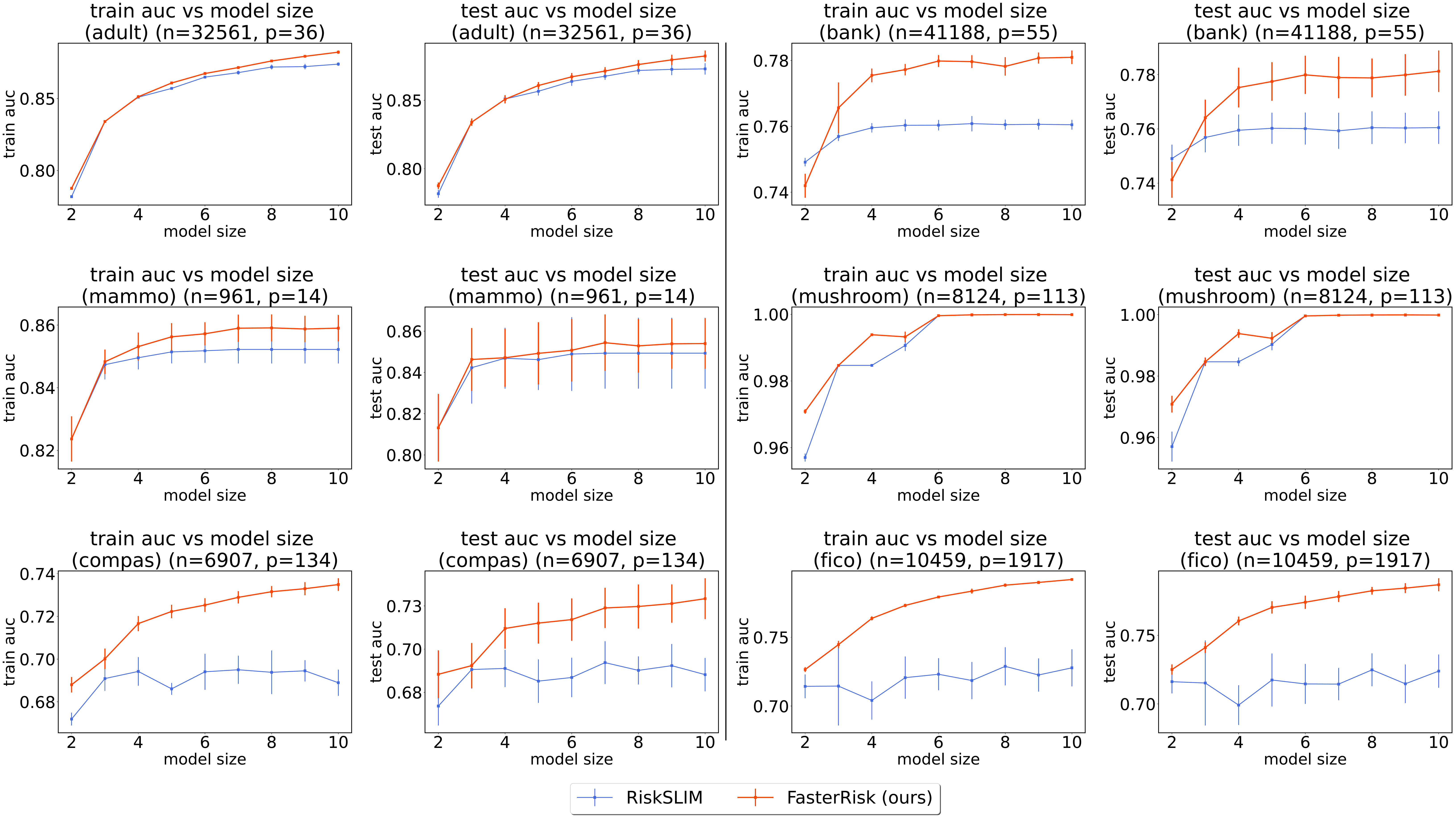}
    
    \caption{Performance comparison between \ourmethod{} and RiskSLIM.
    }
    \label{fig:direct_comparison_with_RiskSLIM}
\end{figure}

We experimentally focus on two questions:
(1) How good is \ourmethod's solution quality compared to baselines? (\S \ref{subsec:experiments_solution_quality})
(2) How fast is \ourmethod{} compared with the state-of-the-art? (\S \ref{subsec:experiments_run_time})
In the appendix, we address three more questions:
(3) How much do the sparse beam search, diverse pools, and multipliers contribute to our solution quality?
(\ref{appendix:ablation_study})
(4) How well-calibrated are the models produced by FasterRisk? (\ref{appendix:calibration_curves}) (5)  How sensitive is FasterRisk to each of the hyperparameters in the algorithm? (\ref{appendix:hyperparameter_perturbation_study})

We compare with RiskSLIM (the current state-of-the-art), as well as algorithms Pooled-PLR-RD, Pooled-PLR-RSRD, Pooled-PRL-RDSP, Pooled-PLR-Rand and Pooled-PRL-RDP. These algorithms were all previously shown to be inferior to RiskSLIM \citep{ustun2019learning}. These methods first find a pool of sparse continuous solutions using different regularizations of ElasticNet (hence the name ``Pooled Penalized Logistic Regression'' -- Pooled-PLR) and then round the coefficients with different techniques. Details are in Appendix \ref{app:baseline_specifications}.
The best solution is chosen from this pool of integer solutions that obeys the sparsity and box constraints and has the smallest logistic loss.
We also compare with the baseline AutoScore~\cite{xie2020autoscore}. However, on some datasets, the results produced by AutoScore are so poor that they distort the AUC scale, so we show those results only in Appendix~\ref{app:comparison_with_baseline}. As there is no publicly available code for any of \cite{chevaleyre2013rounding, ErtekinRu15, sokolovska2017fused, sokolovska2018provable}, they do not appear in the experiments.
For each dataset, we perform 5-fold cross validation and report training and test AUC.
Appendix~\ref{app:experimental_setups} presents details of the datasets, experimental setup, evaluation metrics, loss values, and computing platform/environment.
More experimental results appear in Appendix~\ref{app:additional_experimental_results}.

\subsection{Solution Quality}
\label{subsec:experiments_solution_quality}

We first evaluate \ourmethod's solution quality.
Figure \ref{fig:solution_quality} shows the training and test AUC on six datasets (results for training loss appear in Appendix \ref{app:additional_experimental_results}).
\textbf{\ourmethod{} (the \textcolor{red}{red} line) outperforms all baselines, consistently obtaining the highest AUC scores on both the training and test sets.}
Notably, our method obtains better results than RiskSLIM, which uses a mathematical solver and is the current state-of-the-art method for scoring systems. This superior performance is due to the use of multipliers, which increases the complexity of the hypothesis space. 
Figure \ref{fig:direct_comparison_with_RiskSLIM} provides a more detailed comparison between \ourmethod{} and RiskSLIM. One may wonder whether running RiskSLIM longer would make this MIP-based method comparable to our \ourmethod{}, since the current running time limit for RiskSLIM is only 15 minutes. We extended RiskSLIM's running time limit up to 1 hour and show the comparison in Appendix~\ref{app:running_riskslim_longer}; \ourmethod{} still outperforms RiskSLIM by a large margin.

\ourmethod{} performs significantly better than the other baselines for two reasons. First, the continuous sparse solutions produced by  ElasticNet are low quality for very sparse models. Second, it is difficult to obtain an exact model size by controlling $\ell_1$ regularization. For example, Pooled-PLR-RD and Pooled-PLR-RDSP do not have results for model size $10$ on the mammo datasets, because no such model size exists in the pooled solutions after rounding.

\begin{figure}[htbp]     
    \centering
    \includegraphics[width=\textwidth]{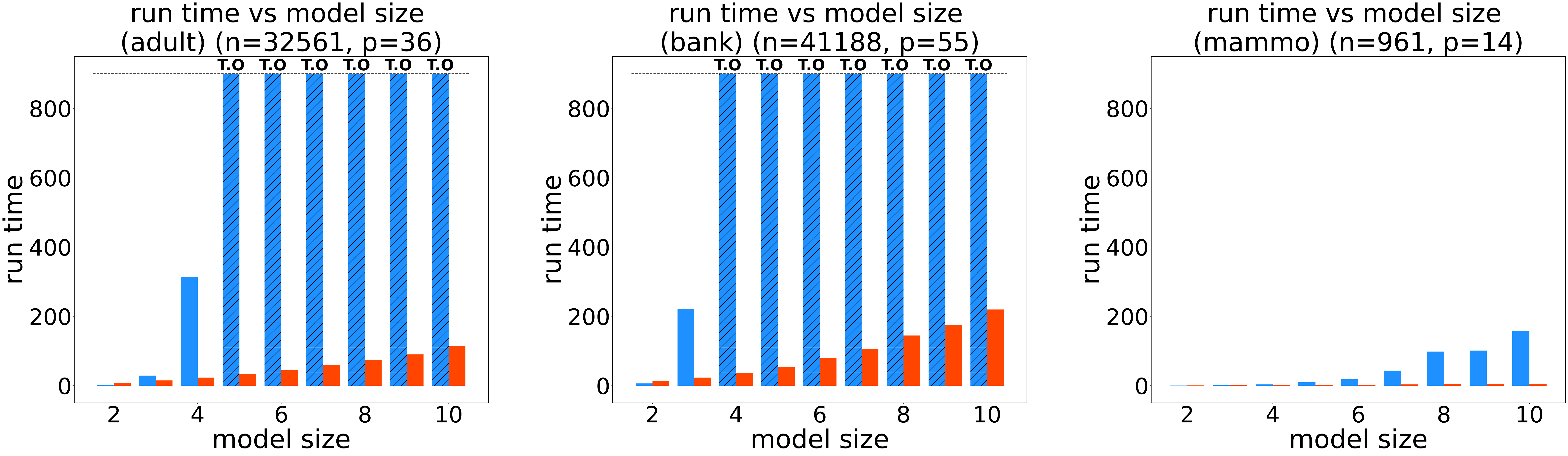}
    \includegraphics[width=\textwidth]{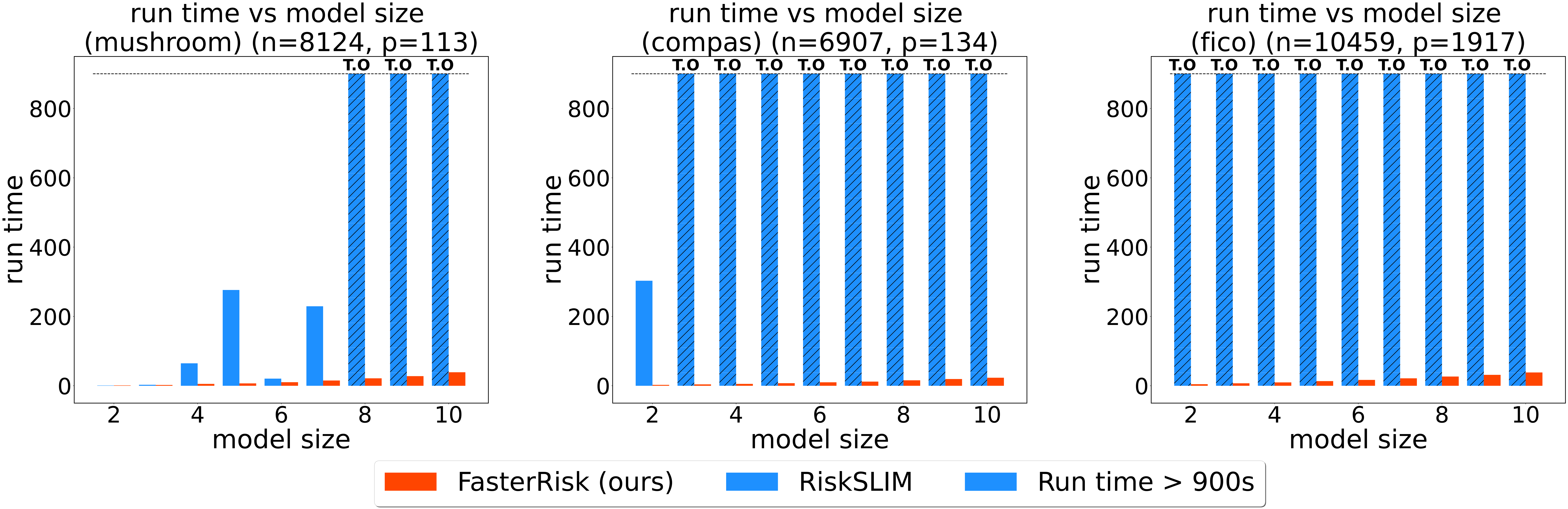}
    
    \caption{Runtime Comparison. Runtime (in seconds) versus model size for our method \ourmethod{} (in \textcolor{red}{red}) and the RiskSLIM (in \textcolor{blue}{blue}). The \textcolor{blue}{shaded blue} bars indicate cases that timed out (T.O.) at 900 seconds. 
    }
    \label{fig:time_comparison}
\end{figure}

\subsection{Runtime Comparison}
\label{subsec:experiments_run_time}
The major drawback of RiskSLIM is its limited scalability. Runtime is important to allow interactive model development and to handle larger datasets.
Figure \ref{fig:time_comparison} shows that \textbf{\ourmethod{} (\textcolor{red}{red} bars) is significantly faster than RiskSLIM (\textcolor{blue}{blue} bars) in general}.
We ran these experiments with a 900 second (15 minute) timeout.
RiskSLIM finishes running on the small dataset mammo, but it times out on the larger datasets, timing out on models larger than 4 features for adult, larger than 3 features for bank, larger than 7 features for mushroom, larger than 2 features for COMPAS, and larger than 1 feature for FICO.
RiskSLIM times out early on COMPAS and FICO datasets, suggesting that the MIP-based method struggles with high-dimensional and highly-correlated features.
Thus, we see that \ourmethod{} tends to be both faster and more accurate than RiskSLIM.

\subsection{Example Scoring Systems}
The main benefit of risk scores is their interpretability. We place a few example risk scores in Table \ref{fig:MoreExampleRiskScore} to allow the reader to judge for themselves. More risk scores examples can be found in Appendix \ref{app:risk_score_models_with_different_sizes}.
Additionally, we provide a pool of solutions for the top 12 models on the bank, mammo, and Netherlands datasets in Appendix \ref{app:examples_from_the_pool}. Prediction performance is generally not the only criteria users consider when deciding to deploy a model. Provided with a pool of solutions that perform equally well, a user can choose the one that best incorporates domain knowledge \citep{XinEtAl2022}. After the pool of models is generated, interacting with the pool is essentially computationally instantaneous.
Finally, we can reduce some models to relatively prime coefficients or transform some features for better interpretability. Examples of such transformations are given in Appendix~\ref{app:model_reduction}.
\begin{table}[htbp]
\small
\centering
\begin{subtable}[t]{0.48\linewidth}
{
\begin{tabular}{|llr|ll|}
\hline
1. &no high school diploma & -4 points &   & ... \\
2. &high school diploma only & -2 points  & + & ... \\
3. &age 22 to 29 & -2 points   & + & ... \\ 
4. &any capital gains & 3 points & + & ...\\
5. &married & 4 points & + & ...\\\hline
     &     & \multicolumn{1}{l|}{\textbf{SCORE}} & = &     \\ \hline
\end{tabular}
}
{
\risktable{}
\begin{tabular}{|r|c|c|c|c|c|}
\hline \rowcolor{scorecolor}\scorelabel{} & $<$-4 & -3 & -2 & -1 & 0 \\
\hline \rowcolor{riskcolor}\risklabel{}  & $<$1.3\% & 2.4\% & 4.4\% & 7.8\% & 13.6\%\\ 
\hline
\hline
\rowcolor{scorecolor}\scorelabel{} & 1 & 2 & 3 & 4 & 7  \\\hline
\rowcolor{riskcolor}\risklabel{} & 22.5\% & 35.0\% & 50.5\% & 65.0\% & 92.2\%  \\ 
\hline
\end{tabular}
}
\caption{\ourmethod{} models for the adult dataset, predicting salary$>50$K.
}
\end{subtable}
\hfill
\begin{subtable}[t]{0.45\linewidth}
{
\begin{tabular}{|llr|ll|}
\hline
1. &odor$=$almond & -5 points &   & ... \\
2. &odor$=$anise & -5 points  & + & ... \\
3. &odor$=$none & -5 points   & + & ... \\ 
4. &odor$=$foul & 5 points & + & ...\\
5. &gill size$=$broad & -3 points & + & ...\\\hline
     &     & \multicolumn{1}{l|}{\textbf{SCORE}} & = &     \\ \hline
\end{tabular}
}
{
\risktable{}
\begin{tabular}{|r|c|c|c|c|}
\hline \rowcolor{scorecolor}\scorelabel{} & -8 & -5 & -3 & $\geq$2\\
\hline \rowcolor{riskcolor}\risklabel{} & 1.62\% & 26.4\% & 73.6\% & >99.8\% \\ 
\hline
 \multicolumn{5}{c}{} \\
 \multicolumn{5}{c}{}  \\[1.5mm] 

\end{tabular}
}

\caption{\ourmethod{} model for the mushroom dataset, predicting whether a mushroom is poisonous.
}
\end{subtable}
\caption{Example \ourmethod{} models}
\label{fig:MoreExampleRiskScore}
\end{table}

\section{Conclusion} 
\ourmethod{} produces a collection of high-quality risk scores within minutes. Its performance owes to three key ideas: a new algorithm for sparsity- and box-constrained continuous models, using a pool of diverse solutions, and the use of the star ray search, which leverages multipliers and a new sequential rounding technique. \ourmethod{} is suitable for high-stakes decisions, and permits domain experts a collection of interpretable models to choose from.


\section*{Code Availability}
Implementations of \ourmethod{} discussed in this paper are available at \url{https://github.com/jiachangliu/FasterRisk}. 

\section*{Acknowledgements}
The authors acknowledge funding from the National Science Foundation under grants IIS-2147061 and IIS-2130250, National Institute on Drug Abuse under grant R01 DA054994, Department of Energy under grants DE-SC0021358 and DE-SC0023194, and National Research Traineeship Program under NSF grants DGE-2022040 and CCF-1934964.
We acknowledge the support of the Natural Sciences and Engineering Research Council of Canada (NSERC).
Nous remercions le Conseil de recherches en sciences naturelles et en génie du Canada (CRSNG) de son soutien.

\bibliography{bibliography}
\bibliographystyle{plainnat}



\appendix

\newpage
\addcontentsline{toc}{section}{Appendix} 
\part{Appendix to FasterRisk: Fast and Accurate Interpretable Risk Scores} 
\parttoc 

\newpage
\section{Additional Algorithms}
\label{app:additional_algorithmic_charts}

\subsection{Expand Support by One More Feature}
\begin{algorithm}[ht]
\caption{\ourmethodExpand}
\textbf{Input:} Dataset $\mathcal{D}$, coefficient constraint $C$, and beam search size $B$, current coefficient vector $(\bw, w_0)$, and a set of found supports $\mathcal{F}$.\\
\textbf{Output:} a collection of solutions $\mathcal{W} = \{(\bu^t, u^t_0) \}$ with $\lVert \bu^t \rVert_0 = \Vert{\bw}_0 + 1, \Vert{\bu^t}_{\infty} \leq C$ for $\forall t$. All of these solutions include the support of $(\bw, w_0)$ plus one more feature. None of the solutions have the same support as any element of $\mathcal{F}$, meaning we do not discover the same support set multiple times. We also output the updated $\mathcal{F}$.
\begin{algorithmic}[1]
    \State Let $\mathcal{S}^c \leftarrow \{j \mid w_j = 0\}$ \Comment{Non-support of the given solution}
    \State $\bw' \leftarrow \mathbf{0}$
    \For{$p = 1, ..., 10$} \Comment{10 steps of parallel coordinate descent with projection}
        \State $w'_{j} \leftarrow w'_j - \nabla_{j} L(\bw + w'_j \be_j, w_0) / l_j$ for $\forall j \in \mathcal{S}^c$ \Comment{$l_j$ is the smallest Lipschitz constant on coordinate $j$ with $L(\bw + w_j' \be_j + d \be_j) - L(\bw + w_j' \be_j) \leq l_j d$ for any $d \in \mathbb{R}$.}
        \State $w'_{j} \leftarrow \textrm{Clip}(w'_j, -C, C)$ for $\forall j \in \mathcal{S}^c$ \Comment{$\textrm{Clip}(x, a, b) = \max(a, \min(x, b))$}
    \EndFor
    \State Pick the $B$ coords ($j$'s) in $\mathcal{S}^c$ with smallest logistic loss $L(\mathcal{D},\bw + \be_j w'_j, w_0)$, call this set $\mathcal{J}'$. \Comment{We will use these supports, which include the support of $\bw$ plus one more.}
    \State $\mathcal{W} \leftarrow \emptyset$
    \For{$j \in \mathcal{J}'$} \Comment{Optimize on the top $B$ coordinates}
        \State If $\textrm{supp}(\bw+\be_j w'_j) \in \mathcal{F}$, continue. \Comment{We've already seen this support, so skip.}
        \State $\mathcal{F} \leftarrow \mathcal{F} \cup \{supp(\bw + \be_j w'_j)\}$. \Comment{Add new support to $\mathcal{F}$.}
        \State $(\bw'', w''_0) \in \argmin_{\bu, u_0} L(\mathcal{D},\bu, u_0)$ with $\textrm{supp}(\bu) = \textrm{supp}(\bw + \be_j w'_j)$ and $\Vert{\bu}_{\infty} \leq C$. \Comment{Fine tune on the newly expanded support using 100$\times|$support$|$ coordinate descent steps and clip operation, or until convergence; use $(\bw+\be_j w'_j, w_0)$ as a warm start for computational efficiency}
        \State $\mathcal{W} \leftarrow \mathcal{W} \cup \{(\bw'', w''_0)\}$
    \EndFor
    \State Return $\mathcal{W}$ and $\mathcal{F}$.
\end{algorithmic}
\end{algorithm}

\subsection{Collect Sparse Diverse Pool}
\begin{algorithm}[ht]
\caption{\ourmethodPool{}}
\label{alg:collectSparseLevelSet}
\textbf{Input:} Dataset $\mathcal{D}$, a coefficient vector $(\bw, w_0)$, an optimality gap tolerance $\epsilon$, and the number of attempts $T$.\\
\textbf{Output:} a set $\mathcal{S}$ containing good sparse continuous solutions.
\begin{algorithmic}[1]
    \State $\mathcal{S} \leftarrow \{ (\bw, w_0) \}$ \Comment{Initialize the sparse level set}
    \State $L^* \leftarrow L(\mathcal{D}, \bw, w_0)$ \Comment{Get the current loss}
    \State $\mathcal{J} \leftarrow \{j \mid w_j \neq 0\}$ \Comment{Get the current support}
    \For{ $j_{-} \in \mathcal{J}$ } \Comment{Remove a feature in the support}
        \State Pick the $T$ coords ($j_+$'s) in $[1, ..., p] \setminus \mathcal{J}$ with the biggest magnitudes of partial derivative $\nabla_{j^+}L(\mathcal{D},\bw - \be_{j_-} w'_j, w_0)$, call this set $\mathcal{J}_+$.
        \For{ $j_{+} \in \mathcal{J}_+$ } \Comment{Put a new feature into the support}
            \State $(\bw'', w_0'') \in \argmin_{\bw', w'_0} L (\mathcal{D}, \bw', w'_0)$ where $w'_j = 0$ if $j \in [1,...,p] \setminus \mathcal{J} \cup \{j_{-}\} $ \Comment{Fit on the new support. Problem is convex. We use coordinate descent for this.}
            \State $L_{\text{swap}} \leftarrow L (\mathcal{D}, \bw'', w''_0)$\Comment{Loss of newly formed and optimized coefficient vector}
            \If{$L_{\text{swap}} \leq (1+\epsilon) L^*$} \Comment{If its loss is good enough, include it in $\mathcal{S}$}
                \State $\mathcal{S} \leftarrow \mathcal{S} \cup \{(\bw'', w''_0) \}$ \Comment{Expand the sparse level set if loss is within the gap}
            \EndIf
        \EndFor
    \EndFor
    \State \textbf{return} $\mathcal{S}$
\end{algorithmic}
\end{algorithm}

\newpage
\subsection{Round Continuous Coefficients to Integers}
\begin{algorithm}[ht]
\caption{\ourmethodRound} \label{alg:rounding_method}
\textbf{Input:} Dataset $\mathcal{D} = {(\bx_i, y_i)}_{i=1}^n$, a sparse continuous solution $(\bw, w_0)$, where $\bw \in \mathbb{R}^p, w_0 \in \mathbb{R}$.

\textbf{Output:} an integer-valued solution $(\bw^+, w^+_0)$, where $\bw^+ \in \mathbb{Z}^p, w^+_0 \in \mathbb{Z}$.

\begin{algorithmic}[1]
    \State $\bw^c \leftarrow [w_0, \bw]$, and $\bx_i \leftarrow [1, \bx_i]$ for $\forall i$. \Comment{Concatenate to incorporate the intercept}
    \State $\bw^+ \leftarrow \bw^c$
    \State $\mathcal{J} \leftarrow \{j: \lceil w^+_j \rceil \neq \lfloor w^+_j \rfloor\}$ \Comment{Feature indices with fractional coefficients}

    \State $\bGamma \leftarrow [\floor{\bw^+}; \floor{\bw^+}; ...; \floor{\bw^+}]^T$ \Comment{n rows of $\floor{\bw^+}$}
    \State Define a new matrix $\bZ$ with entries $Z_{ij} = y_i x_{ij}$
    \State $\bGamma \leftarrow \bGamma + \mathbf{1}_{\bZ \leq 0}$. \Comment{See Theorem\ref{theorem:upperBound_auxiliaryLossRounding} and Second Inequality (Lipschitz continuity). This line performs the calculation: $\gamma_{ij} = \floor{w_j}$ if $y_i x_{ij} > 0$ and $\gamma_{ij} = \ceil{w_{j}}$ otherwise.}
    \For{$i$ = 1 to $n$}
        \State $l_i \leftarrow 1/(1+\exp(y_i \sum_{j=1}^p x_{ij} \Gamma_{ij}))$ \Comment{Chosen so we can calculate local Lipschitz constant}
    \EndFor

    \While{$\mathcal{J} \neq \emptyset$}\Comment{We iteratively round more coeffs in $\bw^+$ until fractional coeffs are gone.}
        \For{$j \in \mathcal{J}$} \Comment{Try rounding both up and down for each $j$}
            \State $\bw^{+j, up} \leftarrow (w^+_1, ..., \lceil w^+_j\rceil, ... w^+_{p+1})^T$, $\quad \quad \;\bw^{+j, down} \leftarrow (w^+_1, ..., \lfloor w^+_j\rfloor, ... w^+_{p+1})^T$
            \State $U^{j, up} \leftarrow \sum_{i=1}^n (l_i \bx_i^T (\bw^{+j, up} - \bw^c))^2$, $\; \; \, \quad U^{j, down} \leftarrow \sum_{i=1}^n (l_i \bx_i^T (\bw^{+j, down} - \bw^c))^2$
        \EndFor \\
        \Comment{Now find the best $j$ and whether to round up or down.}
        \State $U^{up} \leftarrow \min_{j\in \mathcal{J}} U^{j, up}$, $\quad U^{down} \leftarrow \min_{j\in \mathcal{J}} U^{j, down}$
        \If{$U^{up} \leq U^{down}$}
            \State $j' \leftarrow \argmin_{j\in \mathcal{J}} U^{j, up}$, $\mathcal{J} \leftarrow \mathcal{J} \setminus \{j'\}$
            \State $w^+_{j'} \leftarrow \lceil w^+_{j'}\rceil$ \Comment{Round up}
        \Else
            \State $j' \leftarrow \argmin_{j\in \mathcal{J}} U^{j, down}$, $\mathcal{J} \leftarrow \mathcal{J} \setminus \{j'\}$
            \State $w^+_{j'} \leftarrow \lfloor w^+_{j'}\rfloor$ \Comment{Round down}
        \EndIf
    \EndWhile
    \State $w^+_0\leftarrow w^+[1], \bw^+ \leftarrow   \bw^+[2:\textit{end}]$ \Comment{Separate the intercept and the coefficients}
    \State Return $(\bw^+, w^+_0)$
\end{algorithmic}
\end{algorithm}

\section{Comments on Proof of Chevaleyre \textit{et al.}}
\label{app:ChevaleyreWrongProof}

Chevaleyre \textit{et al.}~\cite{chevaleyre2013rounding} proposed Greedy Rounding, where coefficients are rounded sequentially.
While this technique provides theoretical guarantees for greedy rounding for the hinge loss, we identified a serious flaw in their argument, rendering the bounds incorrect.
We elaborate on this matter in this appendix.

The flaw is in the proof of Lemma 7.
The proof essentially shows that for each sample $i$, there is at least one $a$ (from the set $\{0,1\}$) such that the inequality holds. However, the same $a$ that works for sample $i=3$ is not guaranteed to work for sample $i=5$ for the inequality. It is not clear whether there exists one $a$ that make all inequalities (for all samples $i$ in $[1, ..., m]$) hold at the time.

To paraphrase, for each sample $i$, the proof shows that we can pick a set of $a$ (either $\{0\}$, $\{1\}$, or $\{0, 1\}$) so that the inequality holds individually. However, we can not rule out the case that intersection of these individual sets is empty.

Without this extra argument, there is a gap between the statement of Lemma 7 and the proof of Lemma 7. Then, the bound for the greedy algorithm in Theorem 8 will not hold in the paper.

\section{Theoretical Upper Bound for the Rounding Method, Algorithm \ref{alg:rounding_method}}
\label{app:theoretical_upper_bound_derivation}

The following theorem (as also shown in the main paper) states that we can provide an upper bound on the difference of the total loss between the integer solution $\bw^+$ given by Algorithm \ref{alg:rounding_method} and the real-valued solution $\bw$.

\textbf{Theorem} \ref{theorem:upperBound_auxiliaryLossRounding} (Loss incurred from rounding)
Let $\bw$ be the real-valued coefficients for the logistic regression model with objective function $L(\bw) = \sum_{i=1}^n \log(1+\exp(-y_i\bx_i^T \bw))$. Let $\bw^+$ be the integer-valued coefficients returned by the Auxiliaryloss Rounding method, Algorithm \ref{alg:rounding_method}. Furthermore, let $u_j=w_j - \lfloor w_j \rfloor$. Let $l_i = 1/(1+\exp(y_i \bx_i^T \bgamma_i))$ with $\gamma_{ij} = \floor{w_j}$ if $y_i x_{ij} > 0$ and $\gamma_{ij} = \ceil{w_{j}}$ otherwise. Then, we have an upper bound on the difference between the loss $L(\bw)$ and the loss $L(\bw^+)$:
\begin{equation}
    L(\bw^+) - L(\bw) \leq \sqrt{n \sum_{i=1}^n \sum_{j=1}^p l^2_i x_{ij}^2 u_j (1-u_j)}.
\end{equation}

To prove Theorem \ref{theorem:upperBound_auxiliaryLossRounding}, we need to use the following Lemma \ref{lemma:upperBound_auxiliaryLossRounding_per_step}, which states that during each successive step of rounding a real-valued coefficient to the integer value, the deviation can be characterized and bounded by the data features and the real-valued coefficient.
\begin{lemma} \label{lemma:upperBound_auxiliaryLossRounding_per_step}
Suppose we have rounded the first $k-1$ real-valued coefficients to integers. Then for the $k$-th real-valued coefficient, if we set $w^+_k = \argmin_{v \in \{\floor{w_k}, \ceil{w_k}\}} \sum_{i=1}^n l_i^2 (\sum_{j=1}^{k-1}x_{ij}(w^+_j -w_j) + x_{ik}(v - w_k))^2$, then we have
\begin{equation}
    \sum_{i=1}^n l_i^2 \left(\sum_{j=1}^k x_{ij}(w^+_j - w_j)\right)^2 \leq \sum_{i=1}^n l_i^2 \left(\sum_{j=1}^{k-1} x_{ij}(w^+_j - w_j)\right)^2 + \sum_{i=1}^n l_i^2 x_{ik}^2 (1-u_k) u_k
\end{equation}
where $u_k = w_k - \lfloor w_k\rfloor$.
\end{lemma}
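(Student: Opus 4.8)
The plan is to reduce the lemma to a comparison between just two scalars --- the objective value obtained by rounding coordinate $k$ down versus up --- and then bound their minimum by a carefully weighted average. First I would introduce the partial residual $R_i := \sum_{j=1}^{k-1} x_{ij}(w_j^+ - w_j)$ accumulated over the already-rounded coordinates, so that the quantity being minimized over $v \in \{\floor{w_k}, \ceil{w_k}\}$ is $\sum_{i=1}^n l_i^2 (R_i + x_{ik}(v - w_k))^2$. Writing $u_k = w_k - \floor{w_k}$, the two admissible choices produce residuals $R_i - x_{ik} u_k$ (rounding down, since $\floor{w_k} - w_k = -u_k$) and $R_i + x_{ik}(1 - u_k)$ (rounding up, since $\ceil{w_k} - w_k = 1 - u_k$). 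Denote the resulting objective values $S_{\mathrm{down}}$ and $S_{\mathrm{up}}$; because $w_k^+$ is defined as the minimizer over the two options, the left-hand side of the lemma equals $\min(S_{\mathrm{down}}, S_{\mathrm{up}})$.

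The key step is the observation that a minimum of two numbers is bounded above by any convex combination of them, and I would choose the specific weights $(1-u_k)$ on $S_{\mathrm{down}}$ and $u_k$ on $S_{\mathrm{up}}$. This corresponds to the unbiased randomized-rounding distribution --- round up with probability $u_k$, down with probability $1 - u_k$ --- under which the expected increment $(v - w_k)$ is exactly zero. That vanishing expectation is precisely what forces the linear cross-terms to cancel. Concretely, I would expand $(1-u_k)\sum_i l_i^2 (R_i - x_{ik} u_k)^2 + u_k \sum_i l_i^2 (R_i + x_{ik}(1-u_k))^2$ term by term: the $R_i^2$ coefficients combine to $1$, the $R_i x_{ik}$ cross-terms $-2(1-u_k)u_k$ and $+2 u_k(1-u_k)$ cancel, and the $x_{ik}^2$ coefficients combine as $(1-u_k)u_k^2 + u_k(1-u_k)^2 = u_k(1-u_k)$.

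This yields exactly $\sum_i l_i^2 R_i^2 + u_k(1-u_k)\sum_i l_i^2 x_{ik}^2$, which is the right-hand side of the lemma once $R_i$ is unfolded back into $\sum_{j=1}^{k-1} x_{ij}(w_j^+ - w_j)$. Chaining this identity with $\min(S_{\mathrm{down}}, S_{\mathrm{up}}) \le (1-u_k)S_{\mathrm{down}} + u_k S_{\mathrm{up}}$ completes the argument. The only real obstacle is spotting the correct convex weights: any other weighting leaves a residual linear cross-term in $R_i x_{ik}$ that could not be controlled by the data features alone, so the whole proof hinges on recognizing that the mean-zero rounding distribution is what makes the bound collapse to a clean variance-type term. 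A minor edge case to dispatch is when $w_k$ is already an integer, in which case $u_k = 0$, both options coincide, and the inequality holds trivially with the extra term equal to zero.
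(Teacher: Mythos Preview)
Your proposal is correct and is essentially the same argument as the paper's proof, phrased in convex-combination language rather than probabilistic language. The paper introduces a Bernoulli variable $z_k$ with $\Pr(z_k=1)=u_k$ and computes $\mathbb{E}[f(\lfloor w_k\rfloor+z_k)]$, then uses ``some realization is at most the expectation''; this is exactly your inequality $\min(S_{\mathrm{down}},S_{\mathrm{up}})\le(1-u_k)S_{\mathrm{down}}+u_kS_{\mathrm{up}}$ together with the identical expansion showing the cross term vanishes and the quadratic term contributes $u_k(1-u_k)\sum_i l_i^2 x_{ik}^2$.
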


\begin{proof}
Let $z_k$ be a binomial random variable so that $z_k = 1$ with probability $u_j$ and $z_k=0$ with probability $1-u_k$. For notational convenience, let us define the function $f(v) := \sum_{i=1}^n l_i^2 \left(\sum_{j=1}^{k-1} x_{ij} (w^+_j -w_j)+ x_{ik}( v - w_k)\right)^2$. Then $f(\floor{w_k} + z_k)$ is a random variable, and the input to function $f(\cdot)$, which is $\floor{w_k} + z_k$, takes on values either $\floor{w_k}$ or $\ceil{w_k}$.

The expectation of this random variable is
\begin{align*}
    & \mathbb{E}_{z_k} \left[ f(\floor{w_k} + z_k) \right] \\
    =& \mathbb{E}_{z_k} \left[\sum_{i=1}^n l_i^2 \left(\sum_{j=1}^{k-1} x_{ij} (w^+_j -w_j)+ x_{ik}(\floor{ w_k} + z_k - w_k)\right)^2 \right] \\
    =& \sum_{i=1}^n l_i^2 \; \mathbb{E}_{z_k} \left[ \left(\sum_{j=1}^{k-1} x_{ij} (w^+_j -w_j)+ x_{ik}( \floor{ w_k} + z_k - w_k)\right)^2 \right] \tag*{ \textit{  \# move  $\mathbb{E(\cdot)}$  inside the $\sum(\cdot)$ } } \\
    =& \sum_{i=1}^n l_i^2 \; \mathbb{E}_{z_k} \left[ \left(\sum_{j=1}^{k-1} x_{ij} (w^+_j -w_j)+ x_{ik}( z_k - u_k)\right)^2 \right] \tag*{\textit{\# substitute with } $u_k = w_k - \floor{w_k}$ }\\
    =& \sum_{i=1}^n l_i^2 \; \left[ \left(\sum_{j=1}^{k-1} x_{ij} (w^+_j -w_j) \right)^2  + 2x_{ik} \left (\sum_{j=1}^{k-1} x_{ij}(w^+_j - w_j) \right) \mathbb{E}_{z_k} \left[z_k - u_k \right] \right. \\
    & \quad \quad \quad \left. + x_{ik}^2 \mathbb{E}_{z_k} \left[ (z_k - u_k)^2 \right] \vphantom{\sum_{j=1}^{k-1}} \right]. 
    \tag*{ \textit{\# expand the square  term } }
\end{align*}
Notice that because $\mathbb{P}(z_k=1)=u_k$,  $\mathbb{P}(z_k=0)=1-u_k$, we have
\begin{align*}
    \mathbb{E}_{z_k} \left[ z_k - u_k\right] = (1-u_k) u_k + (0-u_k) (1-u_k) = 0
\end{align*}
and
\begin{align*}
    \mathbb{E}_{z_k} \left[ (z_k - u_k)^2\right] = (1-u_k)^2 u_k + (0-u_k)^2 (1-u_k) = u_k (1-u_k). \tag*{ \textit{\# similar as above}}
\end{align*}
Therefore, we have
\begin{align*}
    &\mathbb{E}_{z_k} \left[ f(\floor{w_k} + z_k) \right] \\
    =& \sum_{i=1}^n l_i^2 \; \left[ \left(\sum_{j=1}^{k-1} x_{ij} (w^+_j -w_j) \right)^2  \right. \\
    & \quad \quad \quad \quad \quad \left. + 2x_{ik} \left (\sum_{j=1}^{k-1} x_{ij}(w^+_j - w_j) \right) \mathbb{E}_{z_k} \left[z_k - u_k \right] + x_{ik}^2\mathbb{E}_{z_k} \left[(z_k - u_k)^2 \right] \vphantom{\sum_{j=1}^{k-1}} \right] \\
    =& \sum_{i=1}^n l_i^2 \; \left[ \left(\sum_{j=1}^{k-1} x_{ij} (w^+_j -w_j) \right)^2  + x_{ik}^2 u_k (1-u_k) \right] \tag*{\textit{\# plug in the two expectations above }}\\
    =& \sum_{i=1}^n  l_i^2 \; \left(\sum_{j=1}^{k-1} x_{ij} (w^+_j -w_j) \right)^2 + \sum_{i=1}^n l_i^2 \; x_{ik}^2 u_k (1-u_k). \tag*{ \textit{\# split into two summation terms} }
\end{align*}

Since the expectation of $f(\floor{w_k} + z_k )$ is equal to $\sum_{i=1}^n  l_i^2 \; \left(\sum_{j=1}^{k-1} x_{ij} (w^+_j -w_j) \right)^2  + \sum_{i=1}^n l_i^2 \; x_{ik}^2 u_k (1-u_k)$, there exists a $z_k' \in \{0, 1\}$ such that
\begin{align}
\label{eq:per_step_inequality}
    f(\floor{w_k} + z_k') \leq \sum_{i=1}^n  l_i^2 \; \left(\sum_{j=1}^{k-1}  x_{ij} (w^+_j -w_j) \right)^2  + \sum_{i=1}^n l_i^2 \; x_{ik}^2 u_k (1-u_k).
\end{align}
Note that $\floor{w_k} + z_k'$ is the minimizer of $f(\cdot)$ because the other input value $\floor{w_k} + 1 - z_k'$ will take the value $f(\floor{w_k} + 1 - z_k')$, which is greater than or equal to the expectation $\mathbb{E}_{z_k} [f(\floor{w_k} + z_k)]$.

If we round $w_k$ to an integer by setting $w^+_k = \floor{w_k} + z_k'$, then $w^+_k =\argmin_{v \in \{\floor{w_k}, \ceil{w_k}\}} f(v)$.
We now have:
\begin{align*}
    \sum_{i=1}^n l_i^2 \left(\sum_{j=1}^k x_{ij}(w^+_j - w_j)\right)^2 =& \min_{v \in \{\floor{w_k}, \ceil{w_k} \}} f(v) \tag*{ \textit{\# definition of $w_k^+ $} and $f(\cdot)$}\\
    =&  \min_{c \in \{0, 1\}} f(\floor{w_k} + c) \tag*{ \textit{\# substitute $v=\floor{w_k} + c$}}\\
    =& f(\floor{w_k} + z'_k) \tag*{ \textit{\# $\floor{w_k} + z'_k$ is the minimizer of $f(\cdot)$}}\\
    \leq & \sum_{i=1}^n l_i^2 \left(\sum_{j=1}^{k-1} x_{ij}(w^+_j - w_j)\right)^2 + \sum_{i=1}^n l_i^2 x_{ik}^2 (1-u_k) u_k, \tag*{\textit{\# Inequality \ref{eq:per_step_inequality}}}
\end{align*}
thus completing our proof.

\end{proof}

Now we can use Lemma \ref{lemma:upperBound_auxiliaryLossRounding_per_step} to prove Theorem \ref{theorem:upperBound_auxiliaryLossRounding}.

\noindent\textit{Proof of Theorem \ref{theorem:upperBound_auxiliaryLossRounding}.}
For simplicity, let us first consider the case where we round coefficients sequentially from $w^+_1$ to $w^+_p$. We claim that if at each step $r$, we round $w^+_r = \argmin_{v \in \{\floor{w_r}, \ceil{w_r}\}} \sum_{i=1}^n l_i^2 \left(\sum_{j=1}^{l-1}x_{ij}(w^+_j -w_j) + x_{ir}(v - w_r)\right)^2$, then for $\forall k \in [1, ..., p]$
\begin{equation}
    \sum_{i=1}^n l_i^2 \left(\sum_{j=1}^k x_{ij}(w^+_j -w_j)\right)^2 \leq \sum_{i=1}^n \sum_{j=1}^k l_i^2 x_{ij}^2 u_j (1-u_j).
\label{ineq:chain_bound}
\end{equation}
We prove this by the principle of induction. Suppose for step $k-1$, we have
\begin{align*}
    \sum_{i=1}^n l_i^2 \left(\sum_{j=1}^{k-1} x_{ij}(w^+_j -w_j)\right)^2 \leq \sum_{i=1}^n \sum_{j=1}^{k-1} l_i^2 x_{ij}^2 u_j (1-u_j).
\end{align*}
Then, according to Lemma \ref{lemma:upperBound_auxiliaryLossRounding_per_step} and the previous line, we have
\begin{align*}
    \sum_{i=1}^n l_i^2 \left(\sum_{j=1}^{k} x_{ij}(w^+_j -w_j)\right)^2 &\leq
    \sum_{i=1}^n \sum_{j=1}^{k-1} l_i^2 x_{ij}^2 u_j (1-u_j) + \sum_{i=1}^n l_i^2 x_{ik}^2 u_k (1-u_k) \tag*{\textit{\# Lemma \ref{lemma:upperBound_auxiliaryLossRounding_per_step}}}\\
    &= \sum_{i=1}^n \sum_{j=1}^{k} l_i^2 x_{ij}^2 u_j (1-u_j). \tag*{\textit{\# use a single sum $\sum_{i=1}^n (\cdot)$}}
\end{align*}
For the base step $k=1$, Lemma \ref{lemma:upperBound_auxiliaryLossRounding_per_step} also implies that
\begin{align*}
    \sum_{i=1}^n l_i^2 (x_{i1}(w^+_1 - w_1))^2 \leq \sum_{i=1}^n l_i^2 x_{i1}^2 u_1 (1-u_1).
\end{align*}
Thus, Inequality (\ref{ineq:chain_bound}) works for all $k$.
If we let $k=p$, we have
\begin{align}
    \sum_{i=1}^n l_i^2 \left( \sum_{j=1}^p x_{ij}(w^+_j -w_j) \right)^2 \leq \sum_{i=1}^n \sum_{j=1}^p l_i^2 x_{ij}^2 u_j (1-u_j).
\label{ineq:chain_bound_last}
\end{align}

Also, notice that this inequality holds for sequential rounding of any permutation of the feature indices $[1, ..., p]$, and the rounding order of the \ourmethodRound{} method is one specific feature order. Therefore, the Inequality (\ref{ineq:chain_bound_last}) works for the \ourmethodRound{} method as well.

Lastly, we use Inequality \ref{ineq:chain_bound_last} to derive an upper bound on the logistic loss of the \ourmethodRound{} method.
Recall that our objective is:
\begin{equation}
    L(\bw) = \sum_{i=1}^n \log(1+\exp(-y_i\bx_i^T \bw)).
\end{equation}
The loss difference between the rounded solution and the real-valued solution can be bounded as follows:
\begin{align}
    L(\bw^+) - L(\bw^*) &\leq \sum_{i=1}^n \left[\log(1+\exp(-y_i \bx_i^T \bw^+)) - \log(1+\exp(-y_i \bx_i^T \bw))\right] \nonumber \\
    &\leq \sum_{i=1}^n \vert{ l_i (y_i \bx_i^T \bw^+ - y_i \bx_i^T \bw ) } \tag*{\textit{\# Lipschitz continuity, see details below }}\\
    &= \sum_{i=1}^n \vert{ l_i y_i \bx_i^T (\bw^+-\bw) } \tag*{\textit{\# pull out common factor }} \nonumber \\
    &= \sum_{i=1}^n \vert{ l_i \bx_i^T (\bw^+-\bw) } \tag*{\textit{\# since $\vert{y_i}=1$ }} \nonumber \\
    &\leq \sum_{i=1}^n \sqrt{l_i^2 \left(\sum_{j=1}^p x_{ij}(w^+_j - w_j)\right)^2} \tag*{\textit{\# rewrite $\vert{\cdot}$ in terms of $\sqrt{\cdot}$ } } \nonumber \\
    &\leq \sqrt{n \sum_{i=1}^n l_i^2 \left(\sum_{j=1}^p x_{ij}(w^+_j - w_j)\right)^2}  \tag*{\textit{\# Jensen's Inequality, see details below}}
\end{align}
There are two inequalities we need to elaborate in details, the second and the last inequalities (Lipschitz continuity and Jensen's Inequaltiy).

\textit{Second Inequality (Lipschitz continuity):}

The second inequality holds because the logistic loss $g(a)=\log(1+\exp(-a))$ is Lipschitz continuous. If the Lipschitz constant is $l$, then we have $|g(a)-g(b)| \leq l \; |a-b|$. We now explain how we derive the Lipschitz constant $l_i = 1/(1+\exp(y_i \bx_i^T \bgamma_i))$ with $\gamma_{ij} = \floor{w_j}$ if $y_i x_{ij} > 0$ and $\gamma_{ij} = \ceil{w_{j}}$ as stated in Theorem \ref{theorem:upperBound_auxiliaryLossRounding}.

Since the logistic loss function $g(\cdot)$ is differentiable, the smallest Lipschitz constant of the function $g(\cdot)$ is $l_{\min} (g) = \sup_{a \in \textrm{Domain}(g)}  |g'(a)|$. To see this, by the definition of the Lipschitz constant, we have $\frac{|g(a) - g(b)|}{|a-b|} \leq l$. If we take the limit $b \rightarrow a$, the inequality still holds, $\lim_{b \rightarrow a} \frac{|g(a) - g(b)|}{|a-b|} \leq l$. The left hand side converges to the absolute value of the derivative of $g(\cdot)$ at $a$. Therefore, we have $|g'(a)| \leq l$. Since this works for all $a$, and we want to find the smallest Lipschitz value, we have $l_{\min} (g) = \sup_{a \in \textrm{Domain}(g)}  |g'(a)|$.

For the logistic loss $g(a) = \log(1+e^{-a})$, the absolute value of the derivative is $ |g'(a)| = \frac{1}{1+e^a}$. Thus, if $a$ is lower-bounded so that $a \geq a_1$, the smallest Lipschitz constant of the logistic loss is $l_{\min} (g) = \frac{1}{1+e^{a_1}}$.

We can apply this fact to calculate a smaller Lipschitz constant for each sample's term. If $\gamma_{ij} := \floor{w_j}$ if $y_i x_{ij} > 0$ and $\gamma_{ij} := \ceil{w_{j}}$ otherwise, then
\begin{align*}
    y_i \bx_i \bw^+ \geq y_i \bx_i^T \bgamma_i, \text{ and } |g'(y_i \bx_i \bw^+)| \leq 1/(1+\exp(y_i \bx_i^T \bgamma_i)).
\end{align*}
Therefore, $l_i=1/(1+\exp(y_i \bx_i^T \bgamma_i))$ is a valid Lipschitz constant for the $i$-th sample.

\textit{Last Inequality (Jensen's inequality):}

Jensen's Inequality states that $\mathbb{E}_z [\phi(g(z))] \geq \phi(\mathbb{E}_z [g(z)])$ for any convex function $\phi(\cdot)$. For this specific problem, let $\phi(b) = - \sqrt{b}$ and let $g(z) = l_i^2 (\sum_{j=1}^p x_{ij} (w^+_j - w_j))^2$ for a particular $i$ with probability $\frac{1}{n}$. Then, we have
\begin{align*}
    \sum_{i=1}^n \sqrt{l_i^2 \left(\sum_{j=1}^p x_{ij}(w^+_j - w_j)\right)^2} &= n \sum_{i=1}^n \frac{1}{n} \sqrt{l_i^2 \left(\sum_{j=1}^p x_{ij}(w^+_j - w_j)\right)^2} \tag*{\textit{\# multiply and divide by $n$}}\\
    &= -n \mathbb{E}_z [\phi(g(z))] \tag*{\textit{\# definition of $\phi(\cdot)$, $g(\cdot)$}, and $\mathbb{E}(\cdot)$}\\
    &\leq -n \phi(\mathbb{E}_z [g(z)]) \tag*{\textit{\# Jensen's Inequality}} \\
    &= n \sqrt{\frac{1}{n} \sum_{i=1}^n l_i^2 \left(\sum_{j=1}^p x_{ij} (w^+_j - w_j)\right)^2} \tag*{\textit{\# write out $\phi(\cdot)$, $g(\cdot)$}, and $\mathbb{E}(\cdot)$ explicitly} \\
    &= \sqrt{n \sum_{i=1}^n l_i^2 \left(\sum_{j=1}^p x_{ij} (w^+_j - w_j)\right)^2}. \tag*{\textit{\# move $n$ inside $\sqrt{\cdot}$}}
\end{align*}
Therefore, using Inequality \ref{ineq:chain_bound_last}, we can now bound the loss difference between the rounded solution and the real-valued solution as stated in Theorem \ref{theorem:upperBound_auxiliaryLossRounding}:
\begin{align*}
    L(\bw^+) - L(\bw) &\leq \sqrt{n \sum_{i=1}^n l_i^2 \left(\sum_{j=1}^p x_{ij}(w^+_j - w_j)\right)^2} \\
    &\leq \sqrt{n \sum_{i=1}^n \sum_{j=1}^p l_i^2 x_{ij}^2 u_j (1-u_j)}.
\end{align*} \qed 

\section{Experimental Setup}
\label{app:experimental_setups}

\subsection{Dataset Information}
\label{app:dataset_information}
The dataset names, data source, number of samples and features, and the classification tasks can be found in Table \ref{tab:data_info}. The datasets with results shown in the main paper (adult, bank, breastcancer, mammo, mushroom, spambase) were directly downloaded from this link: \url{https://github.com/ustunb/risk-slim/tree/master/examples/data}.
The COMPAS dataset can be downloaded from this link: \url{https://github.com/propublica/compas-analysis/blob/master/compas-scores-two-years.csv}.
The FICO dataset can be requested and downloaded from this website: \url{https://community.fico.com/s/explainable-machine-learning-challenge}.
The Netherlands dataset is available through Data Archiving and Networked Services \url{https://easy.dans.knaw.nl/ui/datasets/id/easy-dataset:78692}.

For our experiments on the COMPAS, FICO, and Netherlands datasets, we convert the continuous features into a set of highly correlated dummy variables, with all entries equal to 1 or 0. By conducting experiments on these three datasets, we can test how well \ourmethod{} works for highly correlated features. We use the preprocessing steps as explained in Section C2 of \cite{LiuEtAl2022}. We list the key preprocessing steps below.

\textbf{COMPAS:} In addition to the label \textit{``two\_year\_recid''}, we use features \textit{``sex'', ``age'', ``juv\_fel\_count'', ``juv\_misd\_count'', ``juv\_other\_count'', ``priors\_count''}, and \textit{``c\_charge\_degree''}.

\textbf{FICO:} All continuous features are used.

\textbf{Netherlands:} In addition to the label \textit{``recidivism\_in\_4y''}, we use features \textit{``sex'', ``country of birth'', ``log \# of previous penal cases'', ``11-20 previous case'', and ``$>$20 previous case'', ``age in years'', ``age at first penal case''}, and \textit{``offence type''}.

For each continuous variable $x_{\cdot,j}$, it is converted into a set of highly correlated dummy variables $\tilde{x}_{\cdot,j,\theta} = \bm{1}_{[x_{\cdot,j}\leq \theta]}$, where $\theta$ are all unique values that have appeared in feature column $j$. For Netherlands, special preprocessing steps are performed for \textit{``age in years''} (which is real-valued, not integer) and \textit{``age at first penal case''}. Instead of considering all unique values in the feature column, we consider 1000 quantiles. 

\begin{table}[ht]
    \centering
    \begin{tabular}{|l|l|c|c|l|}\hline
    Dataset & Source & N & P & Classification task\\\hline
    adult & \cite{kohavi1996scaling} & 32561 & 36 & Predict if a U.S. resident earns more than \$50,000  \\\hline
    bank & \cite{moro2014data} & 41188 & 55 & Predict if a person opens account after marketing call \\\hline
    breastcancer & \cite{mangasarian1995breast} & 683 & 9 & Detect breast cancer using a biopsy   \\ \hline
    mammo & \cite{elter2007prediction} & 961 & 14 & Detect breast cancer using a mammogram\\\hline
    mushroom & \cite{schlimmer1987concept} & 8124 & 113 & Predict if a mushroom is poisonous\\\hline
    spambase & \cite{cranor1998spam} & 4601 & 57 & Predict if an e-mail is spam\\\hline
    COMPAS & \cite{LarsonMaKiAn16} & 6907 & 134 & Predict if someone will be arrested $\leq$ 2 years of release\\\hline
    FICO & \cite{fico} & 10459 & 1917 & Predict if someone will default on a loan\\\hline
    Netherlands & \cite{tollenaar2013method} & 20000 & 2024 & Predict if someone will have any charge within 4 years\\\hline
    
    \end{tabular}
    \caption{Dataset information. Breastcancer and spambase datasets have real-valued features. All other datasets have binary (0 or 1) features.}
    \label{tab:data_info}
\end{table}

\subsection{Computing Platform}
\label{app:computing_platform}
We ran all experiments on a TensorEX TS2-673917-DPN Intel Xeon Gold 6226 Processor with 2.7Ghz (768GB RAM 48 cores). For all experiments, we used only two cores because we observed using more cores did not improve the computational speed further.

\subsection{Baselines}
\label{app:baseline_specifications}
We compare with several baselines in our experiments.

\textbf{RiskSLIM} The current state-of-the-art method is RiskSLIM. We installed this package from the following GitHub link: \url{https://github.com/ustunb/risk-slim}\footnote{The license for this package is BSD 3-Clause license. The license can be viewed on the GitHub page.}. RiskSLIM uses the IBM CPLEX MIP solver to do the optimization. The CPLEX version we used is 12.8.

\textbf{Pooled Approaches} For other baselines, we first found a pool of continuous sparse solutions by the ElasticNet \cite{zou2005regularization} method and then rounded the coefficients to integers with different rounding techniques. Because ElasticNet has $\ell_1$ and $\ell_2$ penalties, we call this method the penalized logistic regression (PLR) approach. The best integer solution was selected from this pool based on which solution produces the smallest logistic loss while obeying the sparsity constraint and box constraints. These baselines correspond to the Pooled Approaches in Section 5.1 of \cite{ustun2019learning}, where 
Figure 11 and Figure 12 clearly show that pooled approaches are much better than traditional approaches. We include Unit Weighting and Rescaled Rounding as two additional rounding methods.
The details of the pooled approach and the rounding techniques can be found in Section 5.1 of \cite{ustun2019learning}.

The ElasticNet method tries to solve the following optimization problem:
\begin{align}
    \min_{\bw } &\frac{1}{2n}\sum_{i=1}^{n} \log(1+\exp(-  y_i \bx_i^T \bw ))+ \lambda \cdot (\alpha \Vert{\bw}_1 + (1-\alpha) \Vert{\bw}_2^2)
\end{align}
where $\alpha \in $ [0,1] is a hyperparameter. By controlling $\alpha$, we choose the best model over Ridge ($\alpha$ =0), Lasso ($\alpha$=1), and Elastic net ($0<\alpha<1$). We generated 1,100 models using the glmnet package\footnote{We installed the package from the following GitHub link: \url{https://github.com/bbalasub1/glmnet_python} The package contains  GNU license, which can be viewed on the GitHub website.}. To do this, we first choose 11 values of $\alpha \in \{$0, 0.1, 0.2, ..., 0.9, 1.0$\}$. For each given $\alpha$, the package then internally and automatically selects 100 $\lambda$'s equi-spaced on the logarithmic scale between $\lambda_{\min}$ and $\lambda_{\max}$ (the smallest value for $\lambda$ such that all the coefficients are zero). We call this part the \textit{Pooled-PLR} (Pooled Penalized Logistic Regression).


To convert each continuous sparse model to an integer sparse model, we applied the following rounding methods:
\begin{itemize}
    \item 1) Pooled-PLR-RD: For each of the 1,100 PLR models in the pool, we first truncated all the coefficients (except the intercept $\beta_0$) to be within the range [-5,5] and did simple rounding:
$\beta_j=\lceil \min(\max(\beta_j,-5),5)\rfloor$, and $\beta_0=\lceil \beta_0 \rfloor$. The $\lceil \cdot \rfloor$ operation is defined as $\lceil a \rfloor = \ceil{a}$ if $\vert{a - \ceil{a}} < \vert{a - \floor{a}}$ and $\lceil a \rfloor = \floor{a}$ otherwise.

    \item 2) Pooled-PLR-RDU: For each solution, we rounded each of its coefficients to be $\pm$1 based on its signs: $w_j=\text{sign}(w_j) \mathds{1}_{[w_j \neq 0]}$ and $w_0=\lceil w_0 \rfloor$ This rounding technique is known as unit weighting or the Burgess method.

    \item 3) Pooled-PLR-RSRD: For each solution, we rescaled its coefficients by a factor $\gamma$ so that $\gamma w_{\max} = \pm 5$ and then rounded each rescaled coefficient to the nearest integer: $w_j=\lceil \gamma w_j \rfloor, \gamma=\frac{5}{\max_j |\lambda_j|}$ and $w_0=\lceil w_0 \rfloor $. 

    \item 4) Pooled-PLR-Rand: For each model in the pool, for each coefficient, denote its fractional part by $u_j = w_j - \floor{w_j}$. We rounded each coefficient up to $\ceil{w_j}$ with probability $u_j$ and down to $\ceil{w_j}$ with probability $1-u_j$. After all rounding was done, we selected the best model in the pool. 
    
    \item 5) Pooled-PLR-RDP: For each model in the pool, we iterated through each coefficient $\beta_j$ and calculated the loss for both $\lceil \beta_j \rceil$ and $\lfloor \beta_j \rfloor$ and selected the rounding that minimizes the loss.  This is called Sequential Rounding in \cite{ustun2019learning}.
    
    \item 6) Pooled-PLR-RDSP: we first rounded through Sequential Rounding (Method 5, just above), and then we applied Discrete Coordinate Descent (DCD) \cite{ustun2019learning} to iteratively improve the loss by adjusting one coefficient at a time.
    At each round, DCD selects the coefficient and its new value that decreases the logistic loss the most.
\end{itemize}

As mentioned earlier, after we get the 1,100 integer sparse models via each rounding technique, we selected the best model from the pool based on which solution has the smallest logistic loss.

\subsection{Hyperparameters Specification}
\label{sec:hyperparameter_specification}
We used the default values in Algorithm \ref{alg:overall_algorithm} for all hyperparameters. We reiterate the hyperparameters used in the experiments below.
\begin{itemize}
    \item beam search size: $B = 10$.
    \item tolerance level for sparse diverse pool: $\epsilon = 0.3$ (or 30\%).
    \item number of attempts to try for sparse diverse pool: $T=50$.
    \item number of multipliers to try: $N_m = 20$.
\end{itemize}
Performance is not particularly sensitive to these choices (see Appendix \ref{appendix:hyperparameter_perturbation_study}). If $T$, $N_m$, $B$ are chosen too large, the algorithm will take longer to execute.

\newpage
\section{Additional Experimental Results}
\label{app:additional_experimental_results}
\subsection{Additional Results on Solution Quality}

In addition to the six datasets we show in the main paper, we provide results on the breastcancer, spambase, and Netherlands datasets (see Section \ref{app:dataset_information} for more data information).
The comparison of solution quality is shown in Figure \ref{fig:AUC_breastcancer_spambase_netherlands}. We see that \ourmethod{} outperforms both RiskSLIM and other pooled approaches, even with high dimensional feature spaces and in the presence of highly correlated features (the Netherlands dataset).

\begin{figure}[ht] 
    \centering
    \includegraphics[width=\textwidth]{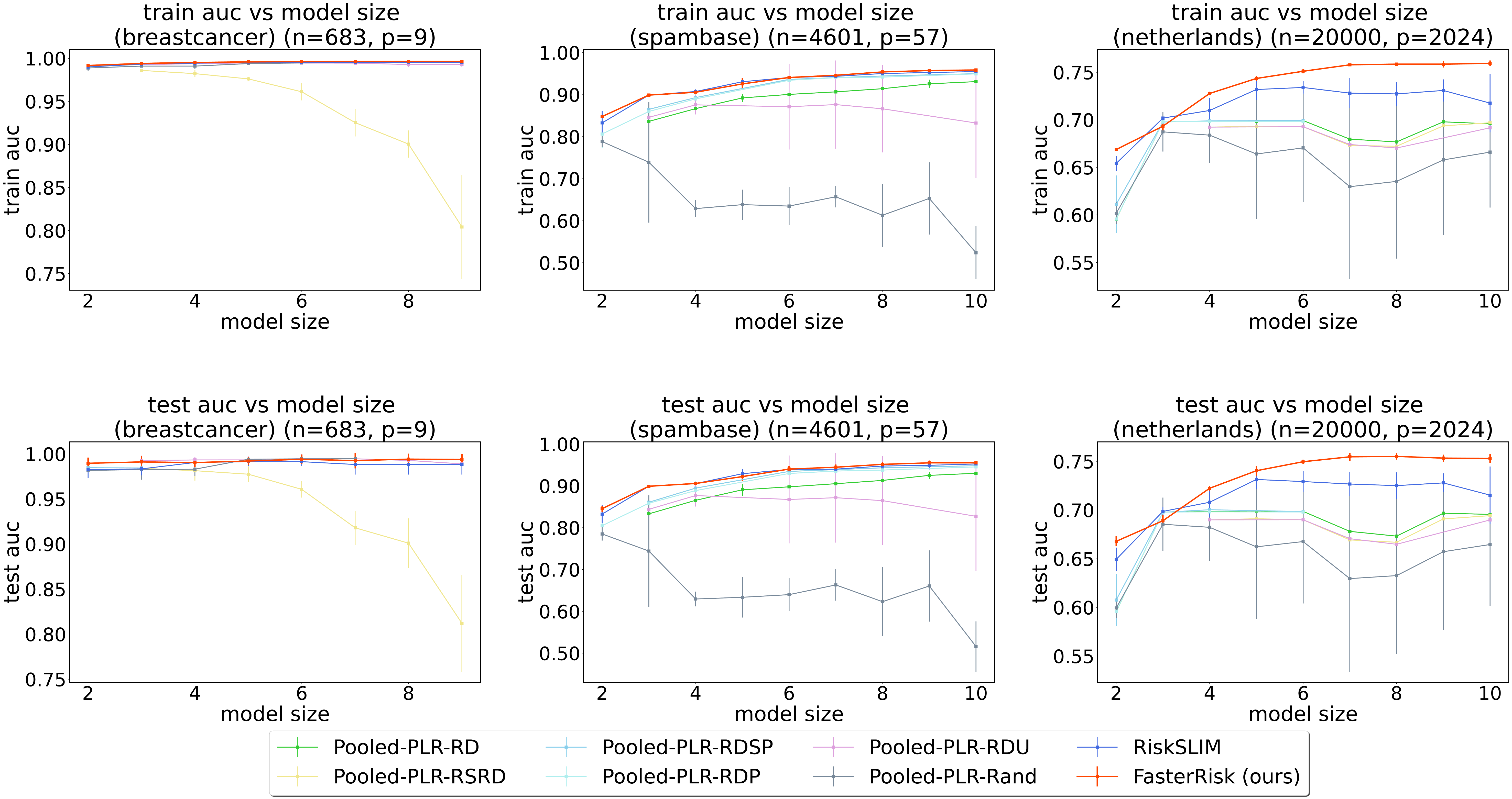}
   
    \caption{Performance comparison on the breastcancer, spambase, and Netherlands datasets. Top row is training AUC (higher is better) and bottom row is test AUC (higher is better).}
    \label{fig:AUC_breastcancer_spambase_netherlands}
\end{figure}

\clearpage
\subsection{Additional Results on Direct Comparison with RiskSLIM}
As RiskSLIM provides state-of-the-art performance, we compare it to \ourmethod{} in isolation to highlight the differences between the two approaches/algorithms.
The results are shown in Figure \ref{fig:AUC_compas_fico_netherlands_berk_ours} on the breastcancer, spambase, and Netherlands datasets. 

\begin{figure}[h] 
    \centering
    \includegraphics[width=\textwidth]{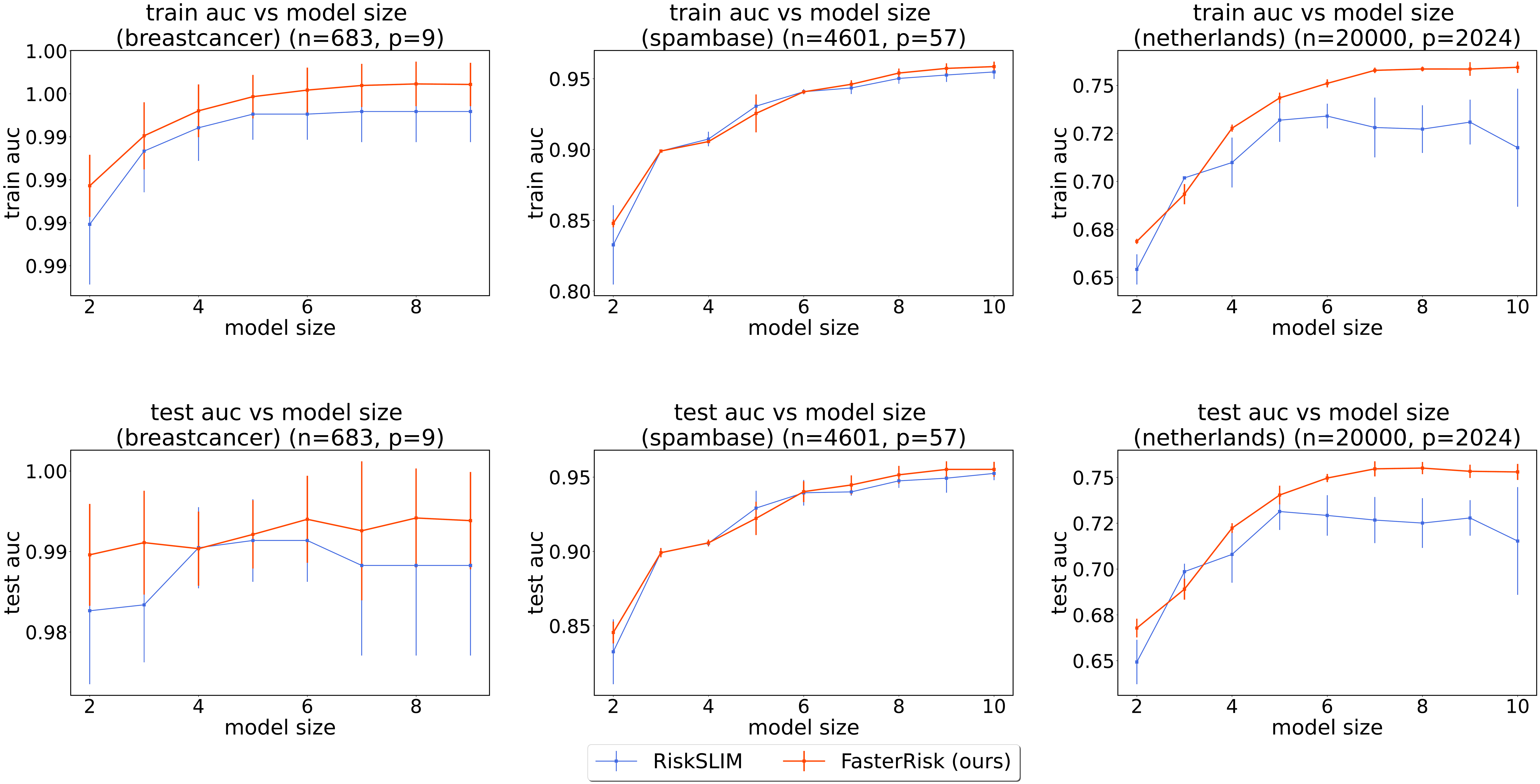}
   
    \caption{Detailed performance comparison between FasterRisk and RiskSLIM on breastcancer, spambase, and Netherlands. Top row is training AUC (higher is better) and bottom row is test AUC (higher is better).
    We can improve \ourmethod{}'s results on the spambase dataset by increasing the beam size in the algorithm. See Figure~\ref{fig:hyperparameter_beam_size_breastcancer_spambase_netherlands} for the perturbation study on this hyperparameter.}
    \label{fig:AUC_compas_fico_netherlands_berk_ours}
\end{figure}

\subsection{Additional Results on Running Time}
We also provide a runtime comparison between RiskSLIM and \ourmethod{} in Figure \ref{fig:breastcancer_spambase_netherlands_runtimeCoparison}. Except for the small dataset breastcancer, RiskSLIM timed out in all other instances. In contrast, \ourmethod{} finishes running under 50s or 100s on all cases, showing great scalability, even in high dimensional feature space and in presence of highly correlated features (the Netherlands dataset).

\begin{figure}[ht] 
    \centering
    \includegraphics[width=\textwidth]{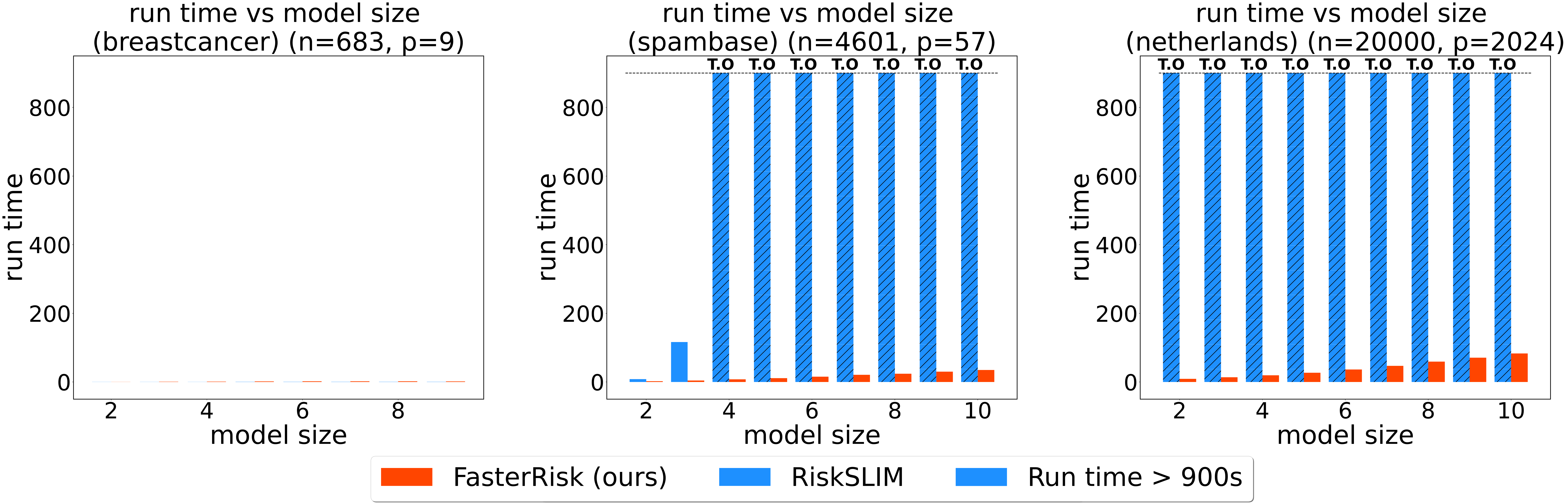}
   
    \caption{Runtime Comparison. Runtime (in seconds) versus model size for our method FasterRisk
(in \textcolor{red}{red}) and the RiskSLIM (in \textcolor{blue}{blue}). The shaded blue bars indicate cases that timed out (``T.O'') at 900
seconds.}
    \label{fig:breastcancer_spambase_netherlands_runtimeCoparison}
\end{figure}

\newpage
\subsection{Ablation Study of the Proposed Techniques}
\label{appendix:ablation_study}

We investigate how each component of \ourmethod{}, including sparse beam search, diverse pool, and multipliers, contribute to solution quality.
We quantify the contribution of each part of the algorithm by means of an ablation study in which we run variations of \ourmethod{}, each with a single component disabled.

The results are shown in Figure \ref{fig:ablation_study_adult_bank_mammo}-\ref{fig:ablation_study_breastcancer_spambase_netherlands}.
``no beam search'' means that the beam size is 1, so we expand the support by picking the next feature based on which new feature can induce the smallest logistic loss via the single coordinate optimization.
``no sparse diverse'' means that the sparse diverse pool contains only the solution by  Algorithm \ref{alg:sparse_beam_lr}, the \ourmethodLR{} method.
``no multiplier'' means that there is no ``star ray search'' of the multiplier. There is no scaling of coefficients or the data, so we think of this as setting multiplier to $1$.

The ablation study shows that different parts of our algorithm provide the biggest benefit to different data sets — that is, there is no single component of the algorithm that uniformly assists with performance; instead, the combination of these techniques, working in concert, is responsible.
We provide the detailed analysis of the contributions for each specific dataset in the figure captions.


\begin{figure}[ht] 
    \centering
    \includegraphics[width=\textwidth]{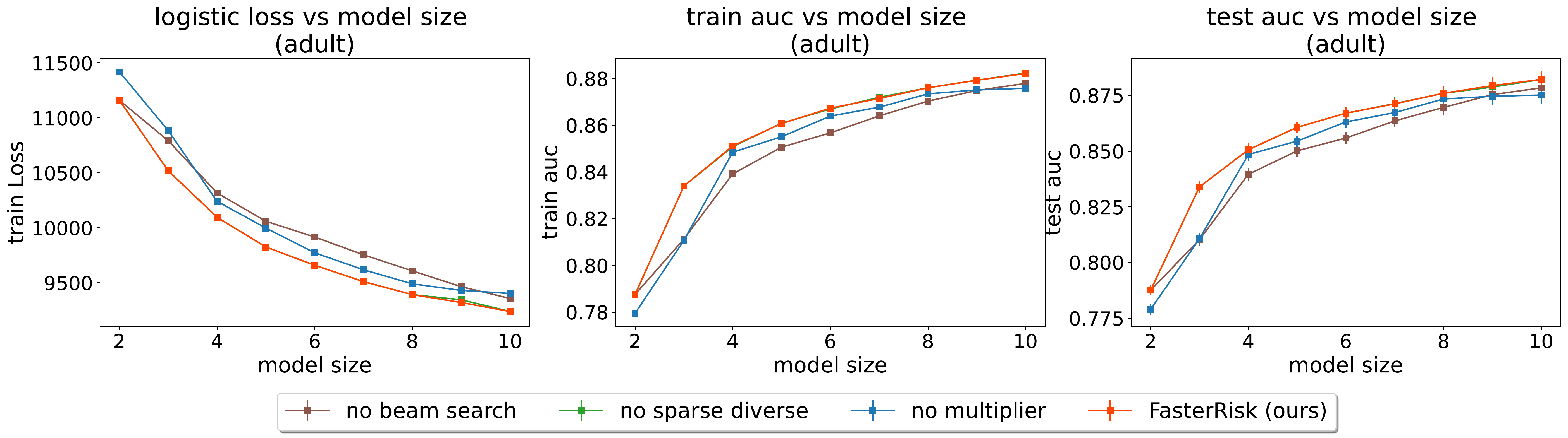}
    
    \includegraphics[width=\textwidth]{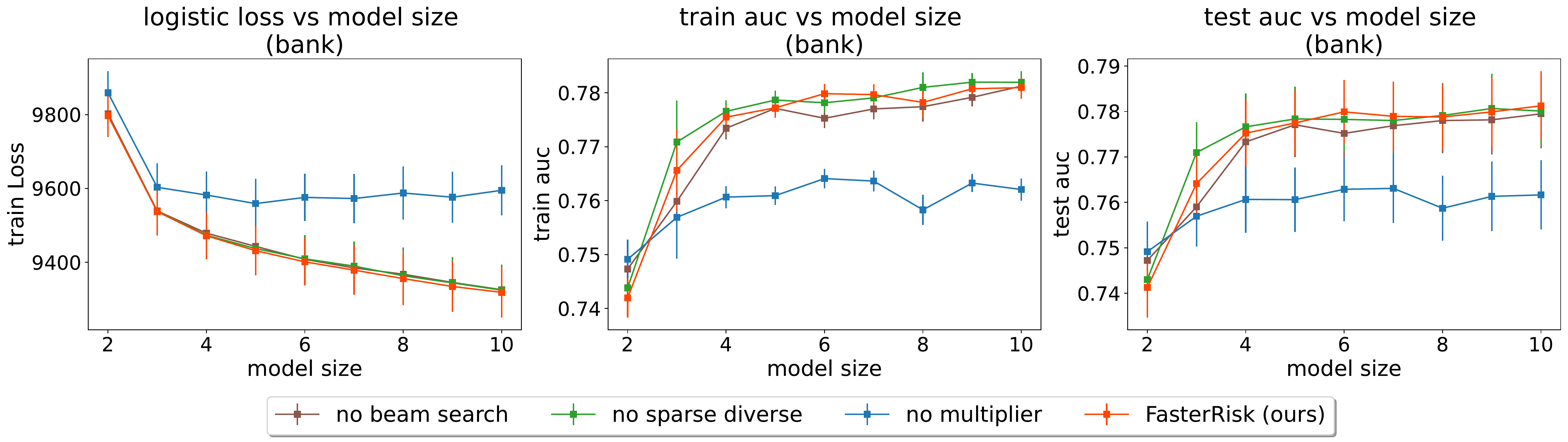}
    
    \includegraphics[width=\textwidth]{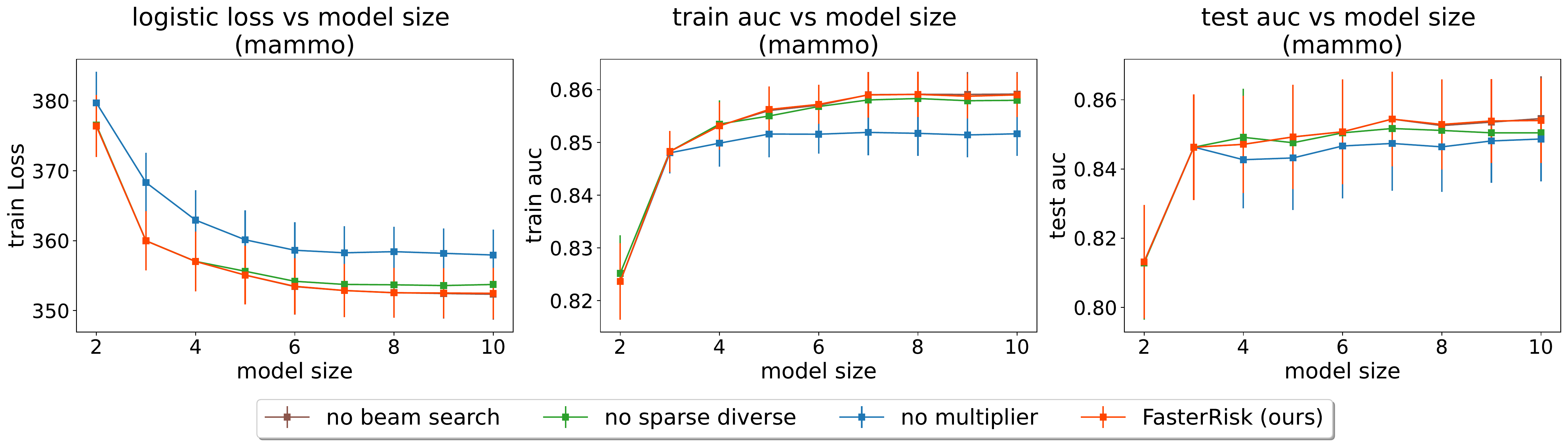}

    \caption{Ablation study on the adult, bank, and mammo datasets. Left column is loss (lower is better), middle column is training AUC (higher is better) and right column is test AUC (higher is better).
    The ``beam search'' method is particularly helpful on the adult dataset. The use of ``multiplier'' is particularly helpful on all three datasets. The ``diverse pool'' technique is somewhat helpful on the mammo dataset. More significant contributions from ``diverse pool'' can be found in Figure \ref{fig:ablation_study_breastcancer_spambase_netherlands} and Figure~\ref{fig:ablation_study_mushroom_compas_fico}.}
    \label{fig:ablation_study_adult_bank_mammo}
\end{figure}

\newpage
\begin{figure}[ht] 
    \centering
    \includegraphics[width=\textwidth]{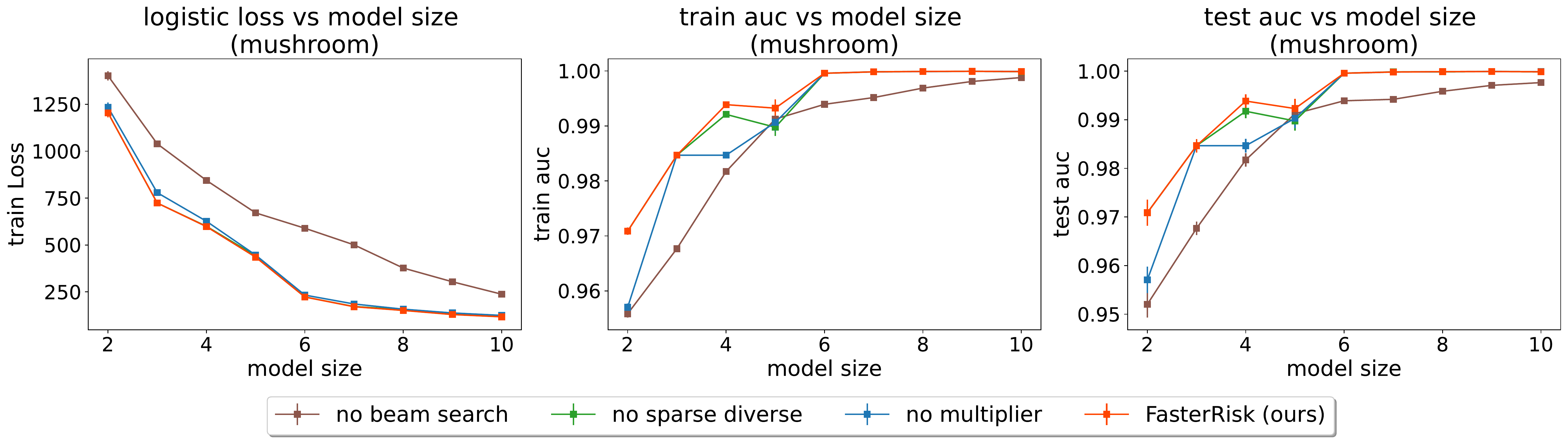}
    \includegraphics[width=\textwidth]{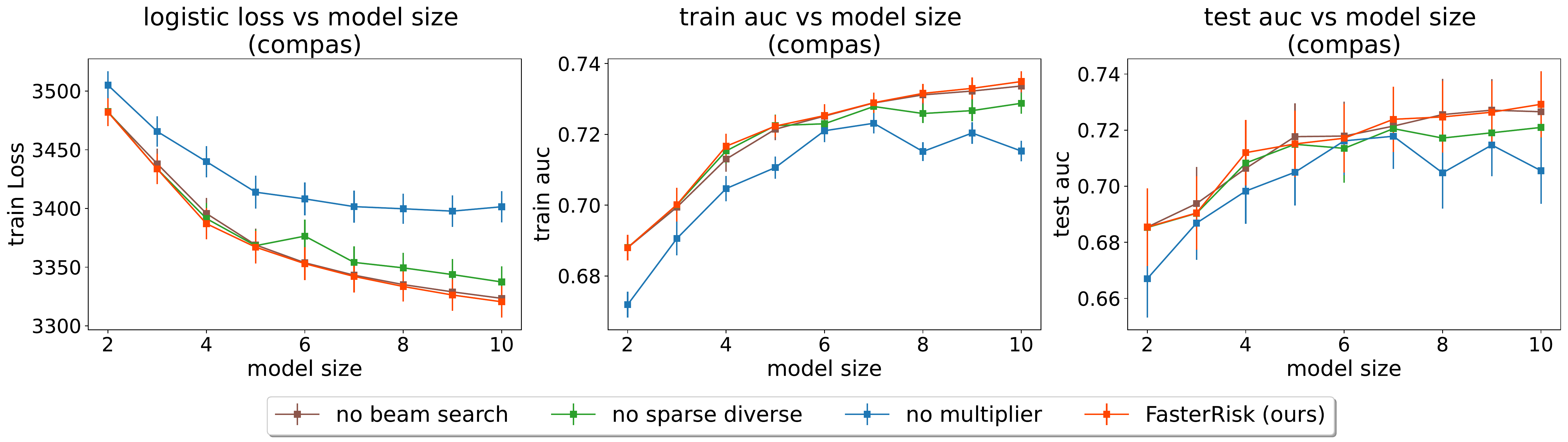}
    \includegraphics[width=\textwidth]{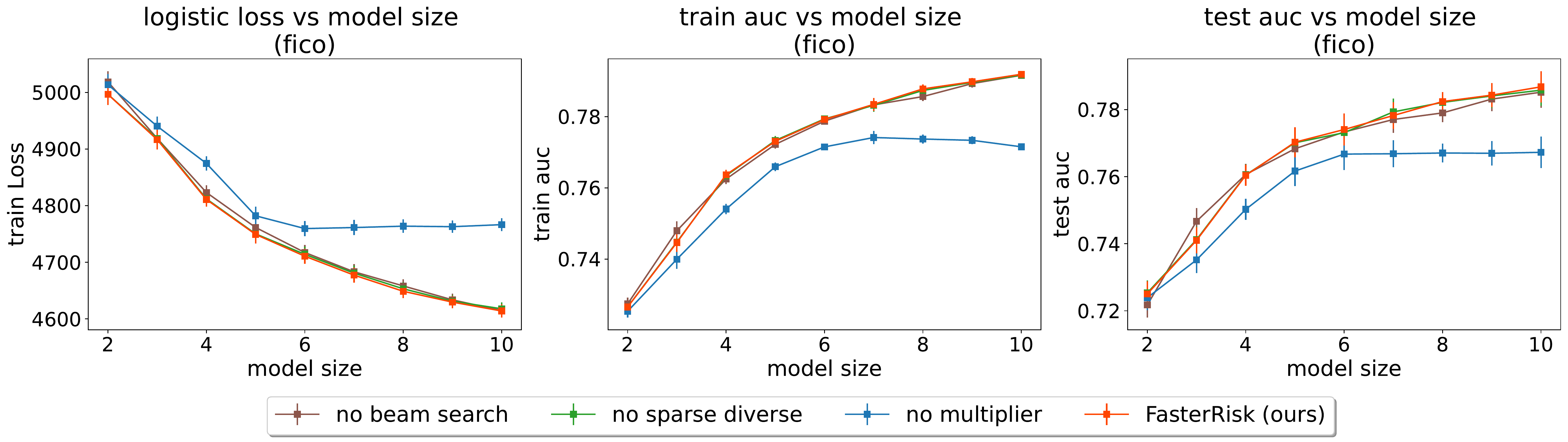}
    \caption{Ablation study on the mushroom, COMPAS, and FICO datasets. Left column is loss (lower is better), middle column is training AUC (higher is better) and right column is test AUC (higher is better).
    The ``beam search'' method is particularly helpful on the mushroom dataset. The use of ``multiplier'' is particularly helpful on the COMPAS and FICO datasets. The ``diverse pool'' technique is particularly helpful on the COMPAS dataset.
    }
    \label{fig:ablation_study_mushroom_compas_fico}
\end{figure}

\newpage
\begin{figure}[ht] 
    \centering
    \includegraphics[width=\textwidth]{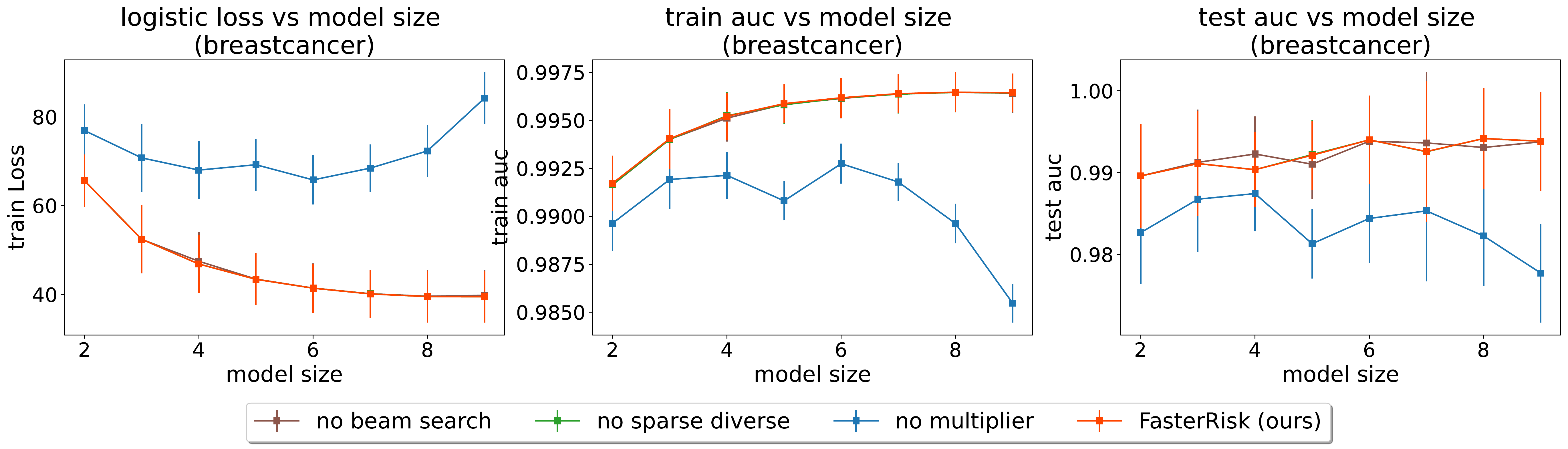}
    \includegraphics[width=\textwidth]{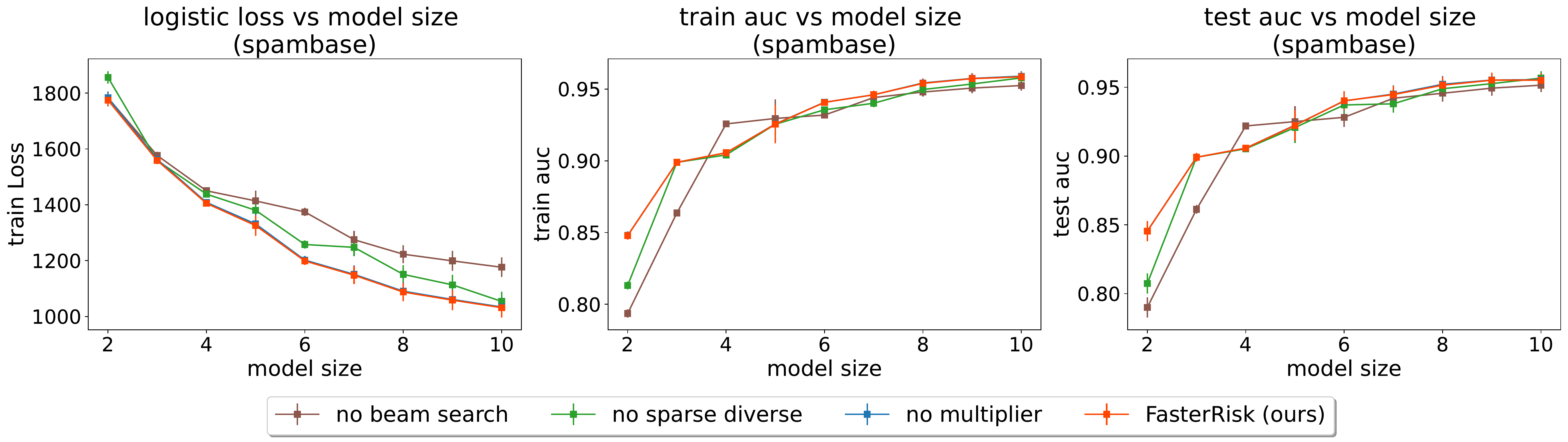}    
    \includegraphics[width=\textwidth]{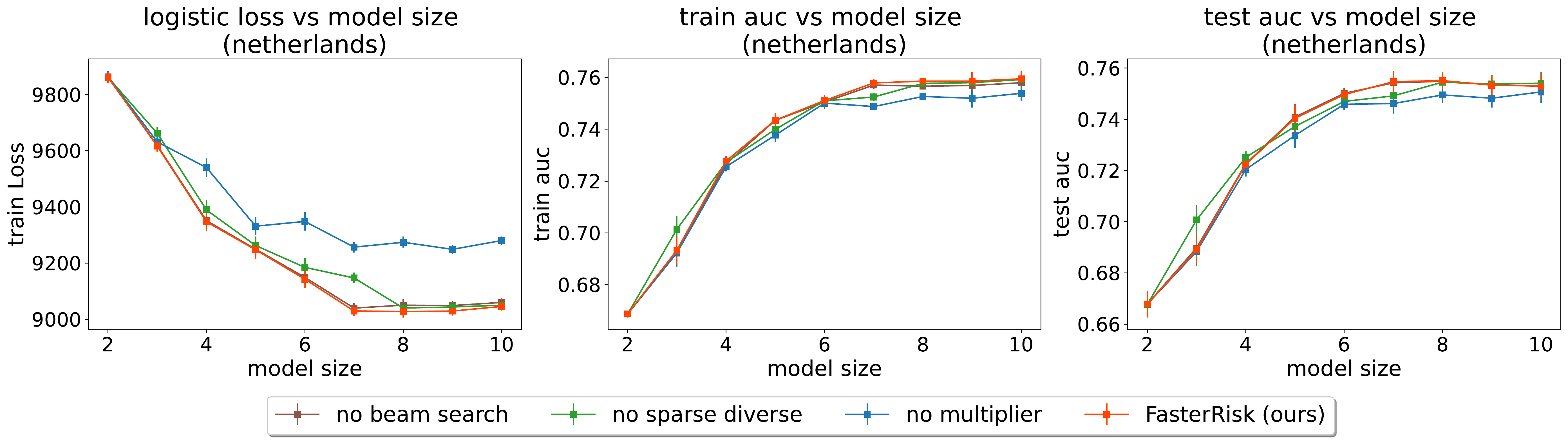}
    \caption{Ablation study on the COMPAS, FICO, and Netherlands datasets. Left column is loss (lower is better), middle column is training AUC (higher is better) and right column is test AUC (higher is better).
    The ``beam search'' method is particularly helpful on the spambase dataset. The use of ``multiplier'' is particularly helpful on breastcancer and netherlands datasets. The ``diverse pool'' technique is particularly helpful on the spambase and Netherlands datasets.
    }
    \label{fig:ablation_study_breastcancer_spambase_netherlands}
\end{figure}

\newpage
\subsection{Training Losses of FasterRisk vs. RiskSLIM}

In the main paper, due to the page limit, we have only compared the training and test AUCs between RiskSLIM and our \ourmethod{}. Here, we provide the comparison of training loss (logistic loss) between these two methods.
The results are shown in Figure \ref{fig:loss_comparison}.
We can see that \ourmethod{} outperforms RiskSLIM in almost all model size instances and datasets.

\begin{figure}[ht] 
    \centering
    \includegraphics[width=\textwidth]{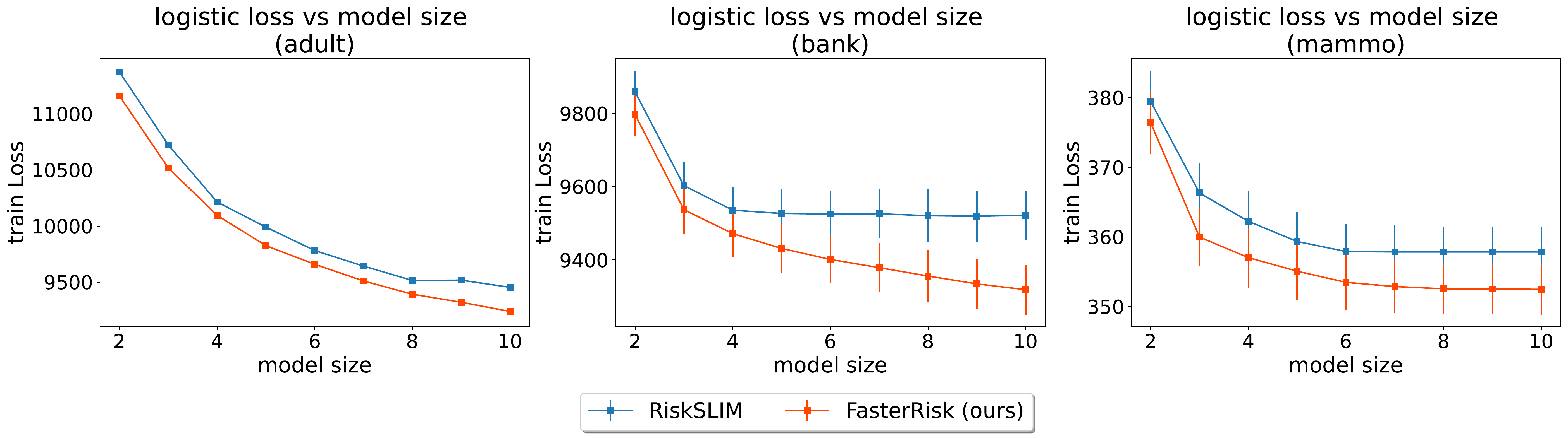}
    \includegraphics[width=\textwidth]{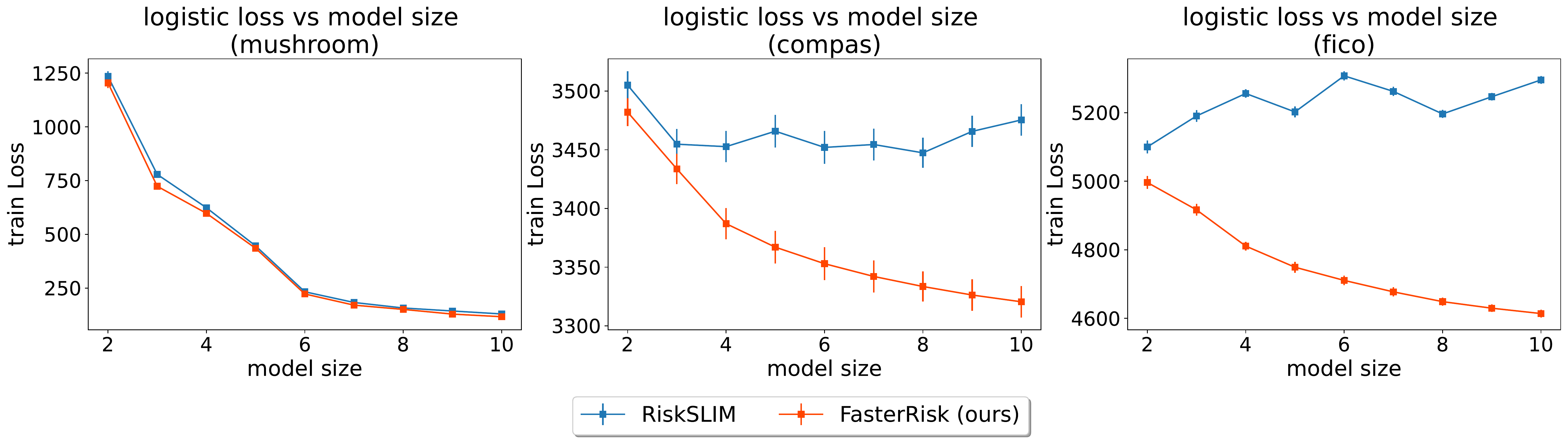}
    \includegraphics[width=\textwidth]{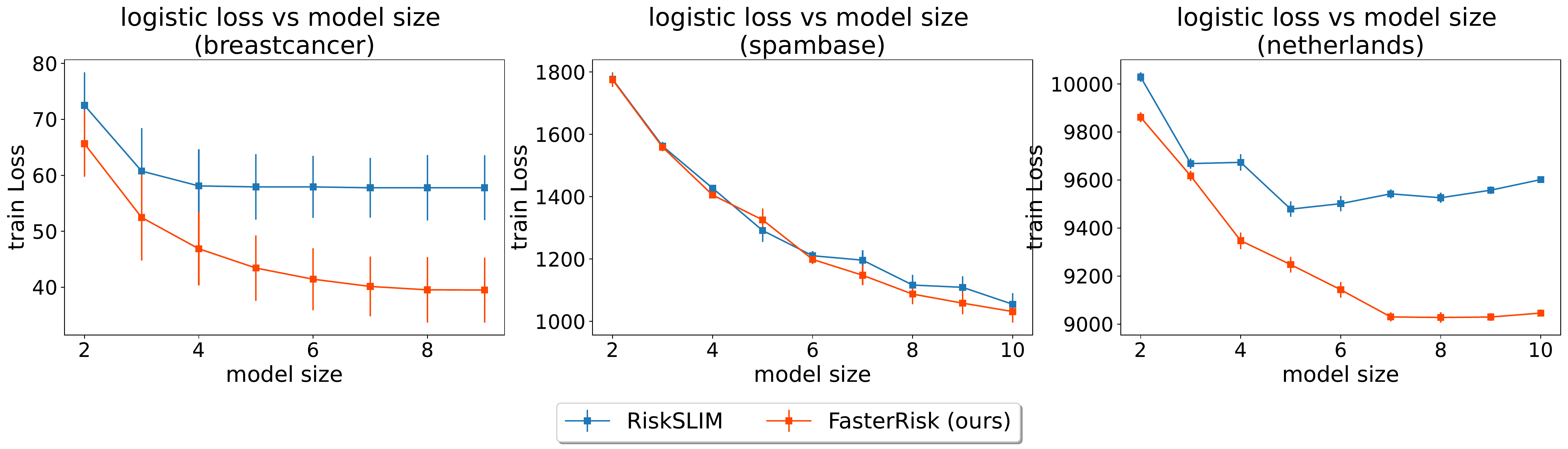}
    
    \caption{Training loss between RiskSLIM and our \ourmethod{} methods. (lower is better)}
    \label{fig:loss_comparison}
\end{figure}

\newpage
\subsection{Comparison of \ourmethodLR{} with OMP and fastSparse}
\label{app:compare_sparseBeamLR_with_OMP_and_fastSparse}


We next study how effective \ourmethodLR{} is in producing continuous sparse coefficients under the $\ell_0$ sparsity and box constraints. We compare with two existing methods, OMP \cite{elenberg2018restricted} and fastSparse \cite{LiuEtAl2022}.
OMP stands for Orthogonal Matching Pursuit, which expands the support by selecting the next feature with the largest magnitude of partial derivative.
fastSparse tries to solve the logistic loss objective with an $\ell_0$ regularization. For fastSparse, we use the default $\lambda_0$ values (coefficient for the $\ell_0$ regularization) internally selected by the software. Specifically, the software first apply a large $\lambda_0$ value to produce a super-sparse solution (with support size equal to 1 or close to 1). Then, in the solution path, the $\lambda_0$ value is sequentially decreased until the produced sparse model violates the model size constraint.

The results are shown in Figure \ref{fig:continuous_beamSearch_investigation_adult_bank_mammo}-\ref{fig:continuous_beamSearch_investigation_breastcancer_spambase_netherlands}.
Although OMP and fastSparse can somtimes produce high-quality solutions on some model size instances and datasets, \ourmethodLR{} is the only method that consistently produces high-quality sparse solutions in all cases.

OMP's solution quality is usually worse than that of \ourmethodLR{}, and OMP could not produce coefficients that satisfy the box constraints on the mushroom and spambase datasets.

fastSparse also cannot produce coefficients that satisfy the box constraints on the mushroom and spambase datasets. Additionally, it is hard to control the $\lambda_0$ regularization to produce the exact model size desired. In Figure \ref{fig:continuous_beamSearch_investigation_mushroom_compas_fico}, we do not obtain any model with model size equal to $9$ or $10$ in the solution path.

The limitations of OMP and fastSparse stated above are our main motivations for developing the \ourmethodLR{} method.

\begin{figure}[ht] 
    \centering
    \includegraphics[width=\textwidth]{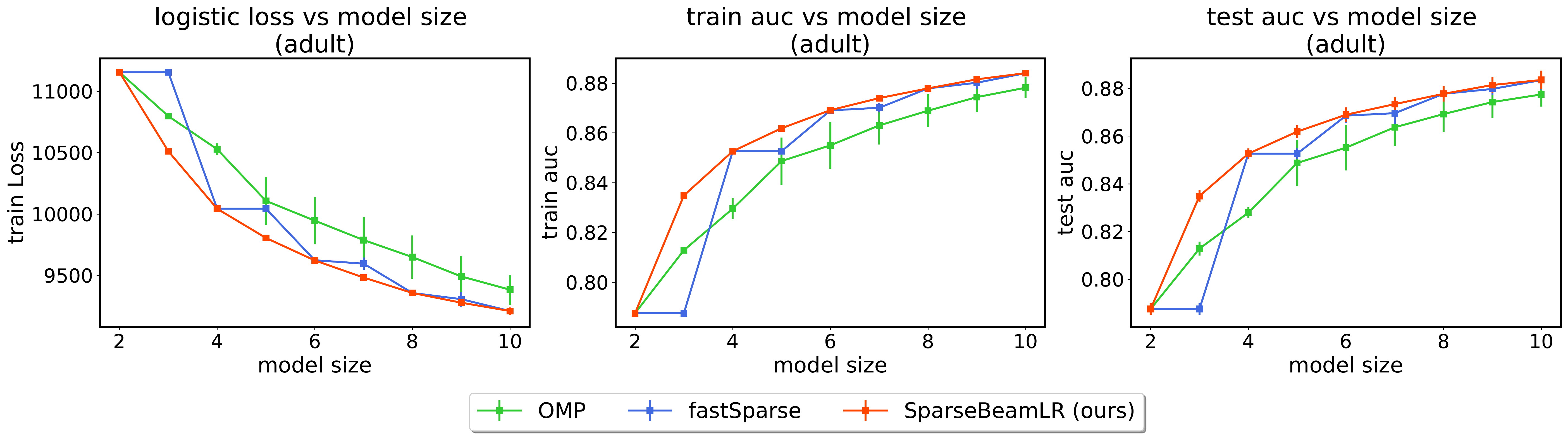}
    \includegraphics[width=\textwidth]{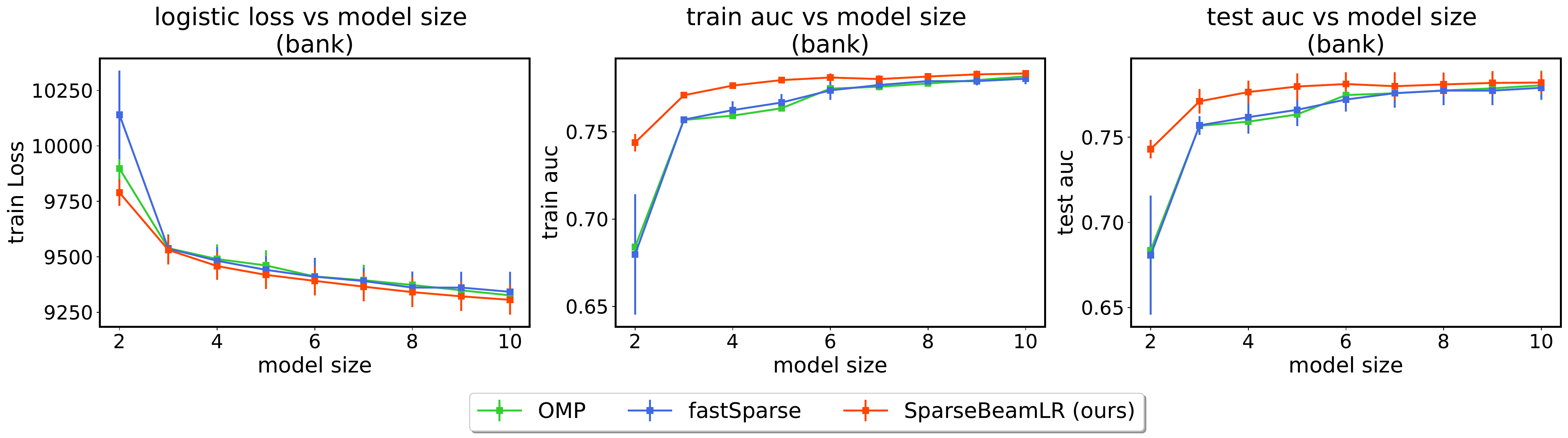}
    \includegraphics[width=\textwidth]{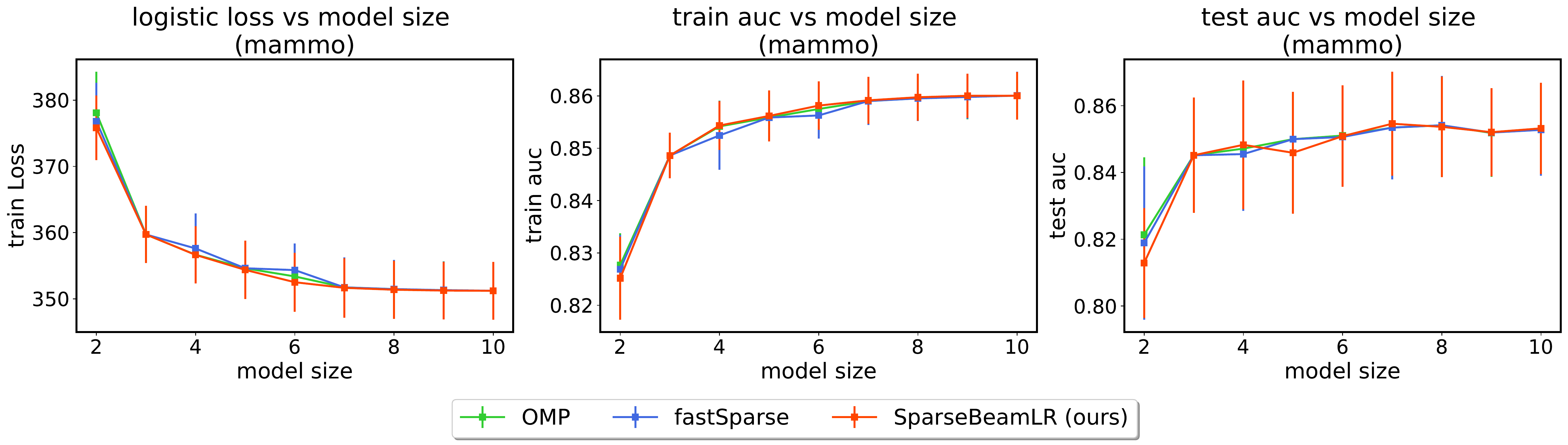}
    
    \caption{Sparse continuous solutions on the adult, bank, and mammo datasets. Left column is loss (lower is better), middle column is training AUC (higher is better) and right column is test AUC (higher is better).
    \ourmethodLR{} consistently produces high-quality continuous sparse solutions.}
    \label{fig:continuous_beamSearch_investigation_adult_bank_mammo}
\end{figure}

\newpage
\begin{figure}[ht] 
    \centering
    \includegraphics[width=\textwidth]{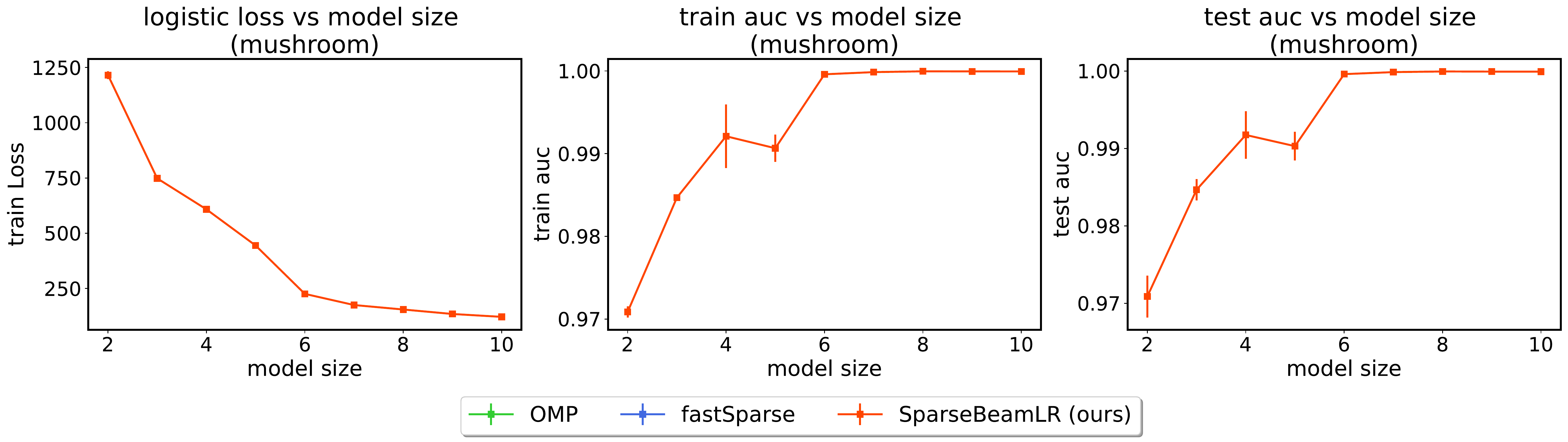}
    \includegraphics[width=\textwidth]{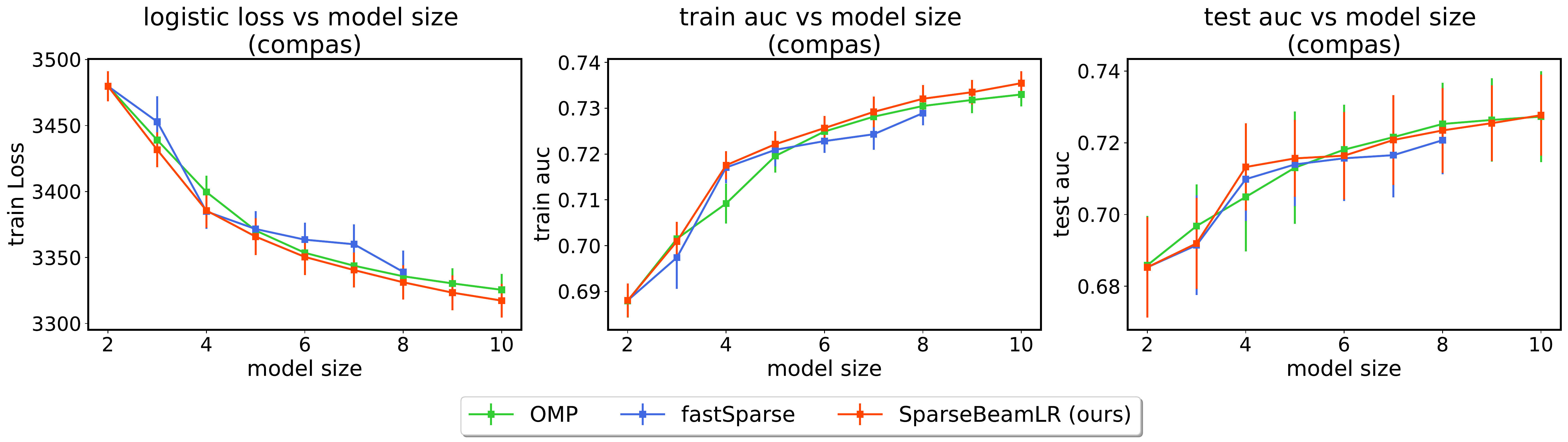}
    \includegraphics[width=\textwidth]{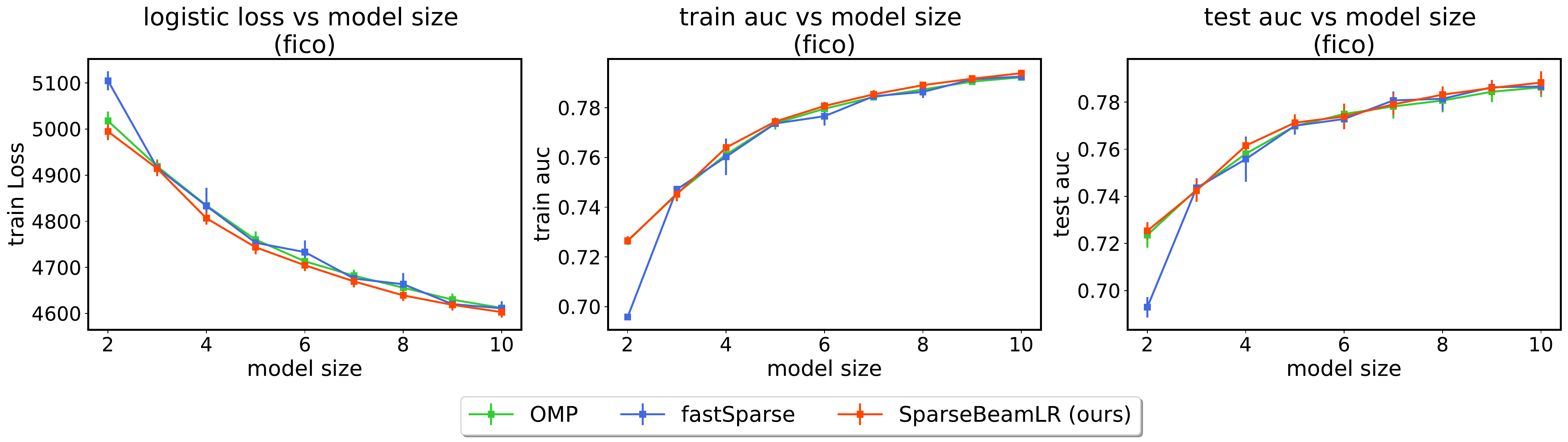}
    
    \caption{Sparse continuous solutions on the mushroom, COMPAS, and FICO datasets. Left column is loss (lower is better), middle column is training AUC (higher is better) and right column is test AUC (higher is better).
    The solution coefficients by the OMP and fastSparse methods violate the box constraints on the mushroom dataset, so we omit the results in the plot. fastSparse cannot obtain solutions with model size equal to $9$ or $10$ on the COMPAS dataset, so we do not show those points in the plot.}
    \label{fig:continuous_beamSearch_investigation_mushroom_compas_fico}
\end{figure}

\newpage
\begin{figure}[ht] 
    \centering
    \includegraphics[width=\textwidth]{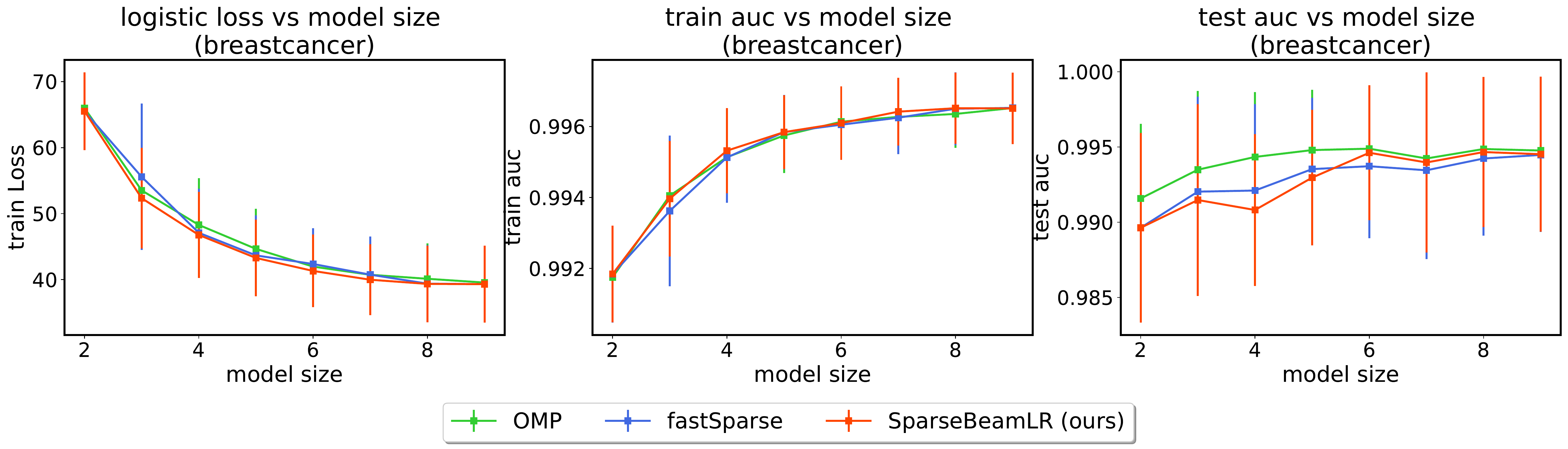}
    \includegraphics[width=\textwidth]{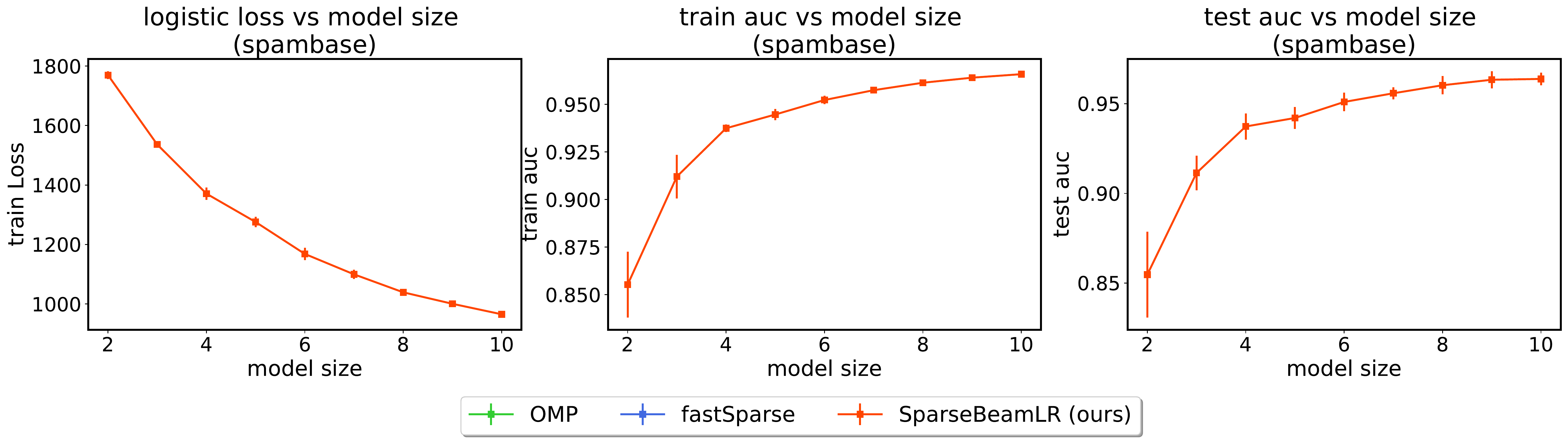}
    \includegraphics[width=\textwidth]{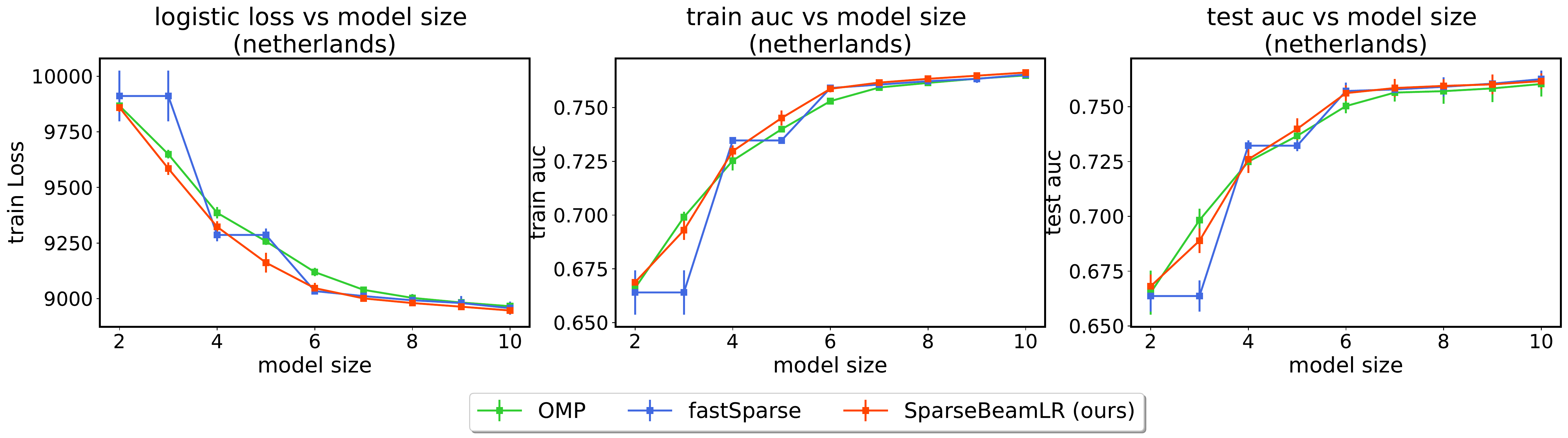}
    
    \caption{Sparse continuous solutions on the breastcancer, spambase, and Netherlands datasets. Left column is loss (lower is better), middle column is training AUC (higher is better) and right column is test AUC (higher is better).
    \ourmethodLR{} consistently produces high-quality continuous sparse solutions. The solution coefficients of OMP and fastSparse violate the box constraints on the spambase dataset, so we omit the results in the plot.}
    \label{fig:continuous_beamSearch_investigation_breastcancer_spambase_netherlands}
\end{figure}

\clearpage
\subsection{Comparison of \ourmethod{} with OMP (or fastSparse) \texorpdfstring{$+$}{+} Sequential Rounding}

Having compared the continuous sparse solutions, we next compare the integer sparse solutions produced by OMP, fastSparse, and \ourmethod{}.
After obtaining the continuous sparse solutions from OMP and fastSparse from Section \ref{app:compare_sparseBeamLR_with_OMP_and_fastSparse}, we round the continuous coefficients to integers using the Sequential Rounding method as stated in Method 5 of \ref{app:baseline_specifications}.

The results are shown in Figure \ref{fig:integer_beamSearch_investigation_adult_bank_mammo}-\ref{fig:integer_beamSearch_investigation_breastcancer_spambase_netherlands}. \ourmethod{} consistently outperforms the other two methods, due to higher quality of continuous sparse solutions and the use of multipliers.

\begin{figure}[ht] 
    \centering
    \includegraphics[width=\textwidth]{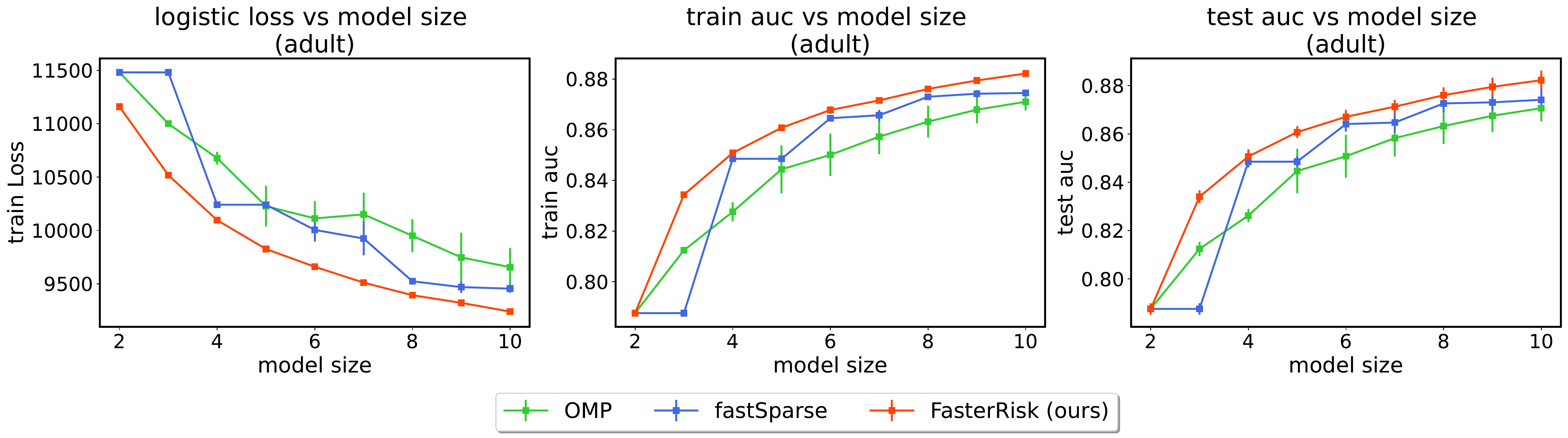}
    \includegraphics[width=\textwidth]{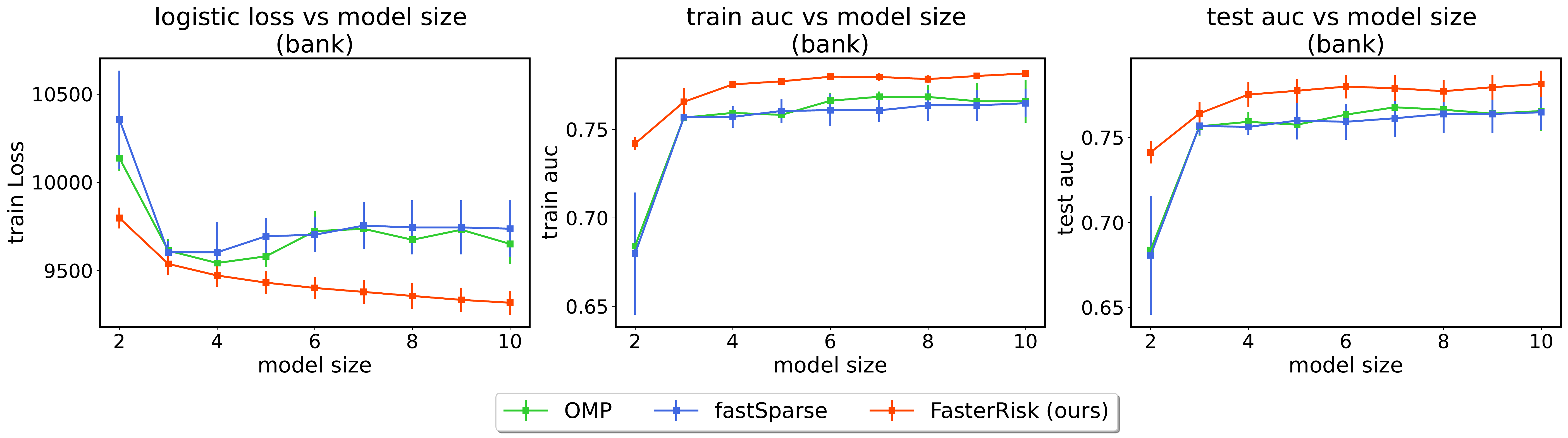}
    \includegraphics[width=\textwidth]{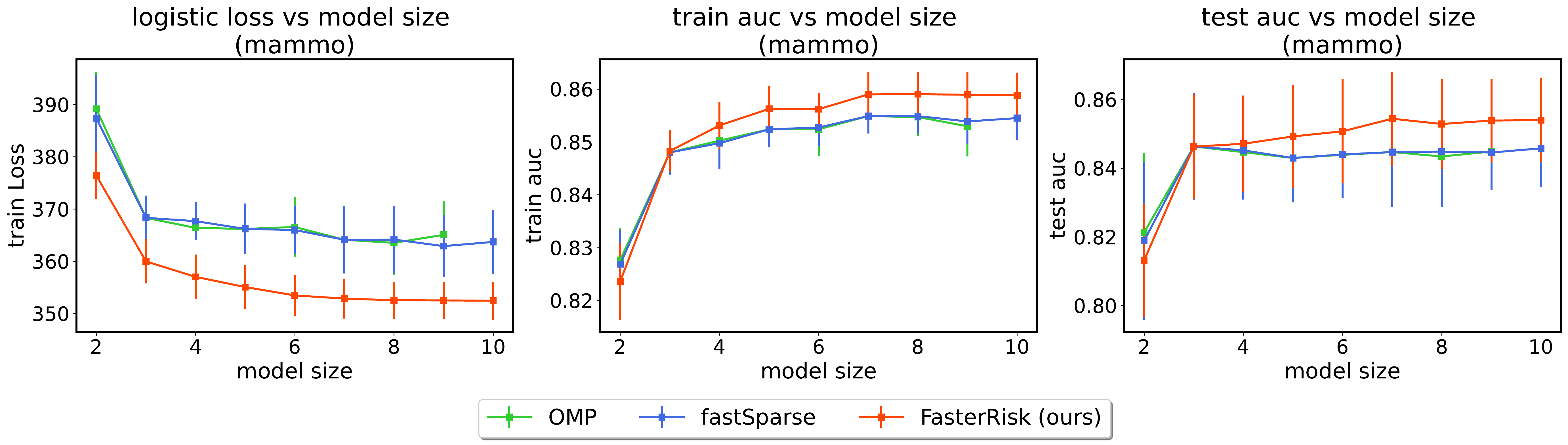}
    
    \caption{Sparse integer solutions on the adult, bank, and mammo datasets. Left column is loss (lower is better), middle column is training AUC (higher is better) and right column is test AUC (higher is better).
    \ourmethod{} consistently outperforms the other two methods, due to higher quality of continuous sparse solutions and the use of multipliers.
    }
    \label{fig:integer_beamSearch_investigation_adult_bank_mammo}
\end{figure}

\newpage
\begin{figure}[ht] 
    \centering
    \includegraphics[width=\textwidth]{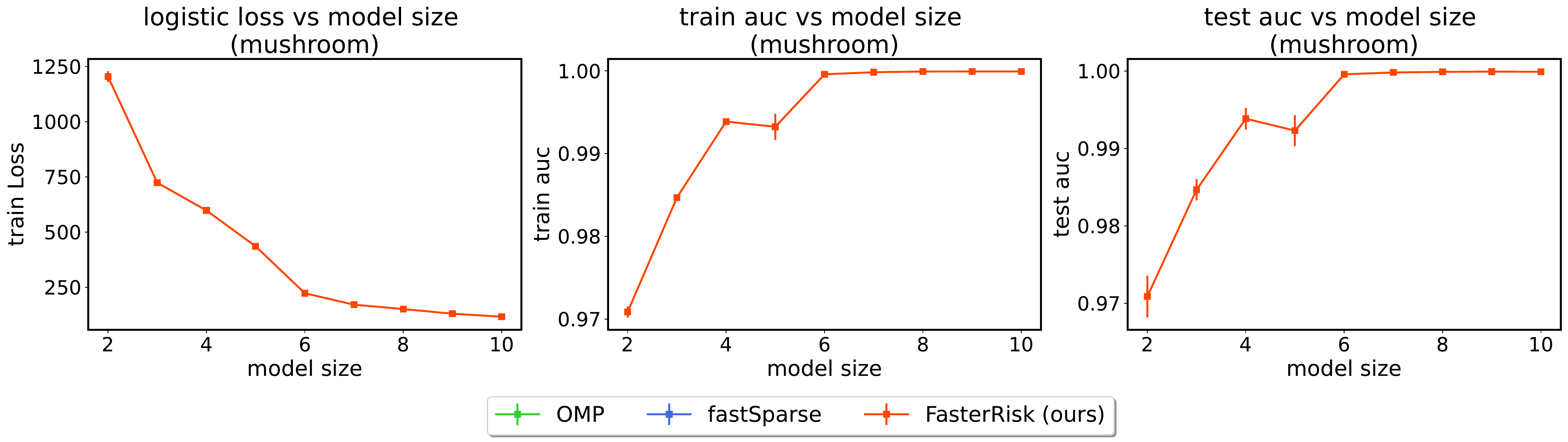}
    \includegraphics[width=\textwidth]{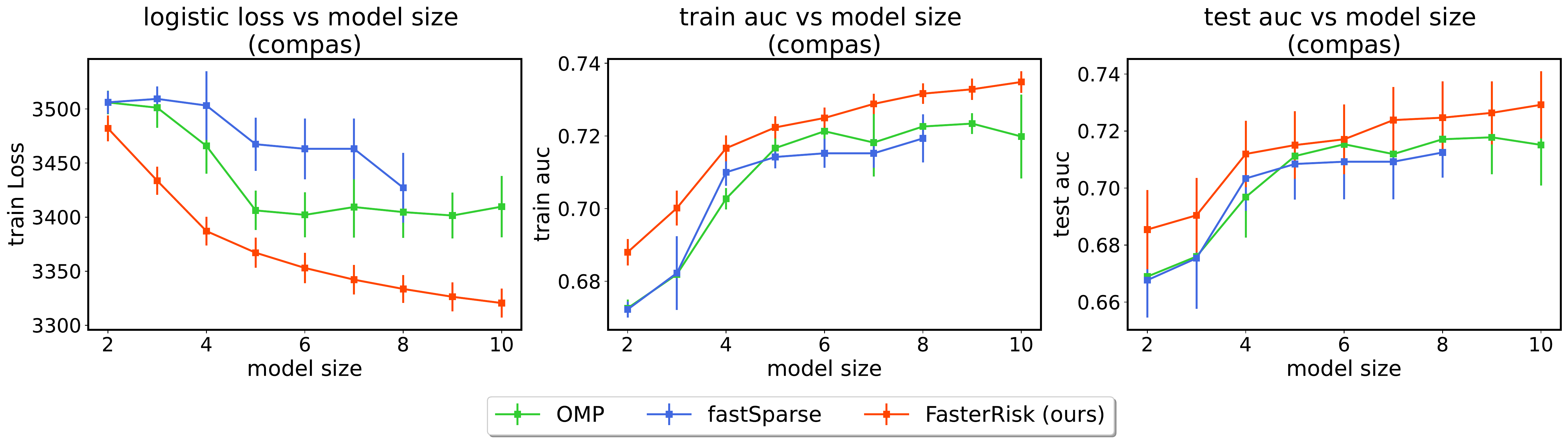}
    \includegraphics[width=\textwidth]{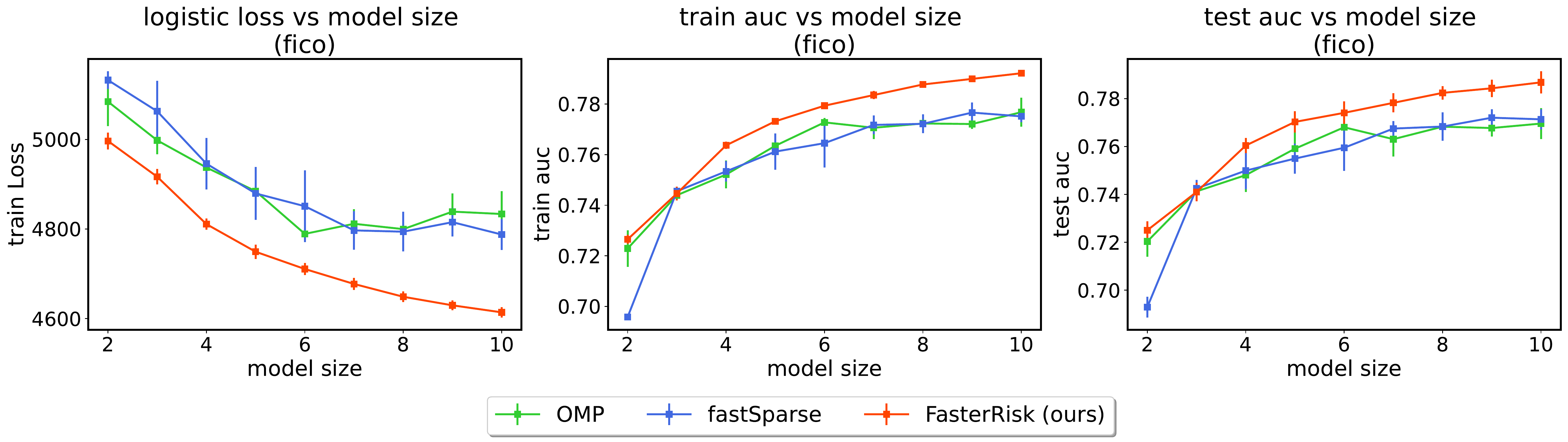}
    
    \caption{Sparse integer solutions on the mushroom, COMPAS, and FICO datasets. Left column is loss (lower is better), middle column is training AUC (higher is better) and right column is test AUC (higher is better).
    The solution coefficients from the OMP and fastSparse methods violate the box constraints on the mushroom dataset, so we omit the results on the plot.
    fastSparse cannot obtain solutions with model size equal to $9$ or $10$ on the COMPAS dataset, so we do not show those points on the plot.
    }
    \label{fig:integer_beamSearch_investigation_mushroom_compas_fico}
\end{figure}

\newpage
\begin{figure}[ht] 
    \centering
    \includegraphics[width=\textwidth]{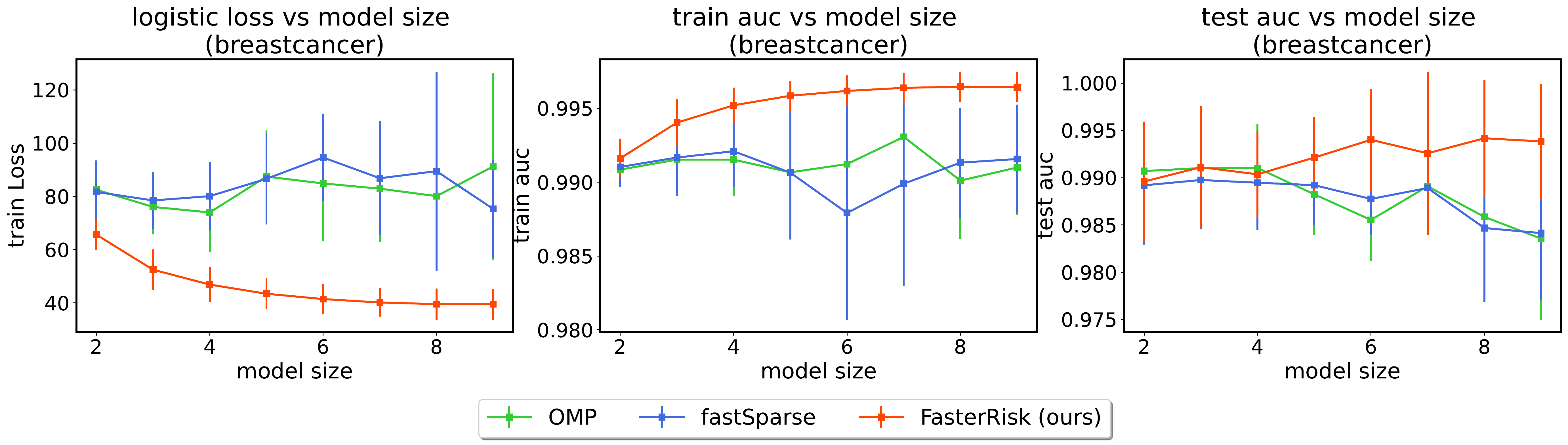}
    \includegraphics[width=\textwidth]{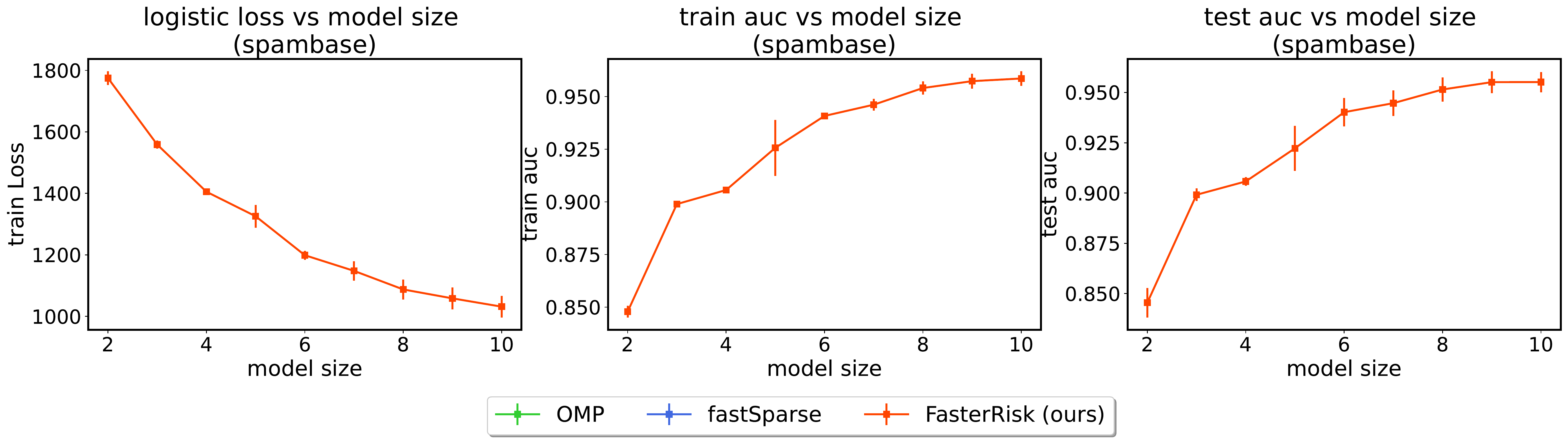}
    \includegraphics[width=\textwidth]{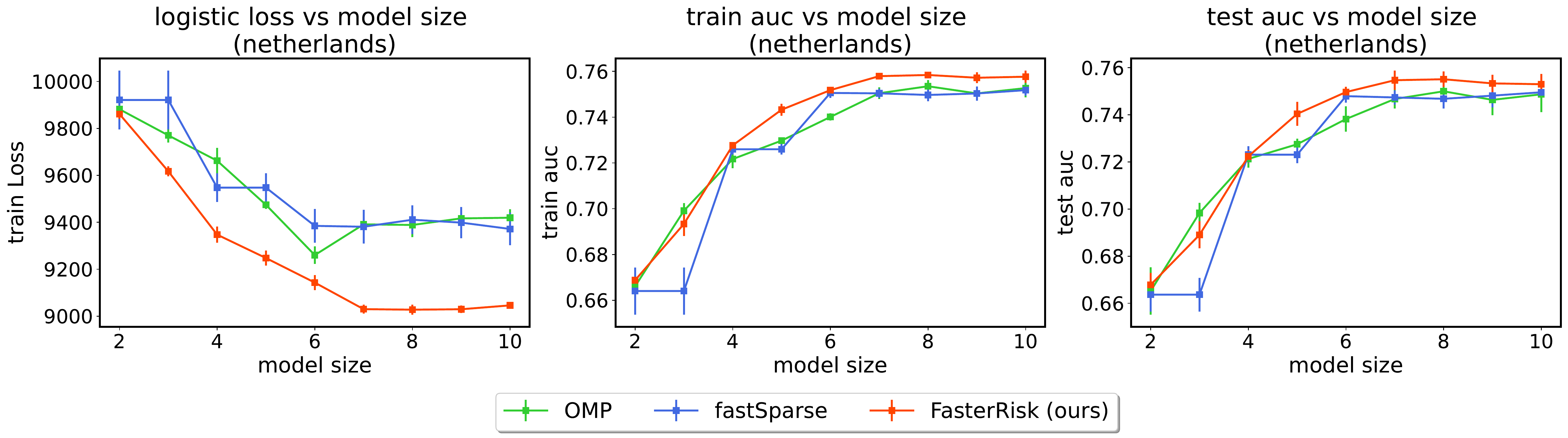}
    
    \caption{Sparse integer solutions on the breastcancer, spambase, and Netherlands datasets. Left column is loss (lower is better), middle column is training AUC (higher is better) and right column is test AUC (higher is better).
    \ourmethod{} consistently outperforms the other two methods, due to the higher quality of the continuous sparse solutions and the use of multipliers.
    The solution coefficients by the OMP and fastSparse methods violate the box constraints on the spambase dataset, so we omit the results in the plot.
    }
    \label{fig:integer_beamSearch_investigation_breastcancer_spambase_netherlands}
\end{figure}

\clearpage
\subsection{Running RiskSLIM Longer}
\label{app:running_riskslim_longer}
The experiments in Section~\ref{sec:experiments} imposed a 900-second timeout, and RiskSLIM frequently did not complete within the 900 seconds. Here, we run RiskSLIM with  longer timeouts (1 hour, and 4 days). We find that even with these long runtimes, \ourmethod{} still outperforms RiskSLIM in both solution quality and runtime.

Runtime is important for two reasons: (1) We may not be able to compute the answer at all using the slow method because it does not scale to reasonably-sized datasets. It could take a week or more to compute the solution for even reasonably small datasets. We will show this shortly through experiments. (2) Machine learning in the wild is never a single run of an algorithm. Often, users want to explore the data and adjust various constraints as they become more familiar with possible models. A fast speed allows users to go through this iteration process many times without lengthy interruptions between runs. This is where \ourmethod{} will be very useful in high stakes offline settings. \ourmethod{}'s pool of models is generated within 5 minutes, and interacting with the pool is essentially instantaneous after it is generated.

\subsubsection{Solution Quality of Running RiskSLIM for 1 hour}

We ran RiskSLIM for a time limit of 1 hour on all 5 folds and all model sizes (2-10). Thus, we ran experiments for 2 days per dataset. As a reminder, our method FasterRisk runs in less than 5 minutes (on all datasets). 
The results of logistic loss on the training set, AUC on the training set, and AUC on the test set are in Figures~\ref{fig:RiskSLIM_1h_adult_bank_mammo}-\ref{fig:RiskSLIM_1h_breastcancer_spambase_netherlands}. FasterRisk still outperforms RiskSLIM in almost all cases, because it uses a larger search space.

\begin{figure}[ht] 
    \centering
    \includegraphics[width=\textwidth]{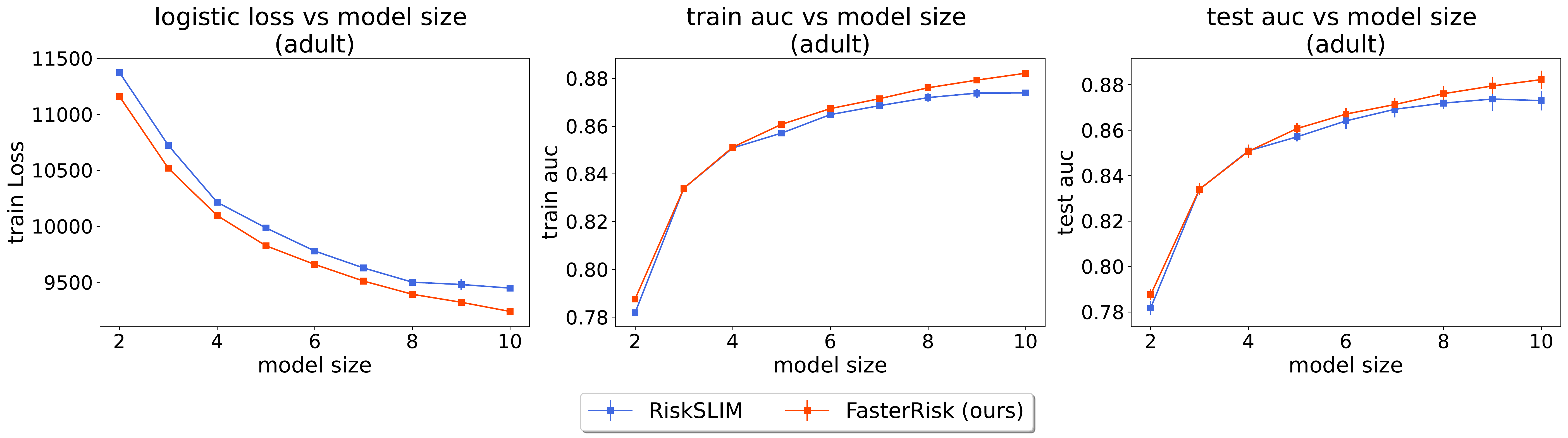}
    
    \includegraphics[width=\textwidth]{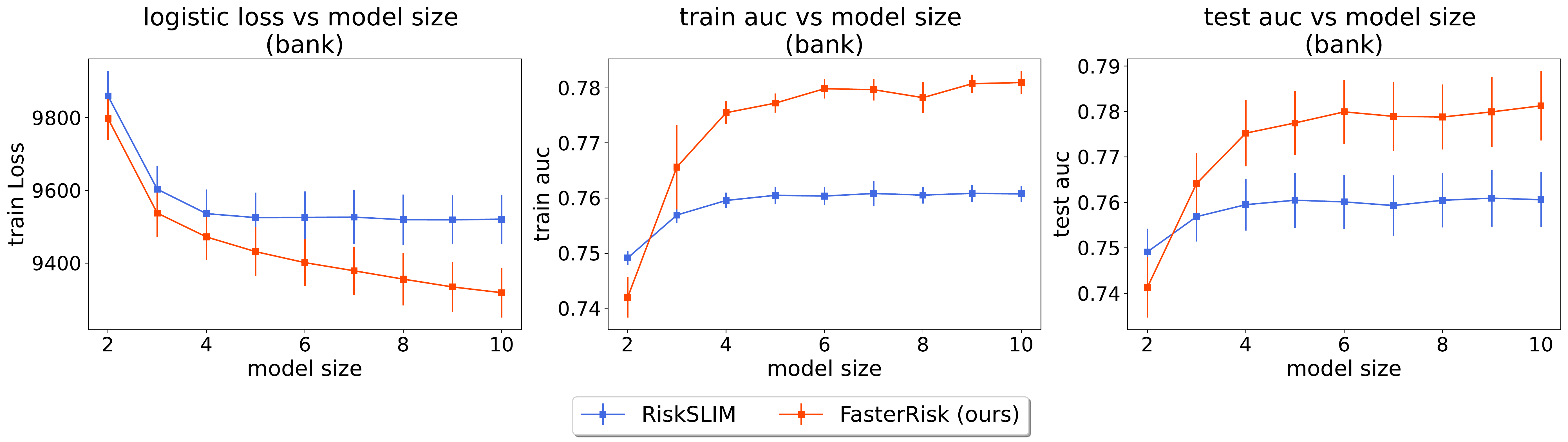}
    
    \includegraphics[width=\textwidth]{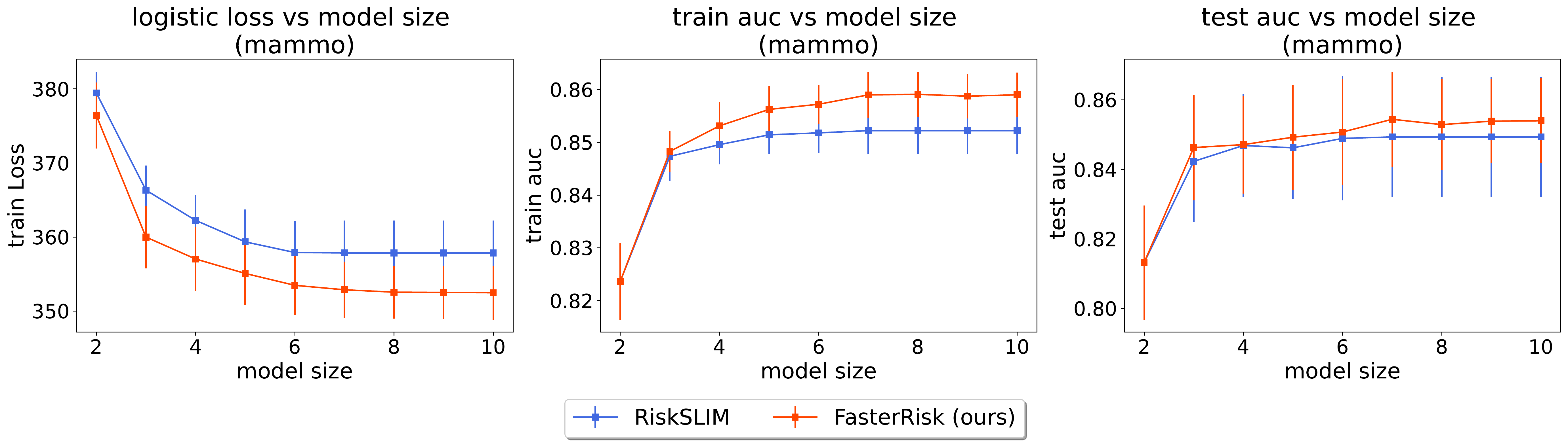}
    
    \caption{Comparison with the state-of-the-art baseline RiskSLIM (running for 1 hour) on the adult, bank, and mammo datasets. The left column is loss (lower is better), the middle column is training AUC (higher is better) and the right column is test AUC (higher is better).
    }
    \label{fig:RiskSLIM_1h_adult_bank_mammo}
\end{figure}

\begin{figure}[ht] 
    \centering
    \includegraphics[width=\textwidth]{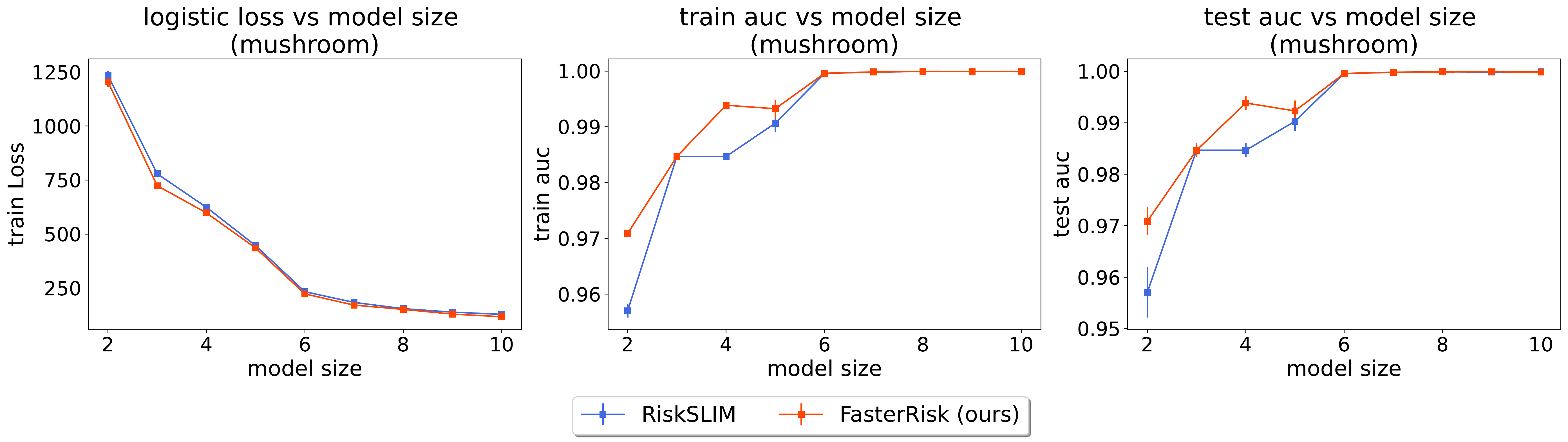}
    
    \includegraphics[width=\textwidth]{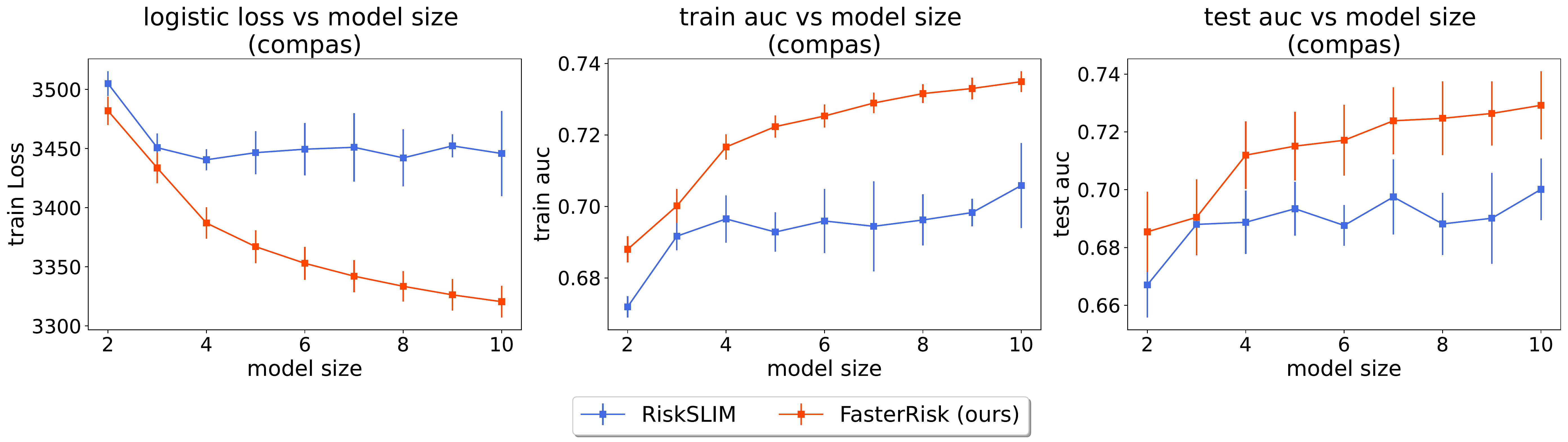}
    
    \includegraphics[width=\textwidth]{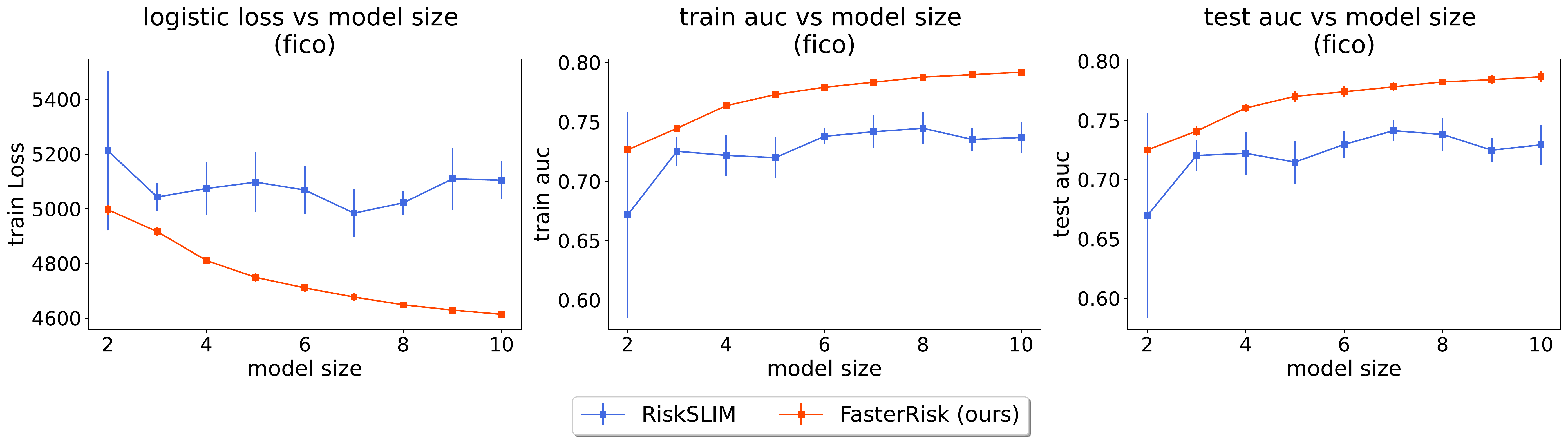}
    
    \caption{Comparison with the state-of-the-art baseline RiskSLIM (running for 1 hour) on the mushroom, COMPAS, and FICO datasets. The left column is loss (lower is better), the middle column is training AUC (higher is better) and the right column is test AUC (higher is better).
    }
    \label{fig:RiskSLIM_1h_mushroom_compas_fico}
\end{figure}

\begin{figure}[ht] 
    \centering
    \includegraphics[width=\textwidth]{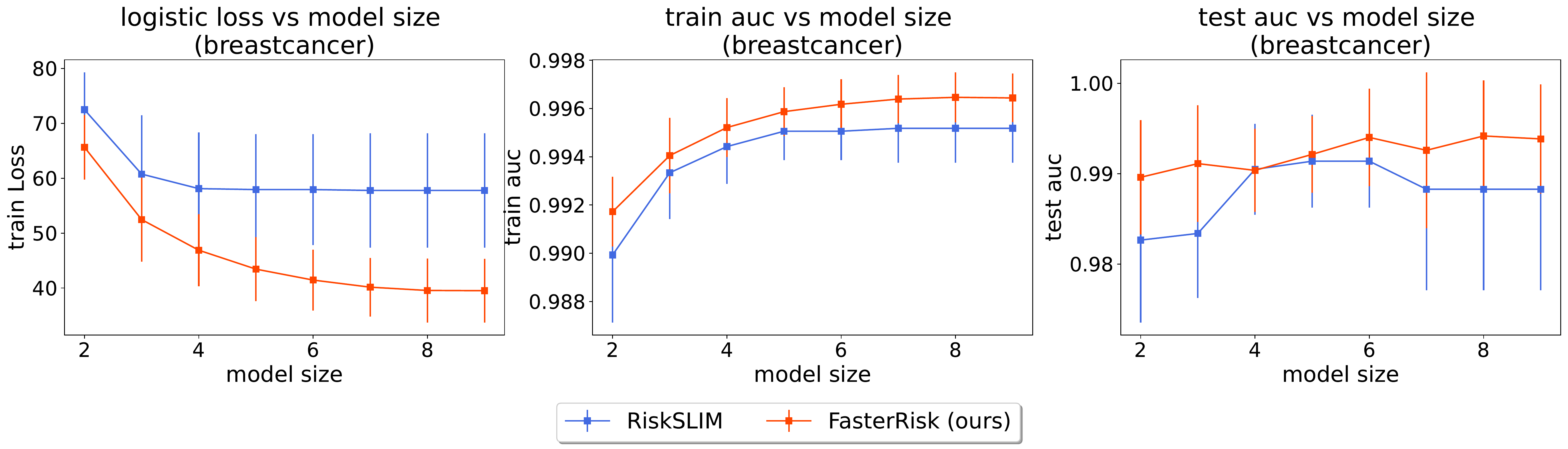}
    
    \includegraphics[width=\textwidth]{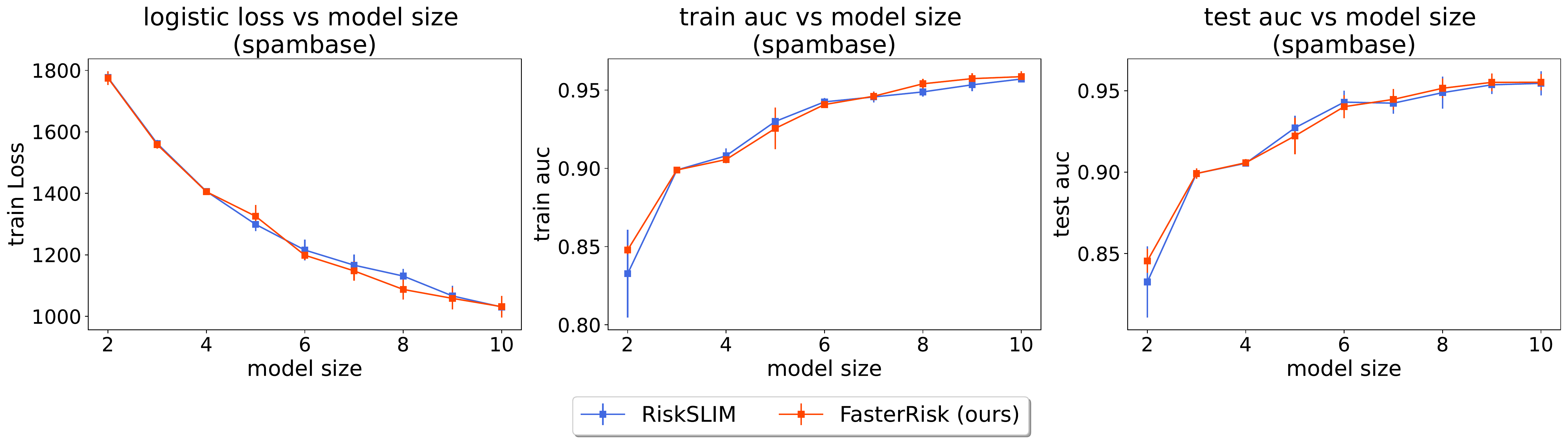}
    
    \includegraphics[width=\textwidth]{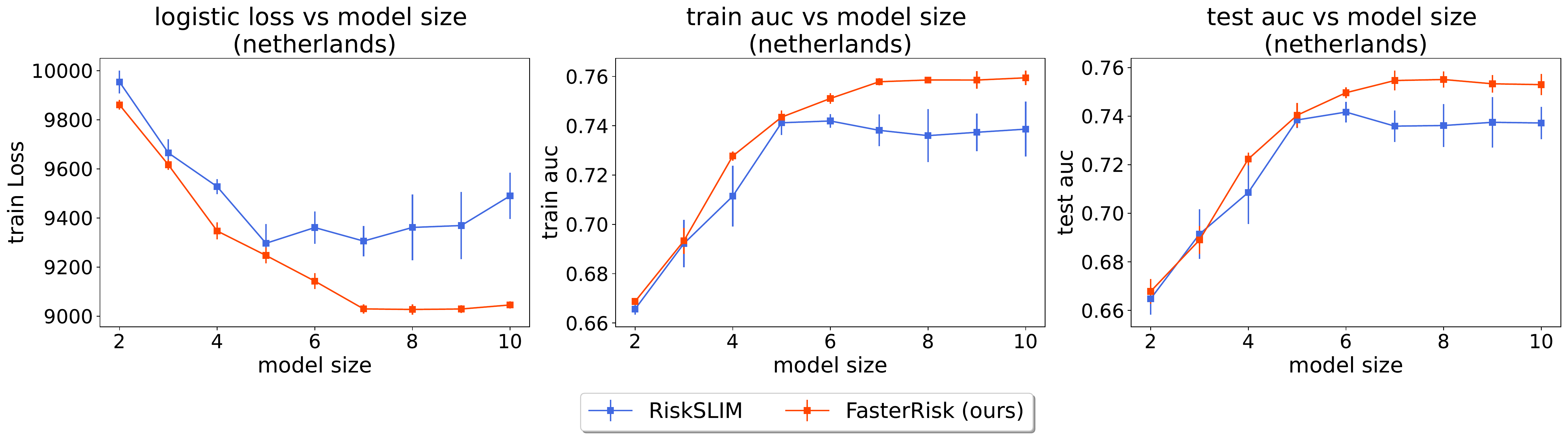}
    
    \caption{Comparison with the state-of-the-art baseline RiskSLIM (running for 1h) on the breastcancer, spambase, and Netherlands datasets. The left column is loss (lower is better), the middle column is training AUC (higher is better) and the right column is test AUC (higher is better).
    }
    \label{fig:RiskSLIM_1h_breastcancer_spambase_netherlands}
\end{figure}

\clearpage
\subsubsection{Time Comparison of Running RiskSLIM for 1 hour}

We plot the running time comparison between FasterRisk and RiskSLIM (with a time limit of 1 hour). The original time results with the 15-minute time limit are shown in Figure~\ref{fig:time_comparison} and Figure~\ref{fig:breastcancer_spambase_netherlands_runtimeCoparison}.

\begin{figure}[ht] 
    \centering
    \includegraphics[width=\textwidth]{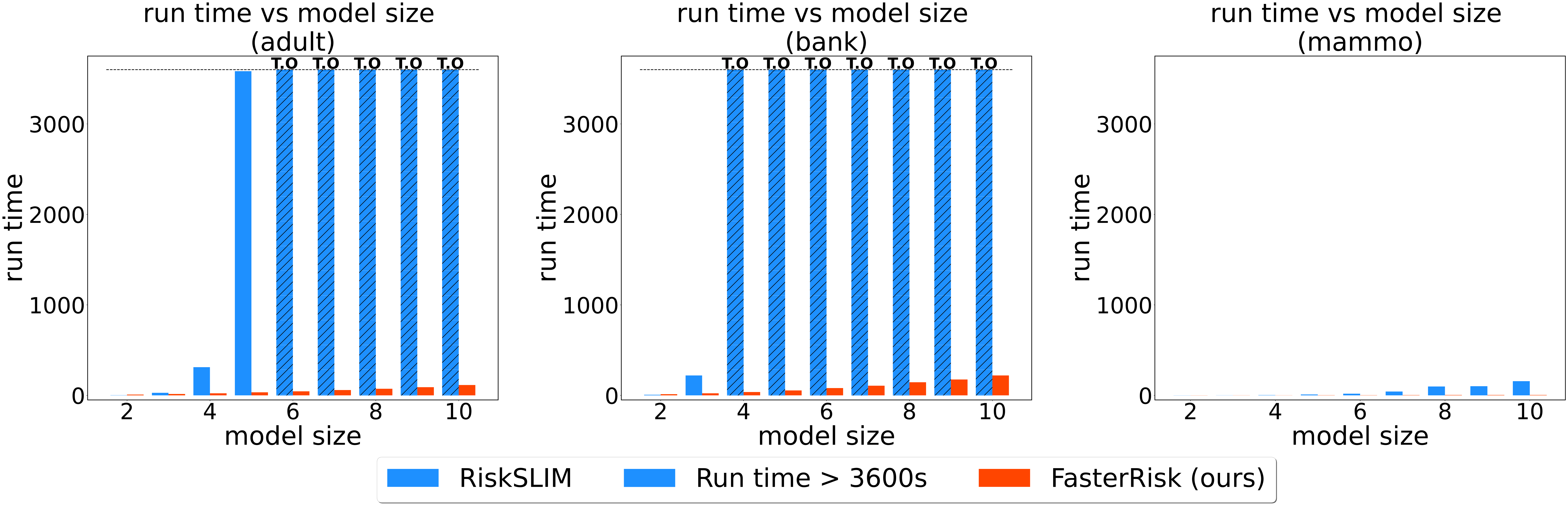}
    
    \includegraphics[width=\textwidth]{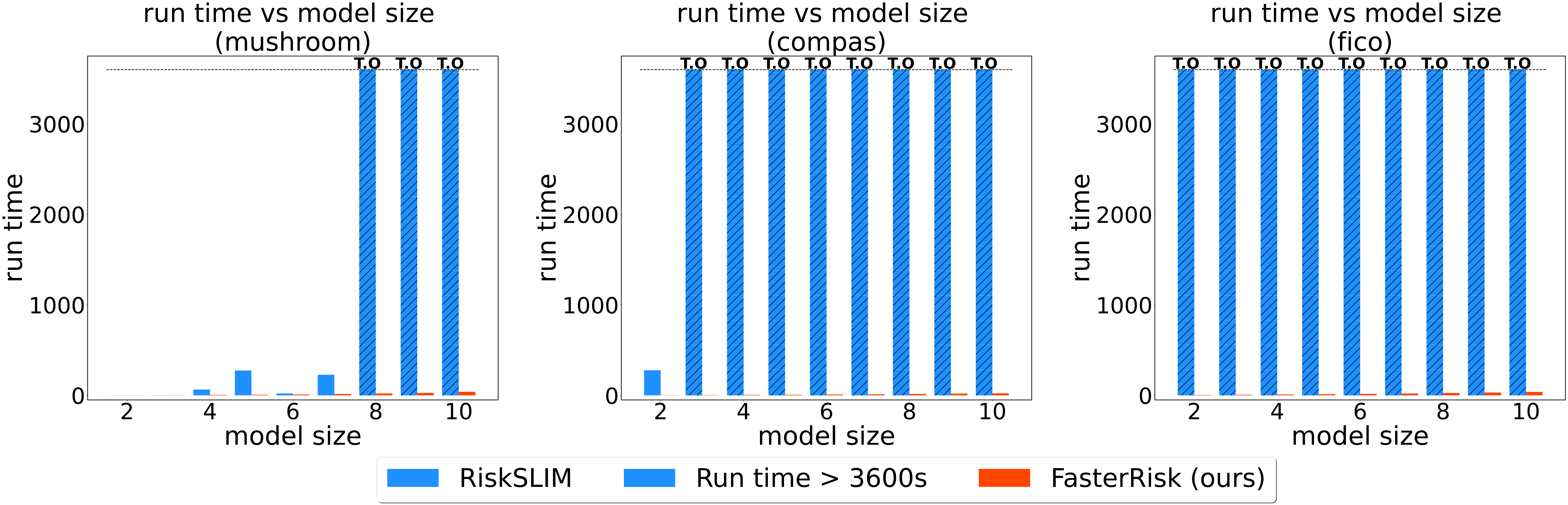}
    
    \includegraphics[width=\textwidth]{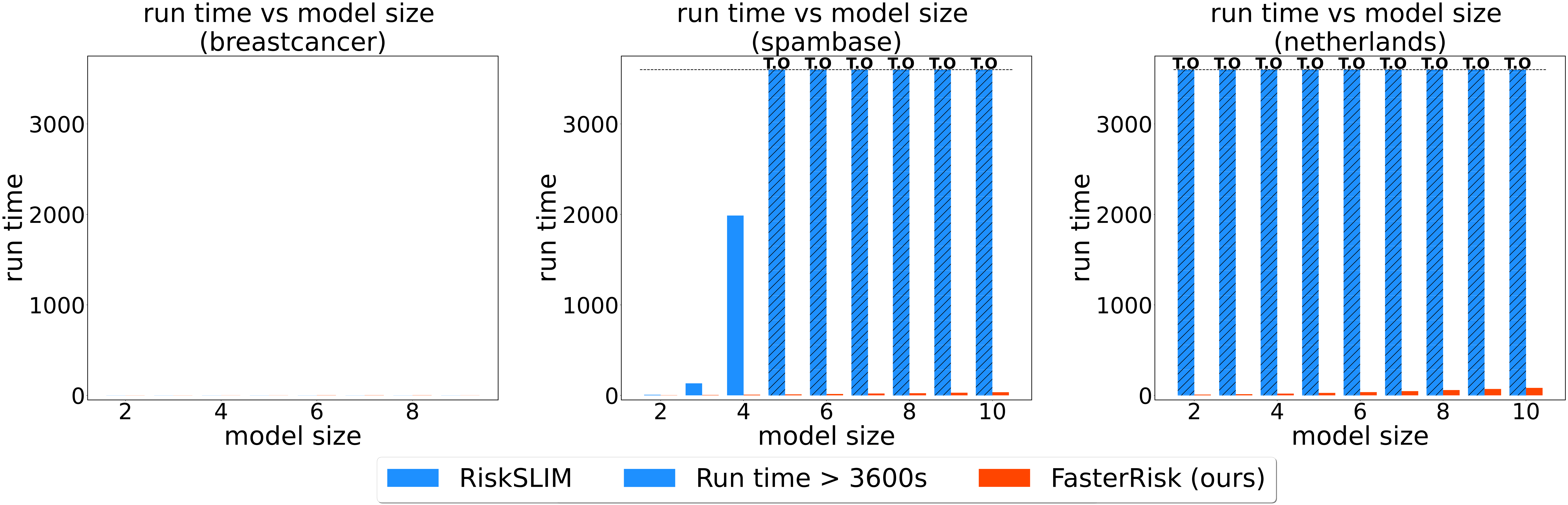}

    \caption{Runtime Comparison. Runtime (in seconds) versus model size for our method \ourmethod{} (in \textcolor{red}{red}) and the RiskSLIM (in \textcolor{blue}{blue}). The \textcolor{blue}{shaded blue} bars indicate cases that timed out at 1 hour. 
    Breastcancer is a small dataset so it takes approximately 2 seconds for both algorithms. For more zoomed-in results on the breastcancer and mammo datasets, please refer to Figure \ref{fig:time_comparison} and Figure~\ref{fig:breastcancer_spambase_netherlands_runtimeCoparison}.
    }
    \label{fig:running_time_1h}
\end{figure}

\clearpage
\subsubsection{Solution Quality of Running RiskSLIM for Days}

We report results of running the baseline RiskSLIM for 4 days. Due to this long running time demand on our servers, we could not run this experiment on all folds and all model sizes, so we only run on the 3rd fold of the 5-CV split. We plot the logistic loss progression over time.

The results are shown in Figure~\ref{fig:RiskSLIM_training_curve_vs_time}. We see that FasterRisk still achieves lower loss than RiskSLIM even after letting RiskSLIM run for 4 days, again because FasterRisk uses a larger model class. The only exceptions are on the Mushroom and the Spambase datasets, where the logistic losses are close to each other.

The major disadvantage of letting an algorithm run for days is that it is challenging to interact with the algorithm, because one has to wait for the results between interactions -- ideally this process would be instantaneous. Furthermore, there could be memory issues for the MIP solver if we let it run for days since the branch-and-bound tree could become too large.

\begin{figure}[ht] 
    \centering
    \includegraphics[width=\textwidth]{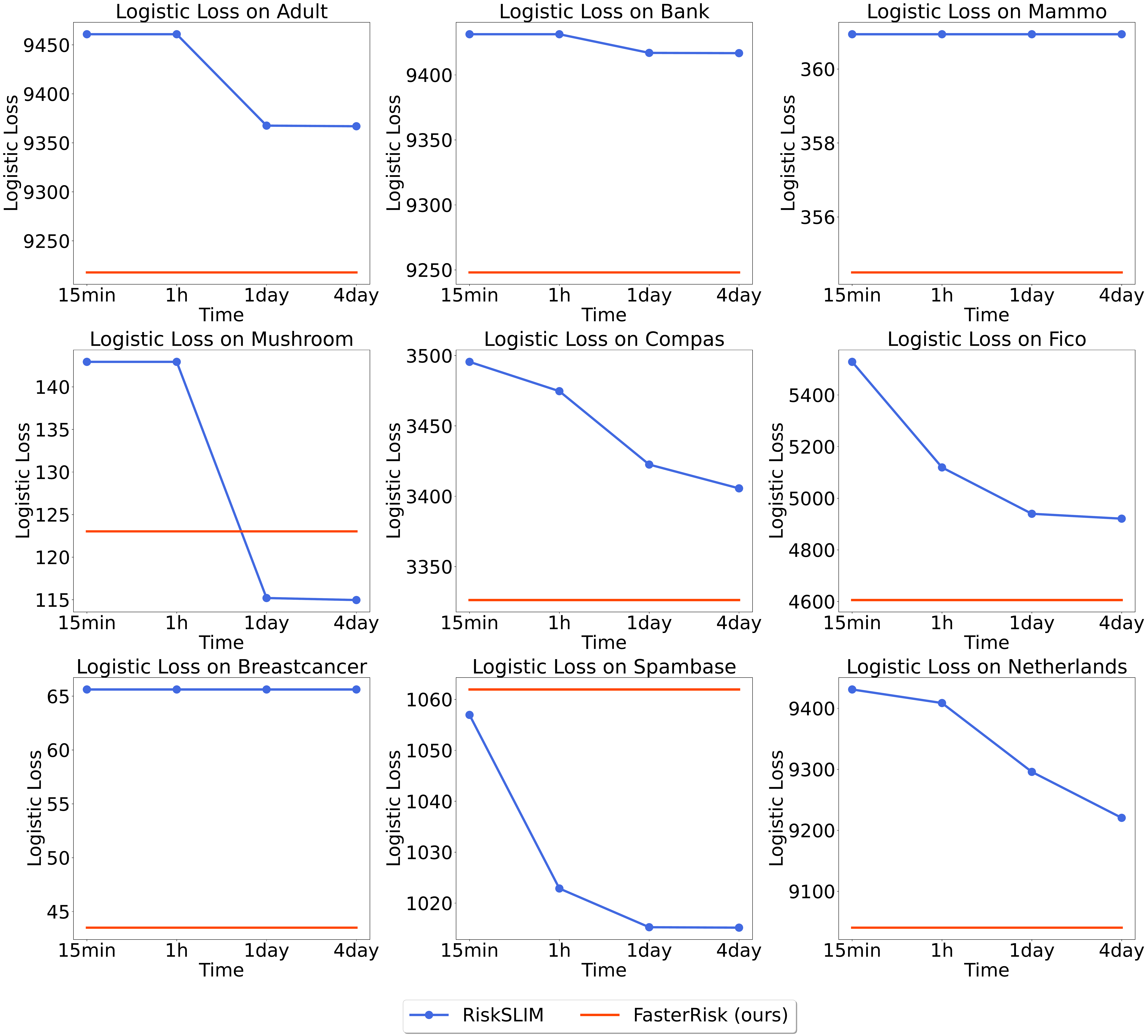}
    
    \caption{Curves of logistic loss vs$.$ training time for the RiskSLIM model on the 3rd fold of the 5-CV split with model size equal to 10. All plots report logistic loss (lower is better).
    }
    \label{fig:RiskSLIM_training_curve_vs_time}
\end{figure}

\clearpage
\subsection{Calibration Curves}
\label{appendix:calibration_curves}
The calibration curves for RiskSLIM and FasterRisk are shown in Figures ~\ref{fig:calibration_curve_model_size_3}-\ref{fig:calibration_curve_model_size_7} with model sizes equal to 3, 5, and 7, respectively. We use the sklearn package\footnote{\url{https://scikit-learn.org/stable/modules/generated/sklearn.calibration.calibration_curve.html}} from python to plot the figures. We use the default value for the number of bins (number of bins is 5) and the default strategy to define the widths of the bins (the strategy is ``uniform'').

The calibration curves on the breastcancer and mammo datasets are more spread out than those on the other datasets. This is perhaps due to the limited number of samples in these datasets (both datasets have fewer than 1000 samples in total; see Table~\ref{tab:data_info}), which increases the variance in the calculation of the curves.

On other datasets, both methods have good calibration curves, showing consistency between predicted score and actual risk.
However, as shown in Figures ~\ref{fig:RiskSLIM_1h_adult_bank_mammo}-\ref{fig:RiskSLIM_1h_breastcancer_spambase_netherlands}, FasterRisk has higher AUC scores, which means our method has higher discrimination ability than RiskSLIM.

\begin{figure}[ht] 
    \centering
    \includegraphics[width=\textwidth]{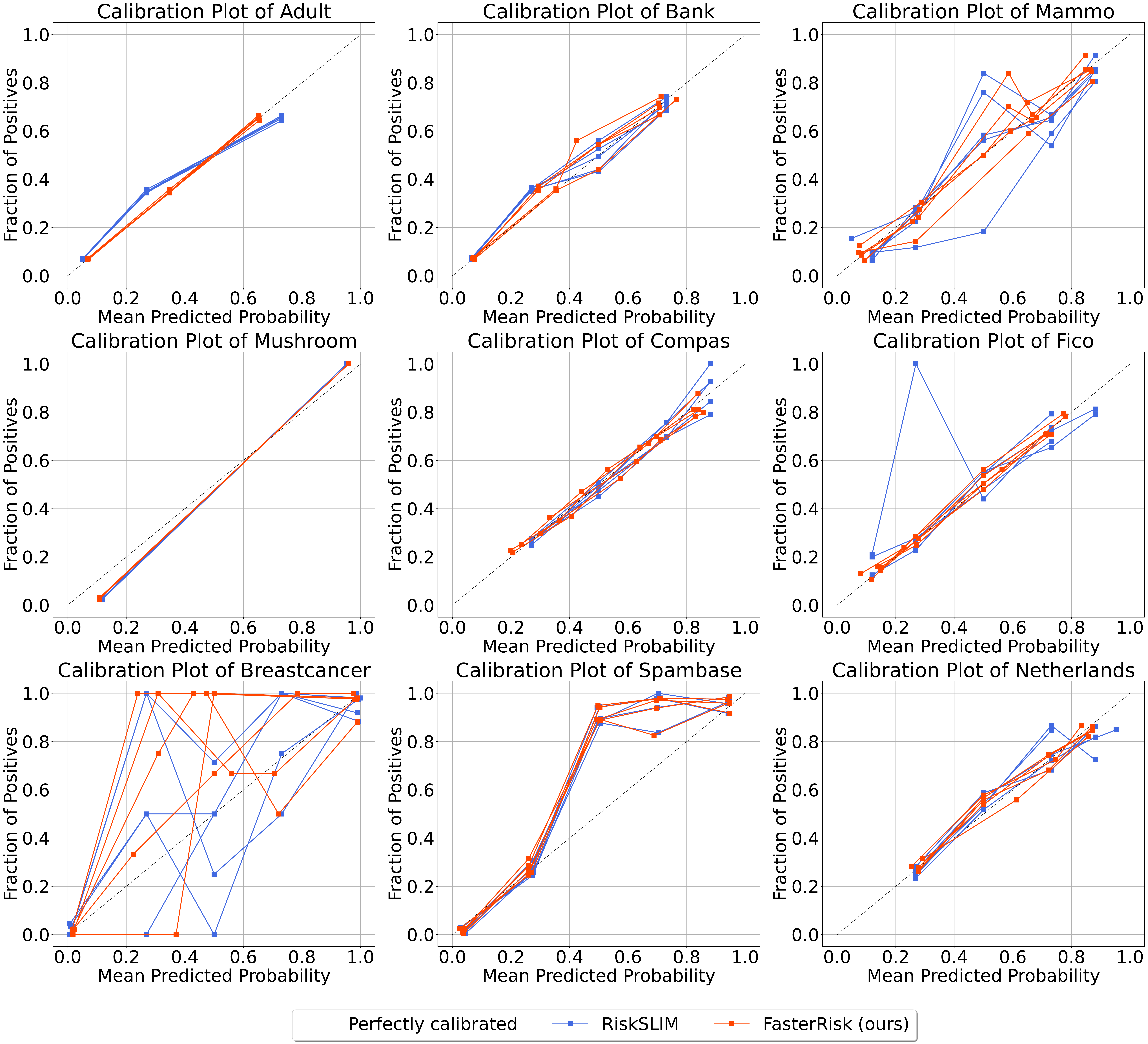}
    
    \caption{Calibration curves for RiskSLIM and \ourmethod{} with model size equal to 3. We plot results from each test fold. The \ourmethod{} model  selected from the pool is that with the smallest logistic loss on the training set.
    }
    \label{fig:calibration_curve_model_size_3}
\end{figure}

\begin{figure}[ht] 
    \centering
    \includegraphics[width=\textwidth]{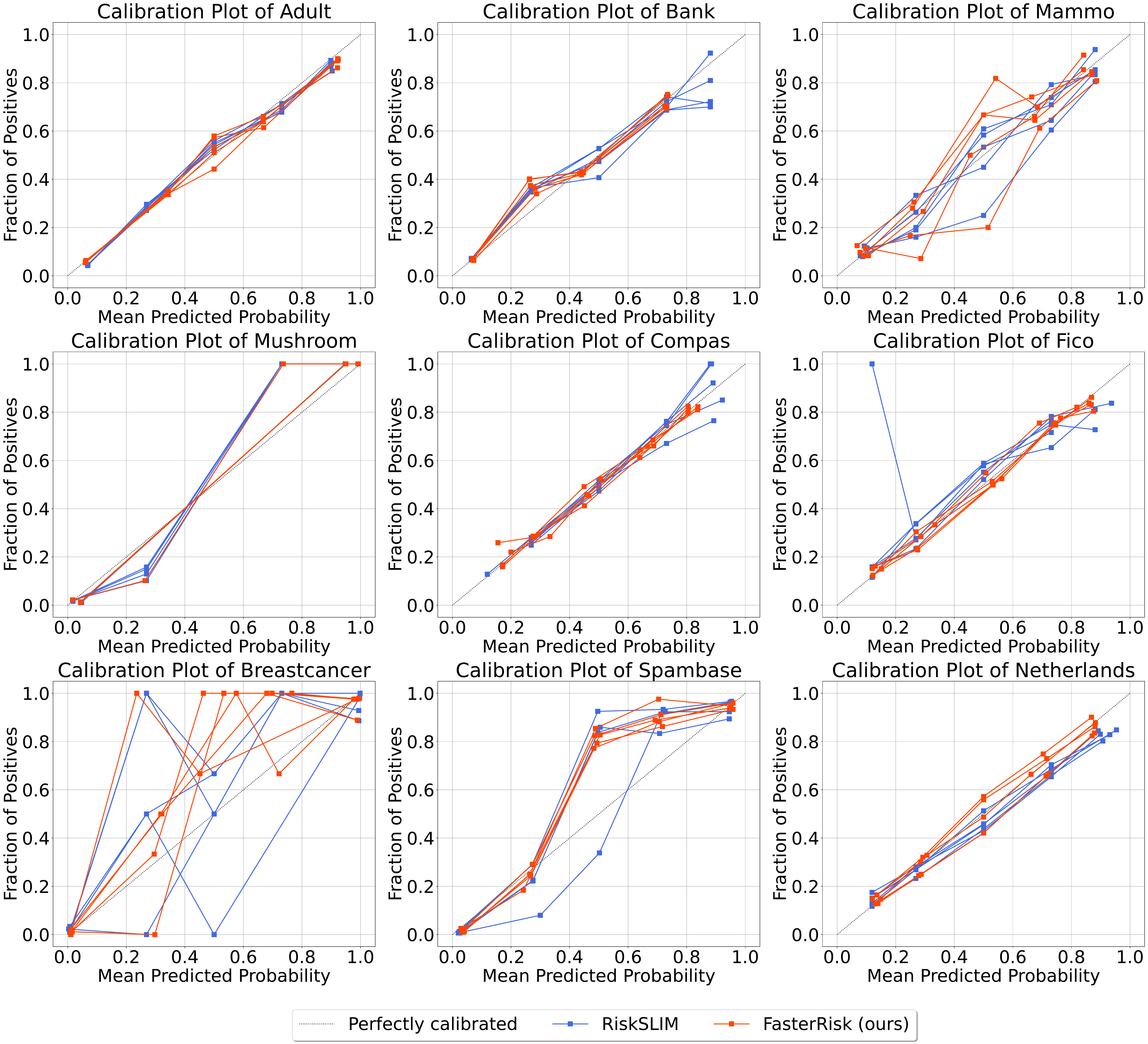}
    
    \caption{Calibration curves for RiskSLIM and FasterRisk with model size equal to 5. We plot results from each test fold. The \ourmethod{} model  selected from the pool is that with the smallest logistic loss on the training set.
    }
    \label{fig:calibration_curve_model_size_5}
\end{figure}

\begin{figure}[ht] 
    \centering
    \includegraphics[width=\textwidth]{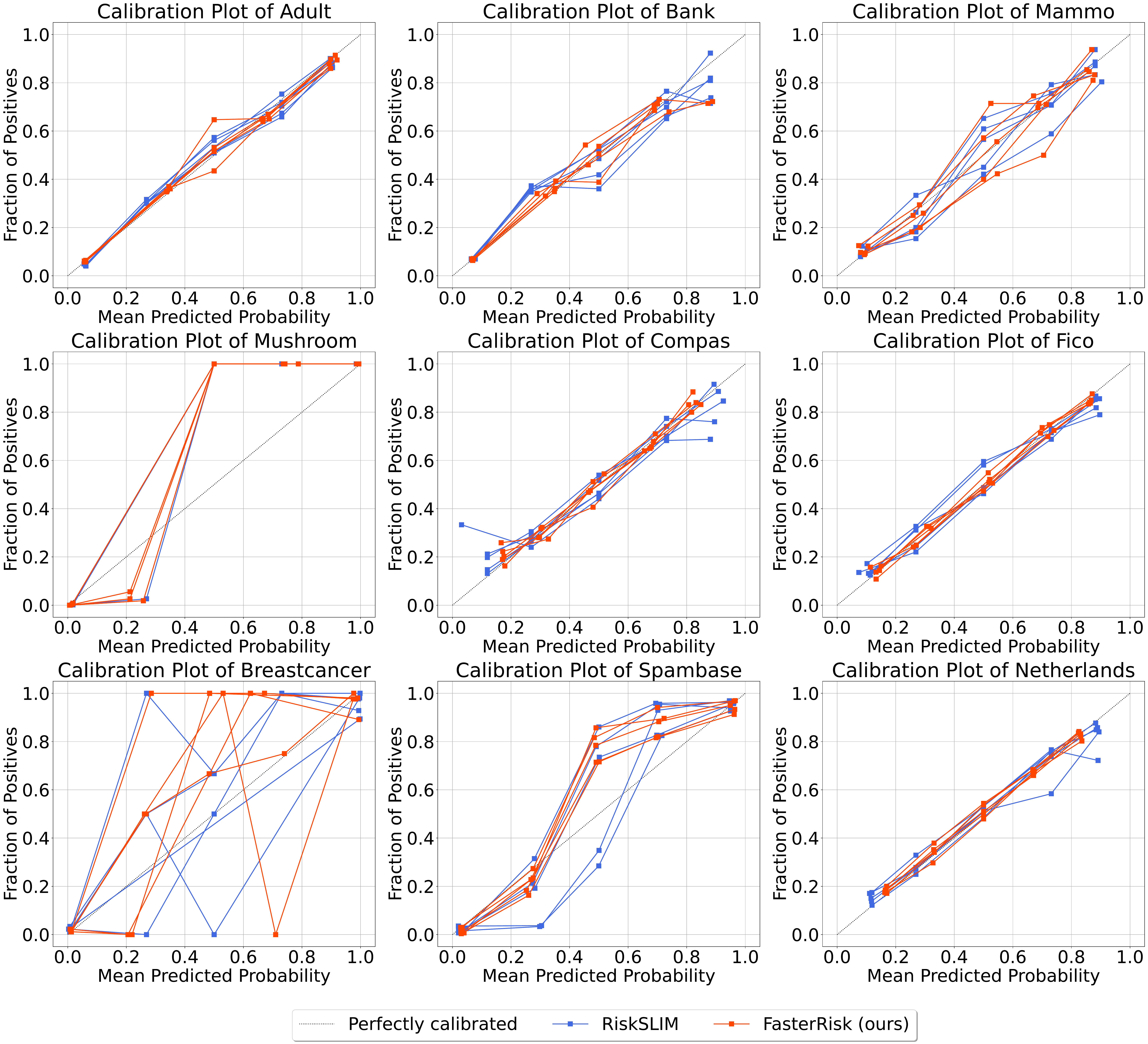}
    
    \caption{Calibration curves for RiskSLIM and FasterRisk with model size equal to 7. We plot results from each fold on the test set. The \ourmethod{} model  selected from the pool is that with the smallest logistic loss on the training set.
    }
    \label{fig:calibration_curve_model_size_7}
\end{figure}

\clearpage
\subsection{Hyperparameter Perturbation Study}
\label{appendix:hyperparameter_perturbation_study}

\subsubsection{Perturbation Study on Beam Size \texorpdfstring{$B$}{B}}

We perform a perturbation study on the hyperparameter beam size $B$ as mentioned in Appendix~\ref{sec:hyperparameter_specification}. We set the beam size to 5, 10, and 15, respectively. The results are shown in Figures~\ref{fig:hyperparameter_beam_size_adult_bank_mammo}-\ref{fig:hyperparameter_beam_size_breastcancer_spambase_netherlands}. The curves greatly overlap, confirming our previous claim that the performance is not particularly sensitive to the choice of $B$.

\begin{figure}[ht] 
    \centering
    \includegraphics[width=\textwidth]{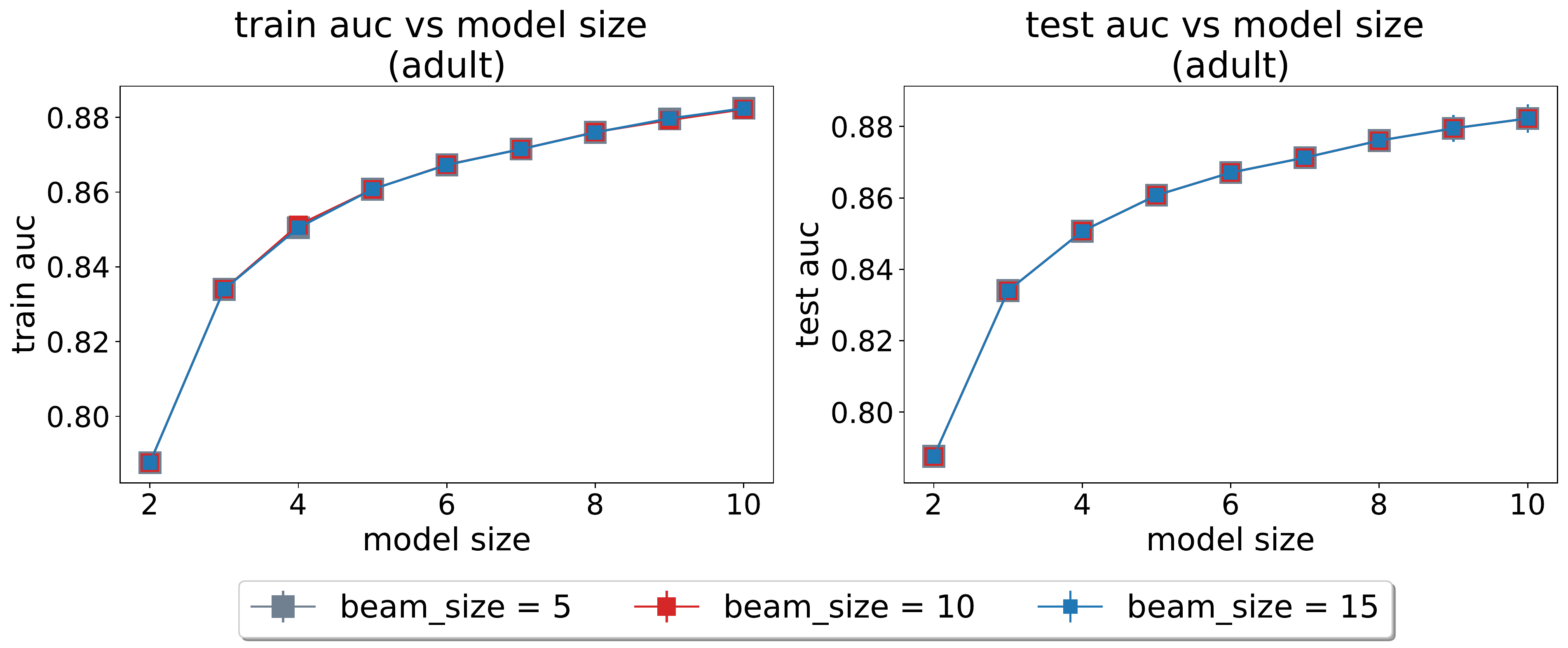}
    
    \includegraphics[width=\textwidth]{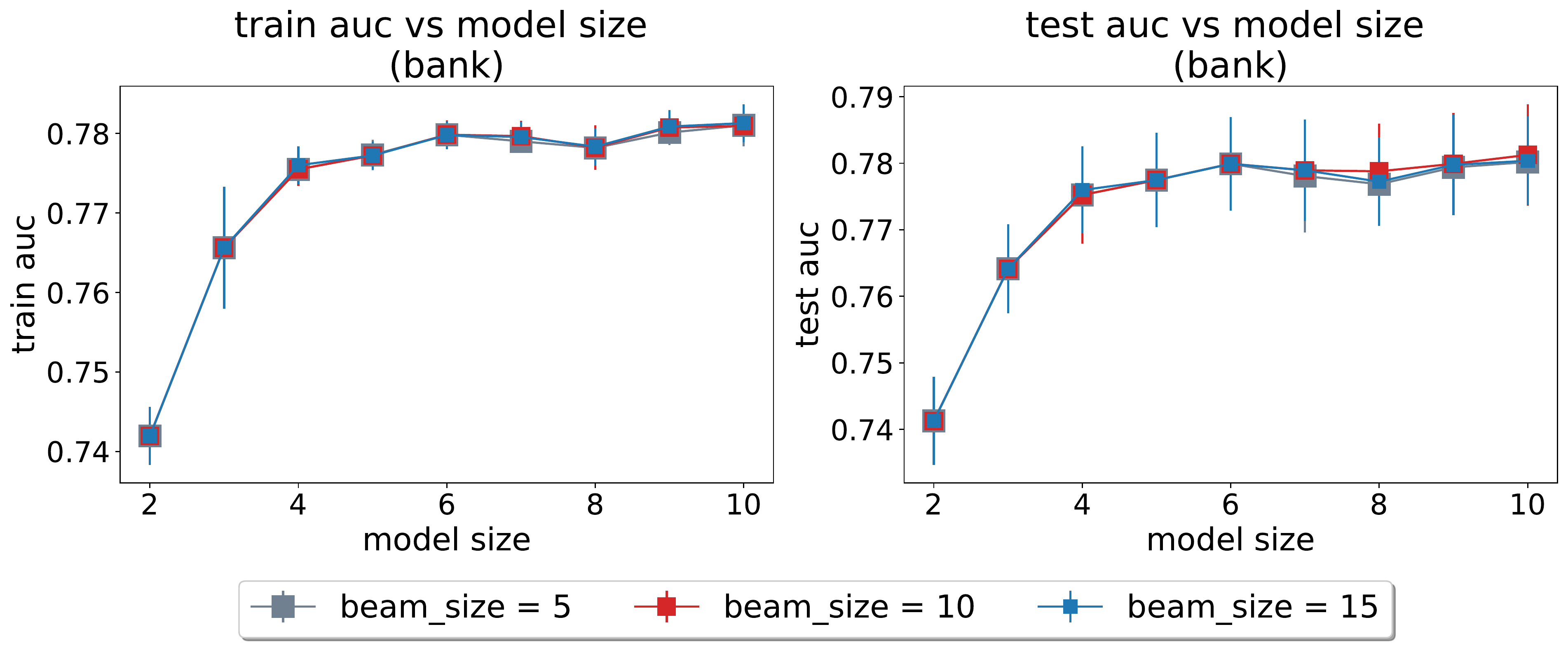}
    
    \includegraphics[width=\textwidth]{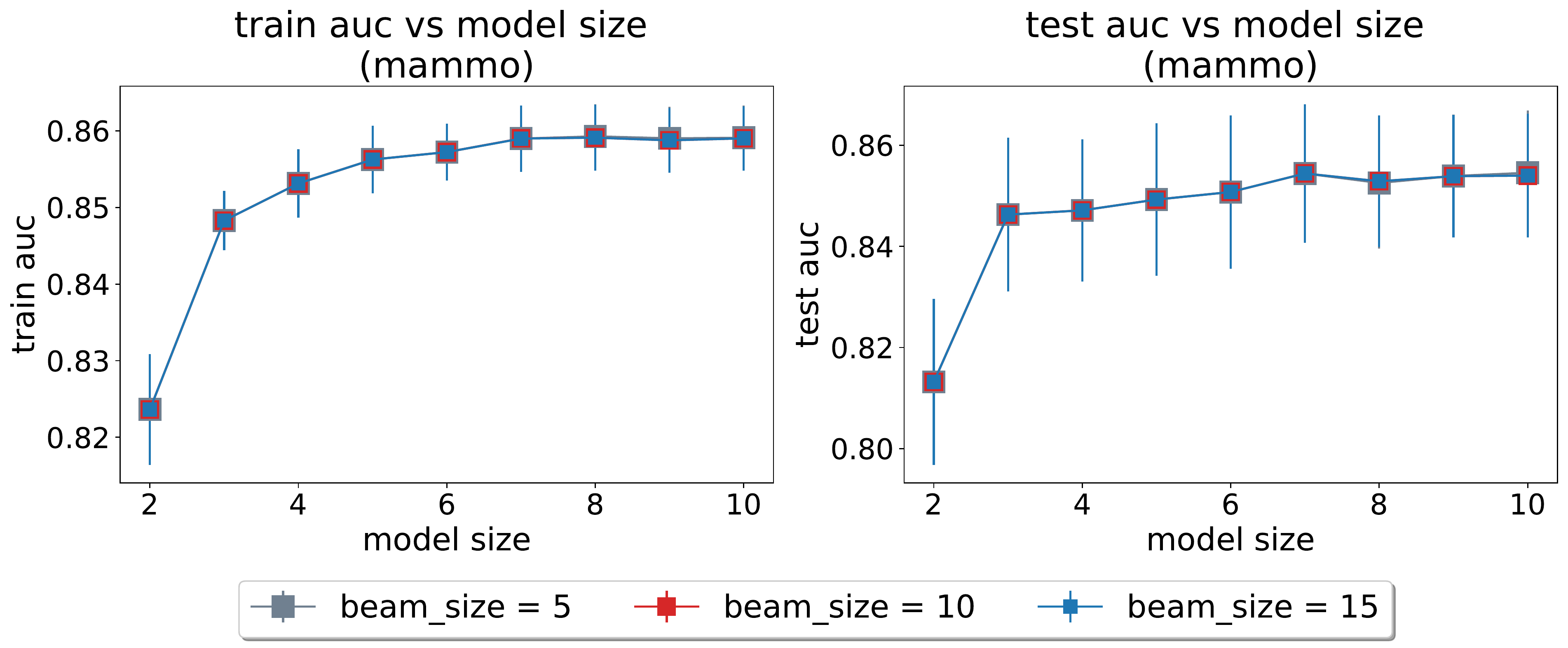}
    
    \caption{Perturbation study for beam size, $B$, on the adult, bank, and mammo datasets. The default value used in the paper is 10.
    }
    \label{fig:hyperparameter_beam_size_adult_bank_mammo}
\end{figure}

\begin{figure}[ht] 
    \centering
    \includegraphics[width=\textwidth]{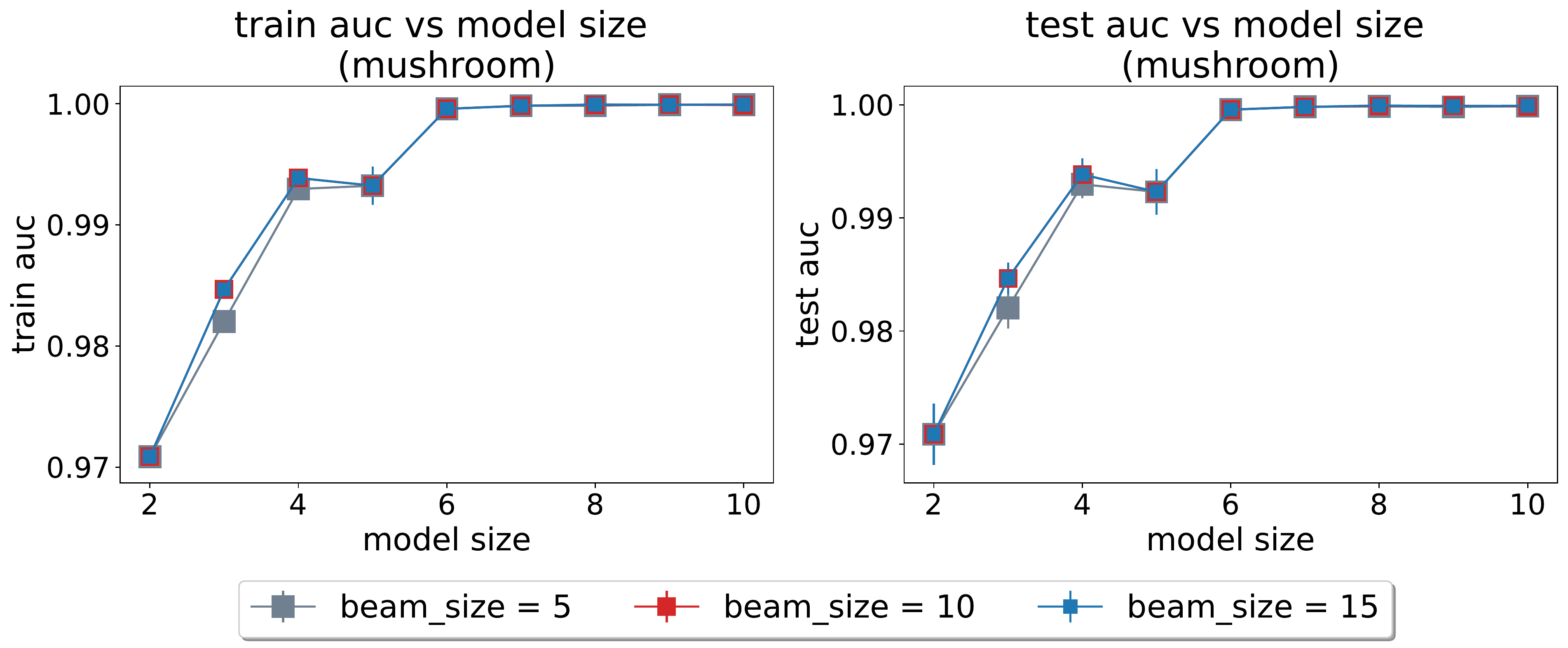}
    
    \includegraphics[width=\textwidth]{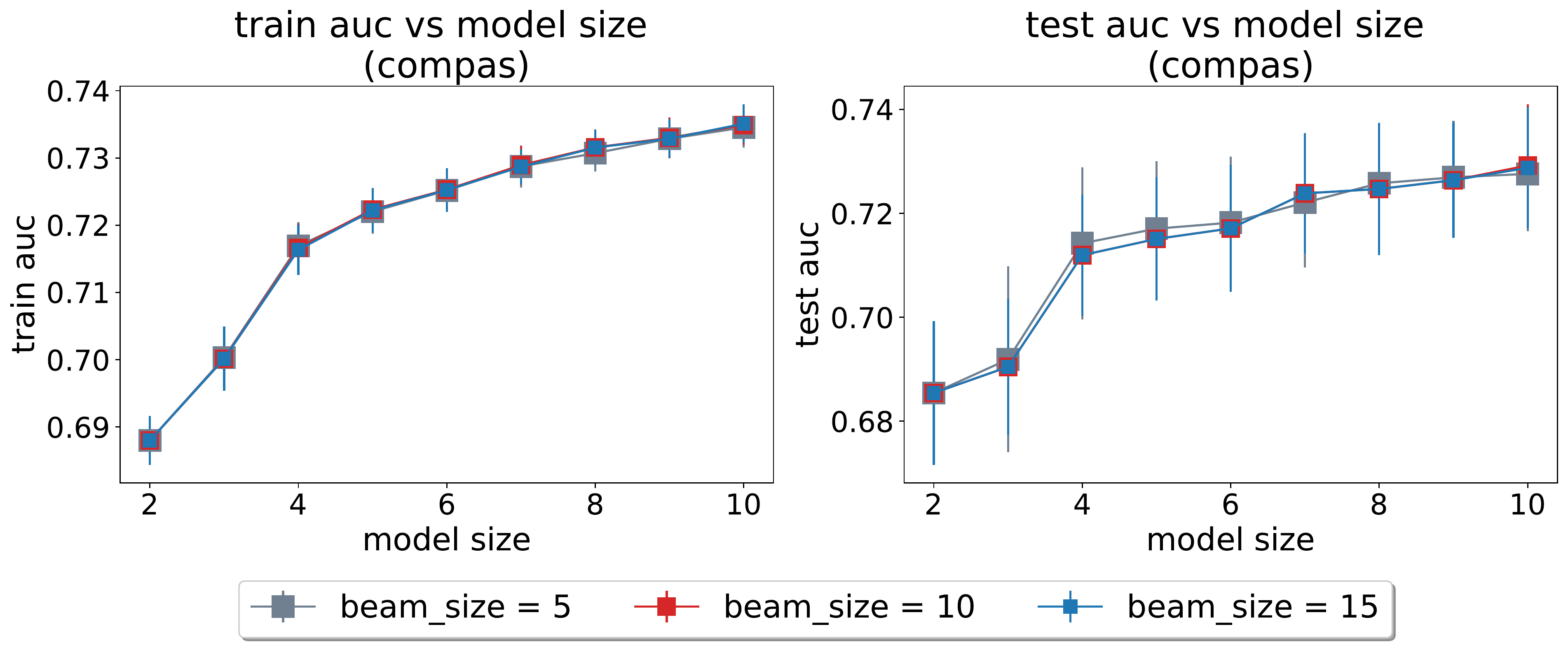}
    
    \includegraphics[width=\textwidth]{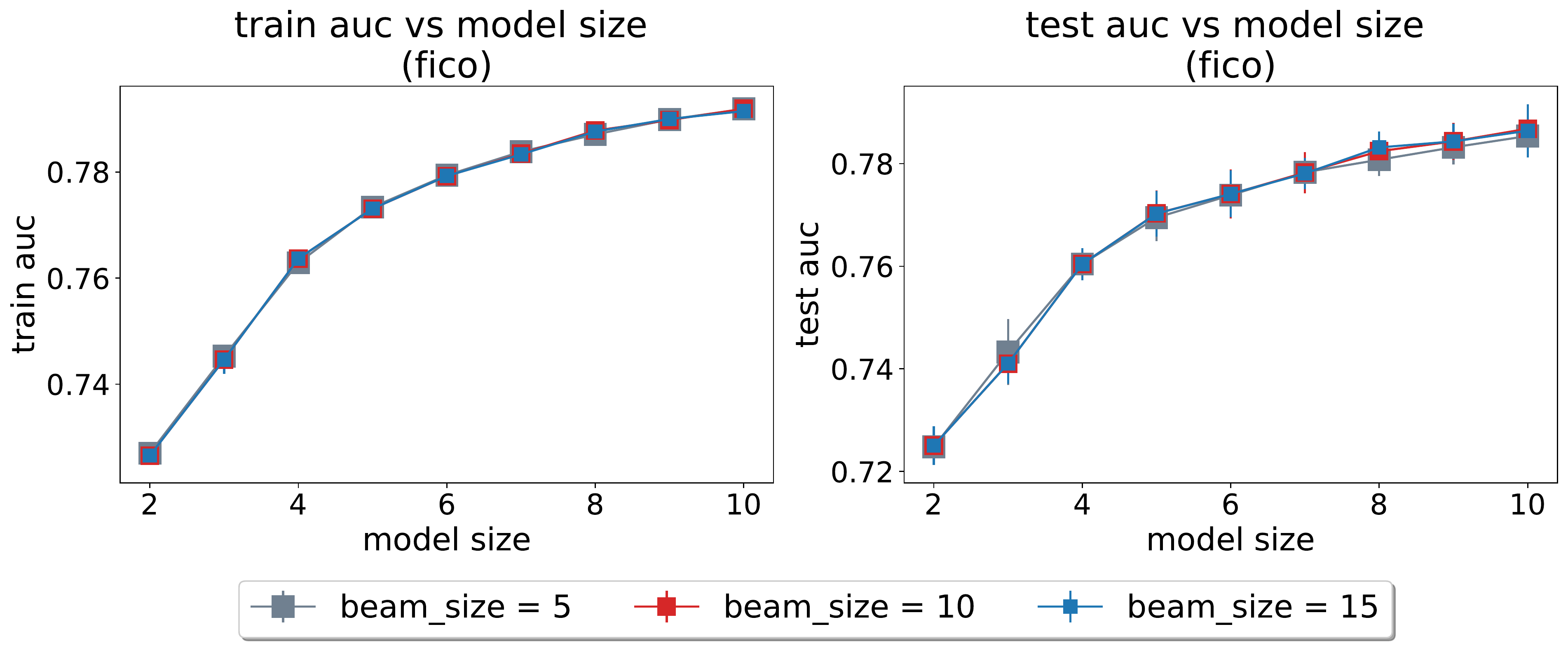}
    
    \caption{Perturbation study for beam size, $B$, on the mushroom, COMPAS, and FICO datasets. The default value used in the paper is 10.
    }
    \label{fig:hyperparameter_beam_size_mushroom_compas_fico}
\end{figure}

\begin{figure}[ht] 
    \centering
    \includegraphics[width=\textwidth]{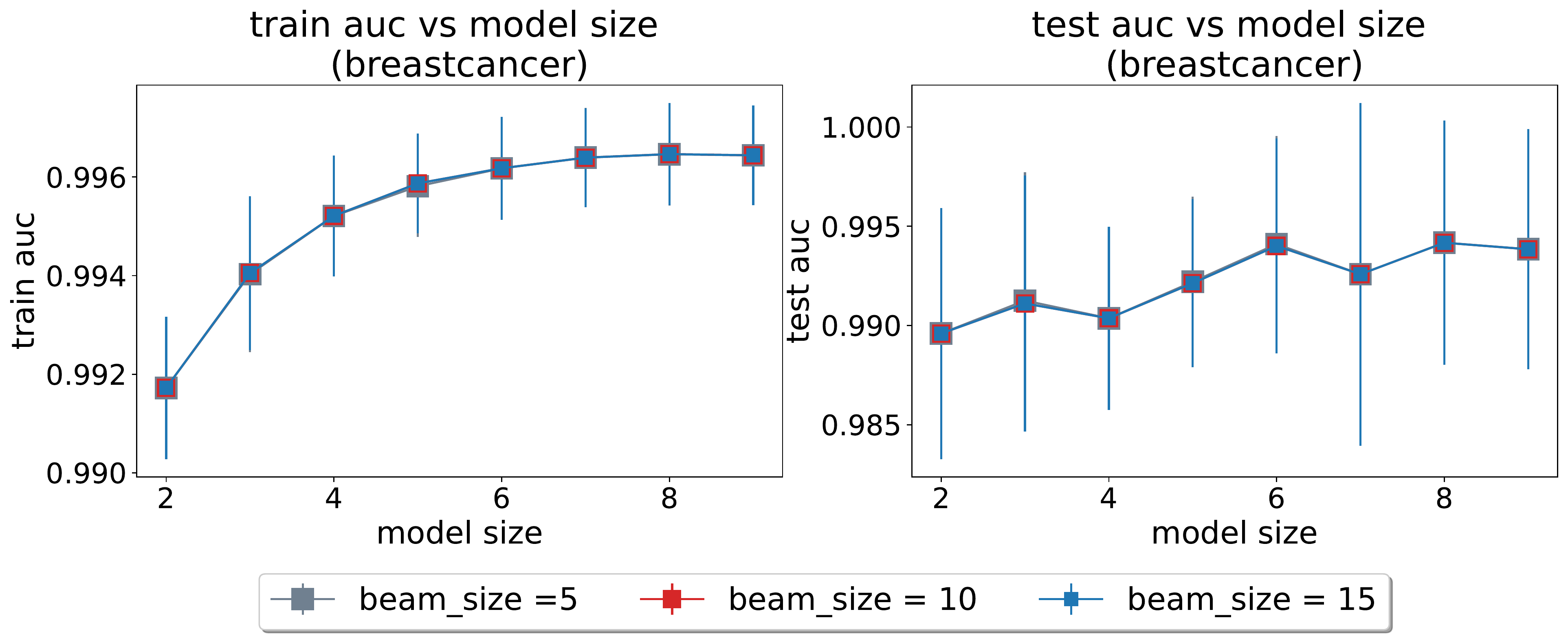}
    
    \includegraphics[width=\textwidth]{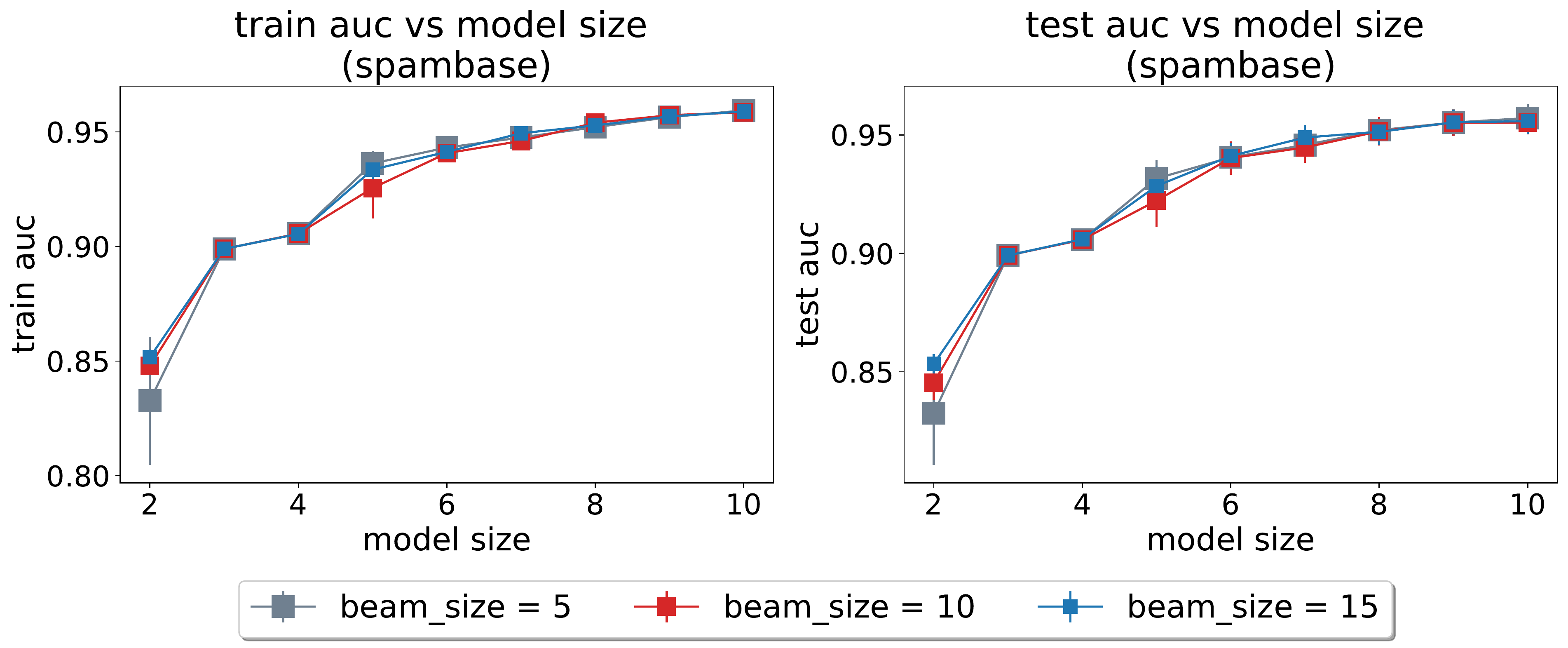}
    
    \includegraphics[width=\textwidth]{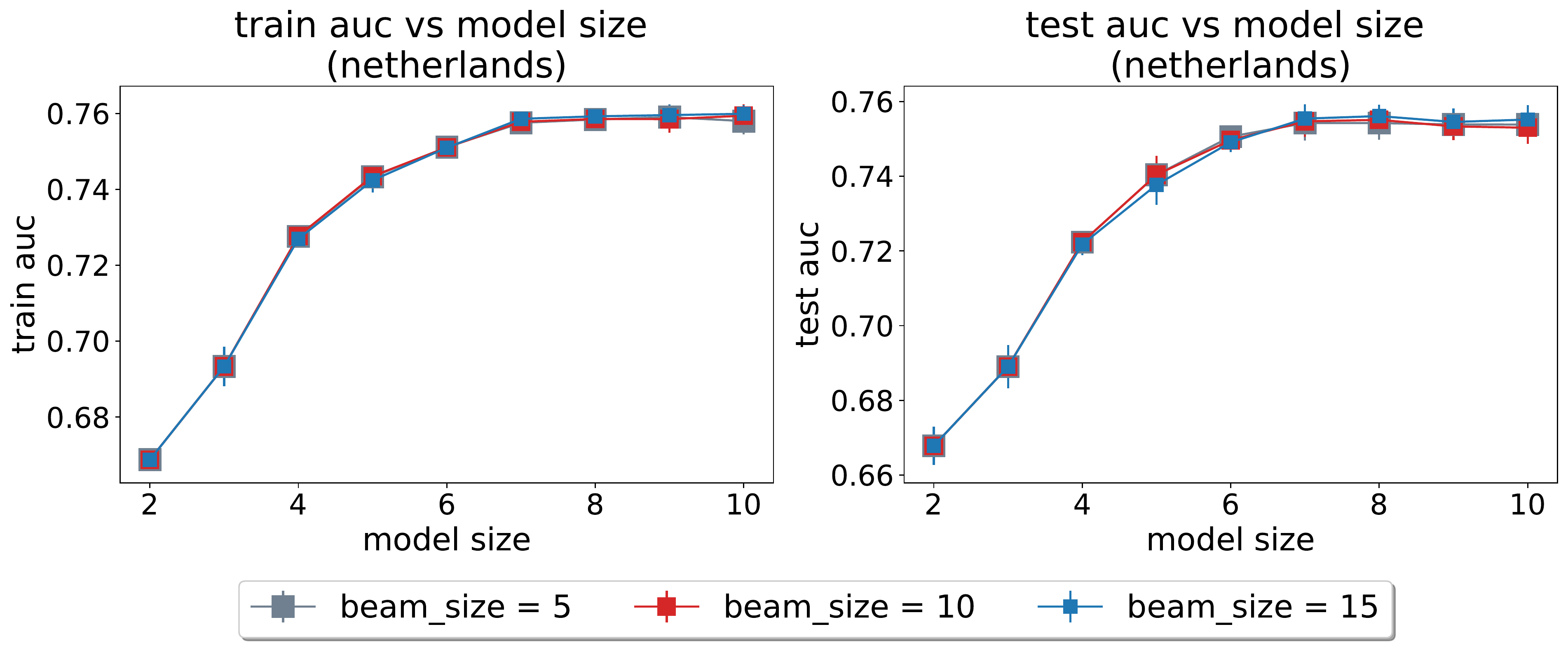}
    
    \caption{Perturbation study for beam size, $B$, on the breastcancer, spambase, and Netherlands datasets. The default value used in the paper is 10.
    }
    \label{fig:hyperparameter_beam_size_breastcancer_spambase_netherlands}
\end{figure}

\clearpage
\subsubsection{Perturbation Study on Tolerance Level \texorpdfstring{$\epsilon$}{epsilon} for Sparse Diverse Pool }

We perform a perturbation study on the hyperparameter tolerance level, $\epsilon$, as mentioned in Appendix~\ref{sec:hyperparameter_specification}. We set the tolerance level to 0.1, 0.3, and 0.5, respectively. The results are shown in Figures~\ref{fig:hyperparameter_gap_tolerance_adult_bank_mammo}-\ref{fig:hyperparameter_gap_tolerance_breastcancer_spambase_netherlands}. The curves greatly overlap, confirming our previous claim that the performance is not particularly sensitive to the choice of value.

\begin{figure}[ht] 
    \centering
    \includegraphics[width=\textwidth]{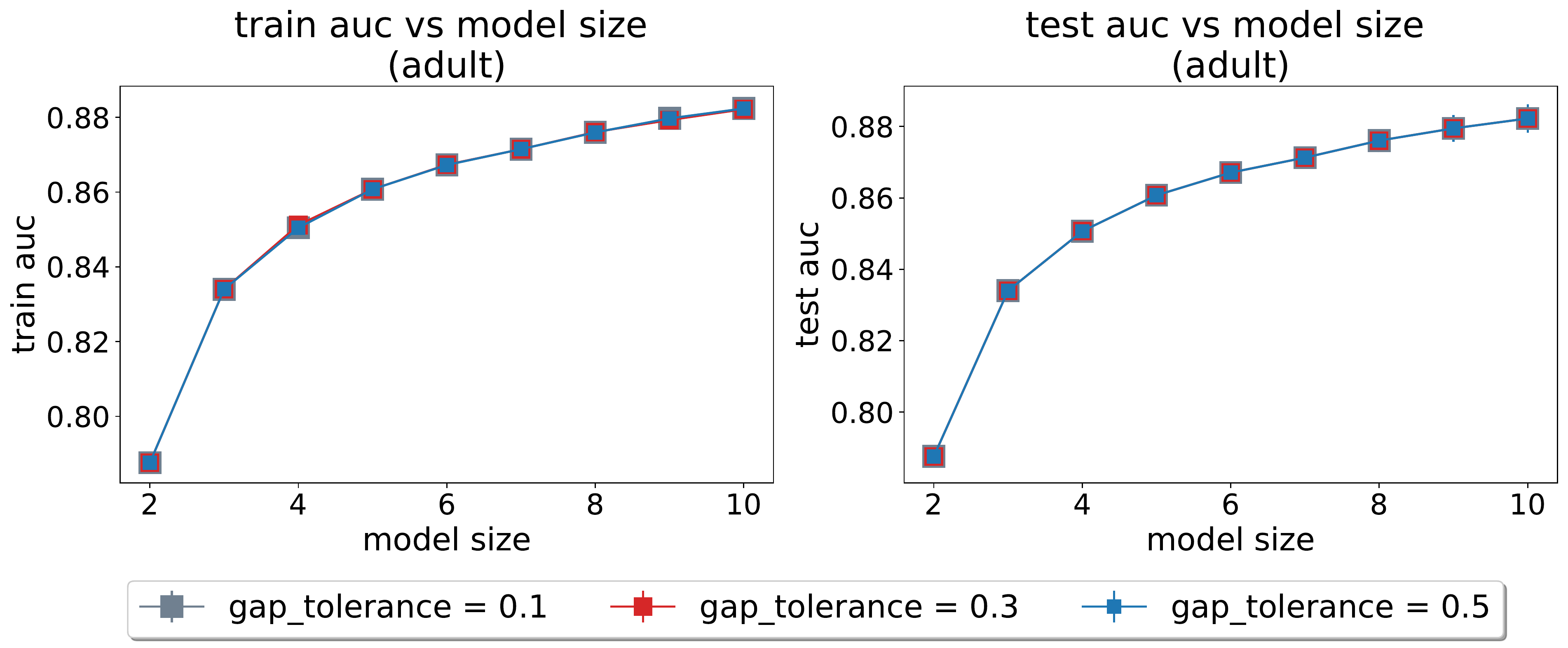}
    
    \includegraphics[width=\textwidth]{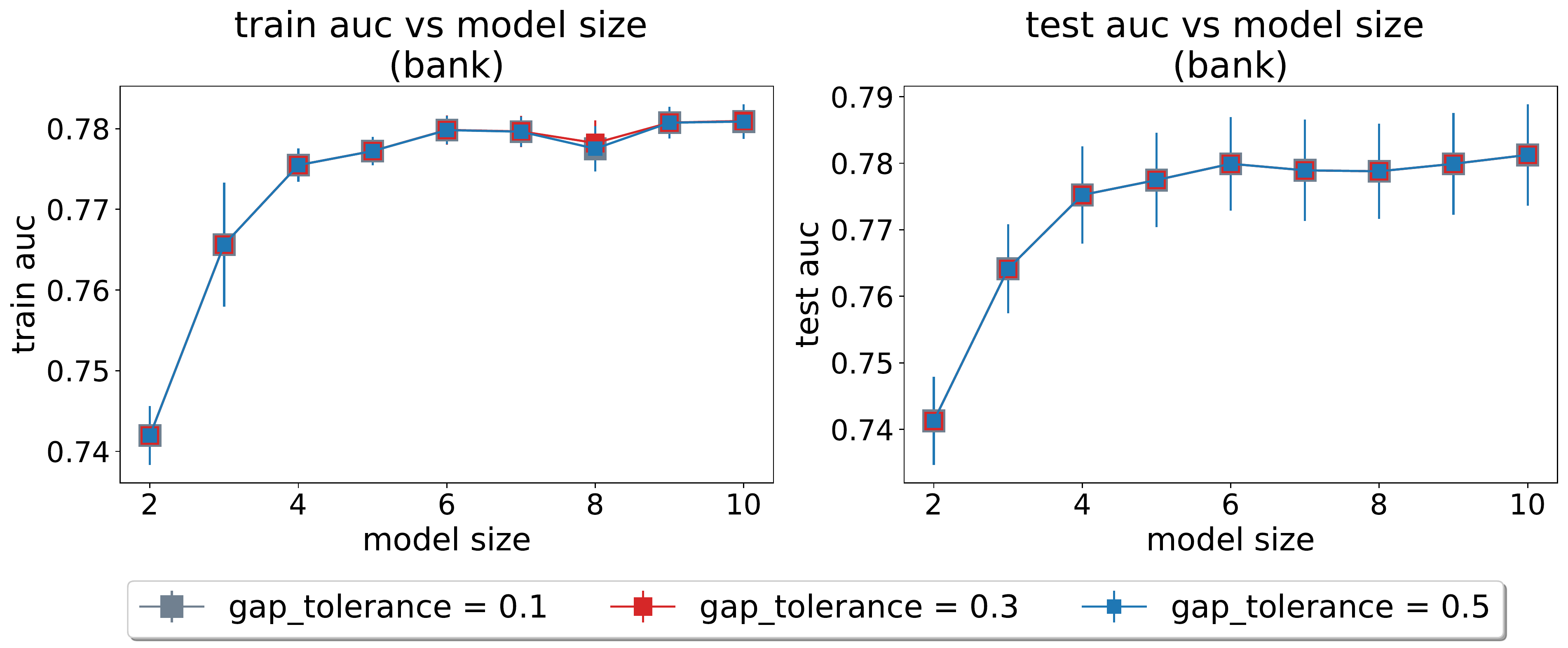}
    
    \includegraphics[width=\textwidth]{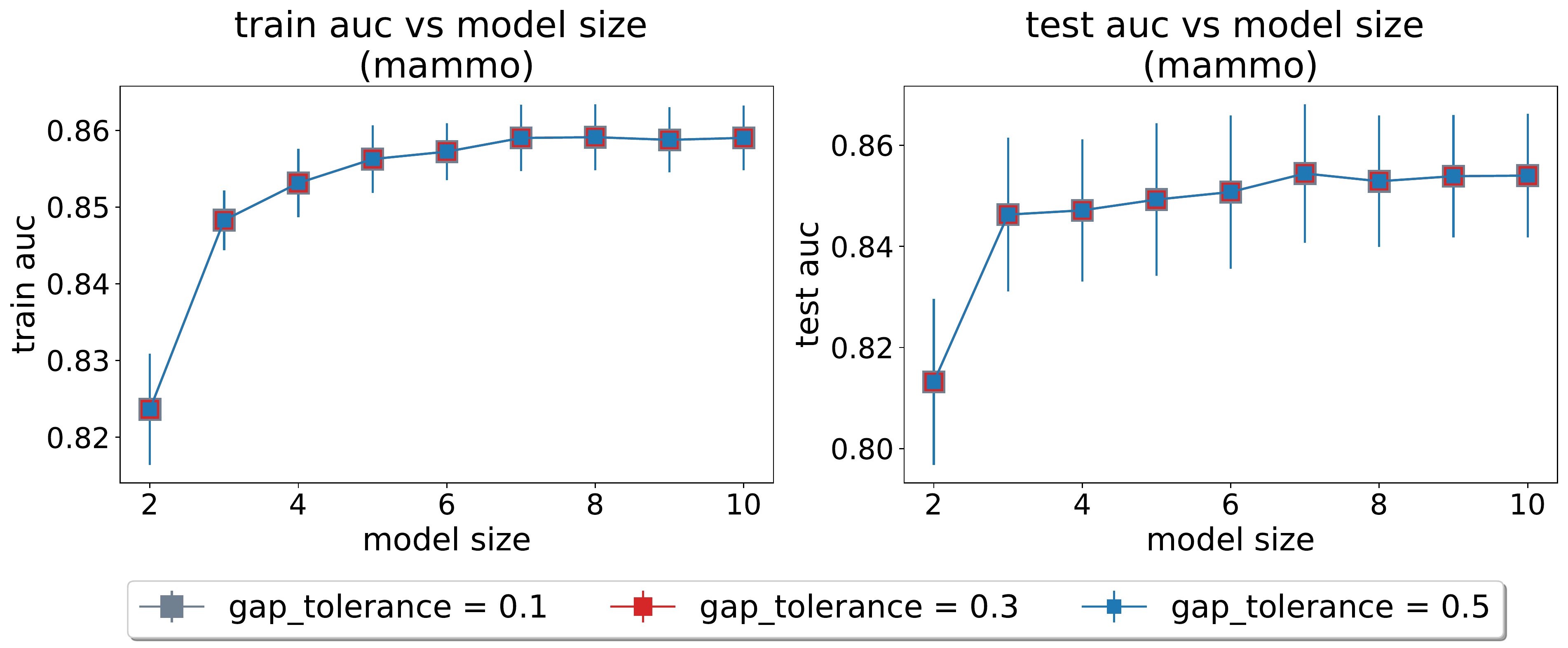}
    
    \caption{Perturbation study on tolerance level, $\epsilon$, for sparse diverse pool on the adult, bank, and mammo datasets. The default value used in the paper is 0.3.
    }
    \label{fig:hyperparameter_gap_tolerance_adult_bank_mammo}
\end{figure}

\begin{figure}[ht] 
    \centering
    \includegraphics[width=\textwidth]{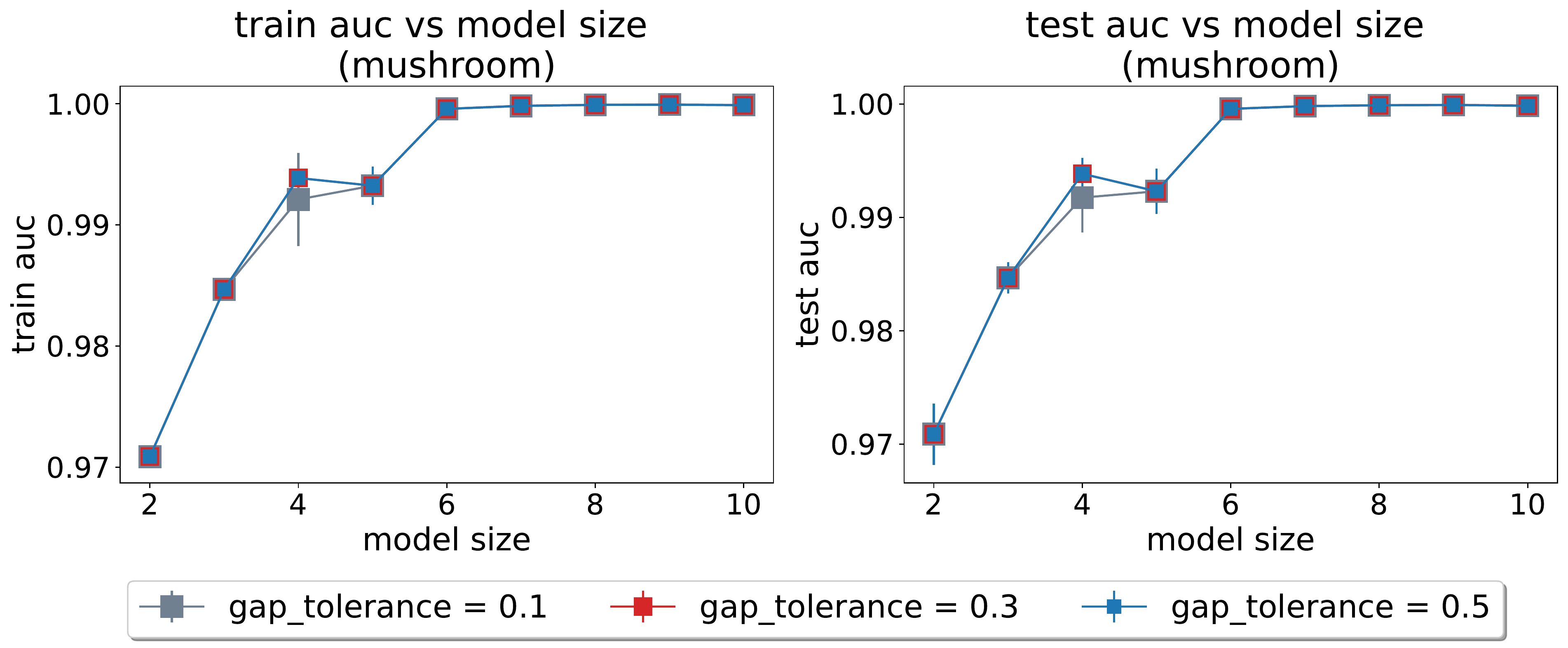}
    
    \includegraphics[width=\textwidth]{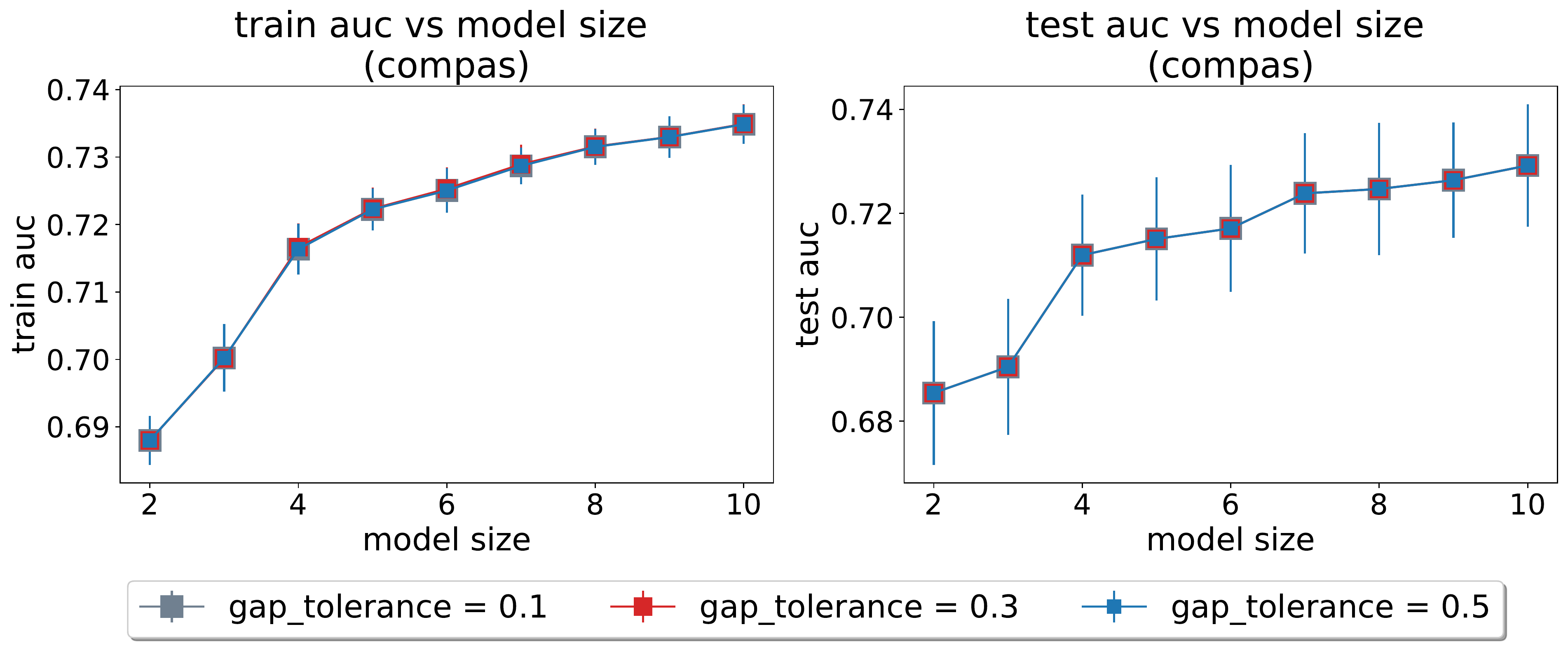}
    
    \includegraphics[width=\textwidth]{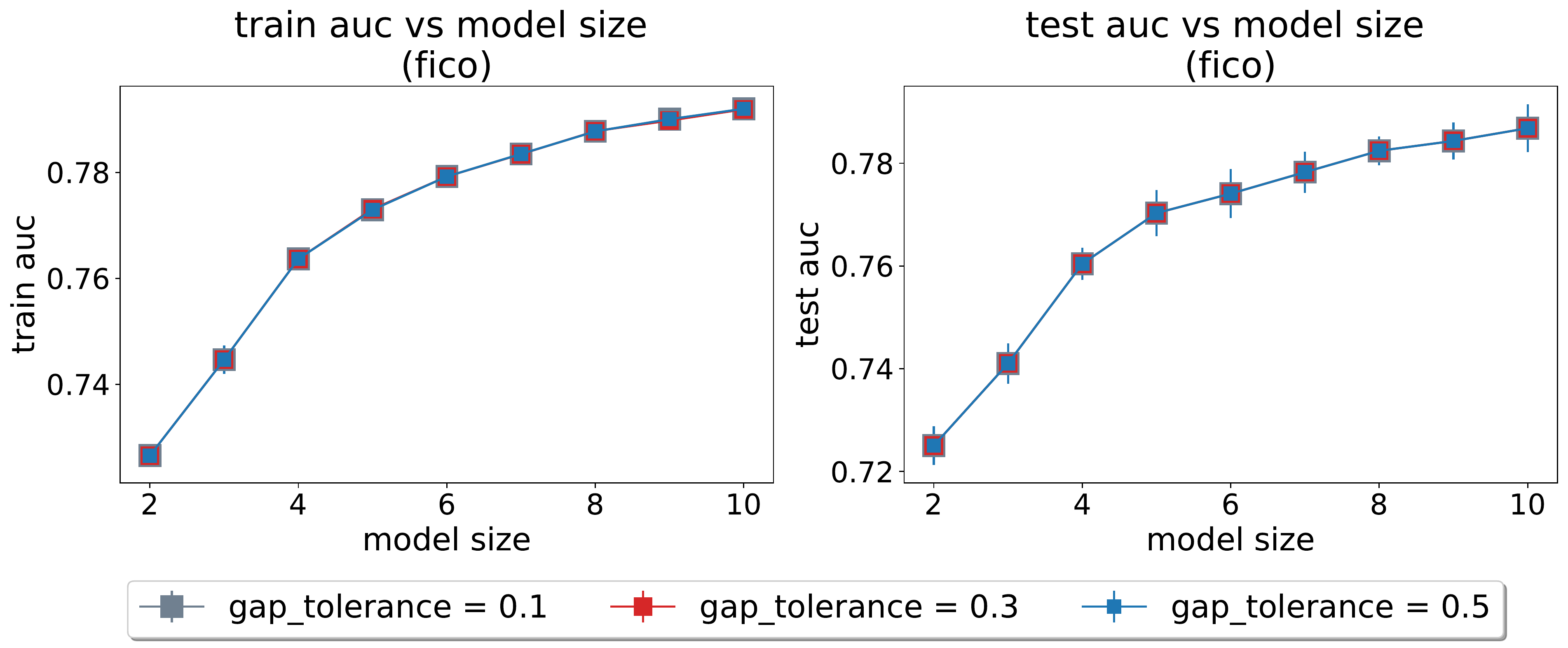}
    
    \caption{Perturbation study on tolerance level, $\epsilon$, for sparse diverse pool on the mushroom, COMPAS and FICO datasets. The default value used in the paper is 0.3.
    }
    \label{fig:hyperparameter_gap_tolerance_mushroom_compas_fico}
\end{figure}

\begin{figure}[ht] 
    \centering
    \includegraphics[width=\textwidth]{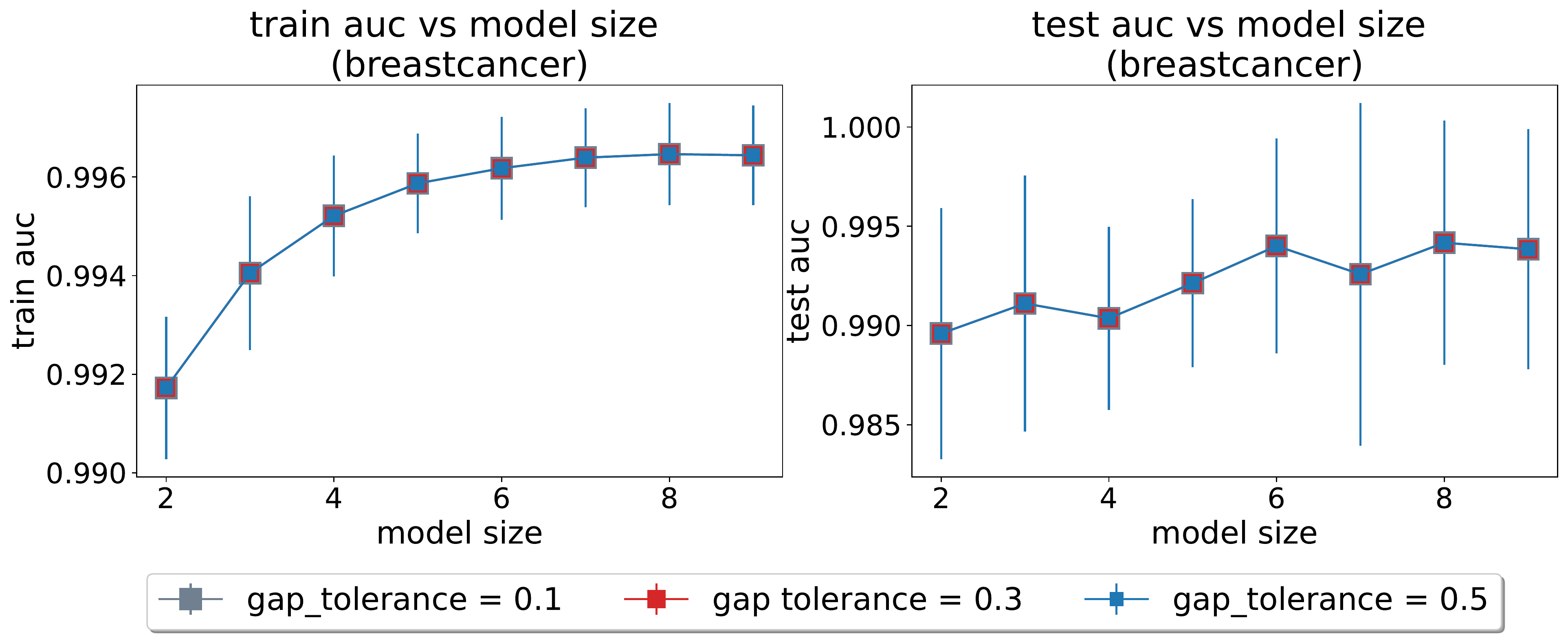}
    
    \includegraphics[width=\textwidth]{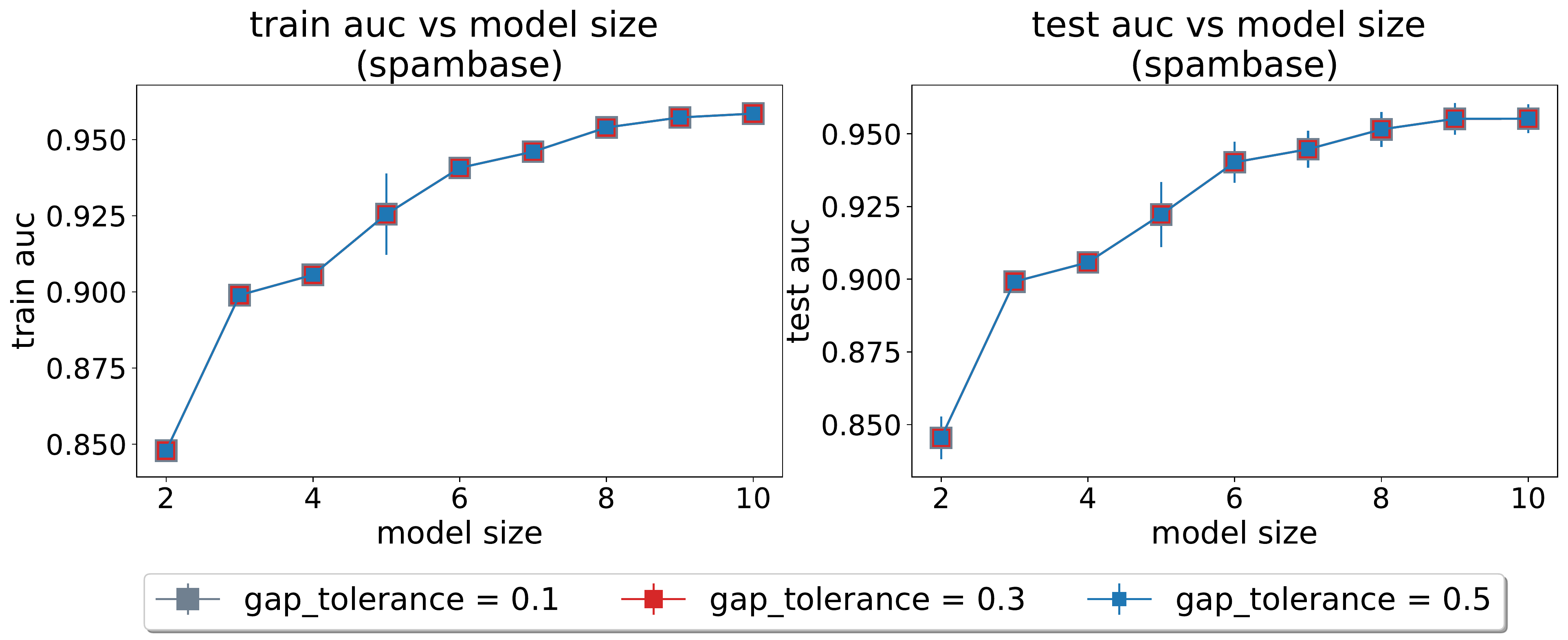}
    
    \includegraphics[width=\textwidth]{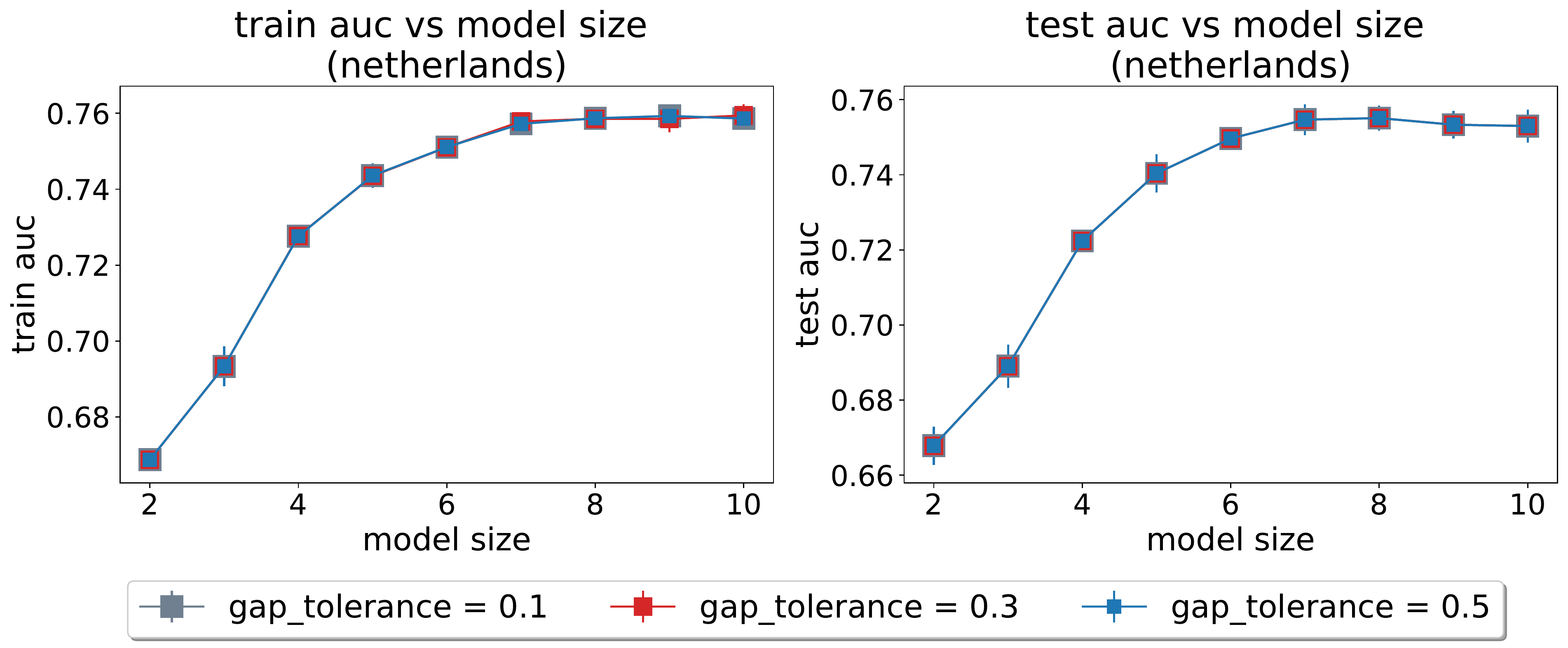}
    
    \caption{Perturbation study on tolerance level, $\epsilon$, for sparse diverse pool on the breastcancer, spambase, and Netherlands datasets. The default value used in the paper is 0.3.
    }
    \label{fig:hyperparameter_gap_tolerance_breastcancer_spambase_netherlands}
\end{figure}

\clearpage
\subsubsection{Perturbation Study on Number of Attempts \texorpdfstring{$T$}{T} for Sparse Diverse Pool}

We perform a perturbation study on the hyperparameter for the number of attempts, $T$, as mentioned in Appendix~\ref{sec:hyperparameter_specification}. We have set the number of attempts to 35, 50, and 65, respectively. The results are shown in Figures~\ref{fig:hyperparameter_max_attempt_adult_bank_mammo}-\ref{fig:hyperparameter_max_attempt_breastcancer_spambase_netherlands}. The curves greatly overlap, confirming our previous claim that the performance is not particularly sensitive to the choice of value for the hyperparameter.

\begin{figure}[ht] 
    \centering
    \includegraphics[width=\textwidth]{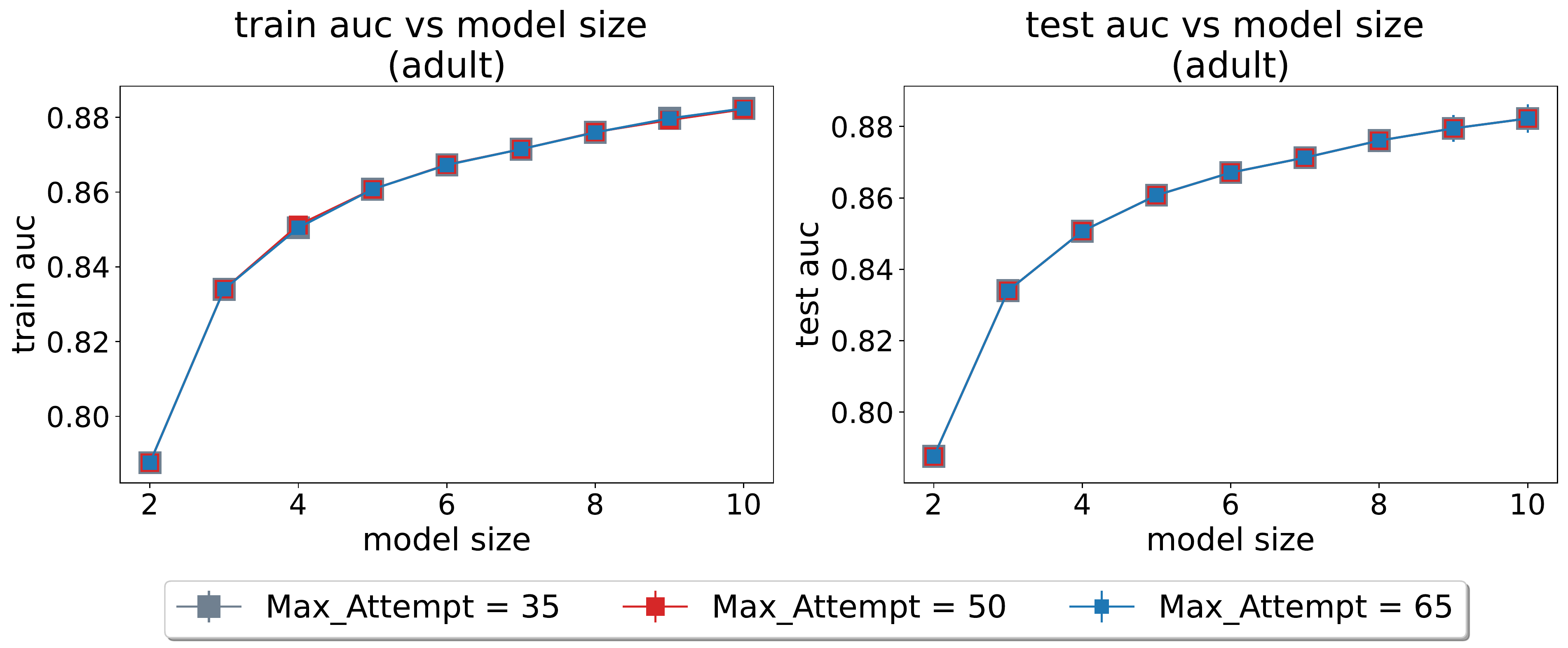}
    
    \includegraphics[width=\textwidth]{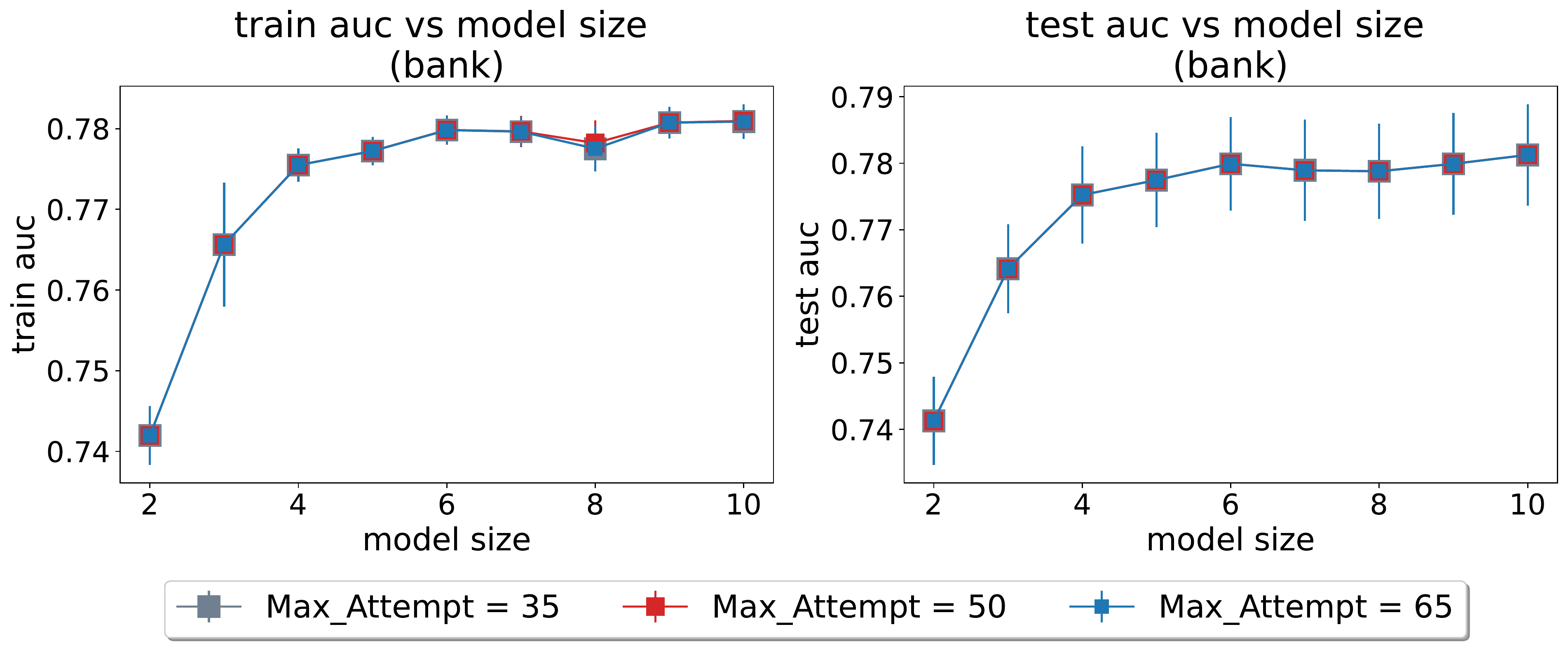}
    
    \includegraphics[width=\textwidth]{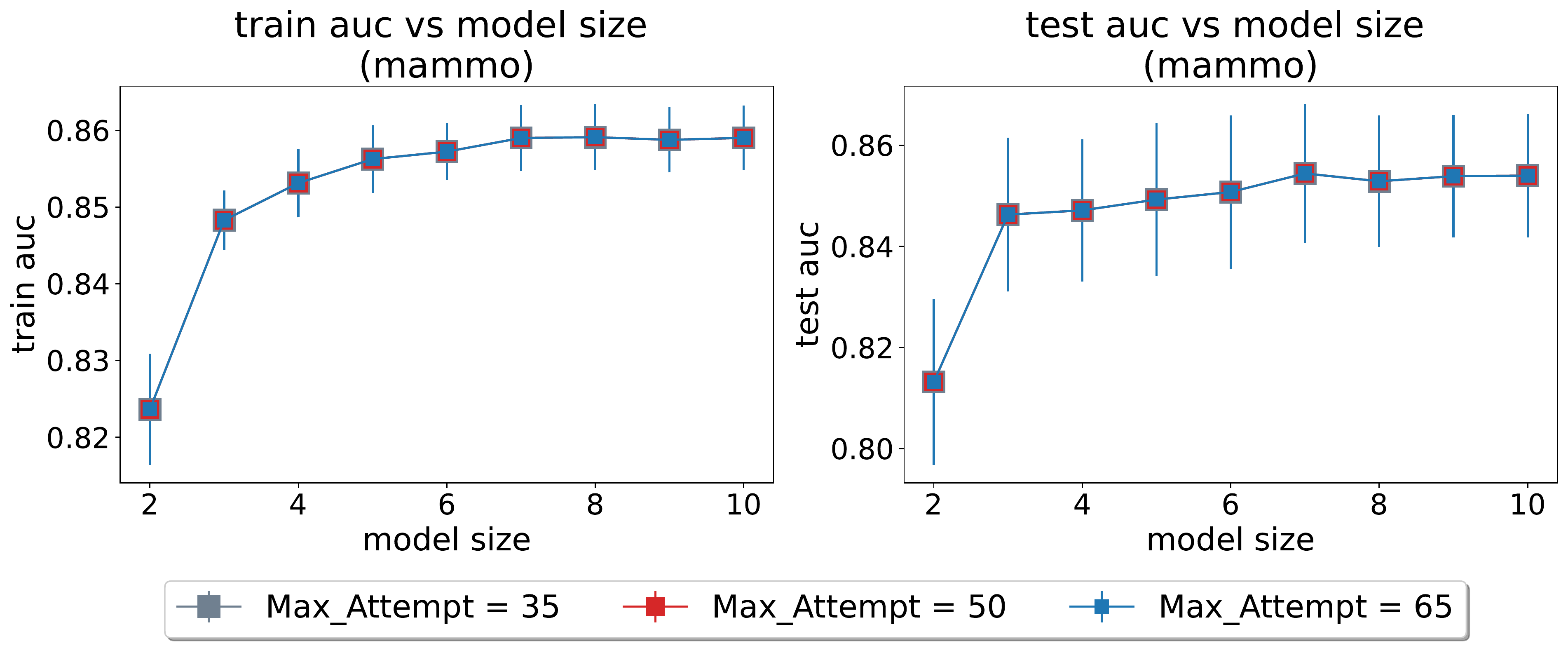}
    
    \caption{Perturbation study on number of attempts parameter, $T$, for sparse diverse pool on the adult, bank, and mammo datasets. The default value used in the paper is 50.
    }
    \label{fig:hyperparameter_max_attempt_adult_bank_mammo}
\end{figure}

\begin{figure}[ht] 
    \centering
    \includegraphics[width=\textwidth]{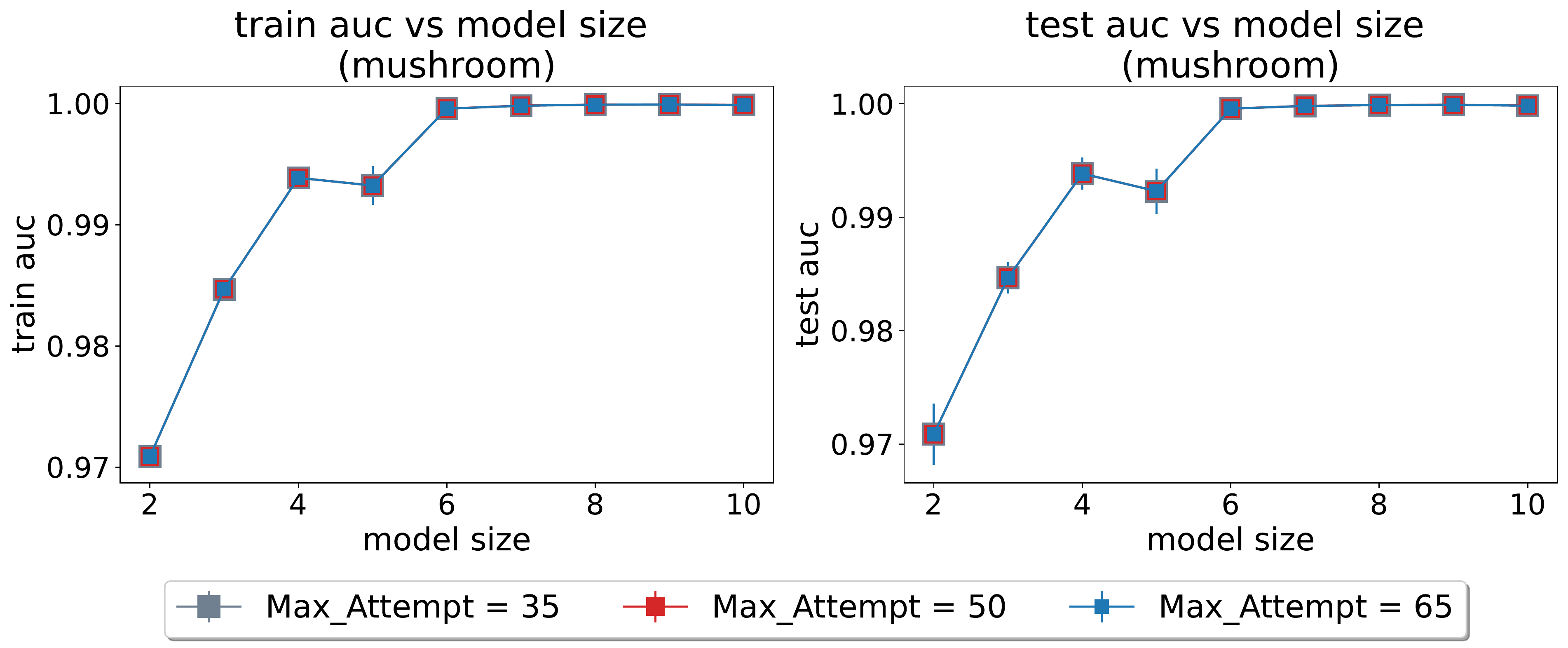}
    
    \includegraphics[width=\textwidth]{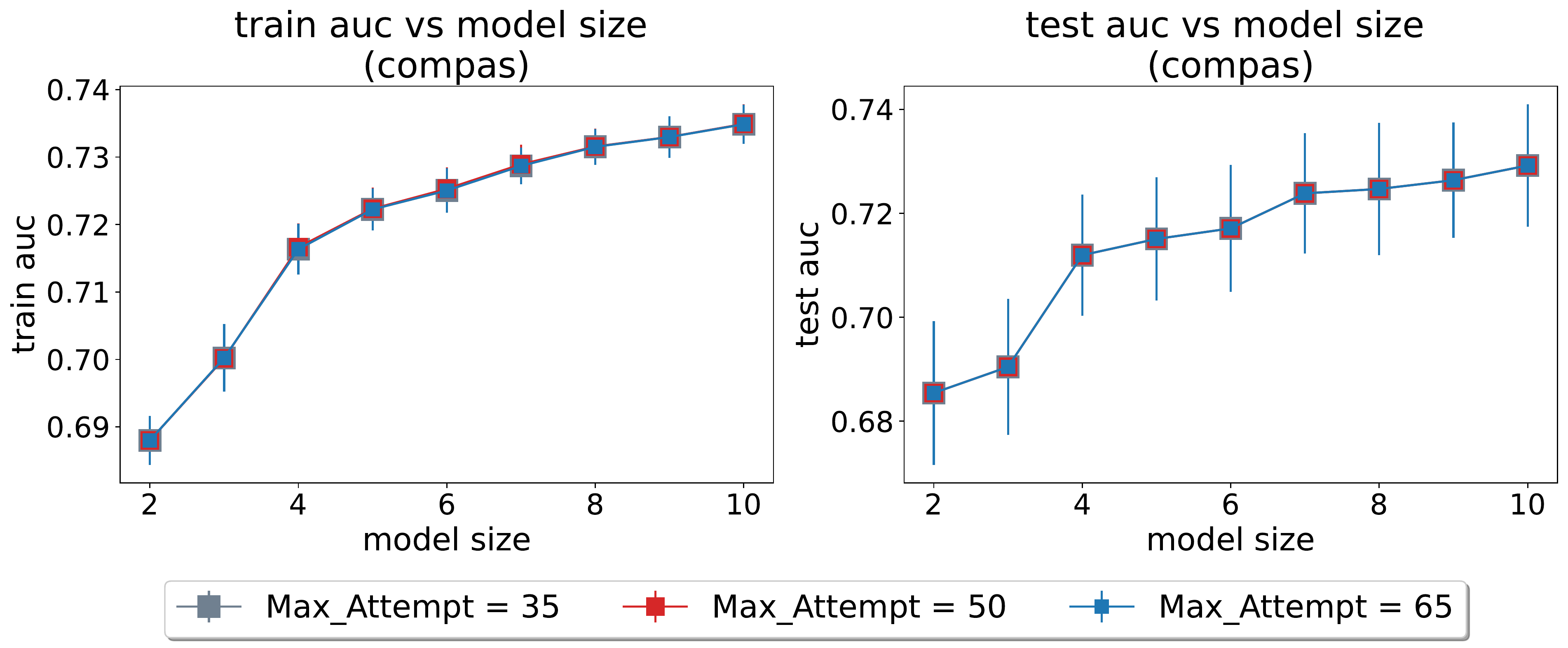}
    
    \includegraphics[width=\textwidth]{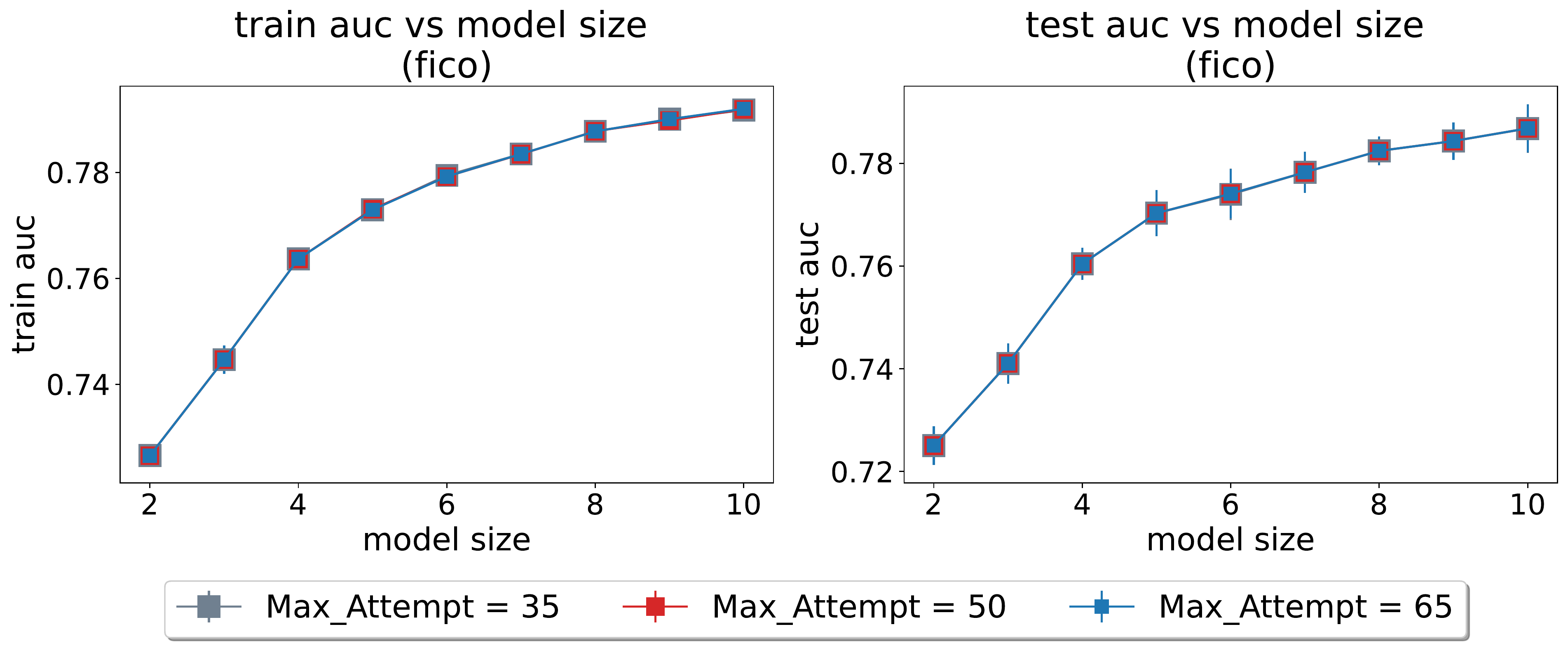}
    
    \caption{Perturbation study on number of attempts parameter, $T$, for sparse diverse pool on the mushroom, COMPAS, and FICO datasets. The default value used in the paper is 50.
    }
    \label{fig:hyperparameter_max_attempt_mushroom_compas_fico}
\end{figure}

\begin{figure}[ht] 
    \centering
    \includegraphics[width=\textwidth]{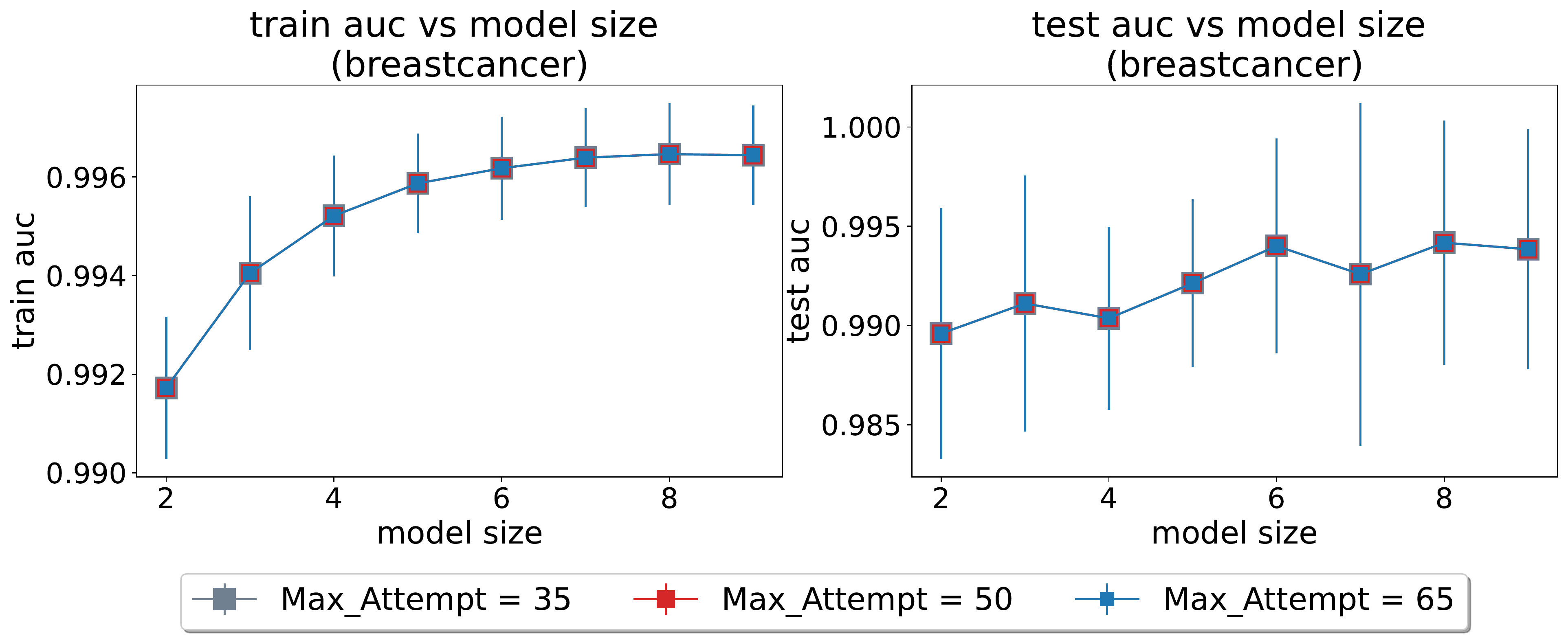}
    
    \includegraphics[width=\textwidth]{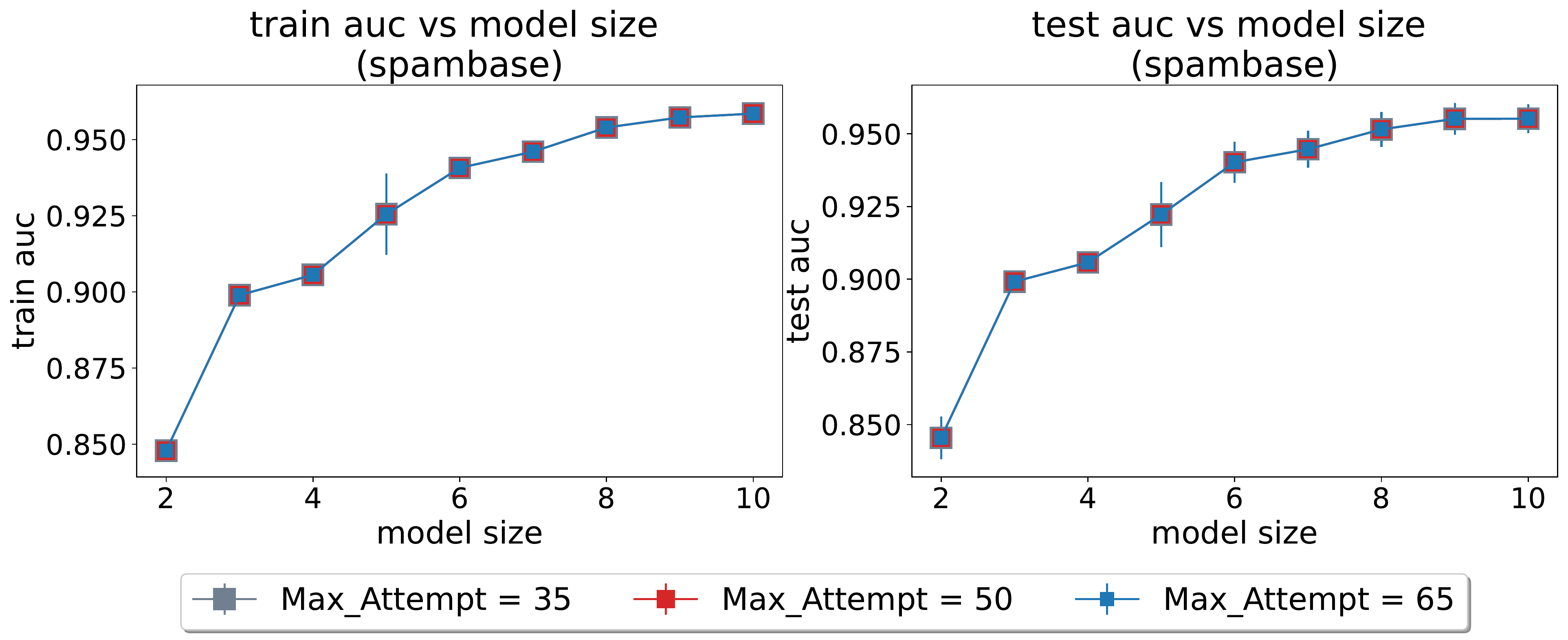}
    
    \includegraphics[width=\textwidth]{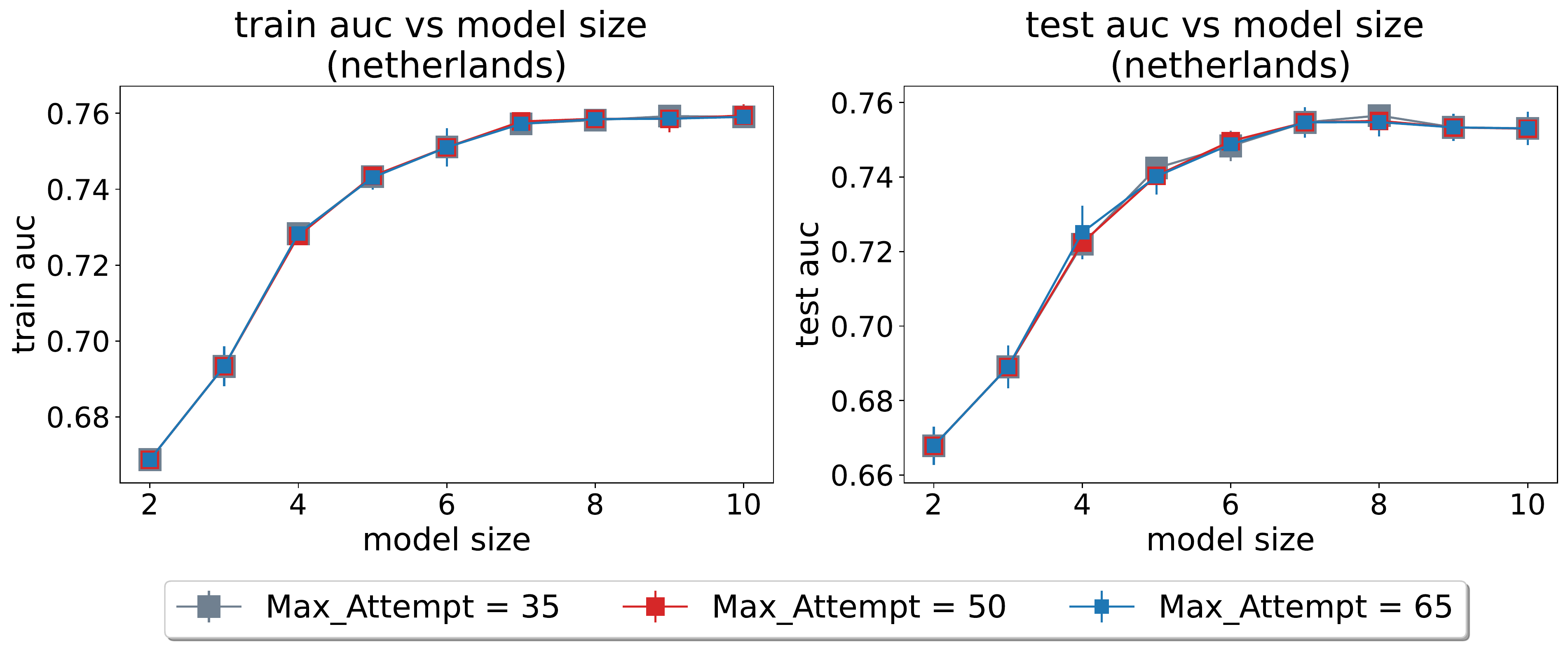}
    
    \caption{Perturbation study on number of attempts parameter, $T$, for sparse diverse pool on the breastcancer, spambase, and Netherlands datasets. The default value used in the paper is 50.
    }
    \label{fig:hyperparameter_max_attempt_breastcancer_spambase_netherlands}
\end{figure}

\clearpage
\subsubsection{Perturbation Study on Number of Multipliers \texorpdfstring{$N_m$}{Nm}}

We perform a perturbation study on the hyperparameter for the number of multipliers, $N_m$, as mentioned in Appendix~\ref{sec:hyperparameter_specification}. We have set the number of multipliers to 10, 20, and 30, respectively. The results are shown in Figures~\ref{fig:hyperparameter_num_ray_search_adult_bank_mammo}-\ref{fig:hyperparameter_num_ray_search_breastcancer_spambase_netherlands}. The curves greatly overlap, confirming our previous claim that the performance is not particularly sensitive to the choice of values for $N_m$.

\begin{figure}[ht] 
    \centering
    \includegraphics[width=\textwidth]{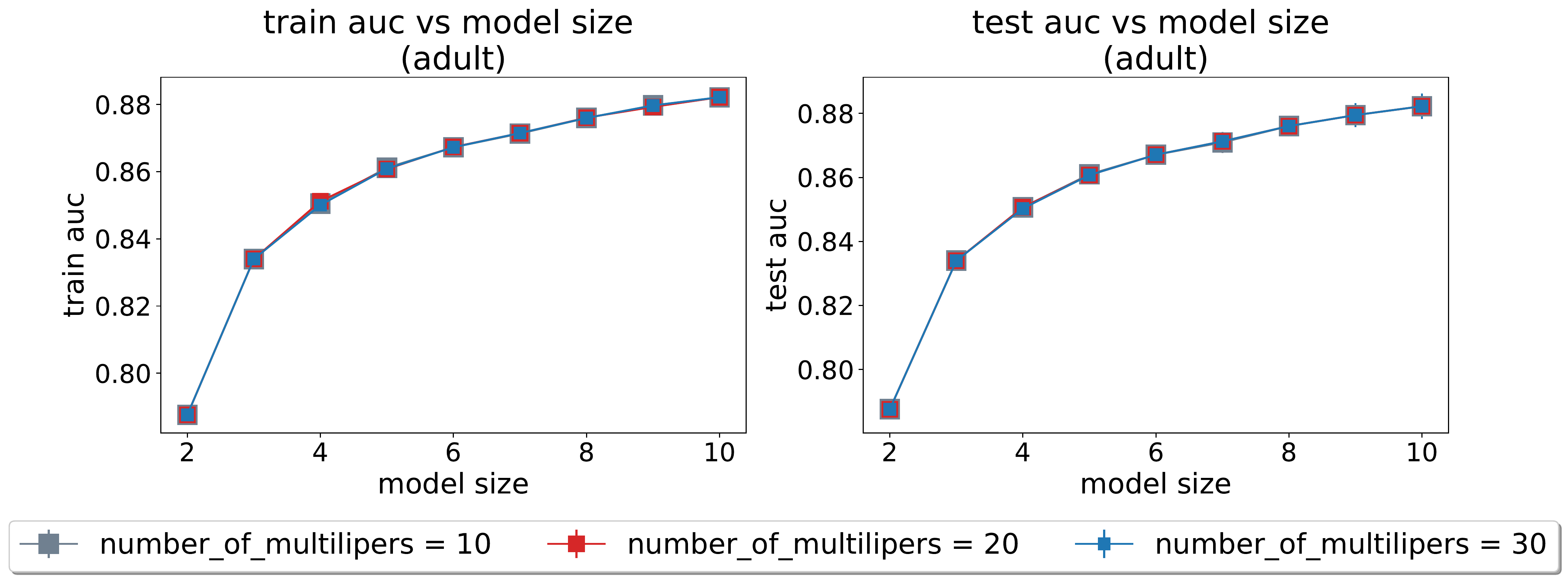}
    
    \includegraphics[width=\textwidth]{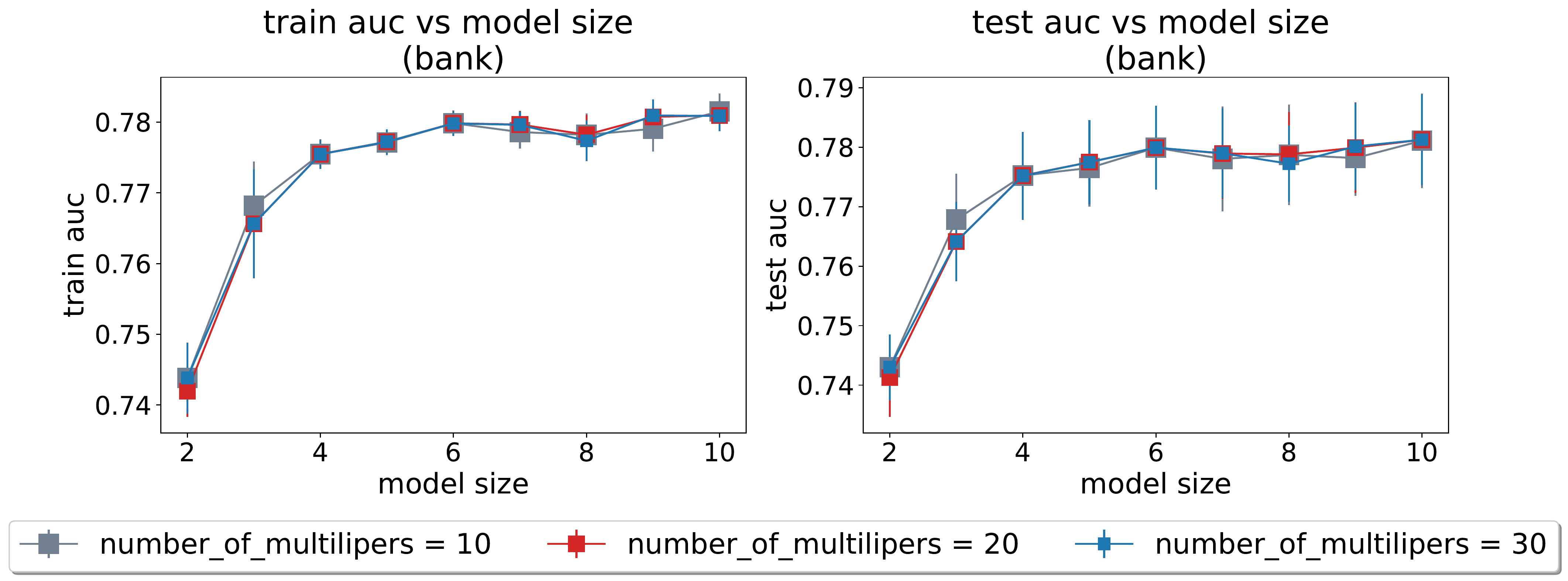}
    
    \includegraphics[width=\textwidth]{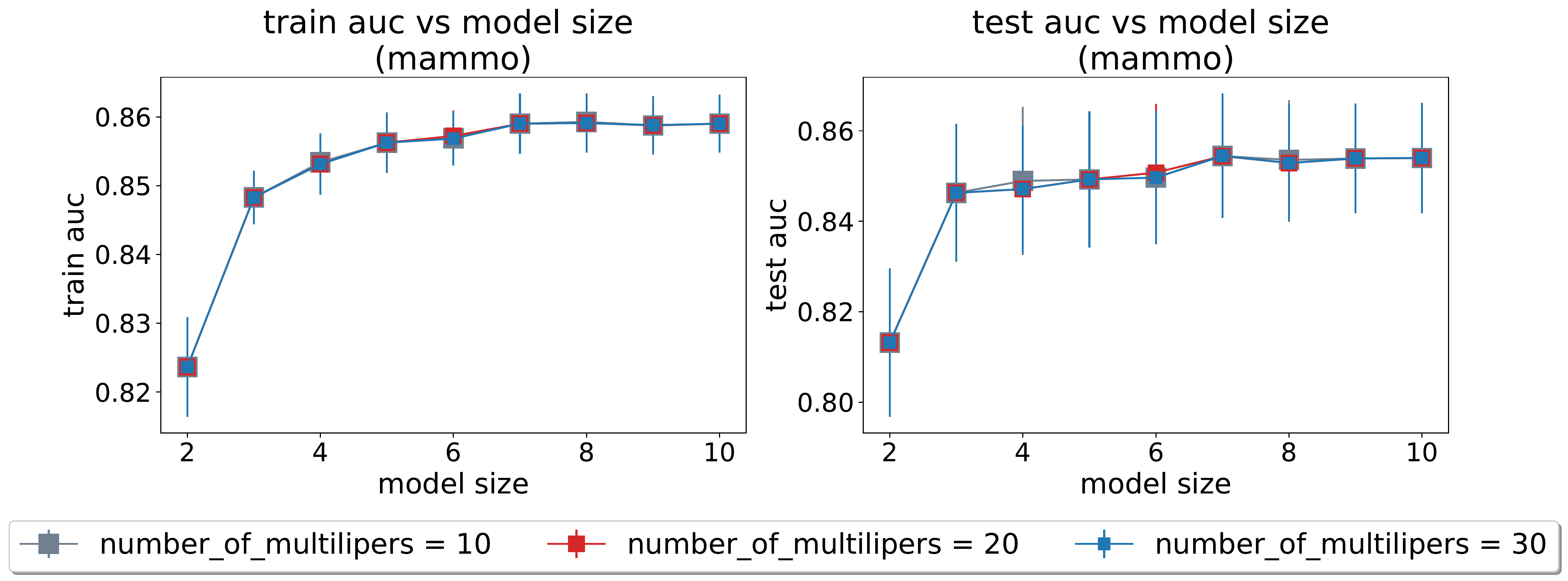}
    
    \caption{Perturbation study on number of multipliers, $N_m$, on the adult, bank, and mammo datasets. The default value used in the paper is 20.
    }
    \label{fig:hyperparameter_num_ray_search_adult_bank_mammo}
\end{figure}

\begin{figure}[ht] 
    \centering
    \includegraphics[width=\textwidth]{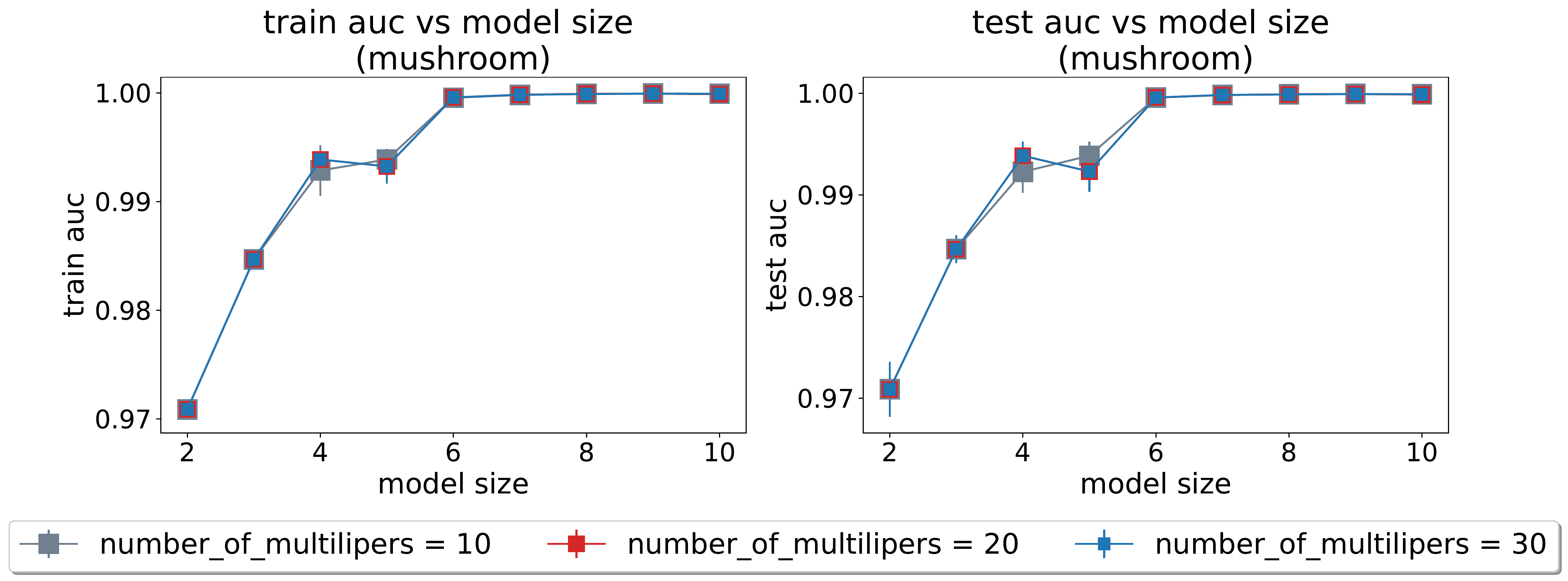}
    
    \includegraphics[width=\textwidth]{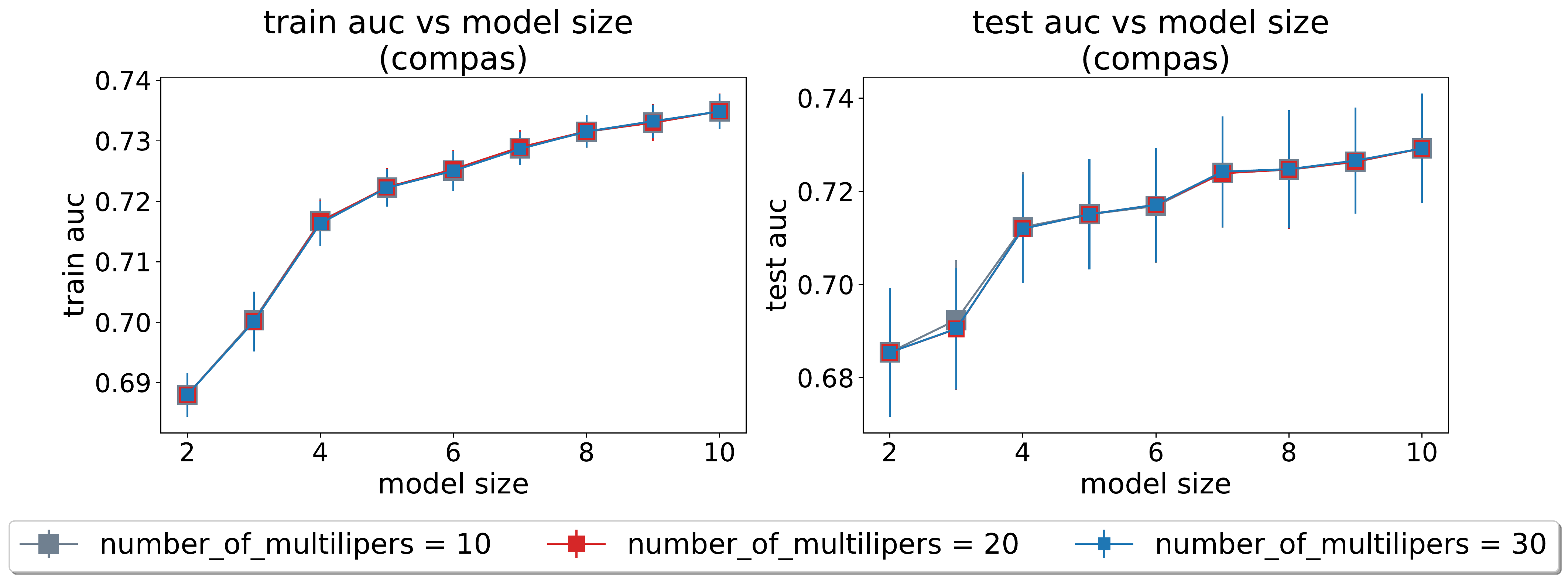}
    
    \includegraphics[width=\textwidth]{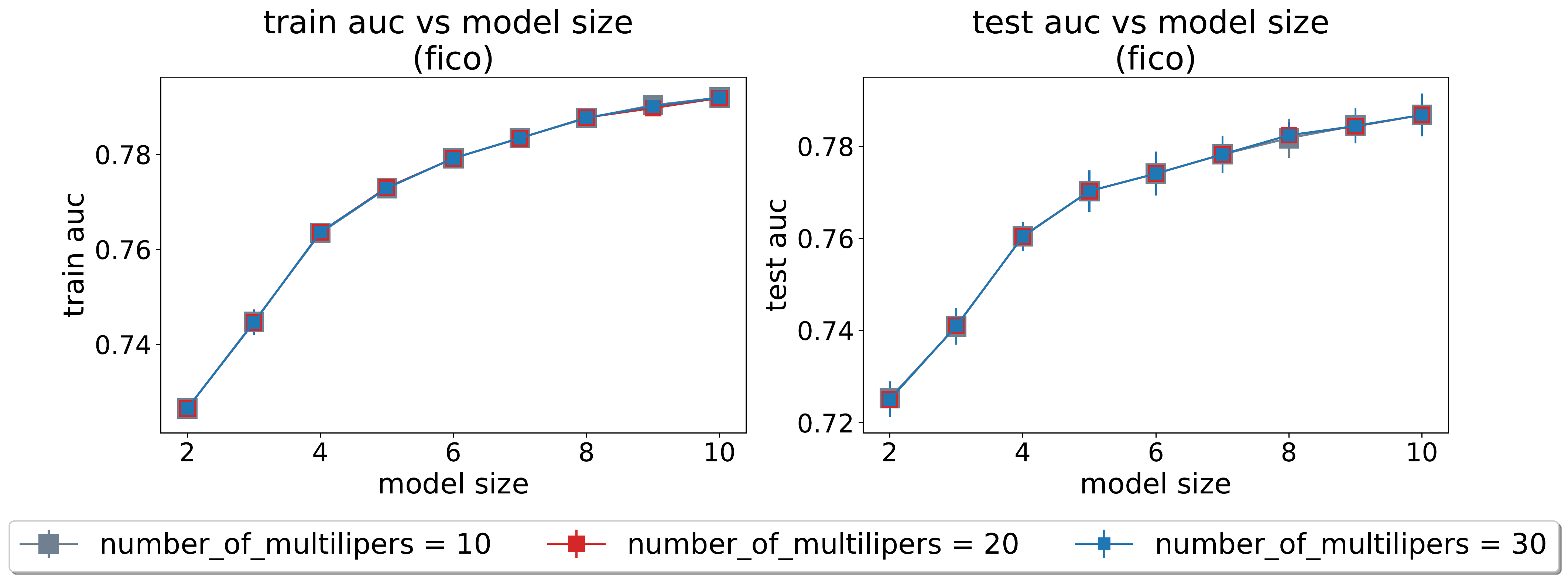}
    
    \caption{Perturbation study on number of multipliers, $N_m$, for sparse diverse pool on the mushroom, COMPAS and FICO datasets. The default value used in the paper is 20.
    }
    \label{fig:hyperparameter_num_ray_search_mushroom_compas_fico}
\end{figure}

\begin{figure}[ht] 
    \centering
    \includegraphics[width=\textwidth]{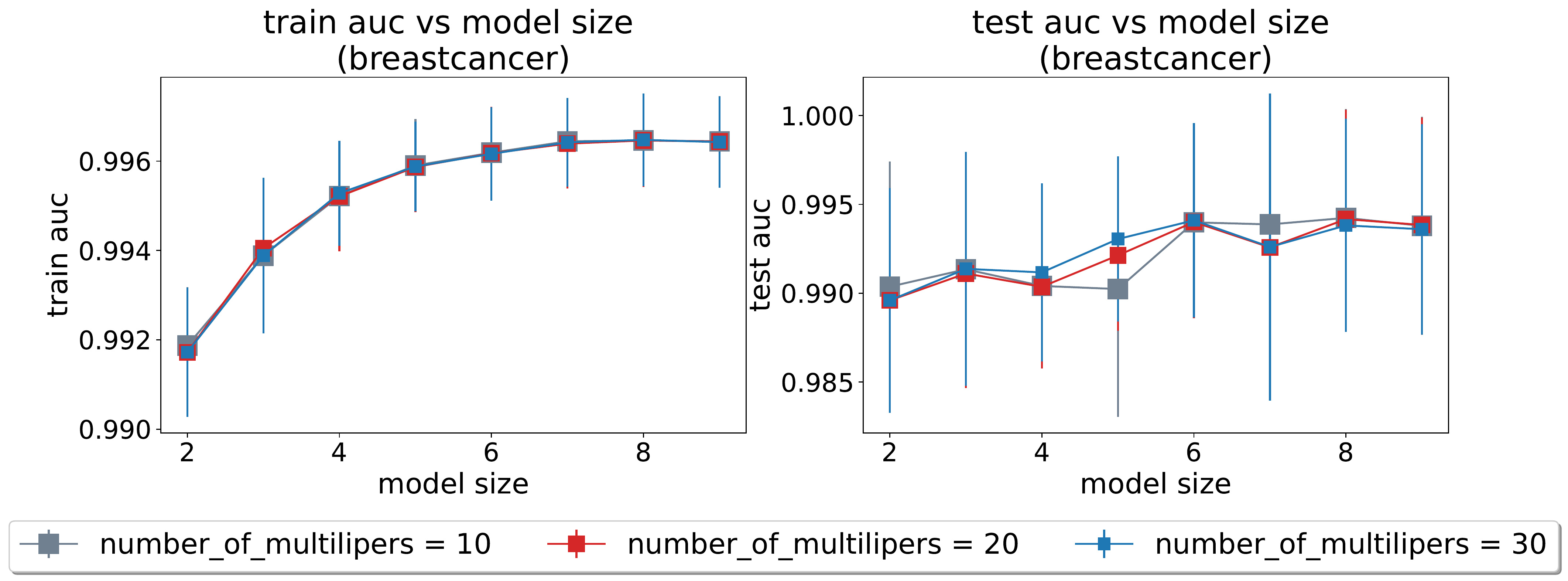}
    
    \includegraphics[width=\textwidth]{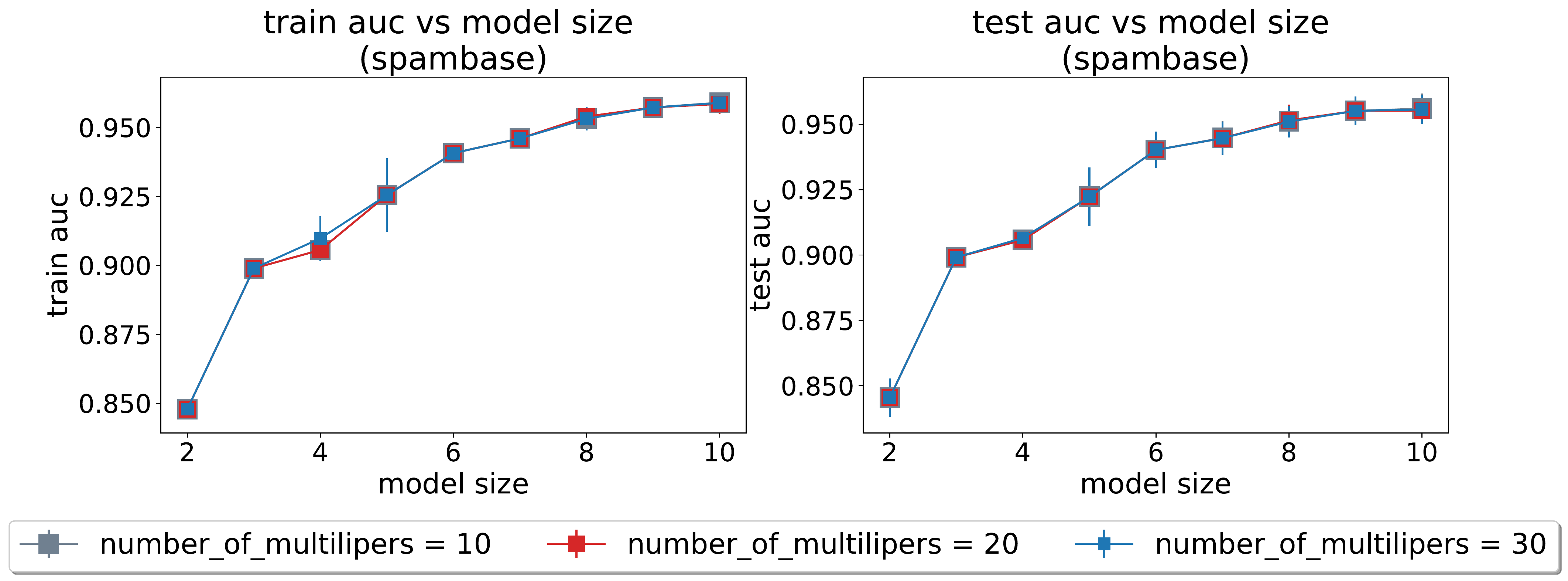}
    
    \includegraphics[width=\textwidth]{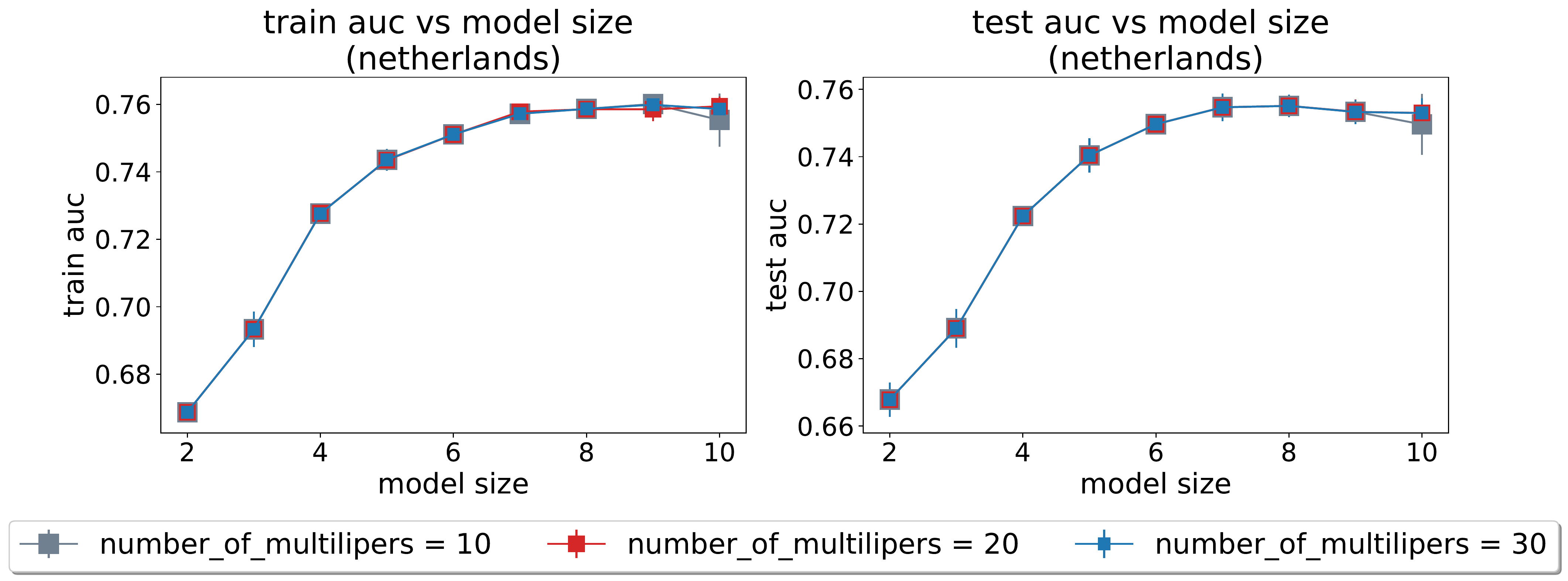}
    
    \caption{Perturbation study on number of multipliers, $N_m$,  for sparse diverse pool on the breastcancer, spambase, and Netherlands datasets. The default value used in the paper is 20.
    }
    \label{fig:hyperparameter_num_ray_search_breastcancer_spambase_netherlands}
\end{figure}

\clearpage
\subsection{Comparison with Baseline AutoScore}
\label{app:comparison_with_baseline}

We compare with the baseline AutoScore~\cite{xie2020autoscore}. We set the number of features from 2 to 10 and use all other hyperparameters in the default setting. The results of training AUC and test AUC are shown in Figures~\ref{fig:AutoScore_adult_bank_mammo}-\ref{fig:AutoScore_breastcancer_spambase_netherlands}. The plots of RiskSLIM are from experiments where we let RiskSLIM run for 1 hour. FasterRisk outperforms both RiskSLIM and AutoScore.

\begin{figure}[ht] 
    \centering
    \includegraphics[width=\textwidth]{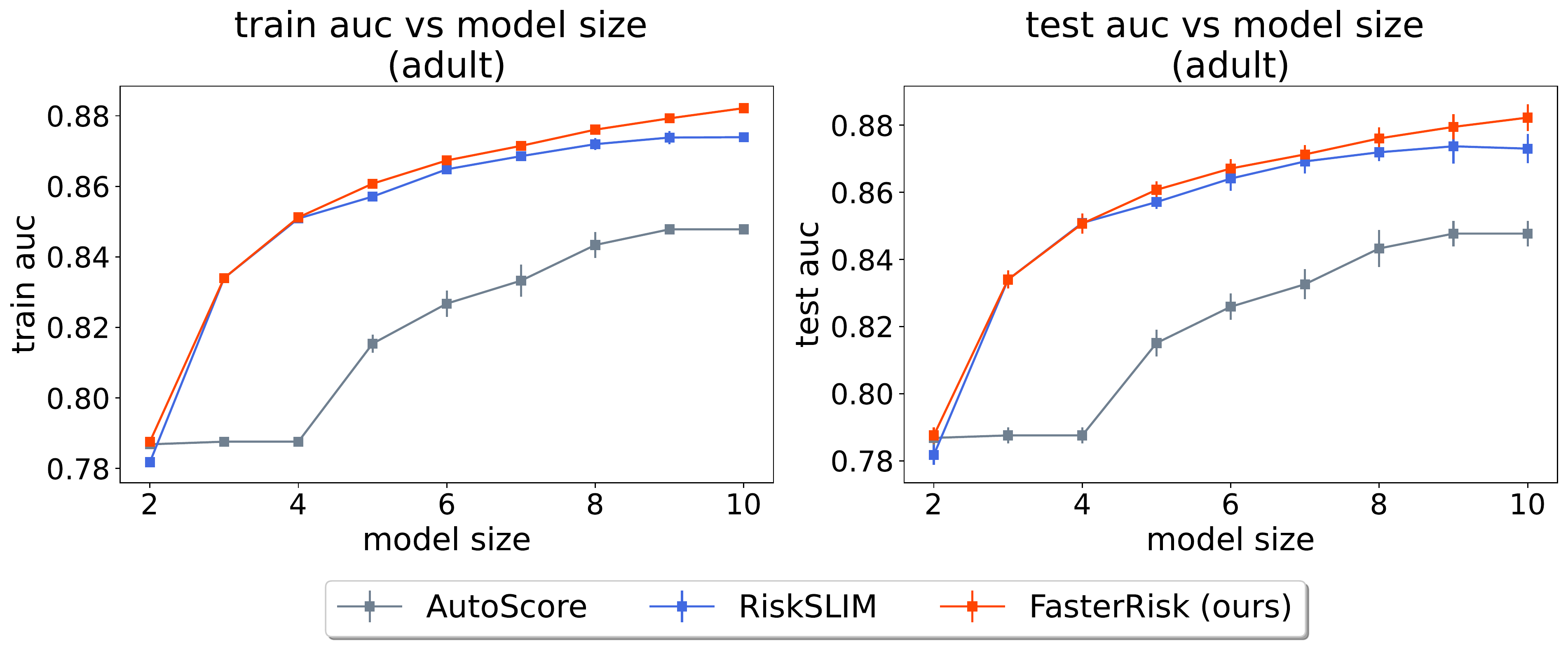}
    
    \includegraphics[width=\textwidth]{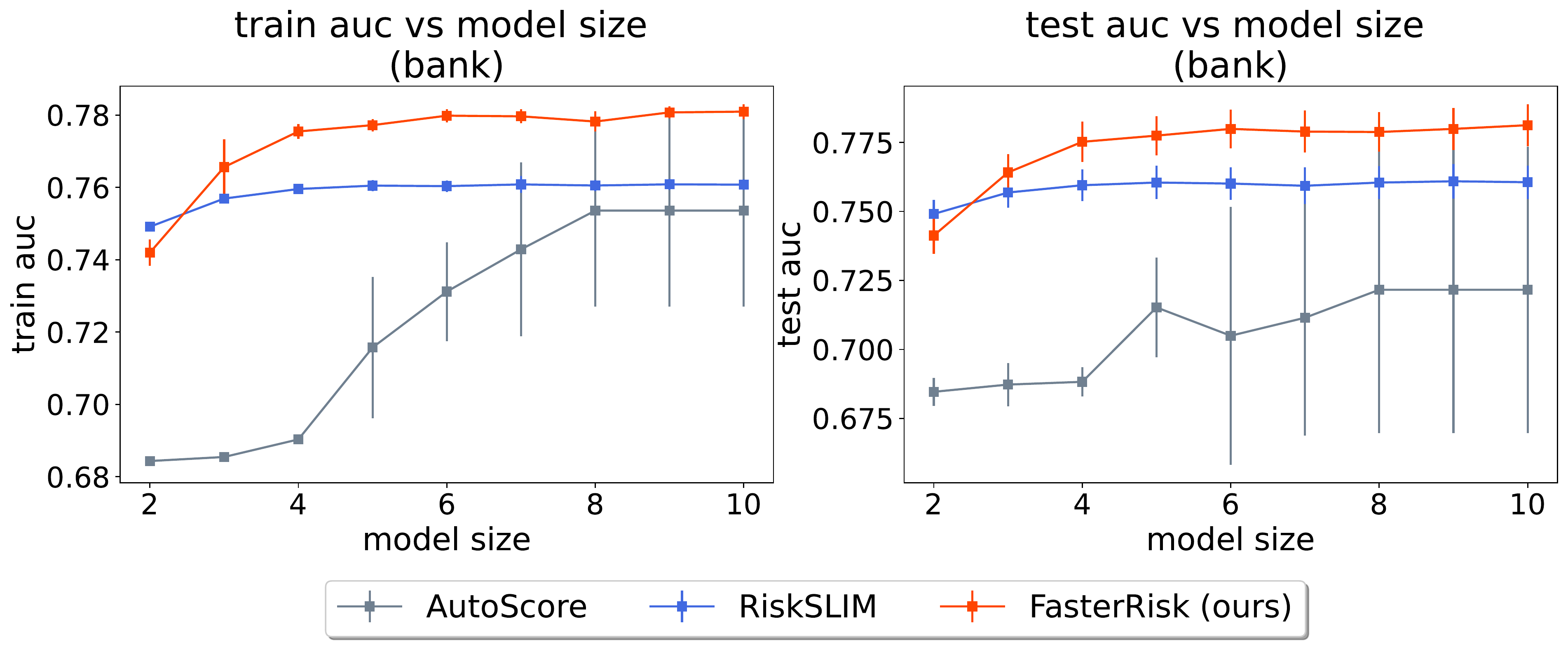}
    
    \includegraphics[width=\textwidth]{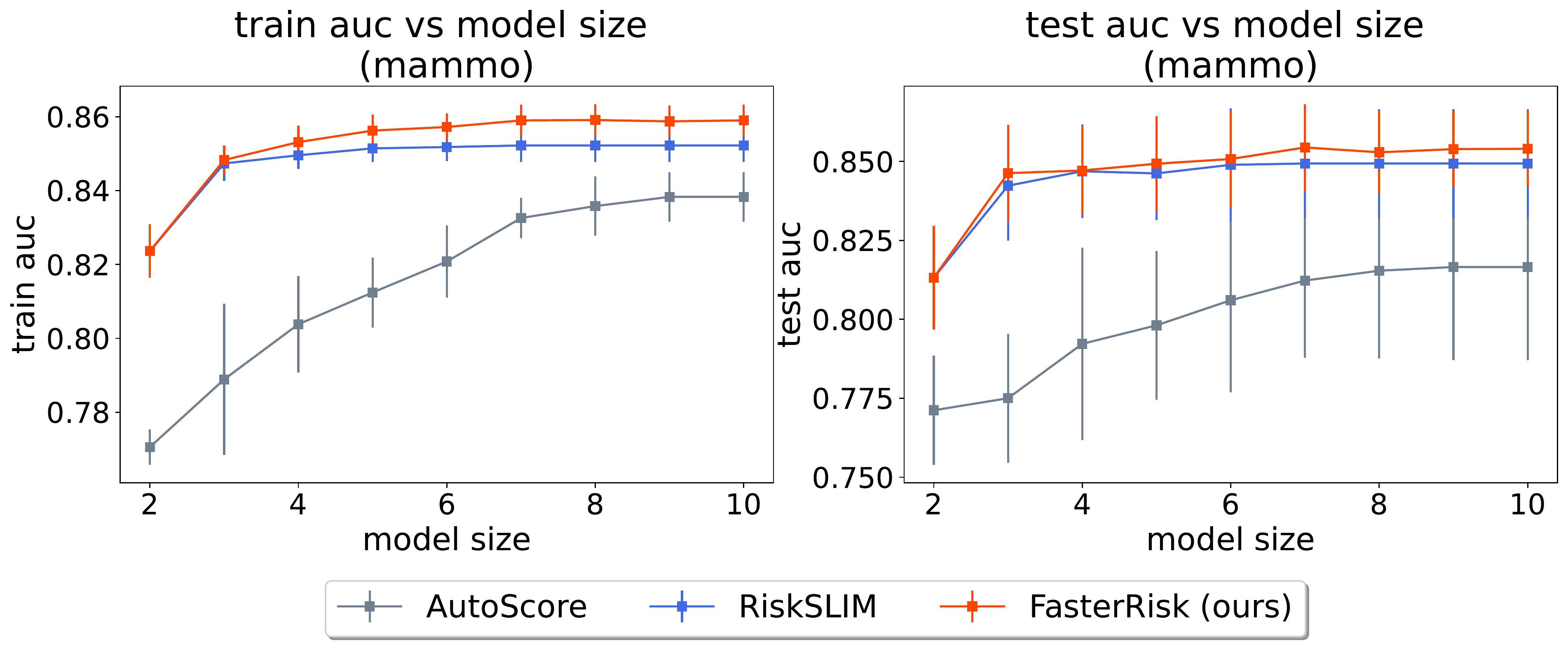}
    
    \caption{Comparison with the new baseline AutoScore on the adult, bank, and mammo datasets. The left column is training AUC (higher is better), and the right column is test AUC (higher is better).
    }
    \label{fig:AutoScore_adult_bank_mammo}
\end{figure}

\begin{figure}[ht] 
    \centering
    \includegraphics[width=\textwidth]{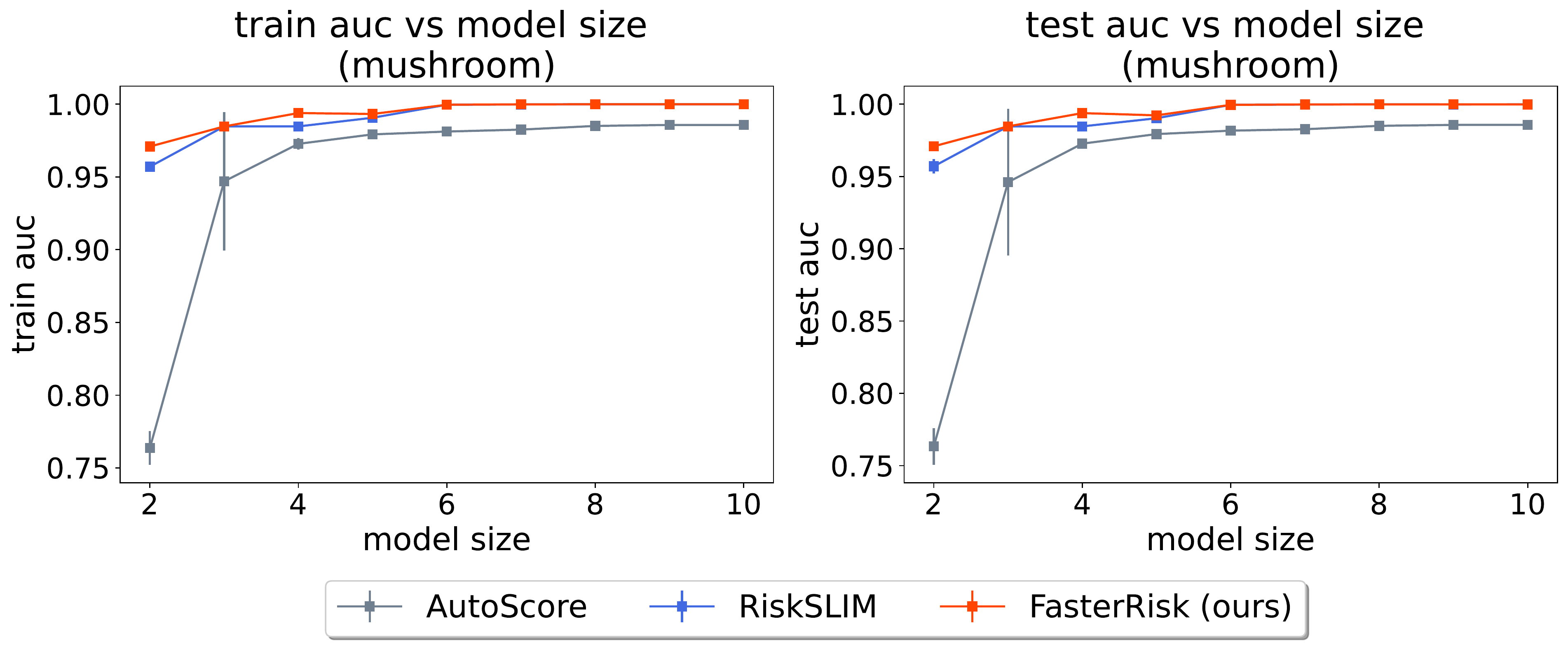}
    
    \includegraphics[width=\textwidth]{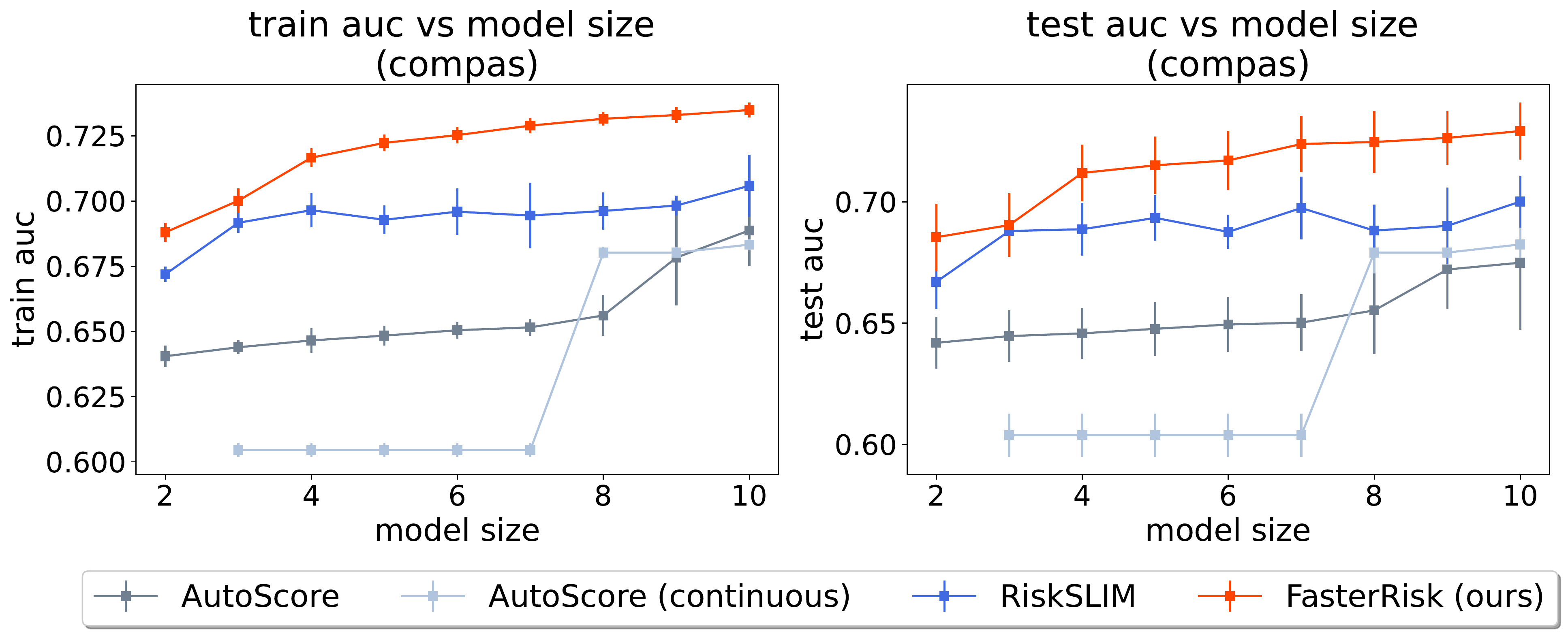}
    
    \includegraphics[width=\textwidth]{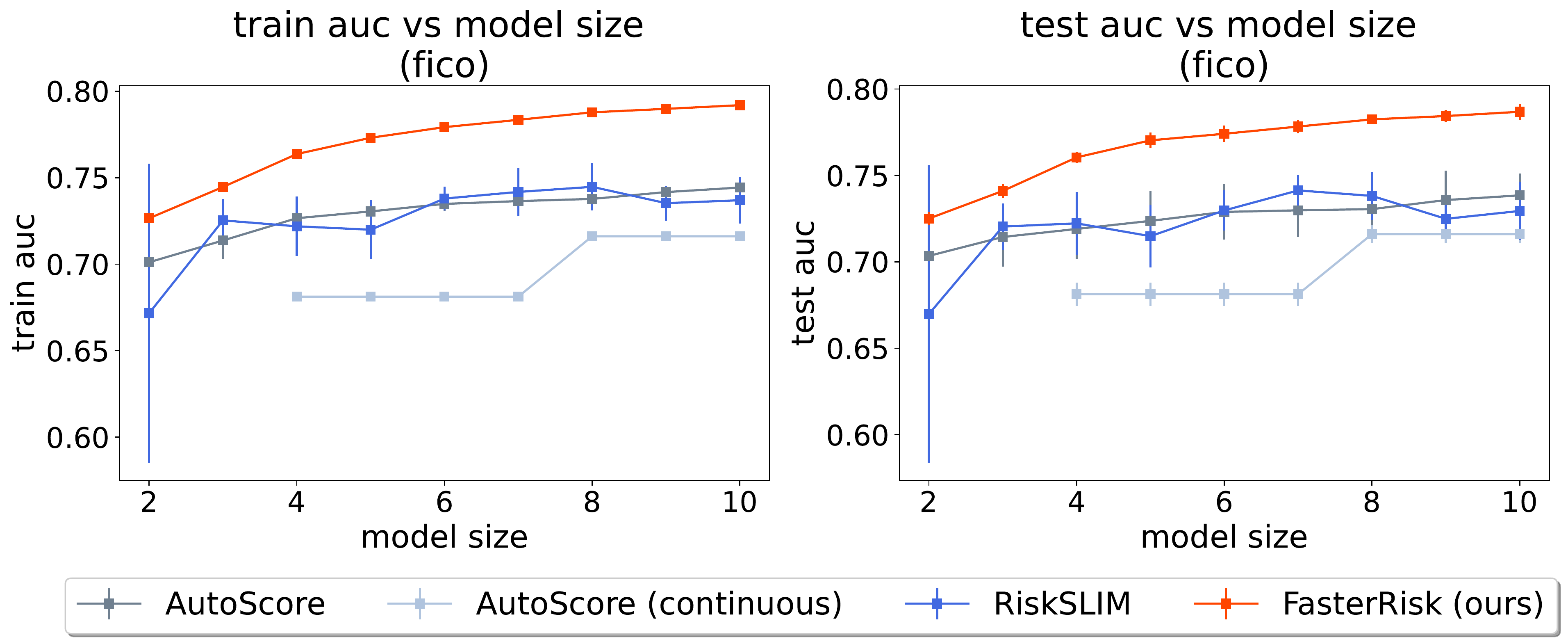}
    
    \caption{Comparison with the new baseline on the mushroom, Compas, and FICO datasets. The AutoScore (continuous) baseline is another method where AutoScore is applied to the original continuous features instead of the binary features as detailed in Appendix~\ref{app:dataset_information}. Not every model size can be obtained by the AutoScore (continuous) method. The left column is training AUC (higher is better), and the right column is test AUC (higher is better).
    }
    \label{fig:AutoScore_mushroom_compas_fico}
\end{figure}

\begin{figure}[ht] 
    \centering
    \includegraphics[width=\textwidth]{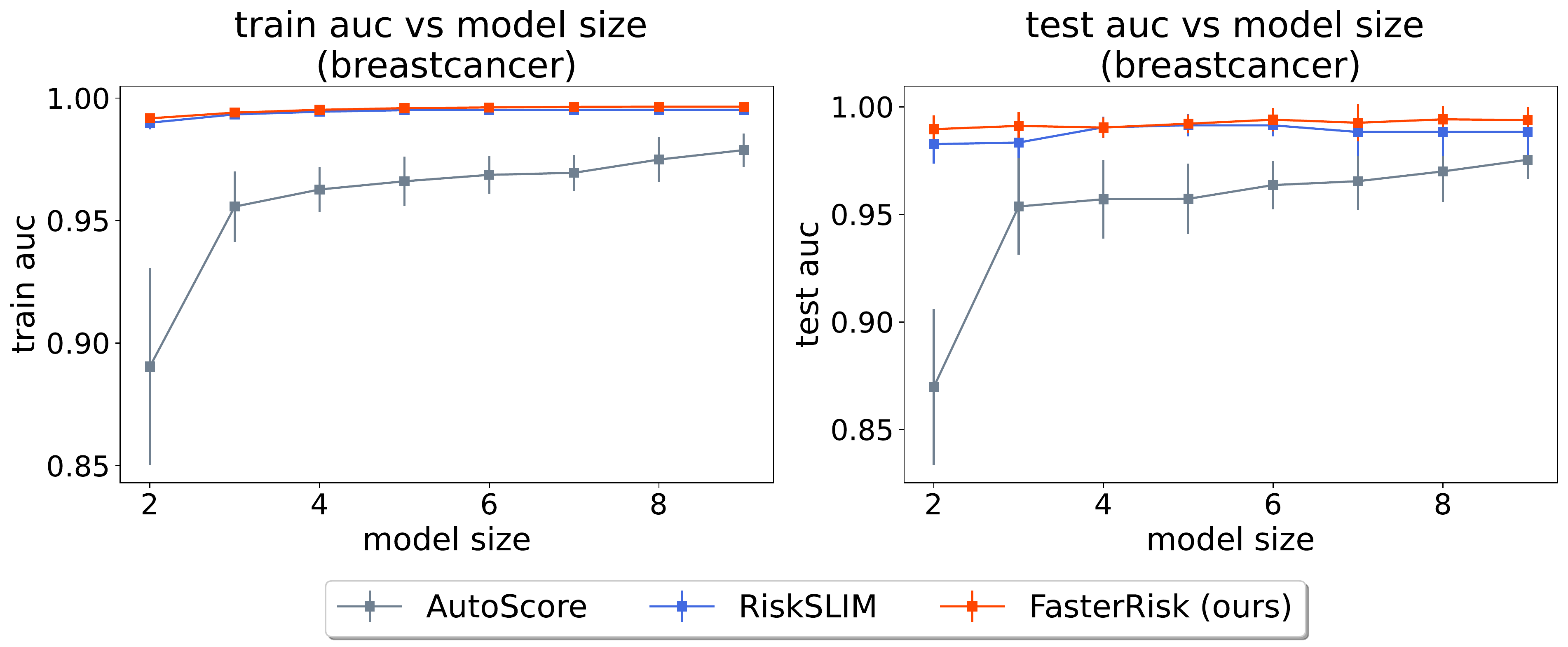}
    
    \includegraphics[width=\textwidth]{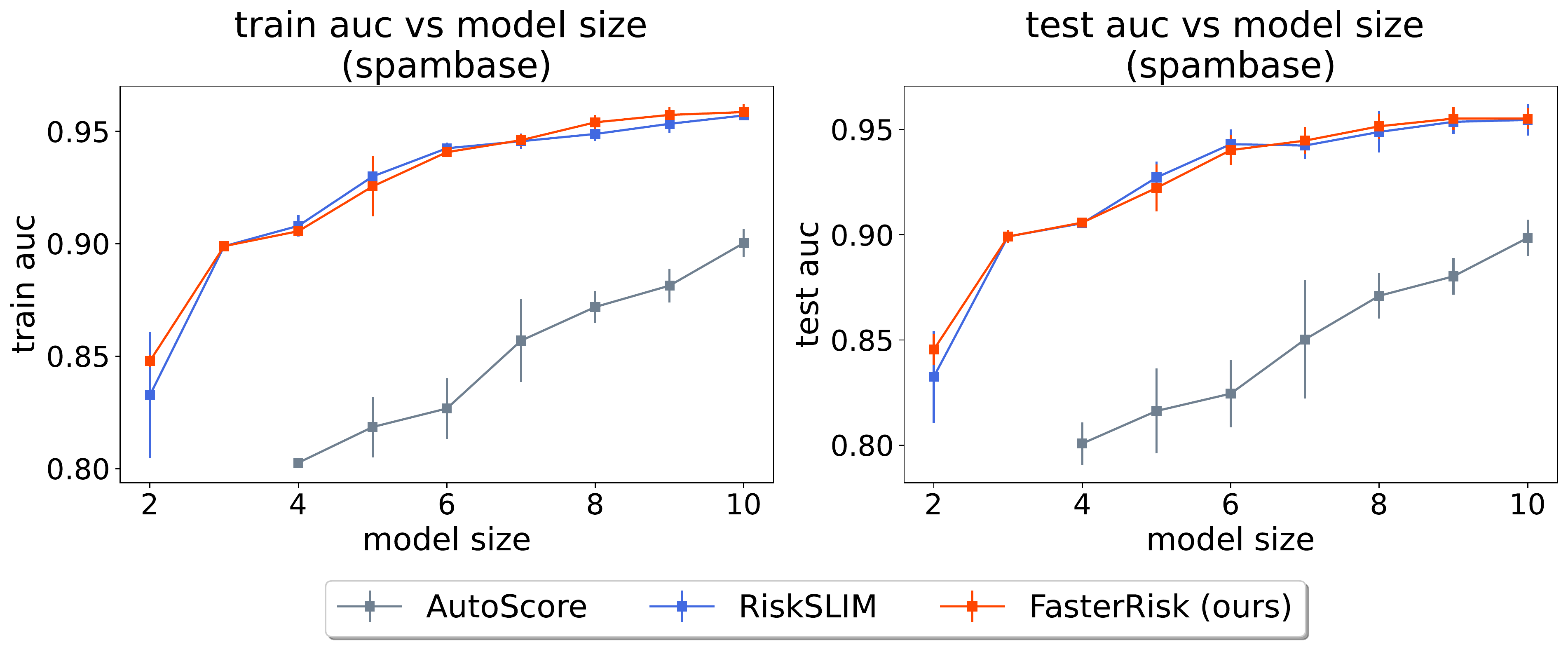}
    
    \includegraphics[width=\textwidth]{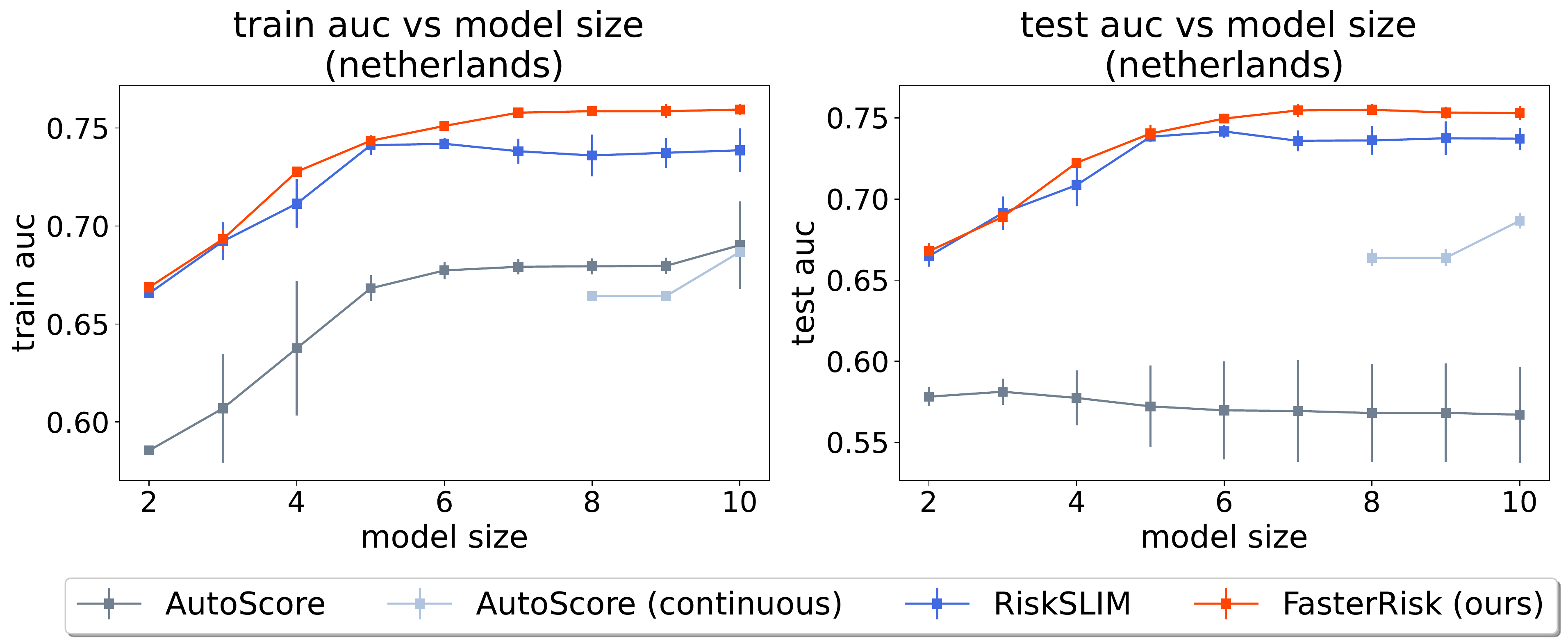}
    
    \caption{Comparison with the new baseline on the breastcancer, spambase, and Netherlands datasets. The AutoScore (continuous) baseline is another method where AutoScore is applied to the original continuous features instead of the binary features as detailed in Appendix~\ref{app:dataset_information}. Not every model size can be obtained by the AutoScore (continuous) method. The left column is training AUC (higher is better), and the right column is test AUC (higher is better).
    }
    \label{fig:AutoScore_breastcancer_spambase_netherlands}
\end{figure}

\clearpage
\section{Additional Risk Score Models}
\label{app:additional_risk_score_models}

We provide additional risk score models for the readers to inspect.

Appendix~\ref{app:risk_score_models_with_different_sizes} shows risk scores with different model sizes on different datasets.

Appendix~\ref{app:examples_from_the_pool} shows different risk scores with the same size from the diverse pool of solutions.

Specifically, Appendix~\ref{app:examples_from_the_pool_bank} shows different risk scores on the bank dataset (financial application), Appendix~\ref{app:examples_from_the_pool_mammo} shows different risk scores on the mammo dataset (medical application), and Appendix~\ref{app:examples_from_the_pool_netherlands} shows different risk scores on the Netherlands dataset (criminal justice application).

\subsection{Risk Score Models with Different Sizes}
\label{app:risk_score_models_with_different_sizes}

For model size $=3$, please see Tables \ref{fig:MoreExampleRiskScore_adult_modelSize_3}-\ref{fig:MoreExampleRiskScore_netherlands_modelSize_3}.

For model size $=5$, please see Tables \ref{fig:MoreExampleRiskScore_adult_modelSize_5}-\ref{fig:MoreExampleRiskScore_netherlands_modelSize_5}.

For model size $=7$, please see Tables \ref{fig:MoreExampleRiskScore_adult_modelSize_7}-\ref{fig:MoreExampleRiskScore_netherlands_modelSize_7}.

We also include a large model with size $=10$ on the FICO dataset, please see Table \ref{fig:MoreExampleRiskScore_fico_modelSize_10}.


\begin{table}[htbp]
\centering
{
\begin{tabular}{|llr|ll|}
\hline
1. &no high school diploma & -4 points &   & ... \\
2. &high school diploma only & -2 points  & + & ... \\
3. &married & 4 points & + & ...\\\hline
     &     & \multicolumn{1}{l|}{\textbf{SCORE}} & = &     \\ \hline
\end{tabular}
}
{
\risktable{}
\begin{tabular}{|r|c|c|c|c|c|}
\hline \rowcolor{scorecolor}\scorelabel{} & -4 & -2 & 0 & 2 & 4 \\
\hline \rowcolor{riskcolor}\risklabel{}  & 1.2\% & 4.1\% & 13.1\% & 34.7\% & 65.3\% \\
\hline
\end{tabular}
}

\medskip\caption{\ourmethod{} model for the adult dataset, predicting salary$>50$K.}
\label{fig:MoreExampleRiskScore_adult_modelSize_3}
\end{table}

\begin{table}[htbp]
\centering
{
\begin{tabular}{|llr|ll|}
\hline
1. &Call in Second Quarter & -2 points &   & ... \\
2. &Previous Call Was Successful & 4 points  & + & ... \\
3. &Employment Indicator $<$ 5100 & 4 points & + & ...\\\hline
     &     & \multicolumn{1}{l|}{\textbf{SCORE}} & = &     \\ \hline
\end{tabular}
}
{
\risktable{}
\begin{tabular}{|r|c|c|c|c|c|c|}
\hline \rowcolor{scorecolor}\scorelabel{} & -2 & 0 & 2 & 6 & 8 \\
\hline \rowcolor{riskcolor}\risklabel{}  & 2.8\% & 6.5\% & 14.5\% & 50.0\% & 70.8\%\\ 
\hline

\end{tabular}
}
\medskip\caption{\ourmethod{} model for the bank dataset, predicting if a person opens a bank account after a marketing call.
}

\label{fig:MoreExampleRiskScore_bank_modelSize_3}
\end{table}

\begin{table}[htbp]
\centering
{
\begin{tabular}{|llr|ll|}
\hline
1. &Irregular Shape & 4 points &   & ... \\
2. &Circumscribed Margin & -5 points  & + & ... \\
3. &Age $\geq$ 60 & 3 points & + & ...\\\hline
     &     & \multicolumn{1}{l|}{\textbf{SCORE}} & = &     \\ \hline
\end{tabular}
}
{
\risktable{}
\begin{tabular}{|r|c|c|c|c|c|c|}
\hline \rowcolor{scorecolor}\scorelabel{} & -5 & -2 & -1 & 2 \\
\hline \rowcolor{riskcolor}\risklabel{}  & 8.2\% & 20.1\% & 26.2\% & 50.0\% \\ 
\hline
\end{tabular} \\
\begin{tabular}{|r|c|c|c|c|c|}
\hline
\rowcolor{scorecolor}\scorelabel{} & 3 & 4 & 7 \\\hline
\rowcolor{riskcolor}\risklabel{} & 58.5\% & 66.6\% & 84.9\%  \\ 
\hline

\end{tabular}
}
\medskip\caption{\ourmethod{} model for the mammo dataset, predicting malignancy of a breast lesion.
}
\label{fig:MoreExampleRiskScore_mammo_modelSize_3}
\end{table}

\begin{table}[htbp]

\centering
{
\begin{tabular}{|llr|ll|}
\hline
1. &odor$=$almond & -5 points &   & ... \\
2. &odor$=$anise & -5 points  & + & ... \\
3. &odor$=$none & -5 points   & + & ... \\ \hline
     &     & \multicolumn{1}{l|}{\textbf{SCORE}} & = &     \\ \hline
\end{tabular}
}
{
\risktable{}
\begin{tabular}{|r|c|c|}
\hline \rowcolor{scorecolor}\scorelabel{} & -5 & 0 \\
\hline \rowcolor{riskcolor}\risklabel{} & 10.8\% & 96.0\% \\ 
\hline

\end{tabular}
}

\medskip\caption{\ourmethod{} model for the mushroom dataset, predicting whether a mushroom is poisonous.
}
\label{fig:MoreExampleRiskScore_mushroom_modelSize_3}
\end{table}

\begin{table}[htbp]

\centering
{
\begin{tabular}{|llr|ll|}
\hline
1. & prior\_counts $\leq 2$ & -4 points &   & ... \\
2. & prior\_counts $\leq 7$ & -4 points  & + & ... \\
3. & age $\leq 31$ & 4 points   & + & ... \\ \hline
     &     & \multicolumn{1}{l|}{\textbf{SCORE}} & = &     \\ \hline
\end{tabular}
}
{
\risktable{}
\begin{tabular}{|r|c|c|c|c|c|}
\hline \rowcolor{scorecolor}\scorelabel{} & -8 & -4 & 0 & 4 \\
\hline \rowcolor{riskcolor}\risklabel{}  & 23.6\% & 44.1\% & 67.0\% & 83.9\% \\ 
\hline
\end{tabular}
}

\medskip\caption{\ourmethod{} model for the COMPAS dataset, predicting whether individuals are arrested within two years of release.
}
\label{fig:MoreExampleRiskScore_compas_modelSize_3}
\end{table}

\begin{table}[htbp]

\centering
{
\begin{tabular}{|llr|ll|}
\hline
1. & MSinceMostRecentInqexcl7days$\leq 0$ & 3 points &   & ... \\
2. & ExternalRiskEstimate$\leq 70$ & 5 points  & + & ... \\
3. & ExternalRiskEstimate$\leq 79$ & 5 points   & + & ... \\ \hline
     &     & \multicolumn{1}{l|}{\textbf{SCORE}} & = &     \\ \hline
\end{tabular}
}
{
\risktable{}
\begin{tabular}{|r|c|c|c|c|c|c|}
\hline \rowcolor{scorecolor}\scorelabel{} & 0 & 3 & 5 & 8  & $\geq$ 10 \\
\hline \rowcolor{riskcolor}\risklabel{}  & 13.7\% & 24.0\% & 33.4\% & 50.0\% & $\geq$ 61.3\% \\ 
\hline
\end{tabular}
}

\medskip\caption{\ourmethod{} model for the FICO dataset, predicting whether an individual will default on a loan.
}
\label{fig:MoreExampleRiskScore_fico_modelSize_3}
\end{table}

\begin{table}[htbp]
\centering
{
\begin{tabular}{|llr|ll|}
\hline
1. & Clump Thickness & $\times$3 points &   & ... \\
2. & Uniformity of Cell Size & $\times$5 points  & + & ... \\
3. & Bare Nuclei & $\times$3 points   & + & ... \\ \hline
     &     & \multicolumn{1}{l|}{\textbf{SCORE}} & = &     \\ \hline
\end{tabular}
}
{
\risktable{}
\begin{tabular}{|r|c|c|c|c|c|c|}
\hline \rowcolor{scorecolor}\scorelabel{} & $\leq$ 33 & 36 & 39 & 42 & 45\\
\hline \rowcolor{riskcolor}\risklabel{}  & $\leq$ 3.3\% & 6.1\% & 10.8\% & 18.6\% & 30.1\\ 
\hline
\hline
\rowcolor{scorecolor}\scorelabel{} & 48 & 51 & 54 & 57 & $\geq$ 60\\\hline
\rowcolor{riskcolor}\risklabel{} & 67.0\% & 77.6\% & 85.5\% & 91.0 & $\geq$ 94.5\%\\ 
\hline


\end{tabular}
}

\medskip\caption{\ourmethod{} model for the breastcancer dataset, predicting whether there is breast cancer using a biopsy.
}

\label{fig:MoreExampleRiskScore_mammo_breastcancer_modelSize_3}
\end{table}

\begin{table}[htbp]

\centering
{
\begin{tabular}{|llr|ll|}
\hline
1. & WordFrequency\_Remove & $\times$5 points &   & ... \\
2. & WordFrequency\_HP & $\times$-2 points  & + & ... \\
3. & CharacterFrequency\_\$ & $\times$5 points   & + & ... \\ \hline
     &     & \multicolumn{1}{l|}{\textbf{SCORE}} & = &     \\ \hline
\end{tabular}
}
{
\risktable{}
\begin{tabular}{|r|c|c|c|c|c|c|}
\hline \rowcolor{scorecolor}\scorelabel{} & $\leq$ -4 & -3 & -2 & -1 & 0 \\
\hline \rowcolor{riskcolor}\risklabel{}  & $\leq$0.4\% & 1.3\% & 3.7\% & 10.2\% & 25.2\%\\ 
\hline
\hline
\rowcolor{scorecolor}\scorelabel{} & 1  & 2 & 3 & 4 & $\geq$ 5\\\hline
\rowcolor{riskcolor}\risklabel{} & 50.0\% & 74.8\% & 89.8\% & 96.3\% & $\geq$ 98.7\%\\ 
\hline


\end{tabular}
}

\medskip\caption{\ourmethod{} model for the spambase dataset, predicting if an e-mail is spam.
}
\label{fig:MoreExampleRiskScore_spambase_modelSize_3}
\end{table}

\begin{table}[htbp]

\centering
{
\begin{tabular}{|llr|ll|}
\hline
1. &  previous case $\leq 20$ & -5 points &   & ... \\
2. &  previous case $ \leq 10 $ or previous case $\geq 21$ & -4 points  & + & ... \\
3. & \# of previous penal cases$\leq 3$ & -2 points   & + & ... \\ \hline
     &     & \multicolumn{1}{l|}{\textbf{SCORE}} & = &     \\ \hline
\end{tabular}
}
{
\risktable{}
\begin{tabular}{|r|c|c|c|c|c|}
\hline \rowcolor{scorecolor}\scorelabel{} & $\leq -9$ & -7 & -6 & 0 \\
\hline \rowcolor{riskcolor}\risklabel{}  & $\leq$ 50\% & 74.6\% & 83.4\% & 99.2\% \\ 
\hline
\end{tabular}
}

\medskip\caption{\ourmethod{} model for the Netherlands dataset, predicting whether defendants have any type of charge within four years.
}
\label{fig:MoreExampleRiskScore_netherlands_modelSize_3}
\end{table}


\begin{table}[htbp]
\centering
{
\begin{tabular}{|llr|ll|}
\hline
1. &no high school diploma & -4 points &   & ... \\
2. &high school diploma only & -2 points  & + & ... \\
3. &age 22 to 29 & -2 points   & + & ... \\ 
4. &any capital gains & 3 points & + & ...\\
5. &married & 4 points & + & ...\\\hline
     &     & \multicolumn{1}{l|}{\textbf{SCORE}} & = &     \\ \hline
\end{tabular}
}
{
\risktable{}
\begin{tabular}{|r|c|c|c|c|c|}
\hline \rowcolor{scorecolor}\scorelabel{} & $<$-4 & -3 & -2 & -1 & 0 \\
\hline \rowcolor{riskcolor}\risklabel{}  & $<$1.3\% & 2.4\% & 4.4\% & 7.8\% & 13.6\%\\ 
\hline
\hline
\rowcolor{scorecolor}\scorelabel{} & 1 & 2 & 3 & 4 & 7  \\\hline
\rowcolor{riskcolor}\risklabel{} & 22.5\% & 35.0\% & 50.5\% & 65.0\% & 92.2\%  \\ 
\hline
\end{tabular}
}

\medskip\caption{\ourmethod{} model for the adult dataset, predicting salary$>50$K. This table has already been shown in the main paper.}
\label{fig:MoreExampleRiskScore_adult_modelSize_5}
\end{table}

\begin{table}[htbp]
\centering
{
\begin{tabular}{|llr|ll|}
\hline
1. &Call in Second Quarter & -2 points &   & ... \\
2. &Previous Call Was Successful & 4 points  & + & ... \\
3. &Previous Marketing Campaign Failed & -1 points & + & ...\\ 
4. &Employment Indicator $>$ 5100 & -5 points & + & ...\\
5. &3 Month Euribor Rate $\geq$ 100 & -2 points & + & ...\\\hline
     &     & \multicolumn{1}{l|}{\textbf{SCORE}} & = &     \\ \hline
\end{tabular}
}
{
\risktable{}
\begin{tabular}{|r|c|c|c|c|c|c|c|c|c|c|}
\hline \rowcolor{scorecolor}\scorelabel{} & $\leq$-5 & -4 & -3 & -2 & -1 \\
\hline \rowcolor{riskcolor}\risklabel{}  & $\leq$ 11.2\% & 15.1\% & 20.1\% & 26.2\% & 33.4\%\\ 
\hline

\hline \rowcolor{scorecolor}\scorelabel{} & 0 & 1 & 2 & 3 & 4 \\
\hline \rowcolor{riskcolor}\risklabel{}  & 41.5\% & 50.0\% & 58.5\% & 66.6\% & 73.8\%\\ 
\hline

\end{tabular}
}
\medskip\caption{\ourmethod{} model for the bank dataset, predicting if a person opens a bank account after a marketing call.
}

\label{fig:MoreExampleRiskScore_bank_modelSize_5}
\end{table}

\begin{table}[htbp]
\centering
{
\begin{tabular}{|l l  r | l |}
   \hline
1.   & \textssm{Oval Shape} & -2 points & $\phantom{+}\prow{}$ \\ 
  2. & \textssm{Irregular Shape}  & 4 points & $+\prow{}$ \\ 
  3. & \textssm{Circumscribed Margin} & -5 points & $+\prow{}$ \\ 
  4. & \textssm{Spiculated Margin} & 2 points & $+\prow{}$ \\ 
  5. & \textssm{Age $\geq$ 60}  & 3 points & $+\phantom{\prow{}}$ \\[0.1em]
   \hline
 & \instruction{1}{5} & \scorelabel{} & $=\phantom{\prow{}}$ \\ 
   \hline
\end{tabular}
}
{%
\risktable{}
\begin{tabular}{|r|c|c|c|c|c|c|}
\hline \rowcolor{scorecolor}\scorelabel{} & -7     & -5     & -4     & -3     & -2     & -1  \\
\hline 
\rowcolor{riskcolor}\risklabel{} & 6.0\%  & 10.6\% & 13.8\% & 17.9\% & 22.8\% & 28.6\%  \\ 
\hline
\end{tabular}
\begin{tabular}{|r|c|c|c|c|c|c|}
\hline \rowcolor{scorecolor}\scorelabel{} & 0 & 1      & 2      & 3      & 4      & $\geq$ 5  \\
\hline 
\rowcolor{riskcolor}\risklabel{} & 35.2\% & 42.4\% & 50.0\% & 57.6\% & 64.8\% & 71.4\% \\ 
\hline
\end{tabular}
}
\medskip\caption{\ourmethod{} model for the mammo dataset, predicting malignancy of a breast lesion. This table has already been shown in the main paper.
}
\label{fig:MoreExampleRiskScore_mammo_modelSize_5}
\end{table}


\begin{table}[htbp]

\centering
{
\begin{tabular}{|llr|ll|}
\hline
1. &odor$=$almond & -5 points &   & ... \\
2. &odor$=$anise & -5 points  & + & ... \\
3. &odor$=$none & -5 points   & + & ... \\ 
4. &odor$=$foul & 5 points & + & ...\\
5. &gill size$=$broad & -3 points & + & ...\\\hline
     &     & \multicolumn{1}{l|}{\textbf{SCORE}} & = &     \\ \hline
\end{tabular}
}
{
\risktable{}
\begin{tabular}{|r|c|c|c|c|}
\hline \rowcolor{scorecolor}\scorelabel{} & -8 & -5 & -3 & $\geq$2\\
\hline \rowcolor{riskcolor}\risklabel{} & 1.62\% & 26.4\% & 73.6\% & >99.8\% \\ 
\hline

\end{tabular}
}

\medskip\caption{\ourmethod{} model for the mushroom dataset, predicting whether a mushroom is poisonous. This table has already been shown in the main paper.
}
\label{fig:MoreExampleRiskScore_mushroom_modelSize_5}
\end{table}

\begin{table}[htbp]

\centering
{
\begin{tabular}{|llr|ll|}
\hline
1. & prior\_counts $\leq 7$ & -5 points &   & ... \\
2. & prior\_counts $\leq 2$ & -5 points &  + & ... \\
3. & prior\_counts $\leq 0$ & -3 points  & + & ... \\
4. & age $\leq$ 33 & 4 points  & + & ... \\
5. & age $\leq$ 23 & 5 points   & + & ... \\ \hline
     &     & \multicolumn{1}{l|}{\textbf{SCORE}} & = &     \\ \hline
\end{tabular}
}
{
\risktable{}
\begin{tabular}{|r|c|c|c|c|c|c|}
\hline \rowcolor{scorecolor}\scorelabel{} & $\leq$ -10 & -9 & -6 & -5 & -4 \\
\hline \rowcolor{riskcolor}\risklabel{}  & $\leq$25.9\% & 29.4\% & 41.3\% & 45.6\% & 50.0\%\\ 
\hline
\hline
\rowcolor{scorecolor}\scorelabel{} & -2  & -1 & 3 & 4 & 9\\\hline
\rowcolor{riskcolor}\risklabel{} & 58.7\% & 62.8\% & 77.3\% & 80.2\% & $\geq$ 90.7\%\\ 
\hline


\end{tabular}
}

\medskip\caption{\ourmethod{} model for the COMPAS dataset, predicting whether individuals are arrested within two years of release.
}
\label{fig:MoreExampleRiskScore_compas_modelSize_5}
\end{table}

\begin{table}[htbp]

\centering
{
\begin{tabular}{|llr|ll|}
\hline
1. & MSinceMostRecentInqexcl7days $\leq -8$ & -4 points &   & ... \\
2. & MSinceMostRecentInqexcl7days $\leq 0$ & 2 points &  + & ... \\
3. & NumSatisfactoryTrades $\leq 12$ & 2 points  & + & ... \\
4. & ExternalRiskEstimate $\leq 70$ & 3 points  & + & ... \\
5. & ExternalRiskEstimate $\leq 79$ & 3 points   & + & ... \\ \hline
     &     & \multicolumn{1}{l|}{\textbf{SCORE}} & = &     \\ \hline
\end{tabular}
}
{
\risktable{}
\begin{tabular}{|r|c|c|c|c|c|c|}
\hline \rowcolor{scorecolor}\scorelabel{} & -2 & 0 & 1 & 2 & 3 \\
\hline \rowcolor{riskcolor}\risklabel{}  & 6.7\% & 13.2\% & 18.2\% & 24.4\% & 32.0\%\\ 
\hline
\hline
\rowcolor{scorecolor}\scorelabel{} & 4  & 5 & 6 & 8 & 10\\\hline
\rowcolor{riskcolor}\risklabel{} & 40.7\% & 50.0\% & 59.3\% & 75.5\% & 86.8\%\\ 
\hline


\end{tabular}
}

\medskip\caption{\ourmethod{} model for the FICO dataset, predicting whether an individual will default on a loan. $-8$ means a missing value on the FICO dataset.
}
\label{fig:MoreExampleRiskScore_fico_modelSize_5}
\end{table}

\begin{table}[htbp]
\centering

{
\begin{tabular}{|llr|ll|}
\hline
1. & Clump Thickness & $\times$5 points &   & ... \\
2. & Uniformity of Cell Size & $\times$4 points  & + & ... \\
3. & Marginal Adhesion & $\times$3 points  & + & ... \\
4. & Bare Nuclei & $\times$4 points   & + & ... \\
5. & Normal Nucleoli & $\times$3 points   & + & ... \\ \hline
     &     & \multicolumn{1}{l|}{\textbf{SCORE}} & = &     \\ \hline
\end{tabular}
}
{
\risktable{}
\begin{tabular}{|r|c|c|c|c|c|c|}
\hline \rowcolor{scorecolor}\scorelabel{} & $\leq$ 55 & 60 & 65 & 70 & 75\\
\hline \rowcolor{riskcolor}\risklabel{}  & $\leq$ 8.6\% & 14.6\% & 23.5\% & 35.7\% & 50.0\\ 
\hline
\hline
\rowcolor{scorecolor}\scorelabel{} & 80 & 85 & 90 & 95 & $\geq$ 100\\\hline
\rowcolor{riskcolor}\risklabel{} & 64.3\% & 76.5\% & 85.4\% & 91.4 & $\geq$ 95.0\%\\ 
\hline


\end{tabular}
}

\medskip\caption{\ourmethod{} model for the breastcancer dataset, predicting whether there is breast cancer using a biopsy.
}

\label{fig:MoreExampleRiskScore_mammo_breastcancer_modelSize_5}
\end{table}

\begin{table}[htbp]

\centering
{
\begin{tabular}{|llr|ll|}
\hline
1. & WordFrequency\_Remove & $\times$5 points &   & ... \\
2. & WordFrequency\_Free & $\times$2 points &   & ... \\
3. & WordFrequency\_0 & $\times$5 points  & + & ... \\
4. & WordFrequency\_HP & $\times$-2 points  & + & ... \\
5. & WordFrequency\_George & $\times$-2 points   & + & ... \\ \hline
     &     & \multicolumn{1}{l|}{\textbf{SCORE}} & = &     \\ \hline
\end{tabular}
}
{
\risktable{}
\begin{tabular}{|r|c|c|c|c|c|c|}
\hline \rowcolor{scorecolor}\scorelabel{} & $\leq$ -4 & -3 & -2 & -1 & 0 \\
\hline \rowcolor{riskcolor}\risklabel{}  & $\leq$0.6\% & 1.6\% & 4.4\% & 11.4\% & 26.4\%\\ 
\hline
\hline
\rowcolor{scorecolor}\scorelabel{} & 1  & 2 & 3 & 4 & $\geq$ 5\\\hline
\rowcolor{riskcolor}\risklabel{} & 50.0\% & 73.6\% & 88.6\% & 95.6\% & $\geq$ 98.4\%\\ 
\hline


\end{tabular}
}

\medskip\caption{\ourmethod{} model for the spambase dataset, predicting if an e-mail is spam.
}
\label{fig:MoreExampleRiskScore_spambase_modelSize_5}
\end{table}

\begin{table}[htbp]

\centering
{
\begin{tabular}{|llr|ll|}
\hline
1. & previous case $\leq 20$ & -5 points &   & ... \\
2. & previous case $\leq 10$ or previous case $\geq 21$ & -3 points &  + & ... \\
3. & \# of previous penal cases $\leq 2$ & -2 points  & + & ... \\
4. & age in years $\leq 38.06$ & 1 points  & + & ... \\
5. & age at first penal case $\leq 22.63$ & 1 points   & + & ... \\ \hline
     &     & \multicolumn{1}{l|}{\textbf{SCORE}} & = &     \\ \hline
\end{tabular}
}
{
\risktable{}
\begin{tabular}{|r|c|c|c|c|c|c|c|}
\hline \rowcolor{scorecolor}\scorelabel{} & $\leq$-9 & -8 & -7 & -6 & -5 & -4 \\
\hline \rowcolor{riskcolor}\risklabel{}  & $\leq$ 23.8\% & 35.8\% & 50.0\% & 64.2\% & 76.2\% & 85.1\%\\ 
\hline
\hline
\rowcolor{scorecolor}\scorelabel{} & -3  & -2 & -1 & 0 & 1 & 2\\\hline
\rowcolor{riskcolor}\risklabel{} & 91.1\% & 94.8\% & 97.0\% & 98.3\% & 99.1\% & 99.5\%\\ 
\hline


\end{tabular}
}

\medskip\caption{\ourmethod{} model for the Netherlands dataset, predicting whether defendants have any type of charge within four years.
}
\label{fig:MoreExampleRiskScore_netherlands_modelSize_5}
\end{table}


\begin{table}[htbp]
\centering
{
\begin{tabular}{|llr|ll|}
\hline
1. &Age 22 to 29 & -2 points &   & ... \\
2. &High School Diploma Only & -2 points  & + & ... \\
3. &No High school Diploma & -4 points &   & ... \\
4. &Married & 4 points & + & ...\\
5. &Work Hours Per Week $<$ 50 & -2 points & + & ...\\
6. &Any Capital Gains & 3 points & + & ...\\
7. &Any Capital Loss & 2 points & + & ...\\\hline
     &     & \multicolumn{1}{l|}{\textbf{SCORE}} & = &     \\ \hline
\end{tabular}
}
{
\risktable{}
\begin{tabular}{|r|c|c|c|c|c|}
\hline \rowcolor{scorecolor}\scorelabel{} & $\leq$-5 & -4 & -3 & -2 & -1 \\
\hline \rowcolor{riskcolor}\risklabel{}  & $\leq$0.8\% & 1.4\% & 2.6\% & 4.6\% & 8.1\% \\ 
\hline

\hline \rowcolor{scorecolor}\scorelabel{} & 0 & 2 & 3 & 4 & 7 \\
\hline \rowcolor{riskcolor}\risklabel{}  & 14.0\% & 35.3\% & 50.0\% & 64.7\% & 91.9\% \\ 
\hline
\end{tabular}
}

\medskip\caption{\ourmethod{} model for the adult dataset, predicting salary$>50$K.}
\label{fig:MoreExampleRiskScore_adult_modelSize_7}
\end{table}

\begin{table}[htbp]
\centering
{
\begin{tabular}{|llr|ll|}
\hline
1. &Blue Collar Job & -1 points &   & ... \\
2. &Call in Second Quarter & -2 points  & + & ... \\
3. &Previous Call Was Successful & 3 points  & + & ... \\
4. &Previous Marketing Campaign Failed & -1 points  & + & ... \\
5. &Employment Indicator $>$ 5100 & -5 points & + & ...\\
6. &Consumer Price Index $\geq$ 93.5 & 1 points & + & ...\\
7. &3 Month Euribor Rate $\geq$ 100 & -1 points & + & ...\\\hline
     &     & \multicolumn{1}{l|}{\textbf{SCORE}} & = &     \\ \hline
\end{tabular}
}
{
\risktable{}
\begin{tabular}{|r|c|c|c|c|c|c|}
\hline \rowcolor{scorecolor}\scorelabel{} & $\leq$-5 & -4 & -3 & -2 & -1 \\
\hline \rowcolor{riskcolor}\risklabel{}  & $\leq$ 7.9\% & 11.5\% & 16.3\% & 22.7\% & 30.6\%\\ 
\hline

\hline \rowcolor{scorecolor}\scorelabel{} & 0 & 1 & 2 & 3 & 4 \\
\hline \rowcolor{riskcolor}\risklabel{}  & 39.9\% & 50.0\% & 60.1\% & 69.4\% & 77.3\%\\ 
\hline

\end{tabular}
}
\medskip\caption{\ourmethod{} model for the bank dataset, predicting if a person opens bank account after marketing call.
}

\label{fig:MoreExampleRiskScore_bank_modelSize_7}
\end{table}

\begin{table}[htbp]
\centering
{
\begin{tabular}{|llr|ll|}
\hline
1. &Lobular Shape & 2 points &   & ... \\
2. &Irregular Shape & 5 points  & + & ... \\
3. &Circumscribed Margin & -4 points  & + & ... \\
4. &Obscured Margin & -1 points  & + & ... \\
5. &Spiculated Margin & 1 points  & + & ... \\
6. &Age $<$ 30 & -5 points  & + & ... \\
7. &Age $\geq$ 60 & 3 points & + & ...\\\hline
     &     & \multicolumn{1}{l|}{\textbf{SCORE}} & = &     \\ \hline
\end{tabular}
}
{
\risktable{}
\begin{tabular}{|r|c|c|c|c|c|}
\hline \rowcolor{scorecolor}\scorelabel{} & $\leq$-1 & 0 & 1 & 2 & 3 \\
\hline \rowcolor{riskcolor}\risklabel{}  & 19.8\% & 25.9\% & 33.2\% & 41.3\% & 50.0\% \\ 
\hline
\hline
\rowcolor{scorecolor}\scorelabel{} & 4 & 5 & 6 & 8 &  9\\\hline
\rowcolor{riskcolor}\risklabel{} & 58.7\% & 66.8\% & 74.1\%  & 85.2\% & 89.1\%\\ 
\hline

\end{tabular}
}
\medskip\caption{\ourmethod{} model for the mammo dataset, predicting malignancy of a breast lesion.
}
\label{fig:MoreExampleRiskScore_mammo_modelSize_7}
\end{table}

\begin{table}[htbp]

\centering
{
\begin{tabular}{|llr|ll|}
\hline
1. &odor$=$anise & -5 points &   & ... \\
2. &odor$=$none & -5 points  & + & ... \\
3. &odor$=$foul & 5 points   & + & ... \\
4. &gill size$=$narrow & 4 points   & + & ... \\ 
5. &stalk surface above ring$=$grooves & 2 points   & + & ... \\ 
6. &spore print color$=$green & 5 points   & + & ... \\ \hline
     &     & \multicolumn{1}{l|}{\textbf{SCORE}} & = &     \\ \hline
\end{tabular}
}
{
\risktable{}
\begin{tabular}{|r|c|c|c|c|c|}
\hline \rowcolor{scorecolor}\scorelabel{} & -5 & 0 & 2 & 4 & $\geq$ 5\\
\hline \rowcolor{riskcolor}\risklabel{} & 0.5\% & 50.0\% & 89.2\% & 98.6\% & 99.5\%\\ 
\hline

\end{tabular}
}

\medskip\caption{\ourmethod{} model for the mushroom dataset, predicting whether a mushroom is poisonous.
}
\label{fig:MoreExampleRiskScore_mushroom_modelSize_7}
\end{table}

\begin{table}[htbp]

\centering
{
\begin{tabular}{|llr|ll|}
\hline
1. & prior\_counts $\leq 7$ & -3 points &   & ... \\
2. & prior\_counts $\leq 2$ & -3 points &  + & ... \\
3. & prior\_counts $\leq 0$ & -2 points  & + & ... \\
4. & age $\leq$ 52 & 2 points  & + & ... \\
5. & age $\leq$ 33 & 2 points  & + & ... \\
6. & age $\leq$ 23 & 2 points  & + & ... \\
7. & age $\leq$ 20 & 4 points  & + & ... \\
\hline

     &     & \multicolumn{1}{l|}{\textbf{SCORE}} & = &     \\ \hline
\end{tabular}
}
{
\risktable{}
\begin{tabular}{|r|c|c|c|c|c|c|c|c|}
\hline \rowcolor{scorecolor}\scorelabel{} & -8 & -6 & -4 & -3 & -2 & -1 & 0 \\
\hline \rowcolor{riskcolor}\risklabel{}  & 11.3\% & 18.7\% & 29.3\% & 35.7\% & 42.7\% & 50.0\% & 57.3\% \\ 
\hline
\hline
\rowcolor{scorecolor}\scorelabel{} & 1  & 2 & 3 & 4 & 6 & 7 & 10\\\hline
\rowcolor{riskcolor}\risklabel{} & 64.3\% & 70.7\% & 76.4\% & 81.3\% & 88.7\% & 91.3\% & 96.2\%\\ 
\hline


\end{tabular}
}

\medskip\caption{\ourmethod{} model for the COMPAS dataset, predicting whether individuals are arrested within two years of release.
}
\label{fig:MoreExampleRiskScore_compas_modelSize_7}
\end{table}

\begin{table}[htbp]

\centering
{
\begin{tabular}{|llr|ll|}
\hline
1. & MSinceMostRecentInqexcl7days $\leq -8$ & -4 points &   & ... \\
2. & MSinceMostRecentInqexcl7days $\leq 0$ & 2 points &  + & ... \\
3. & NetFractionRevolvingBurden $\leq 37$ & -2 points  & + & ... \\
4. & ExternalRiskEstimate $\leq 70$ & 2 points  & + & ... \\
5. & ExternalRiskEstimate $\leq 78$ & 2 points  & + & ... \\
6. & AverageMInFile $\leq 60$ & 2 points  & + & ... \\
7. & PercentTradesNeverDelq $\leq 85$ & 2 points  & + & ... \\
\hline

     &     & \multicolumn{1}{l|}{\textbf{SCORE}} & = &     \\ \hline
\end{tabular}
}
{
\risktable{}
\begin{tabular}{|r|c|c|c|c|c|c|c|c|c|}
\hline \rowcolor{scorecolor}\scorelabel{} & -4 & -2 & 0 & 2 & 4 & 6 & 8 & 10 \\
\hline \rowcolor{riskcolor}\risklabel{}  & 8.0\% & 14.9\% & 26.0\% & 41.4\% & 58.6\% & 74.0\% & 85.1\% & 92.0\%\\ 
\hline


\end{tabular}
}

\medskip\caption{\ourmethod{} model for the FICO dataset, predicting whether an individual will default on a loan. $-8$ means a missing value on the FICO dataset.
}
\label{fig:MoreExampleRiskScore_fico_modelSize_7}
\end{table}

\begin{table}[htbp]
\centering

{
\begin{tabular}{|llr|ll|}
\hline
1. & Clump Thickness & $\times$4 points &   & ... \\
2. & Uniformity of Cell Shape & $\times$3 points  & + & ... \\
3. & Marginal Adhesion & $\times$3 points  & + & ... \\
4. & Bare Nuclei & $\times$3 points  & + & ... \\
5. & Bland Chromatin & $\times$3 points  & + & ... \\
6. & Normal Nucleoli & $\times$2 points  & + & ... \\
7. & Mitoses & $\times$4 points   & + & ... \\ \hline
     &     & \multicolumn{1}{l|}{\textbf{SCORE}} & = &     \\ \hline
\end{tabular}
}
{
\risktable{}
\begin{tabular}{|r|c|c|c|c|c|c|}
\hline \rowcolor{scorecolor}\scorelabel{} & $\leq$ 55 & 60 & 65 & 70 & 75\\
\hline \rowcolor{riskcolor}\risklabel{}  & $\leq$ 5.1\% & 9.3\% & 16.2\% & 26.6\% & 40.6\\ 
\hline
\hline
\rowcolor{scorecolor}\scorelabel{} & 80 & 85 & 90 & 95 & $\geq$ 100\\\hline
\rowcolor{riskcolor}\risklabel{} & 56.3\% & 70.8\% & 82.1\% & 89.6 & $\geq$ 94.2\%\\ 
\hline


\end{tabular}
}

\medskip\caption{\ourmethod{} model for the breastcancer dataset, predicting whether there is breast cancer using a biopsy.
}

\label{fig:MoreExampleRiskScore_mammo_breastcancer_modelSize_7}
\end{table}

\begin{table}[htbp]

\centering
{
\begin{tabular}{|llr|ll|}
\hline
1. & WordFrequency\_Remove & $\times$4 points &   & ... \\
2. & WordFrequency\_Free & $\times$2 points &   & ... \\
3. & WordFrequency\_Business & $\times$1 points  & + & ... \\
4. & WordFrequency\_0 & $\times$4 points  & + & ... \\
5. & WordFrequency\_HP & $\times$-2 points  & + & ... \\
6. & WordFrequency\_George & $\times$-2 points   & + & ...\\
7. & CharacterFrequency\_\$ & $\times$5 points   & + & ...\\ \hline
     &     & \multicolumn{1}{l|}{\textbf{SCORE}} & = &     \\ \hline
\end{tabular}
}
{
\risktable{}
\begin{tabular}{|r|c|c|c|c|c|c|}
\hline \rowcolor{scorecolor}\scorelabel{} & $\leq$ -4 & -3 & -2 & -1 & 0 \\
\hline \rowcolor{riskcolor}\risklabel{}  & $\leq$0.4\% & 1.3\% & 3.7\% & 10.2\% & 25.2\%\\ 
\hline
\hline
\rowcolor{scorecolor}\scorelabel{} & 1  & 2 & 3 & 4 & $\geq$ 5\\\hline
\rowcolor{riskcolor}\risklabel{} & 50.0\% & 74.8\% & 89.8\% & 96.3\% & $\geq$ 98.7\%\\ 
\hline


\end{tabular}
}

\medskip\caption{\ourmethod{} model for the spambase dataset, predicting if an e-mail is spam.
}
\label{fig:MoreExampleRiskScore_spambase_modelSize_7}
\end{table}

\begin{table}[htbp]

\centering
{
\begin{tabular}{|llr|ll|}
\hline
1. & previous case $\leq 20$ & -5 points &   & ... \\
2. & previous case $\leq 10$ or previous case $\geq 21$ & -4 points &  + & ... \\
3. & \# of previous penal cases $\leq 1$ & -1 points  & + & ... \\
4. & \# of previous penal cases $\leq 3$ & -1 points  & + & ... \\
5. & \# of previous penal cases $\leq 5$ & -1 points  & + & ... \\
6. & age in years $\leq 21.80$ & 1 points  & + & ... \\
7. & age in years $\leq 38.05$ & 1 points  & + & ... \\
\hline

     &     & \multicolumn{1}{l|}{\textbf{SCORE}} & = &     \\ \hline
\end{tabular}
}
{
\risktable{}
\begin{tabular}{|r|c|c|c|c|c|c|}
\hline \rowcolor{scorecolor}\scorelabel{} & $\leq$-10 & -9 & -8 & -7 & -6 & -5  \\
\hline \rowcolor{riskcolor}\risklabel{}  & $\leq$ 33.1\% & 50.0\% & 66.9\% & 80.3\% & 89.2\% & 94.3\%  \\ 
\hline
\hline
\rowcolor{scorecolor}\scorelabel{} & -4 & -3  & -2 & -1 & 0 & $\geq$ 1  \\\hline
\rowcolor{riskcolor}\risklabel{} & 97.1\% & 98.6\% & 99.3\% & 99.6\% & 99.8\% & 99.9\% \\ 
\hline


\end{tabular}
}

\medskip\caption{\ourmethod{} model for the Netherlands dataset, predicting whether defendants have any type of charge within four years.
}
\label{fig:MoreExampleRiskScore_netherlands_modelSize_7}
\end{table}

\begin{table}[htbp]

\centering
{
\begin{tabular}{|llr|ll|}
\hline
1. & ExternalRiskEstimate$\leq$63 & 1 points &   & ... \\
2. & ExternalRiskEstimate$\leq$70 & 2 points & + & ... \\
3. & ExternalRiskEstimate$\leq$79 & 2 points & + & ... \\
4. & AverageMInFile$\leq$59 & 2 points & + & ... \\
5. & NumSatisfactoryTrades$\leq$13 & 2 points & + & ... \\
6. & PercentTradesNeverDelq$\leq$95 & 1 points & + & ... \\
7. & PercentInstallTrades$\leq$46 & -1 points & + & ... \\
8. & MSinceMostRecentInqexcl7days$\leq$-8 & -5 points & + & ... \\
9. & MSinceMostRecentInqexcl7days$\leq$0 & 2 points & + & ... \\
10. & NetFractionRevolvingBurden$\leq$37 & -2 points & + & ... \\ \hline
     &     & \multicolumn{1}{l|}{\textbf{SCORE}} & = &     \\ \hline
\end{tabular}
}
{
\risktable{}
\begin{tabular}{|r|c|c|c|c|c|c|c|c|c|c|c|}
\hline \rowcolor{scorecolor}\scorelabel{} & -8 & -7 & -6 & -5 & -4 & -3 & -2 & -1 & 0 & 1 & 2\\
\hline
\rowcolor{riskcolor}\risklabel{} & 2.7\% & 3.7\% & 5.0\% & 6.8\% & 9.2\% & 12.4\% & 16.4\% & 21.3\% & 27.3\% & 34.2\% & 41.9\% \\ 
\hline
\end{tabular}
\begin{tabular}{|r|c|c|c|c|c|c|c|c|c|c|c|}
\hline \rowcolor{scorecolor}\scorelabel{} & 3 & 4 & 5 & 6 & 7 & 8 & 9 & 10 & 11 & 12\\
\hline
\rowcolor{riskcolor}\risklabel{} & 50.0\% & 58.1\% & 65.8\% & 72.7\% & 78.7\% & 83.6\% & 87.6\% & 90.8\% & 93.2\% & 95.0\% \\
\hline


\end{tabular}
}

\medskip\caption{\ourmethod{} model for the FICO dataset, predicting whether an individual will default on a loan. $-8$ means a missing value on the FICO dataset. 
}
\label{fig:MoreExampleRiskScore_fico_modelSize_10}
\end{table}

\clearpage
\subsection{Risk Score Models from the Pool of Solutions}
\label{app:examples_from_the_pool}
\subsubsection{Examples from the Pool of Solutions (Bank Dataset)}
\label{app:examples_from_the_pool_bank}
The extra risk score examples from the pool of solutions are shown in Tables~\ref{fig:RiskScore_bank_modelSize_5_pool_1}-\ref{fig:RiskScore_bank_modelSize_5_pool_12}. All models were from the pool of the third fold on the bank dataset, and we show the top 12 models, provided in  ascending order of the logistic loss on the training set (the model with the smallest logistic loss comes first).

\begin{table}[h]
\centering
{
\begin{tabular}{|llr|ll|}
\hline
1. &Call in Second Quarter & -2 points &   & ... \\
2. &Previous Call Was Successful & 4 points  & + & ... \\
3. &Previous Marketing Campaign Failed & -1 points & + & ...\\ 
4. &Employment Indicator $>$ 5100 & -5 points & + & ...\\
5. &3 Month Euribor Rate $\geq$ 100 & -2 points & + & ...\\\hline
     &     & \multicolumn{1}{l|}{\textbf{SCORE}} & = &     \\ \hline
\end{tabular}
}
{
\risktable{}
\begin{tabular}{|r|c|c|c|c|c|c|c|c|c|c|}
\hline \rowcolor{scorecolor}\scorelabel{} & $\leq$-5 & -4 & -3 & -2 & -1 \\
\hline \rowcolor{riskcolor}\risklabel{}  & $\leq$ 11.2\% & 15.1\% & 20.1\% & 26.2\% & 33.4\%\\ 
\hline

\hline \rowcolor{scorecolor}\scorelabel{} & 0 & 1 & 2 & 3 & 4 \\
\hline \rowcolor{riskcolor}\risklabel{}  & 41.5\% & 50.0\% & 58.5\% & 66.6\% & 73.8\%\\ 
\hline

\end{tabular}
}
\medskip\caption{\ourmethod{} model for the bank dataset, predicting if a person opens a bank account after a marketing call. The logistic loss on the training set is 9352.39. The AUCs on the training and test sets are 0.779 and 0.770, respectively.}

\label{fig:RiskScore_bank_modelSize_5_pool_1}
\end{table}

\begin{table}[h]
\centering
{
\begin{tabular}{|llr|ll|}
\hline
1. &Call in Second Quarter & -2 points &   & ... \\
2. &Previous Call Was Successful & 4 points  & + & ... \\
3. &Previous Marketing Campaign Failed & -1 points & + & ...\\ 
4. &Employment Variation Rate $<$ -1 & 5 points & + & ...\\
5. &3 Month Euribor Rate $\geq$ 100 & -2 points & + & ...\\\hline
     &     & \multicolumn{1}{l|}{\textbf{SCORE}} & = &     \\ \hline
\end{tabular}
}
{
\risktable{}
\begin{tabular}{|r|c|c|c|c|c|c|c|c|c|c|}
\hline \rowcolor{scorecolor}\scorelabel{} & $\leq$0 & 1 & 2 & 3 & 4 \\
\hline \rowcolor{riskcolor}\risklabel{}  & $\leq$ 11.2\% & 15.1\% & 20.1\% & 26.2\% & 33.4\%\\ 
\hline

\hline \rowcolor{scorecolor}\scorelabel{} & 5 & 6 & 7 & 8 & 9 \\
\hline \rowcolor{riskcolor}\risklabel{}  & 41.5\% & 50.0\% & 58.5\% & 66.6\% & 73.8\%\\ 
\hline

\end{tabular}
}
\medskip\caption{\ourmethod{} model for the bank dataset, predicting if a person opens a bank account after a marketing call. The logistic loss on the training set is 9352.39. The AUCs on the training and test sets are 0.779 and 0.770, respectively.}

\label{fig:RiskScore_bank_modelSize_5_pool_2}
\end{table}

\begin{table}[h]
\centering
{
\begin{tabular}{|llr|ll|}
\hline
1. &Call in Second Quarter & -2 points &   & ... \\
2. &Previous Call Was Successful & 4 points  & + & ... \\
3. &Previous Marketing Campaign Failed & -1 points & + & ...\\ 
4. &3 Month Euribor Rate $\geq$ 100 & -2 points & + & ...\\
5. &3 Month Euribor Rate $\geq$ 200 & -5 points & + & ...\\\hline
     &     & \multicolumn{1}{l|}{\textbf{SCORE}} & = &     \\ \hline
\end{tabular}
}
{
\risktable{}
\begin{tabular}{|r|c|c|c|c|c|c|c|c|c|c|}
\hline \rowcolor{scorecolor}\scorelabel{} & $\leq$-5 & -4 & -3 & -2 & -1 \\
\hline \rowcolor{riskcolor}\risklabel{}  & $\leq$ 11.2\% & 15.2\% & 20.1\% & 26.3\% & 33.4\%\\ 
\hline

\hline \rowcolor{scorecolor}\scorelabel{} & 0 & 1 & 2 & 3 & 4 \\
\hline \rowcolor{riskcolor}\risklabel{}  & 41.5\% & 50.0\% & 58.5\% & 66.6\% & 73.7\%\\ 
\hline

\end{tabular}
}
\medskip\caption{\ourmethod{} model for the bank dataset, predicting if a person opens a bank account after a marketing call. The logistic loss on the training set is 9352.86. The AUCs on the training and test sets are 0.779 and 0.769, respectively.}

\label{fig:RiskScore_bank_modelSize_5_pool_3}
\end{table}


\begin{table}[htbp]
\centering
{
\begin{tabular}{|llr|ll|}
\hline
1. &Call in Second Quarter & -2 points &   & ... \\
2. &Previous Marketing Campaign Failed & -1 points  & + & ... \\
3. &Previous Marketing Campaign Succeeded & 4 points & + & ...\\ 
4. &3 Month Euribor Rate $\geq$ 100 & -2 points & + & ...\\
5. &3 Month Euribor Rate $\geq$ 200 & -5 points & + & ...\\\hline
     &     & \multicolumn{1}{l|}{\textbf{SCORE}} & = &     \\ \hline
\end{tabular}
}
{
\risktable{}
\begin{tabular}{|r|c|c|c|c|c|c|c|c|c|c|}
\hline \rowcolor{scorecolor}\scorelabel{} & $\leq$-5 & -4 & -3 & -2 & -1 \\
\hline \rowcolor{riskcolor}\risklabel{}  & $\leq$ 11.3\% & 15.3\% & 20.2\% & 26.3\% & 33.5\%\\ 
\hline

\hline \rowcolor{scorecolor}\scorelabel{} & 0 & 1 & 2 & 3 & 4 \\
\hline \rowcolor{riskcolor}\risklabel{}  & 41.5\% & 50.0\% & 58.5\% & 66.5\% & 73.6\%\\ 
\hline

\end{tabular}
}
\medskip\caption{\ourmethod{} model for the bank dataset, predicting if a person opens a bank account after a marketing call. The logistic loss on the training set is 9363.40. The AUCs on the training and test sets are 0.779 and 0.769, respectively. Note that some customers do not have previous marketing campaigns, so for these customers, neither of conditions 2 nor 3 are satisfied.}

\label{fig:RiskScore_bank_modelSize_5_pool_4}
\end{table}


\begin{table}[htbp]
\centering
{
\begin{tabular}{|llr|ll|}
\hline
1. &Call in Second Quarter & -2 points &   & ... \\
2. &Previous Call Was Successful & 4 points  & + & ... \\
3. &Consumer Price Index $>$ 93.5 & 1 points & + & ...\\ 
4. &3 Month Euribor Rate $\geq$ 100 & -1 points & + & ...\\
5. &3 Month Euribor Rate $\geq$ 200 & -5 points & + & ...\\\hline
     &     & \multicolumn{1}{l|}{\textbf{SCORE}} & = &     \\ \hline
\end{tabular}
}
{
\risktable{}
\begin{tabular}{|r|c|c|c|c|c|c|c|c|c|c|}
\hline \rowcolor{scorecolor}\scorelabel{} & $\leq$-4 & -3 & -2 & -1 & 0 \\
\hline \rowcolor{riskcolor}\risklabel{}  & $\leq$ 9.6\% & 13.4\% & 18.4\% & 24.6\% & 32.2\%\\ 
\hline

\hline \rowcolor{scorecolor}\scorelabel{} & 1 & 2 & 3 & 4 & 5\\
\hline \rowcolor{riskcolor}\risklabel{}  & 40.8\% & 50.0\% & 59.2\% & 67.8\% & 75.4\%\\ 
\hline

\end{tabular}
}
\medskip\caption{\ourmethod{} model for the bank dataset, predicting if a person opens a bank account after a marketing call. The logistic loss on the training set is 9365.51. The AUCs on the training and test sets are 0.778 and 0.769, respectively.}

\label{fig:RiskScore_bank_modelSize_5_pool_5}
\end{table}


\begin{table}[htbp]
\centering
{
\begin{tabular}{|llr|ll|}
\hline
1. &Call in First Quarter & 2 points &   & ... \\
2. &Call in Second Quarter & -1 points &   & ... \\
3. &Previous Call Was Successful & 3 points  & + & ... \\
4. &3 Month Euribor Rate $\geq$ 100 & -2 points & + & ...\\
5. &3 Month Euribor Rate $\geq$ 200 & -3 points & + & ...\\\hline
     &     & \multicolumn{1}{l|}{\textbf{SCORE}} & = &     \\ \hline
\end{tabular}
}
{
\risktable{}
\begin{tabular}{|r|c|c|c|c|c|c|c|c|c|c|}
\hline \rowcolor{scorecolor}\scorelabel{} & $\leq$-4 & -3 & -2 & -1 & 0\\
\hline \rowcolor{riskcolor}\risklabel{}  & $\leq$ 8.8\% & 13.4\% & 19.8\% & 28.2\% & 38.5\%\\ 
\hline

\hline \rowcolor{scorecolor}\scorelabel{} & 1 & 2 & 3 & 4 & 5\\
\hline \rowcolor{riskcolor}\risklabel{}  & 50.0\% & 61.5\% & 71.8\% & 80.2\% & 86.6\%\\ 
\hline

\end{tabular}
}
\medskip\caption{\ourmethod{} model for the bank dataset, predicting if a person opens a bank account after a marketing call. The logistic loss on the training set is 9365.57. The AUCs on the training and test sets are 0.776 and 0.766, respectively.}

\label{fig:RiskScore_bank_modelSize_5_pool_6}
\end{table}


\begin{table}[htbp]
\centering
{
\begin{tabular}{|llr|ll|}
\hline
1. &Call in Second Quarter & -2 points &   & ... \\
2. &Previous Call Was Successful & 3 points  & + & ... \\
3. &Previous Marketing Campaign Failed & -1 points & + & ...\\ 
4. &Consumer Price Index $\geq$ 93.5 & 1 points & + & ...\\
5. &3 Month Euribor Rate $\geq$ 200 & -4 points & + & ...\\\hline
     &     & \multicolumn{1}{l|}{\textbf{SCORE}} & = &     \\ \hline
\end{tabular}
}
{
\risktable{}
\begin{tabular}{|r|c|c|c|c|c|c|c|c|c|c|}
\hline \rowcolor{scorecolor}\scorelabel{} & $\leq$-5 & -4 & -3 & -2 & -1 \\
\hline \rowcolor{riskcolor}\risklabel{}  & $\leq$ 3.0\% & 5.2\% & 9.0\% & 15.0\% & 23.9\%\\ 
\hline

\hline \rowcolor{scorecolor}\scorelabel{} & 0 & 1 & 2 & 3 & 4 \\
\hline \rowcolor{riskcolor}\risklabel{}  & 35.9\% & 50.0\% & 64.1\% & 76.1\% & 85.0\%\\ 
\hline

\end{tabular}
}
\medskip\caption{\ourmethod{} model for the bank dataset, predicting if a person opens a bank account after a marketing call. The logistic loss on the training set is 9367.20. The AUCs on the training and test sets are 0.781 and 0.772, respectively.}

\label{fig:RiskScore_bank_modelSize_5_pool_7}
\end{table}


\begin{table}[htbp]
\centering
{
\begin{tabular}{|llr|ll|}
\hline
1. &Call in Second Quarter & -1 points &   & ... \\
2. &Previous Call Was Successful & 5 points  & + & ... \\
3. &Calls Before Campaign Succeeded & -1 points & + & ...\\ 
4. &3 Month Euribor Rate $\geq$ 100 & -2 points & + & ...\\
5. &3 Month Euribor Rate $\geq$ 200 & -4 points & + & ...\\\hline
     &     & \multicolumn{1}{l|}{\textbf{SCORE}} & = &     \\ \hline
\end{tabular}
}
{
\risktable{}
\begin{tabular}{|r|c|c|c|c|c|c|c|c|c|c|}
\hline \rowcolor{scorecolor}\scorelabel{} & $\leq$-4 & -3 & -2 & -1 & 0 \\
\hline \rowcolor{riskcolor}\risklabel{}  & $\leq$ 11.4\% & 16.3\% & 22.6\% & 30.6\% & 39.9\%\\ 
\hline

\hline \rowcolor{scorecolor}\scorelabel{} & 1 & 2 & 3 & 4 & 5\\
\hline \rowcolor{riskcolor}\risklabel{}  & 50.0\% & 60.1\% & 69.4\% & 77.4\% & 83.7\%\\ 
\hline

\end{tabular}
}
\medskip\caption{\ourmethod{} model for the bank dataset, predicting if a person opens a bank account after a marketing call. The logistic loss on the training set is 9367.93. The AUCs on the training and test sets are 0.780 and 0.769, respectively.}

\label{fig:RiskScore_bank_modelSize_5_pool_8}
\end{table}


\begin{table}[htbp]
\centering
{
\begin{tabular}{|llr|ll|}
\hline
1. &Call in Second Quarter & -1 points &   & ... \\
2. &Any Prior Calls Before Campaign & 4 points  & + & ... \\
3. &Previous Marketing Campaign Failed & -5 points & + & ...\\ 
4. &3 Month Euribor Rate $\geq$ 100 & -2 points & + & ...\\
5. &3 Month Euribor Rate $\geq$ 200 & -4 points & + & ...\\\hline
     &     & \multicolumn{1}{l|}{\textbf{SCORE}} & = &     \\ \hline
\end{tabular}
}
{
\risktable{}
\begin{tabular}{|r|c|c|c|c|c|c|c|c|c|c|}
\hline \rowcolor{scorecolor}\scorelabel{} & $\leq$-5 & -4 & -3 & -2 & -1 \\
\hline \rowcolor{riskcolor}\risklabel{}  & $\leq$ 7.8\% & 11.4\% & 16.2\% & 22.6\% & 30.5\%\\ 
\hline

\hline \rowcolor{scorecolor}\scorelabel{} & 0 & 1 & 2 & 3 & 4 \\
\hline \rowcolor{riskcolor}\risklabel{}  & 39.9\% & 50.0\% & 60.1\% & 69.5\% & 77.4\%\\ 
\hline

\end{tabular}
}
\medskip\caption{\ourmethod{} model for the bank dataset, predicting if a person opens a bank account after a marketing call. The logistic loss on the training set is 9371.75. The AUCs on the training and test sets are 0.779 and 0.769, respectively.}

\label{fig:RiskScore_bank_modelSize_5_pool_9}
\end{table}


\begin{table}[htbp]
\centering
{
\begin{tabular}{|llr|ll|}
\hline
1. &Called via Landline Phone & -2 points &   & ... \\
2. &Previous Call Was Successful & 5 points  & + & ... \\
3. &Previous Marketing Campaign Failed & -2 points & + & ...\\ 
4. &3 Month Euribor Rate $\geq$ 100 & -4 points & + & ...\\
5. &3 Month Euribor Rate $\geq$ 200 & -4 points & + & ...\\\hline
     &     & \multicolumn{1}{l|}{\textbf{SCORE}} & = &     \\ \hline
\end{tabular}
}
{
\risktable{}
\begin{tabular}{|r|c|c|c|c|c|c|c|c|c|c|}
\hline \rowcolor{scorecolor}\scorelabel{} & $\leq$-6 & -5 & -4 & -3 & -2 \\
\hline \rowcolor{riskcolor}\risklabel{}  & $\leq$ 10.9\% & 14.2\% & 18.3\% & 23.2\% & 28.9\%\\ 
\hline

\hline \rowcolor{scorecolor}\scorelabel{} & -1 & 0 & 1 & 3 & 5 \\
\hline \rowcolor{riskcolor}\risklabel{}  & 35.4\% & 42.6\% & 50.0\% & 64.6\% & 76.8\%\\ 
\hline

\end{tabular}
}
\medskip\caption{\ourmethod{} model for the bank dataset, predicting if a person opens a bank account after a marketing call. The logistic loss on the training set is 9376.52. The AUCs on the training and test sets are 0.776 and 0.765, respectively.}

\label{fig:RiskScore_bank_modelSize_5_pool_10}
\end{table}


\begin{table}[htbp]
\centering
{
\begin{tabular}{|llr|ll|}
\hline
1. &Job Is Retired & 1 points &   & ... \\
2. &Call in Second Quarter & -2 points  & + & ... \\
3. &Previous Call Was Successful & 5 points & + & ...\\ 
4. &3 Month Euribor Rate $\geq$ 100 & -2 points & + & ...\\
5. &3 Month Euribor Rate $\geq$ 200 & -5 points & + & ...\\\hline
     &     & \multicolumn{1}{l|}{\textbf{SCORE}} & = &     \\ \hline
\end{tabular}
}
{
\risktable{}
\begin{tabular}{|r|c|c|c|c|c|c|c|c|c|c|}
\hline \rowcolor{scorecolor}\scorelabel{} & $\leq$-3 & -2 & -1 & 0 & 1\\
\hline \rowcolor{riskcolor}\risklabel{}  & $\leq$ 17.2\% & 22.2\% & 28.1\% & 34.8\% & 42.2\%\\ 
\hline

\hline \rowcolor{scorecolor}\scorelabel{} & 2 & 3 & 4 & 5 & 6 \\
\hline \rowcolor{riskcolor}\risklabel{}  & 50.0\% & 57.8\% & 65.2\% & 72.0\% & 77.8\% \\ 
\hline

\end{tabular}
}
\medskip\caption{\ourmethod{} model for the bank dataset, predicting if a person opens a bank account after a marketing call. The logistic loss on the training set is 9378.18. The AUCs on the training and test sets are 0.777 and 0.766, respectively.}

\label{fig:RiskScore_bank_modelSize_5_pool_11}
\end{table}


\begin{table}[htbp]
\centering
{
\begin{tabular}{|llr|ll|}
\hline
1. &Age $\geq$ 60 & 1 points &   & ... \\
2. &Call in Second Quarter & -2 points  & + & ... \\
3. &Previous Call Was Successful & 5 points & + & ...\\ 
4. &3 Month Euribor Rate $\geq$ 100 & -2 points & + & ...\\
5. &3 Month Euribor Rate $\geq$ 200 & -5 points & + & ...\\\hline
     &     & \multicolumn{1}{l|}{\textbf{SCORE}} & = &     \\ \hline
\end{tabular}
}
{
\risktable{}
\begin{tabular}{|r|c|c|c|c|c|c|c|c|c|c|}
\hline \rowcolor{scorecolor}\scorelabel{} & $\leq$-3 & -2 & -1  & 0 & 1\\
\hline \rowcolor{riskcolor}\risklabel{}  & $\leq$ 17.3\% & 22.2\% & 28.1\% & 34.8\% & 42.2\%\\ 
\hline

\hline \rowcolor{scorecolor}\scorelabel{} & 2 & 3 & 4 & 5 & 6\\
\hline \rowcolor{riskcolor}\risklabel{}  & 50.0\% & 57.8\% & 65.2\% & 71.9\% & 77.8\%\\ 
\hline


\end{tabular}
}
\medskip\caption{\ourmethod{} model for the bank dataset, predicting if a person opens a bank account after a marketing call. The logistic loss on the training set is 9378.68. The AUCs on the training and test sets are 0.777 and 0.767, respectively.}

\label{fig:RiskScore_bank_modelSize_5_pool_12}
\end{table}


\clearpage
\subsubsection{Examples from the Pool of Solutions (Mammo Dataset)}
\label{app:examples_from_the_pool_mammo}
The extra risk score examples from the pool of solutions are shown in Tables~\ref{fig:RiskScore_mammo_modelSize_5_pool_1}-\ref{fig:RiskScore_mammo_modelSize_5_pool_12}. All models were from the pool of the third fold on the mammo dataset, and we show the top 12 models, provided in  ascending order of the logistic loss on the training set (the model with the smallest logistic loss comes first).

\begin{table}[h]
\centering
{
\begin{tabular}{|llr|ll|}
\hline
1. &Oval Shape  & -2 points   &  &... \\
2. &Irregular Shape  & 4 points  & + & ... \\
3. &Circumscribed Margin  & -5 points & + & ...\\ 
4. &Spiculated Margin  & 2 points & + & ...\\
5. &Age $\geq$ 60   & 3 points & + & ...\\\hline
     &     & \multicolumn{1}{l|}{\textbf{SCORE}} & = &     \\ \hline
\end{tabular}
}
{
\risktable{}
\begin{tabular}{|r|c|c|c|c|c|c|}
\hline \rowcolor{scorecolor}\scorelabel{} & -7     & -5     & -4     & -3     & -2     & -1  \\
\hline 
\rowcolor{riskcolor}\risklabel{} & 6.0\%  & 10.6\% & 13.8\% & 17.9\% & 22.8\% & 28.6\%  \\ 
\hline
\end{tabular}
\begin{tabular}{|r|c|c|c|c|c|c|}
\hline \rowcolor{scorecolor}\scorelabel{} & 0 & 1      & 2      & 3      & 4      & $\geq$ 5  \\
\hline 
\rowcolor{riskcolor}\risklabel{} & 35.2\% & 42.4\% & 50.0\% & 57.6\% & 64.8\% & 71.4\% \\ 
\hline

\end{tabular}
}
\medskip\caption{\ourmethod{} model for the mammo dataset, predicting the risk of malignancy of a breast lesion. The logistic loss on the training set is 357.77. The AUCs on the training and test sets are 0.854 and 0.853, respectively.}  

\label{fig:RiskScore_mammo_modelSize_5_pool_1}
\end{table}

\begin{table}[h]
\centering
{
\begin{tabular}{|llr|ll|}
\hline
1. &Lobular Shape & 1 point\text{ } &   & ... \\
2. &Irregular Shape & 3 points  & + & ... \\
3. &Circumscribed Margin & -3 points & + & ...\\ 
4. &Spiculated Margin & 1 point\text{ } & + & ...\\
5. &Age $\geq$ 60 & 2 points & + & ...\\\hline
     &     & \multicolumn{1}{l|}{\textbf{SCORE}} & = &     \\ \hline
\end{tabular}
}
{
\risktable{}
\begin{tabular}{|r|c|c|c|c|c|c|}
\hline \rowcolor{scorecolor}\scorelabel{} & -3 & -2 & -1 & 0 & 1 \\
\hline \rowcolor{riskcolor}\risklabel{}  & 7.5\% & 11.8\% & 18.1\% & 26.8\% & 37.7\%\\ 
\hline

\hline \rowcolor{scorecolor}\scorelabel{} & 2 & 3 & 4 & 5 & 6 \\
\hline \rowcolor{riskcolor}\risklabel{}  & 50.0\% & 62.3\% & 73.2\% & 81.9\% & 88.2\%\\ 
\hline

\end{tabular}
}
\medskip\caption{\ourmethod{} model for the mammo dataset, predicting the risk of malignancy of a breast lesion. The logistic loss on the training set is 357.86. The AUCs on the training and test sets are 0.854 and 0.857, respectively.}  

\label{fig:RiskScore_mammo_modelSize_5_pool_2}
\end{table}

\begin{table}[h]
\centering
{
\begin{tabular}{|llr|ll|}
\hline
1. & Lobular Shape & 2 points &   & ... \\
2. & Irregular Shape & 5 points  & + & ... \\
3. & Circumscribed Margin & -4 points & + & ...\\ 
4. & Age $<$ 30 & -5 points & + & ...\\
5. & Age $\geq$ 60 & 3 points & + & ...\\\hline
     &     & \multicolumn{1}{l|}{\textbf{SCORE}} & = &     \\ \hline
\end{tabular}
}
{
\risktable{}
\begin{tabular}{|r|c|c|c|c|c|c|c|c|c|}
\hline \rowcolor{scorecolor}\scorelabel{} & -9 & -7 & -6 & -5 & -4 & -3 & -2 & -1 & 0 \\
\hline \rowcolor{riskcolor}\risklabel{}  & 1.3\% & 2.6\% & 3.7\% & 5.2\% & 7.3\% & 10.1\% & 14.0\% & 18.9\% & 25.1\%\\ 
\hline

\hline \rowcolor{scorecolor}\scorelabel{} & 1 & 2 & 3 & 4 & 5 & 6 & 7 & 8 & 10 \\
\hline \rowcolor{riskcolor}\risklabel{}  & 32.6\% & 41.0\% & 50.0\% & 59.0\% & 67.4\% & 74.9\% & 81.1\% & 86.0\% & 92.7\% \\ 
\hline

\end{tabular}
}
\medskip\caption{\ourmethod{} model for the mammo dataset, predicting the risk of malignancy of a breast lesion. The logistic loss on the training set is 358.24. The AUCs on the training and test sets are 0.852 and 0.854, respectively.}  



\label{fig:RiskScore_mammo_modelSize_5_pool_3}
\end{table}

\begin{table}[htbp]
\centering
{
\begin{tabular}{|llr|ll|}
\hline
1. & Lobular Shape & 2 points &   & ... \\
2. & Irregular Shape & 5 points  & + & ... \\
3. & Circumscribed Margin & -4 points & + & ...\\ 
4. & Age $\geq$ 30 & 5 points & + & ...\\
5. & Age $\geq$ 60 & 3 points & + & ...\\\hline
     &     & \multicolumn{1}{l|}{\textbf{SCORE}} & = &     \\ \hline
\end{tabular}
}
{
\risktable{}
\begin{tabular}{|r|c|c|c|c|c|c|c|c|c|}
\hline \rowcolor{scorecolor}\scorelabel{} & -4 & -2 & -1 & 0 & 1 & 2 & 3 & 4 & 5  \\
\hline \rowcolor{riskcolor}\risklabel{}  & 1.3\% & 2.6\% & 3.7\% & 5.2\% & 7.3\% & 10.1\% & 14.0\% & 18.9\% & 25.1\%\\ 
\hline

\hline \rowcolor{scorecolor}\scorelabel{} & 6 & 7 & 8 & 9 & 10 & 11 & 12 & 13 & 15  \\
\hline \rowcolor{riskcolor}\risklabel{}  & 32.6\% & 41.0\% & 50.0\% & 59.0\% & 67.4\% & 74.9\% & 81.1\% & 86.0\% & 92.7\% \\ 
\hline

\end{tabular}
}
\medskip\caption{\ourmethod{} model for the mammo dataset, predicting the risk of malignancy of a breast lesion. The logistic loss on the training set is 358.24. The AUCs on the training and test sets are 0.852 and 0.854, respectively.}  

\label{fig:RiskScore_mammo_modelSize_5_pool_4}
\end{table}

\begin{table}[htbp]
\centering
{
\begin{tabular}{|llr|ll|}
\hline
1. & Irregular Shape & 2 points &   & ... \\
2. & Circumscribed Margin & -2 points  & + & ... \\
3. & Spiculated Margin & 1 point\text{ } & + & ...\\ 
4. & Age $\geq$ 30 & 2 points & + & ...\\
5. & Age $\geq$ 60 & 1 point\text{ } & + & ...\\\hline
     &     & \multicolumn{1}{l|}{\textbf{SCORE}} & = &     \\ \hline
\end{tabular}
}
{
\risktable{}
\begin{tabular}{|r|c|c|c|c|c|c|}
\hline \rowcolor{scorecolor}\scorelabel{} & -2 & -1 & 0 & 1 & 2 \\
\hline \rowcolor{riskcolor}\risklabel{}  & 2.3\% & 4.7\% & 9.5\% & 18.2\% & 32.0\%\\ 
\hline
\end{tabular} \\
\begin{tabular}{|r|c|c|c|c|}
\hline \rowcolor{scorecolor}\scorelabel{} & 3 & 4 & 5 & 6 \\
\hline \rowcolor{riskcolor}\risklabel{}  & 50.0\% & 68.0\% & 81.8\% & 90.5\% \\ 
\hline

\end{tabular}
}
\medskip\caption{\ourmethod{} model for the mammo dataset, predicting the risk of malignancy of a breast lesion. The logistic loss on the training set is 358.59. The AUCs on the training and test sets are 0.852 and 0.857, respectively.}  

\label{fig:RiskScore_mammo_modelSize_5_pool_5}
\end{table}

\begin{table}[htbp]
\centering
{
\begin{tabular}{|llr|ll|}
\hline
1. & Irregular Shape & 2 points &   & ... \\
2. & Circumscribed Margin & -2 points  & + & ... \\
3. & Spiculated Margin & 1 point\text{ } & + & ...\\ 
4. & Age $<$ 30 &  -2 points & + & ...\\
5. & Age $\geq$ 60 & 1 point\text{ } & + & ...\\\hline
     &     & \multicolumn{1}{l|}{\textbf{SCORE}} & = &     \\ \hline
\end{tabular}
}
{
\risktable{}
\begin{tabular}{|r|c|c|c|c|c|c|}
\hline \rowcolor{scorecolor}\scorelabel{} & -4 & -3 & -2 & -1 & 0 \\
\hline \rowcolor{riskcolor}\risklabel{}  & 2.3\% & 4.7\% & 9.5\% & 18.2\% & 32.0\%\\ 
\hline
\end{tabular} \\
\begin{tabular}{|r|c|c|c|c|}
\hline \rowcolor{scorecolor}\scorelabel{} & 1 & 2 & 3 & 4 \\
\hline \rowcolor{riskcolor}\risklabel{}  & 50.0\% & 68.0\% & 81.8\% & 90.5\% \\ 
\hline

\end{tabular}
}
\medskip\caption{\ourmethod{} model for the mammo dataset, predicting the risk of malignancy of a breast lesion. The logistic loss on the training set is 358.59. The AUCs on the training and test sets are 0.852 and 0.857, respectively.}  

\label{fig:RiskScore_mammo_modelSize_5_pool_6}
\end{table}

\begin{table}[htbp]
\centering
{
\begin{tabular}{|llr|ll|}
\hline
1. & Lobular Shape & 2 points &   & ... \\
2. & Irregular Shape & 5 points  & + & ... \\
3. & Circumscribed Margin & -4 points & + & ...\\ 
4. & Obscure Margin & -1 point\text{ } & + & ...\\
5. & Age $\geq$ 60 & 3 points & + & ...\\\hline
     &     & \multicolumn{1}{l|}{\textbf{SCORE}} & = &     \\ \hline
\end{tabular}
}
{
\risktable{}
\begin{tabular}{|r|c|c|c|c|c|c|c|c|}
\hline \rowcolor{scorecolor}\scorelabel{} & -5 & -4 & -3 & -2 & -1 & 0 & 1 & 2\\
\hline \rowcolor{riskcolor}\risklabel{}  & 5.3\% & 7.4\% & 10.3\% & 14.1\% & 19.1\% & 25.3\% & 32.7\% & 41.1\%\\ 
\hline

\hline \rowcolor{scorecolor}\scorelabel{}  & 3 & 4 & 5 & 6 & 7 & 8 & 9 & 10\\
\hline \rowcolor{riskcolor}\risklabel{}  & 50.0\% & 58.9\% & 67.3\% & 74.7\% & 80.9\% & 85.9\% & 89.7\% & 92.6\%\\ 
\hline

\end{tabular}
}
\medskip\caption{\ourmethod{} model for the mammo dataset, predicting the risk of malignancy of a breast lesion. The logistic loss on the training set is 358.71. The AUCs on the training and test sets are 0.852 and 0.857, respectively.}  

\label{fig:RiskScore_mammo_modelSize_5_pool_7}
\end{table}

\begin{table}[htbp]
\centering
{
\begin{tabular}{|llr|ll|}
\hline
1. & Irregular Shape & 5 points &   & ... \\
2. & Circumscribed Margin & -5 points  & + & ... \\
3. & Microlobulated Margin & 2 points & + & ...\\ 
4. & Spiculated Margin & 2 points & + & ...\\
5. & Age $\geq$ 60 & 3 points & + & ...\\\hline
     &     & \multicolumn{1}{l|}{\textbf{SCORE}} & = &     \\ \hline
\end{tabular}
}
{
\risktable{}
\begin{tabular}{|r|c|c|c|c|c|c|c|}
\hline \rowcolor{scorecolor}\scorelabel{} & -5 & -3 & -2 & -1 & 0 & 2 & 3 \\
\hline \rowcolor{riskcolor}\risklabel{}  & 8.6\% & 14.6\% & 18.6\% & 23.5\% & 29.2\% & 42.7\% & 50.0\%\\ 
\hline

\hline \rowcolor{scorecolor}\scorelabel{} & 4 & 5 & 7 & 8 & 9 & 10 & 12 \\
\hline \rowcolor{riskcolor}\risklabel{}  & 57.3\% & 64.3\% & 76.5\% & 81.4\% & 85.4\% & 88.7\% & 93.4\%\\ 
\hline

\end{tabular}
}
\medskip\caption{\ourmethod{} model for the mammo dataset, predicting the risk of malignancy of a breast lesion. The logistic loss on the training set is 358.98. The AUCs on the training and test sets are 0.852 and 0.852, respectively.}  

\label{fig:RiskScore_mammo_modelSize_5_pool_8}
\end{table}

\begin{table}[htbp]
\centering
{
\begin{tabular}{|llr|ll|}
\hline
1. & Irregular Shape & 4 points &   & ... \\
2. & Circumscribed Margin & -5 points  & + & ... \\
3. & SpiculatedMargin & 2 points & + & ...\\ 
4. & Age $\geq$ 45 & 1 point\text{ } & + & ...\\
5. & Age $\geq$ 60 & 3 points & + & ...\\\hline
     &     & \multicolumn{1}{l|}{\textbf{SCORE}} & = &     \\ \hline
\end{tabular}
}
{
\risktable{}
\begin{tabular}{|r|c|c|c|c|c|c|c|c|}
\hline \rowcolor{scorecolor}\scorelabel{} & -5 & -4 & -3 & -2 & -1 & 0 & 1 & 2  \\
\hline \rowcolor{riskcolor}\risklabel{}  & 7.3\% & 9.7\% & 12.9\% & 16.9\% & 21.9\% & 27.8\% & 34.6\% & 42.1\% \\ 
\hline

\hline \rowcolor{scorecolor}\scorelabel{} & 3 & 4 & 5 & 6 & 7 & 8 & 9 & 10  \\
\hline \rowcolor{riskcolor}\risklabel{}  & 50.0\% & 57.9\% & 65.4\% & 72.2\% & 78.1\% & 83.1\% & 87.1\% & 90.3\% \\ 
\hline

\end{tabular}
}
\medskip\caption{\ourmethod{} model for the mammo dataset, predicting the risk of malignancy of a breast lesion. The logistic loss on the training set is 359.10. The AUCs on the training and test sets are 0.855 and 0.859, respectively.}  

\label{fig:RiskScore_mammo_modelSize_5_pool_9}
\end{table}

\begin{table}[htbp]
\centering
{
\begin{tabular}{|llr|ll|}
\hline
1. & Irregular Shape & 4 points &   & ... \\
2. & Circumscribed Margin & -5 points  & + & ... \\
3. & Obscure Margin &  -1 points & + & ...\\ 
4. & Spiculated Margin &  2 points & + & ...\\
5. & Age $\geq$ 60 & 3 points & + & ...\\\hline
     &     & \multicolumn{1}{l|}{\textbf{SCORE}} & = &     \\ \hline
\end{tabular}
}
{
\risktable{}
\begin{tabular}{|r|c|c|c|c|c|c|c|c|}
\hline \rowcolor{scorecolor}\scorelabel{} & -6 & -5 & -4 & -3 & -2 & -1 & 0 & 1  \\
\hline \rowcolor{riskcolor}\risklabel{}  & 6.8\% & 9.2\% & 12.3\% & 16.3\% & 21.3\% & 27.3\% & 34.2\% & 41.9\%\\ 
\hline

\hline \rowcolor{scorecolor}\scorelabel{} & 2 & 3 & 4 & 5 & 6 & 7 & 8 & 9  \\
\hline \rowcolor{riskcolor}\risklabel{}  & 50.0\% & 58.1\% & 65.8\% & 72.7\% & 78.7\% & 83.7\% & 87.7\% & 90.8\% \\ 
\hline

\end{tabular}
}
\medskip\caption{\ourmethod{} model for the mammo dataset, predicting the risk of malignancy of a breast lesion. The logistic loss on the training set is 359.34. The AUCs on the training and test sets are 0.852 and 0.862, respectively.}  

\label{fig:RiskScore_mammo_modelSize_5_pool_10}
\end{table}

\begin{table}[htbp]
\centering
{
\begin{tabular}{|llr|ll|}
\hline
1. & Oval Shape & -1 point\text{ } &   & ... \\
2. & Lobular Shape & 1 point\text{ }  & + & ... \\
3. & Irregular Shape & 4 points & + & ...\\ 
4. & Circumscribed Margin & -4 points & + & ...\\
5. & Age $\geq$ 60 & 3 points & + & ...\\\hline
     &     & \multicolumn{1}{l|}{\textbf{SCORE}} & = &     \\ \hline
\end{tabular}
}
{
\risktable{}
\begin{tabular}{|r|c|c|c|c|c|c|c|}
\hline \rowcolor{scorecolor}\scorelabel{} & -5 & -4 & -3 & -2 & -1 & 0 & 1  \\
\hline \rowcolor{riskcolor}\risklabel{}  & 7.0\% & 9.8\% & 13.6\% & 18.5\% & 24.8\% & 32.3\% & 40.8\%\\ 
\hline

\hline \rowcolor{scorecolor}\scorelabel{} & 2 & 3 & 4 & 5 & 6 & 7 & 8 \\
\hline \rowcolor{riskcolor}\risklabel{}  & 50.0\% & 59.2\% & 67.7\% & 75.2\% & 81.5\% & 86.4\% & 90.2\% \\ 
\hline

\end{tabular}
}
\medskip\caption{\ourmethod{} model for the mammo dataset, predicting the risk of malignancy of a breast lesion. The logistic loss on the training set is 359.53. The AUCs on the training and test sets are 0.850 and 0.849, respectively.}  

\label{fig:RiskScore_mammo_modelSize_5_pool_11}
\end{table}

\begin{table}[htbp]
\centering
{
\begin{tabular}{|llr|ll|}
\hline
1. & Lobular Shape & 1 point\text{ } &   & ... \\
2. & Irregular Shape & 4 points  & + & ... \\
3. & Circumscribed Margin & -3 points & + & ...\\ 
4. & Age $\geq$ 45 & 1 point\text{ } & + & ...\\
5. & Age $\geq$ 60 & 2 points & + & ...\\\hline
     &     & \multicolumn{1}{l|}{\textbf{SCORE}} & = &     \\ \hline
\end{tabular}
}
{
\risktable{}
\begin{tabular}{|r|c|c|c|c|c|c|}
\hline \rowcolor{scorecolor}\scorelabel{} & -3 & -2 & -1 & 0 & 1 & 2  \\
\hline \rowcolor{riskcolor}\risklabel{}  & 6.3\% & 9.5\% & 14.1\% & 20.5\% & 28.9\% & 38.9\% \\ 
\hline

\hline \rowcolor{scorecolor}\scorelabel{} & 3 & 4 & 5 & 6 & 7 & 8 \\
\hline \rowcolor{riskcolor}\risklabel{}  & 50.0\% & 61.1\% & 71.1\% & 79.5\% & 85.9\% & 90.5\% \\ 
\hline

\end{tabular}
}
\medskip\caption{\ourmethod{} model for the mammo dataset, predicting the risk of malignancy of a breast lesion. The logistic loss on the training set is 359.53. The AUCs on the training and test sets are 0.852 and 0.850, respectively.}  

\label{fig:RiskScore_mammo_modelSize_5_pool_12}
\end{table}

\clearpage
\subsubsection{Examples from the Pool of Solutions (Netherlands Dataset)}
\label{app:examples_from_the_pool_netherlands}
The extra risk score examples from the pool of solutions are shown in Tables~\ref{fig:RiskScore_netherlands_modelSize_5_pool_1}-\ref{fig:RiskScore_netherlands_modelSize_5_pool_12}. All models were from the pool of the third fold on the Netherlands dataset, and we show the top 12 models, provided in  ascending order of the logistic loss on the training set (the model with the smallest logistic loss comes first).

\begin{table}[h]
\centering
{
\begin{tabular}{|llr|ll|}
\hline
1. & \# of previous penal cases $\leq$ 2 & -2 points &   & ... \\
2. & age in years $\leq$ 38.052 & 1 point\text{ } & + & ... \\
3. & age at first penal case $\leq$ 22.633 & 1 point\text{ } & + & ... \\
4. & previous case $\leq$ 10 or > 20 & -3 points & + & ... \\
5. & previouse case $\leq$ 20 & -5 points & + & ... \\ \hline
     &     & \multicolumn{1}{l|}{\textbf{SCORE}} & = &     \\ \hline
\end{tabular}
}
{
\risktable{}
\begin{tabular}{|r|c|c|c|c|c|c|c|}
\hline \rowcolor{scorecolor}\scorelabel{} & -10 & -9 & -8 & -7 & -6 & -5 & -4\\
\hline
\rowcolor{riskcolor}\risklabel{} & 14.9\% & 23.8\% & 35.8\% & 50.0\% & 64.2\% & 76.2\% & 85.1\% \\ 
\hline
\end{tabular} \\
\begin{tabular}{|r|c|c|c|c|c|c|}
\hline \rowcolor{scorecolor}\scorelabel{} & -3 & -2 & -1 & 0 & 1 & 2 \\
\hline
\rowcolor{riskcolor}\risklabel{} & 91.1\% & 94.8\% & 97.0\% & 98.3\% & 99.1\% & 99.5\% \\ 
\hline

\end{tabular}
}
\medskip\caption{\ourmethod{} model for the Netherlands dataset, predicting whether defendants have any type of charge within four years. The logistic loss on the training set is 9226.84. The AUCs on the training and test sets are 0.743 and 0.742, respectively.}  



\label{fig:RiskScore_netherlands_modelSize_5_pool_1}
\end{table}

\begin{table}[h]
\centering
{
\begin{tabular}{|llr|ll|}
\hline
1. & \# of previous penal cases $\leq$ 1 & -1 point\text{ } &   & ... \\
2. & \# of previous penal cases $\leq$ 3 & -1 point\text{ } & + & ... \\
3. & age in years $\leq$ 38.052 & 1 point\text{ } & + & ... \\
4. & previous case $\leq$ 10 or > 20 & -3 points & + & ... \\
5. & previouse case $\leq$ 20 & -3 points & + & ... \\ \hline
     &     & \multicolumn{1}{l|}{\textbf{SCORE}} & = &     \\ \hline
\end{tabular}
}
{
\risktable{}
\begin{tabular}{|r|c|c|c|c|c|}
\hline \rowcolor{scorecolor}\scorelabel{} & -8 & -7 & -6 & -5 & -4\\
\hline
\rowcolor{riskcolor}\risklabel{} & 12.4\% & 27.4\% & 50.0\% & 72.6\% & 87.6\% \\ 
\hline
\hline \rowcolor{scorecolor}\scorelabel{} & -3 & -2 & -1 & 0 & 1  \\
\hline
\rowcolor{riskcolor}\risklabel{} & 94.9\% & 98.0\% & 99.2\% & 99.7\% & 99.9\%  \\
\hline

\end{tabular}
}
\medskip\caption{\ourmethod{} model for the Netherlands dataset, predicting whether defendants have any type of charge within four years. The logistic loss on the training set is 9232.51. The AUCs on the training and test sets are 0.744 and 0.739, respectively.}  

\label{fig:RiskScore_netherlands_modelSize_5_pool_2}
\end{table}

\begin{table}[h]
\centering
{
\begin{tabular}{|llr|ll|}
\hline
1. & \# of previous penal cases $\leq$ 3 & -2 points &   & ... \\
2. & age in years $\leq$ 38.052 & 1 point\text{ } & + & ... \\
3. & age at first penal case $\leq$ 23.265 & 1 point\text{ } & + & ... \\
4. & previous case $\leq$ 10 or > 20 & -3 points & + & ... \\
5. & previouse case $\leq$ 20 & -5 points & + & ... \\ \hline
     &     & \multicolumn{1}{l|}{\textbf{SCORE}} & = &     \\ \hline
\end{tabular}
}
{
\risktable{}
\begin{tabular}{|r|c|c|c|c|c|c|c|}
\hline \rowcolor{scorecolor}\scorelabel{} & -10 & -9 & -8 & -7 & -6 & -5 & -4\\
\hline
\rowcolor{riskcolor}\risklabel{} & 16.1\% & 24.9\% & 36.6\% & 50.0\% & 63.4\% & 75.1\% & 83.9\% \\ 
\hline
\end{tabular} \\
\begin{tabular}{|r|c|c|c|c|c|c|}
\hline \rowcolor{scorecolor}\scorelabel{} & -3 & -2 & -1 & 0 & 1 & 2 \\
\hline
\rowcolor{riskcolor}\risklabel{} & 90.1\% & 94.0\% & 96.5\% & 97.9\% & 98.8\% & 99.3\% \\ 
\hline

\end{tabular}
}
\medskip\caption{\ourmethod{} model for the Netherlands dataset, predicting whether defendants have any type of charge within four years. The logistic loss on the training set is 9250.94. The AUCs on the training and test sets are 0.739 and 0.739, respectively.}  

\label{fig:RiskScore_netherlands_modelSize_5_pool_3}
\end{table}

\begin{table}[htbp]
\centering
{
\begin{tabular}{|llr|ll|}
\hline
1. & \# of previous penal cases $\leq$ 3 & -2 points &   & ... \\
2. & age in years $\leq$ 38.052 & 1 point\text{ } & + & ... \\
3. & age at first penal case $\leq$ 22.989 & 1 point\text{ } & + & ... \\
4. & previous case $\leq$ 10 or > 20 & -3 points & + & ... \\
5. & previouse case $\leq$ 20 & -5 points & + & ... \\ \hline
     &     & \multicolumn{1}{l|}{\textbf{SCORE}} & = &     \\ \hline
\end{tabular}
}
{
\risktable{}
\begin{tabular}{|r|c|c|c|c|c|c|c|}
\hline \rowcolor{scorecolor}\scorelabel{} & -10 & -9 & -8 & -7 & -6 & -5 & -4\\
\hline
\rowcolor{riskcolor}\risklabel{} & 16.1\% & 25.0\% & 36.6\% & 50.0\% & 63.4\% & 75.0\% & 83.9\% \\ 
\hline
\end{tabular} \\
\begin{tabular}{|r|c|c|c|c|c|c|}
\hline \rowcolor{scorecolor}\scorelabel{} & -3 & -2 & -1 & 0 & 1 & 2 \\
\hline
\rowcolor{riskcolor}\risklabel{} & 90.0\% & 94.0\% & 96.5\% & 97.9\% & 98.8\% & 99.3\%  \\ 
\hline

\end{tabular}
}
\medskip\caption{\ourmethod{} model for the Netherlands dataset, predicting whether defendants have any type of charge within four years. The logistic loss on the training set is 9250.95. The AUCs on the training and test sets are 0.738 and 0.739, respectively. Note that this risk score is slightly different from that of Table \ref{fig:RiskScore_netherlands_modelSize_5_pool_3}
in Condition 3.}  

\label{fig:RiskScore_netherlands_modelSize_5_pool_4}
\end{table}

\begin{table}[htbp]
\centering
{
\begin{tabular}{|llr|ll|}
\hline
1. & \# of previous penal cases $\leq$ 3 & -2 points &   & ... \\
2. & age in years $\leq$ 38.052 & 1 point\text{ } & + & ... \\
3. & age at first penal case $\leq$ 23.283 & 1 point\text{ } & + & ... \\
4. & previous case $\leq$ 10 or > 20 & -3 points & + & ... \\
5. & previouse case $\leq$ 20 & -5 points & + & ... \\ \hline
     &     & \multicolumn{1}{l|}{\textbf{SCORE}} & = &     \\ \hline
\end{tabular}
}
{
\risktable{}
\begin{tabular}{|r|c|c|c|c|c|c|c|}
\hline \rowcolor{scorecolor}\scorelabel{} & -10 & -9 & -8 & -7 & -6 & -5 & -4\\
\hline
\rowcolor{riskcolor}\risklabel{} & 16.0\% & 24.9\% & 36.6\% & 50.0\% & 63.4\% & 75.1\% & 84.0\% \\ 
\hline
\end{tabular} \\
\begin{tabular}{|r|c|c|c|c|c|c|}
\hline \rowcolor{scorecolor}\scorelabel{} & -3 & -2 & -1 & 0 & 1 & 2 \\
\hline
\rowcolor{riskcolor}\risklabel{} & 90.1\% & 94.0\% & 96.5\% & 97.9\% & 98.8\% & 99.3\%  \\ 
\hline

\end{tabular}
}
\medskip\caption{\ourmethod{} model for the Netherlands dataset, predicting whether defendants have any type of charge within four years. The logistic loss on the training set is 9251.14. The AUCs on the training and test sets are 0.739 and 0.739, respectively. Note that this risk score is slightly different from that of Table \ref{fig:RiskScore_netherlands_modelSize_5_pool_3}
in Condition 3.}  

\label{fig:RiskScore_netherlands_modelSize_5_pool_5}
\end{table}

\begin{table}[htbp]
\centering
{
\begin{tabular}{|llr|ll|}
\hline
1. & \# of previous penal cases $\leq$ 3 & -2 points &   & ... \\
2. & age in years $\leq$ 38.052 & 1 point\text{ } & + & ... \\
3. & age at first penal case $\leq$ 22.934 & 1 point\text{ } & + & ... \\
4. & previous case $\leq$ 10 or > 20 & -3 points & + & ... \\
5. & previouse case $\leq$ 20 & -5 points & + & ... \\ \hline
     &     & \multicolumn{1}{l|}{\textbf{SCORE}} & = &     \\ \hline
\end{tabular}
}
{
\risktable{}
\begin{tabular}{|r|c|c|c|c|c|c|c|}
\hline \rowcolor{scorecolor}\scorelabel{} & -10 & -9 & -8 & -7 & -6 & -5 & -4\\
\hline
\rowcolor{riskcolor}\risklabel{} & 16.1\% & 25.0\% & 36.6\% & 50.0\% & 63.4\% & 75.0\% & 83.9\% \\ 
\hline
\end{tabular} \\
\begin{tabular}{|r|c|c|c|c|c|c|}
\hline \rowcolor{scorecolor}\scorelabel{} & -3 & -2 & -1 & 0 & 1 & 2 \\
\hline
\rowcolor{riskcolor}\risklabel{} & 90.0\% & 94.0\% & 96.5\% & 97.9\% & 98.8\% & 99.3\%  \\ 
\hline

\end{tabular}
}
\medskip\caption{\ourmethod{} model for the Netherlands dataset, predicting whether defendants have any type of charge within four years. The logistic loss on the training set is 9251.39. The AUCs on the training and test sets are 0.739 and 0.740, respectively. Note that this risk score is slightly different from that of Table \ref{fig:RiskScore_netherlands_modelSize_5_pool_3}
in Condition 3.}  

\label{fig:RiskScore_netherlands_modelSize_5_pool_6}
\end{table}

\begin{table}[htbp]
\centering
{
\begin{tabular}{|llr|ll|}
\hline
1. & \# of previous penal cases $\leq$ 3 & -2 points &   & ... \\
2. & age in years $\leq$ 38.052 & 1 point\text{ } & + & ... \\
3. & age at first penal case $\leq$ 22.907 & 1 point\text{ } & + & ... \\
4. & previous case $\leq$ 10 or > 20 & -3 points & + & ... \\
5. & previouse case $\leq$ 20 & -5 points & + & ... \\ \hline
     &     & \multicolumn{1}{l|}{\textbf{SCORE}} & = &     \\ \hline
\end{tabular}
}
{
\risktable{}
\begin{tabular}{|r|c|c|c|c|c|c|c|}
\hline \rowcolor{scorecolor}\scorelabel{} & -10 & -9 & -8 & -7 & -6 & -5 & -4\\
\hline
\rowcolor{riskcolor}\risklabel{} & 16.1\% & 24.9\% & 36.6\% & 50.0\% & 63.4\% & 75.1\% & 83.9\% \\ 
\hline
\end{tabular} \\
\begin{tabular}{|r|c|c|c|c|c|c|}
\hline \rowcolor{scorecolor}\scorelabel{} & -3 & -2 & -1 & 0 & 1 & 2 \\
\hline
\rowcolor{riskcolor}\risklabel{} & 90.0\% & 94.0\% & 96.5\% & 97.9\% & 98.8\% & 99.3\%  \\ 
\hline

\end{tabular}
}
\medskip\caption{\ourmethod{} model for the Netherlands dataset, predicting whether defendants have any type of charge within four years. The logistic loss on the training set is 9251.53. The AUCs on the training and test sets are 0.739 and 0.740, respectively. Note that this risk score is slightly different from that of Table \ref{fig:RiskScore_netherlands_modelSize_5_pool_3}
in Condition 3.}  

\label{fig:RiskScore_netherlands_modelSize_5_pool_7}
\end{table}

\begin{table}[htbp]
\centering
{
\begin{tabular}{|llr|ll|}
\hline
1. & \# of previous penal cases $\leq$ 3 & -2 points &   & ... \\
2. & age in years $\leq$ 38.052 & 1 point\text{ } & + & ... \\
3. & age at first penal case $\leq$ 23.328 & 1 point\text{ } & + & ... \\
4. & previous case $\leq$ 10 or > 20 & -3 points & + & ... \\
5. & previouse case $\leq$ 20 & -5 points & + & ... \\ \hline
     &     & \multicolumn{1}{l|}{\textbf{SCORE}} & = &     \\ \hline
\end{tabular}
}
{
\risktable{}
\begin{tabular}{|r|c|c|c|c|c|c|c|}
\hline \rowcolor{scorecolor}\scorelabel{} & -10 & -9 & -8 & -7 & -6 & -5 & -4\\
\hline
\rowcolor{riskcolor}\risklabel{} & 16.0\% & 24.9\% & 36.5\% & 50.0\% & 63.5\% & 75.1\% & 84.0\% \\ 
\hline
\end{tabular} \\
\begin{tabular}{|r|c|c|c|c|c|c|}
\hline \rowcolor{scorecolor}\scorelabel{} & -3 & -2 & -1 & 0 & 1 & 2 \\
\hline
\rowcolor{riskcolor}\risklabel{} & 90.1\% & 94.0\% & 96.5\% & 97.9\% & 98.8\% & 99.3\%  \\ 
\hline

\end{tabular}
}
\medskip\caption{\ourmethod{} model for the Netherlands dataset, predicting whether defendants have any type of charge within four years. The logistic loss on the training set is 9252.07. The AUCs on the training and test sets are 0.738 and 0.739, respectively. Note that this risk score is slightly different from that of Table \ref{fig:RiskScore_netherlands_modelSize_5_pool_3}
in Condition 3.}  

\label{fig:RiskScore_netherlands_modelSize_5_pool_8}
\end{table}

\begin{table}[htbp]
\centering
{
\begin{tabular}{|llr|ll|}
\hline
1. & \# of previous penal cases $\leq$ 3 & -2 points &   & ... \\
2. & age in years $\leq$ 38.052 & 1 point\text{ } & + & ... \\
3. & age at first penal case $\leq$ 22.965 & 1 point\text{ } & + & ... \\
4. & previous case $\leq$ 10 or > 20 & -3 points & + & ... \\
5. & previouse case $\leq$ 20 & -5 points & + & ... \\ \hline
     &     & \multicolumn{1}{l|}{\textbf{SCORE}} & = &     \\ \hline
\end{tabular}
}
{
\risktable{}
\begin{tabular}{|r|c|c|c|c|c|c|c|}
\hline \rowcolor{scorecolor}\scorelabel{} & -10 & -9 & -8 & -7 & -6 & -5 & -4\\
\hline
\rowcolor{riskcolor}\risklabel{} & 16.1\% & 24.9\% & 36.6\% & 50.0\% & 63.4\% & 75.1\% & 83.9\% \\ 
\hline
\end{tabular} \\
\begin{tabular}{|r|c|c|c|c|c|c|}
\hline \rowcolor{scorecolor}\scorelabel{} & -3 & -2 & -1 & 0 & 1 & 2 \\
\hline
\rowcolor{riskcolor}\risklabel{} & 90.1\% & 94.0\% & 96.5\% & 97.9\% & 98.8\% & 99.3\%  \\ 
\hline

\end{tabular}
}
\medskip\caption{\ourmethod{} model for the Netherlands dataset, predicting whether defendants have any type of charge within four years. The logistic loss on the training set is 9252.13. The AUCs on the training and test sets are 0.738 and 0.740, respectively. Note that this risk score is slightly different from that of Table \ref{fig:RiskScore_netherlands_modelSize_5_pool_3}
in Condition 3.}  

\label{fig:RiskScore_netherlands_modelSize_5_pool_9}
\end{table}

\begin{table}[htbp]
\centering
{
\begin{tabular}{|llr|ll|}
\hline
1. & \# of previous penal cases $\leq$ 3 & -2 points &   & ... \\
2. & age in years $\leq$ 38.052 & 1 point\text{ } & + & ... \\
3. & age at first penal case $\leq$ 22.830 & 1 point\text{ } & + & ... \\
4. & previous case $\leq$ 10 or > 20 & -3 points & + & ... \\
5. & previouse case $\leq$ 20 & -5 points & + & ... \\ \hline
     &     & \multicolumn{1}{l|}{\textbf{SCORE}} & = &     \\ \hline
\end{tabular}
}
{
\risktable{}
\begin{tabular}{|r|c|c|c|c|c|c|c|}
\hline \rowcolor{scorecolor}\scorelabel{} & -10 & -9 & -8 & -7 & -6 & -5 & -4\\
\hline
\rowcolor{riskcolor}\risklabel{} & 16.1\% & 24.9\% & 36.6\% & 50.0\% & 63.4\% & 75.1\% & 83.9\% \\ 
\hline
\end{tabular} \\
\begin{tabular}{|r|c|c|c|c|c|c|}
\hline \rowcolor{scorecolor}\scorelabel{} & -3 & -2 & -1 & 0 & 1 & 2 \\
\hline
\rowcolor{riskcolor}\risklabel{} & 90.1\% & 94.0\% & 96.5\% & 97.9\% & 98.8\% & 99.3\%  \\ 
\hline

\end{tabular}
}
\medskip\caption{\ourmethod{} model for the Netherlands dataset, predicting whether defendants have any type of charge within four years. The logistic loss on the training set is 9252.19. The AUCs on the training and test sets are 0.739 and 0.740, respectively. Note that this risk score is slightly different from that of Table \ref{fig:RiskScore_netherlands_modelSize_5_pool_3}
in Condition 3.}  

\label{fig:RiskScore_netherlands_modelSize_5_pool_10}
\end{table}

\begin{table}[htbp]
\centering
{
\begin{tabular}{|llr|ll|}
\hline
1. & \# of previous penal cases $\leq$ 3 & -2 points &   & ... \\
2. & age in years $\leq$ 38.052 & 1 point\text{ } & + & ... \\
3. & age at first penal case $\leq$ 22.870 & 1 point\text{ } & + & ... \\
4. & previous case $\leq$ 10 or > 20 & -3 points & + & ... \\
5. & previouse case $\leq$ 20 & -5 points & + & ... \\ \hline
     &     & \multicolumn{1}{l|}{\textbf{SCORE}} & = &     \\ \hline
\end{tabular}
}
{
\risktable{}
\begin{tabular}{|r|c|c|c|c|c|c|c|}
\hline \rowcolor{scorecolor}\scorelabel{} & -10 & -9 & -8 & -7 & -6 & -5 & -4\\
\hline
\rowcolor{riskcolor}\risklabel{} & 16.1\% & 24.9\% & 36.6\% & 50.0\% & 63.4\% & 75.1\% & 83.9\% \\ 
\hline
\end{tabular} \\
\begin{tabular}{|r|c|c|c|c|c|c|}
\hline \rowcolor{scorecolor}\scorelabel{} & -3 & -2 & -1 & 0 & 1 & 2 \\
\hline
\rowcolor{riskcolor}\risklabel{} & 90.1\% & 94.0\% & 96.5\% & 97.9\% & 98.8\% & 99.3\%  \\ 
\hline

\end{tabular}
}
\medskip\caption{\ourmethod{} model for the Netherlands dataset, predicting whether defendants have any type of charge within four years. The logistic loss on the training set is 9252.25. The AUCs on the training and test sets are 0.739 and 0.740, respectively. Note that this risk score is slightly different from that of Table \ref{fig:RiskScore_netherlands_modelSize_5_pool_3}
in Condition 3.}  

\label{fig:RiskScore_netherlands_modelSize_5_pool_11}
\end{table}

\begin{table}[htbp]
\centering
{
\begin{tabular}{|llr|ll|}
\hline
1. & \# of previous penal cases $\leq$ 3 & -2 points &   & ... \\
2. & age in years $\leq$ 38.052 & 1 point\text{ } & + & ... \\
3. & age at first penal case $\leq$ 23.233 & 1 point\text{ } & + & ... \\
4. & previous case $\leq$ 10 or > 20 & -3 points & + & ... \\
5. & previouse case $\leq$ 20 & -5 points & + & ... \\ \hline
     &     & \multicolumn{1}{l|}{\textbf{SCORE}} & = &     \\ \hline
\end{tabular}
}
{
\risktable{}
\begin{tabular}{|r|c|c|c|c|c|c|c|}
\hline \rowcolor{scorecolor}\scorelabel{} & -10 & -9 & -8 & -7 & -6 & -5 & -4\\
\hline
\rowcolor{riskcolor}\risklabel{} & 16.0\% & 24.9\% & 36.5\% & 50.0\% & 63.5\% & 75.1\% & 84.0\% \\ 
\hline
\end{tabular} \\
\begin{tabular}{|r|c|c|c|c|c|c|}
\hline \rowcolor{scorecolor}\scorelabel{} & -3 & -2 & -1 & 0 & 1 & 2 \\
\hline
\rowcolor{riskcolor}\risklabel{} & 90.1\% & 94.0\% & 96.5\% & 97.9\% & 98.8\% & 99.3\%  \\ 
\hline

\end{tabular}
}
\medskip\caption{\ourmethod{} model for the Netherlands dataset, predicting whether defendants have any type of charge within four years. The logistic loss on the training set is 9252.27. The AUCs on the training and test sets are 0.738 and 0.739, respectively. Note that this risk score is slightly different from that of Table \ref{fig:RiskScore_netherlands_modelSize_5_pool_3}
in Condition 3.}  

\label{fig:RiskScore_netherlands_modelSize_5_pool_12}
\end{table}

\clearpage
\section{Model Reduction}
\subsection{Reducing Models to Relatively Prime Coefficients}
\label{app:model_reduction}
If the coefficients of a model are not relatively prime, one can divide all the coefficients by any common prime factors without changing any of the predicted risks. Table \ref{fig:ReducedMushroom}(left), copied from Table \ref{fig:MoreExampleRiskScore_mushroom_modelSize_3},
 is reduced in this way to produce Table \ref{fig:ReducedMushroom}(right). Table \ref{fig:ReducedCOMPAS}(left), copied from Table \ref{fig:MoreExampleRiskScore_compas_modelSize_3}, is reduced in this way to produce Table \ref{fig:ReducedCOMPAS}(right).

\begin{table}[htbp]
\small
\centering
\begin{subtable}[t]{0.48\linewidth}
{
\begin{tabular}{|llr|ll|}
\hline
1. &odor$=$almond & -5 points &   & ... \\
2. &odor$=$anise & -5 points  & + & ... \\
3. &odor$=$none & -5 points   & + & ... \\ \hline
     &     & \multicolumn{1}{l|}{\textbf{SCORE}} & = &     \\ \hline
\end{tabular}
}
{
\risktable{}
\begin{tabular}{|r|c|c|}
\hline \rowcolor{scorecolor}\scorelabel{} & -5 & 0 \\
\hline \rowcolor{riskcolor}\risklabel{} & 10.8\% & 96.0\% \\ 
\hline

\end{tabular}
}
\end{subtable}
\hfill
\begin{subtable}[t]{0.48\linewidth}
{
\begin{tabular}{|llr|ll|}
\hline
1. &odor$=$almond & -1 points &   & ... \\
2. &odor$=$anise & -1 points  & + & ... \\
3. &odor$=$none & -1 points   & + & ... \\ \hline
     &     & \multicolumn{1}{l|}{\textbf{SCORE}} & = &     \\ \hline
\end{tabular}
}
{
\risktable{}
\begin{tabular}{|r|c|c|}
\hline \rowcolor{scorecolor}\scorelabel{} & -1 & 0 \\
\hline \rowcolor{riskcolor}\risklabel{} & 10.8\% & 96.0\% \\ 
\hline

\end{tabular}
}

\end{subtable}
\vspace{2mm}
\caption{\textit{Left:} \ourmethod{} model for the Mushroom dataset, predicting whether a mushroom is poisonous. Copy of Table \ref{fig:MoreExampleRiskScore_mushroom_modelSize_3}. \textit{Right:} Reduction to have relatively prime coefficients.}
\label{fig:ReducedMushroom}
\end{table}


\begin{table}[htbp]
\small
\centering
\begin{subtable}[t]{0.48\linewidth}
{
\begin{tabular}{|llr|ll|}
\hline
1. & prior\_counts $\leq 2$ & -4 points &   & ... \\
2. & prior\_counts $\leq 7$ & -4 points  & + & ... \\
3. & age $\leq 31$ & 4 points   & + & ... \\ \hline
     &     & \multicolumn{1}{l|}{\textbf{SCORE}} & = &     \\ \hline
\end{tabular}
}
{
\risktable{}
\begin{tabular}{|r|c|c|c|c|c|}
\hline \rowcolor{scorecolor}\scorelabel{} & -8 & -4 & 0 & 4 \\
\hline \rowcolor{riskcolor}\risklabel{}  & 23.6\% & 44.1\% & 67.0\% & 83.9\% \\ 
\hline
\end{tabular}
}
\end{subtable}
\hfill
\begin{subtable}[t]{0.48\linewidth}
{
\begin{tabular}{|llr|ll|}
\hline
1. & prior\_counts $\leq 2$ & -1 points &   & ... \\
2. & prior\_counts $\leq 7$ & -1 points  & + & ... \\
3. & age $\leq 31$ & 1 points   & + & ... \\ \hline
     &     & \multicolumn{1}{l|}{\textbf{SCORE}} & = &     \\ \hline
\end{tabular}
}
{
\risktable{}
\begin{tabular}{|r|c|c|c|c|c|}
\hline \rowcolor{scorecolor}\scorelabel{} & -2 & -1 & 0 & 1 \\
\hline \rowcolor{riskcolor}\risklabel{}  & 23.6\% & 44.1\% & 67.0\% & 83.9\% \\ 
\hline
\end{tabular}
}
\end{subtable}
\vspace{2mm}
\caption{\textit{Left:} \ourmethod{} model for the COMPAS dataset, predicting whether individuals are arrested within two years of release. Copy of Table \ref{fig:MoreExampleRiskScore_compas_modelSize_3}. \textit{Right:} Reduction to have relatively prime coefficients.}
\label{fig:ReducedCOMPAS}
\end{table}

\clearpage
\subsection{Transforming Features for Better Interpretability}
Sometimes the original features are not as interpretable as they could be with some minor postprocessing. 
For example, Table \ref{fig:unpostprocessed_netherlands} has features "previous case $\leq$ 10 or $>$ 20" and "previous case $\leq$ 20". We can transform them into more interpretable and user-friendly features as "previous case $\leq$ 10", "10 $<$ previous case $\leq$ 20", and "previous case $>$ 20". The transformed model is shown in Table~\ref{fig:postprocessed_netherlands}.
\begin{table}[h]
\centering
{
\begin{tabular}{|llr|ll|}
\hline
1. & \# of previous penal cases $\leq$ 2 & -2 points &   & ... \\
2. & age in years $\leq$ 38.052 & 1 point\text{ } & + & ... \\
3. & age at first penal case $\leq$ 22.633 & 1 point\text{ } & + & ... \\
4. & previous case $\leq$ 10 or > 20 & -3 points & + & ... \\
5. & previouse case $\leq$ 20 & -5 points & + & ... \\ \hline
     &     & \multicolumn{1}{l|}{\textbf{SCORE}} & = &     \\ \hline
\end{tabular}
}
{
\risktable{}
\begin{tabular}{|r|c|c|c|c|c|c|c|}
\hline \rowcolor{scorecolor}\scorelabel{} & -10 & -9 & -8 & -7 & -6 & -5 & -4\\
\hline
\rowcolor{riskcolor}\risklabel{} & 14.9\% & 23.8\% & 35.8\% & 50.0\% & 64.2\% & 76.2\% & 85.1\% \\ 
\hline
\end{tabular} \\
\begin{tabular}{|r|c|c|c|c|c|c|}
\hline \rowcolor{scorecolor}\scorelabel{} & -3 & -2 & -1 & 0 & 1 & 2 \\
\hline
\rowcolor{riskcolor}\risklabel{} & 91.1\% & 94.8\% & 97.0\% & 98.3\% & 99.1\% & 99.5\% \\ 
\hline

\end{tabular}
}
\medskip\caption{Original \ourmethod{} model for the Netherlands dataset, predicting whether defendants have any type of charge within four years.} 

\label{fig:unpostprocessed_netherlands}
\end{table}

\begin{table}[h]
\centering
{
\begin{tabular}{|llr|ll|}
\hline
1. & \# of previous penal cases $\leq$ 2 & -2 points &   & ... \\
2. & age in years $\leq$ 38.052 & 1 point\text{ } & + & ... \\
3. & age at first penal case $\leq$ 22.633 & 1 point\text{ } & + & ... \\
4. & previous case $\leq$ 10 & -8 points & + & ... \\
5. & 10 $<$ previouse case $\leq$ 20 & -5 points & + & ... \\
6. & previouse case $>$ 20 & -3 points & + & ... \\ \hline
     &     & \multicolumn{1}{l|}{\textbf{SCORE}} & = &     \\ \hline
\end{tabular}
}
{
\risktable{}
\begin{tabular}{|r|c|c|c|c|c|c|c|}
\hline \rowcolor{scorecolor}\scorelabel{} & -10 & -9 & -8 & -7 & -6 & -5 & -4\\
\hline
\rowcolor{riskcolor}\risklabel{} & 14.9\% & 23.8\% & 35.8\% & 50.0\% & 64.2\% & 76.2\% & 85.1\% \\ 
\hline
\end{tabular} \\
\begin{tabular}{|r|c|c|c|c|c|c|}
\hline \rowcolor{scorecolor}\scorelabel{} & -3 & -2 & -1 & 0 & 1 & 2 \\
\hline
\rowcolor{riskcolor}\risklabel{} & 91.1\% & 94.8\% & 97.0\% & 98.3\% & 99.1\% & 99.5\% \\ 
\hline

\end{tabular}
}
\medskip\caption{Postprocessed \ourmethod{} model for the Netherlands dataset, predicting whether defendants have any type of charge within four years. We have transformed the "previous case" feature for better interpretability. Note that in the original model, samples with previous case values less than 10 accumulate -8 points, -3 for the 4th line and -5 for the 5th line.  In the transformed model, this case is more clearly stated in line 4.} 

\label{fig:postprocessed_netherlands}
\end{table}

\clearpage
\section{Discussion of Limitations}
\label{app:discussion_of_limitation}
 \ourmethod{} does not provide provably optimal solutions to an NP-hard problem, which is how it is able to perform in reasonable time. \ourmethod's models should not be interpreted as causal. \ourmethod{} creates very sparse, generalized, additive models, and thus has limited model capacity. \ourmethod's models inherit flaws from data on which it was trained. \ourmethod{} is not yet customized to a given application, which can be done in future work. We note that even if a model is interpretable, it can still have negative societal bias. (Generally, it is easier to check for such biases with scoring systems than with black box models). Looking at a variety of models from the diverse pool can help users to find models that are more fair.

\end{document}